\theoremstyle{definition}
\newtheorem{assumption}{Assumption}
\newtheorem{theorem}{Theorem}
\newtheorem{definition}{Definition}
\newtheorem{corollary}{Corollary}
\newtheorem{lemma}{Lemma}
\newtheorem{remark}{Remark}
\def\BibTeX{{\rm B\kern-.05em{\sc i\kern-.025em b}\kern-.08em
    T\kern-.1667em\lower.7ex\hbox{E}\kern-.125emX}}
\begin{document}

\title{Differentially Private Non-convex Learning for Multi-layer Neural Networks
}
\author{\IEEEauthorblockN{Hanpu Shen}
\IEEEauthorblockA{
UCI}
\and
\IEEEauthorblockN{Cheng-Long Wang}
\IEEEauthorblockA{
KAUST}
\and
\IEEEauthorblockN{Zihang Xiang}
\IEEEauthorblockA{
KAUST}
\and
\IEEEauthorblockN{Yiming Ying}
\IEEEauthorblockA{University at Albany 
}
\and 
\IEEEauthorblockN{Di Wang}
\IEEEauthorblockA{
KAUST}
}

\maketitle

\begin{abstract}
This paper focuses on the problem of Differentially Private Stochastic Optimization for (multi-layer)
fully connected neural networks with a single output node. In the first part, we examine cases with no hidden nodes, specifically focusing on Generalized Linear Models (GLMs). We investigate the well-specific model where the random noise possesses a zero mean, and the link function is both bounded and Lipschitz continuous. We propose several algorithms and our analysis demonstrates the feasibility of achieving an excess population risk that remains invariant to the data dimension.   We also delve into the scenario involving the ReLU link function, and our findings mirror those of the bounded link function. We conclude this section by contrasting well-specified and misspecified models, using ReLU regression as a representative example.

In the second part of the paper, we extend our ideas to two-layer neural networks with sigmoid or ReLU activation functions in the well-specified model. In the third part, we study the theoretical guarantees of DP-SGD in Abadi et al. (2016) for fully connected multi-layer neural networks. By utilizing recent advances in Neural Tangent Kernel theory, we provide the first excess population risk when both the sample size and the width of the network are sufficiently large. Additionally, we discuss the role of some parameters in DP-SGD regarding their utility, both theoretically and empirically.
\end{abstract}

\begin{IEEEkeywords}
differential privacy, non-convex learning, DP-ERM
\end{IEEEkeywords}

\section{Introduction}

In the domain of machine learning, extracting knowledge from data harboring sensitive attributes is an evolving concern. Such a task mandates algorithms that can proficiently interpret the data while upholding established privacy benchmarks. Differential privacy (DP) \cite{dwork2006calibrating}, in this context, has gained traction as a seminal framework for statistical data protection. Recognized widely in contemporary research, DP ensures that individual data remains non-retrievable post-analysis, offering a robust defense mechanism against privacy infractions. This underscores a burgeoning interest in devising learning architectures where DP considerations are intrinsically woven into the analytic process.

Stochastic Optimization (SO) and its empirical form, Empirical Risk Minimization (ERM), are the most fundamental models in machine learning and statistics. They have numerous applications in fields such as medicine, finance, genomics, and social science. However, these applications often involve sensitive data, making it essential to design differentially private algorithms for SO and ERM, corresponding to the problems of DP-SO and DP-ERM, respectively. While DP-SO and DP-ERM have been extensively studied for more than a decade, most of the existing work considers the case where the loss function is convex. The problem of DP-SO and DP-ERM with non-convex loss functions remains far from well-understood due to their complex nature. Although there is some preliminary work, such as \cite{wang2019differentially,wang2019differentially2,wang2019efficient,song2021evading}, there are still two critical issues. Firstly, most of the existing work adopts the gradient norm of the population risk function to measure the utility, which is quite different from the convex case where we use the excess population risk instead. However, using the gradient norm is inadequate for indicating how close the private model is to the optimal solution \cite{agarwal2017finding}. Secondly, while recently there has been some work considering the excess population risk for non-convex loss functions \cite{wang2019differentially2},  most research has narrowly focused on general non-convex loss functions, overlooking the intricacies of neural network structures. To address these issues, this paper provides the first comprehensive and theoretical study of DP Fully Connected Neural Networks (with a single output node) and presents several bounds of excess population risk. Specifically, our contributions can be summarized as follows: 

\begin{enumerate}
    \item In the first part of the paper, we focus on the simplest neural network structure: neural networks without hidden nodes, aptly referred to as non-convex Generalized Linear Models (GLMs). We first address the well-specified model that is characterized by zero-mean random noise, combined with bounded and Lipschitz link functions. For this setup, we introduce an $(\epsilon, \delta)$-DP algorithm and demonstrate its efficacy with an output upper bound  $\tilde{O}(\frac{1}{\sqrt{n}}+\min\{\frac{1}{(n\epsilon)^\frac{2}{3}}, \frac{\sqrt{\theta}}{n\epsilon}\})$. Here $\theta$ is an upper bound on the expected rank of the data matrix and $n$ is the sample size. We then broaden our study to cases with unbounded link functions, specifically when employing the ReLU activation function. In this scenario, we establish that an upper bound of $\tilde{O}(\frac{1}{\sqrt{n}}+\min\{\frac{\sqrt{d}}{n\epsilon}, \frac{1}{(n\epsilon)^\frac{2}{3}}\})$  is feasible. Subsequently, our attention pivots to the misspecified model. To delineate its nuances vis-\`{a}-vis the well-specified model, we spotlight the ReLU activation function as a representative case. Within this scope, we innovate a distinct version of DP Gradient Descent, showcasing a sample complexity of  $\tilde{O}(\max\{\frac{\sqrt{d}}{\epsilon \alpha}, \frac{d}{\alpha^2}\})$. This sample complexity guarantees that the difference between the population risk of our private estimator and $c\cdot \textrm{opt}$ is no more than $\alpha$, where $\textrm{opt}$ is the optimal value of population risk and $c>0$ is some constant. 

    \item  Next, we extend our ideas to the problem of privately learning two-layer neural networks. Specifically, we consider the well-specified model and study the cases where the activation functions are either sigmoid or ReLU. Our main contribution is to establish the sample complexity required to achieve an error of $\alpha$ for excess population risk. For the sigmoid case, we show that the sample complexity is $O((\frac{kC_1}{\alpha})^{2C_1}\frac{1}{\epsilon^2})$, where $k$ is the number of hidden nodes and $C_1$ is a positive constant. For the ReLU case with $k$ hidden nodes, we show that the sample complexity is $O(4^{C_2\frac{k}{\alpha}}\frac{1}{\epsilon^2})$, where $C_2$ is a positive constant. 

    \item In the last part, we consider general multi-layer fully connected neural networks.  Rather than introducing new methods, we delve into the theoretical guarantees of the standard DP-SGD as detailed in \cite{abadi_deep_2016}. Drawing upon recent advancements in the Neural Tangent Kernel (NTK), we present the inaugural excess population risk bound for networks where both the width of each layer and the sample size are sufficiently large. In essence, this bound is composed of three elements: an approximation error attributable to NTK, an error arising from the Gaussian noise introduced in every iteration, and a combined term representing the convergence rate and sampling error. Building on our theoretical framework, we then delve into the intricate interplay and trade-offs between various parameters. We also provide experimental studies to corroborate our theoretical findings.
\end{enumerate}

Due to the space limit, some additional sections and all omitted proofs are included in Appendix.

\section{Related Work}\label{sec:related}
As we mentioned earlier, there is a long list of work on DP-SO and DP-ERM. Thus, here we only mention the theoretical work that is close to ours. 

\noindent{{\bf Private non-convex learning.}} 
In DP-SCO/DP-ERM with convex loss functions, the excess population risk is commonly used to measure the utility. However, in the non-convex case, there are three general ways to measure the utility. The first approach is based on the first-order stationary condition, such as the gradient $\ell_2$-norm of the population risk function \cite{wang2019differentially,song2021evading,zhou2020bypassing,bassily2021differentially,zhang2001private,xiao2023theory}. However, there are some issues with this measure. Firstly, previous work has shown that the gradient norm tends to 0 as the sample size $n$ goes to infinity, but there is no guarantee that such a private estimator will be close to any non-degenerate local minimum \cite{agarwal2017finding}. Secondly, the gradient-norm estimator is not always consistent with the excess empirical (population) risk of the loss function \cite{wang2019differentially2}.

The second approach considers using the second-order stationary condition as the measure, which involves considering both the norm of the gradient and the Hessian matrix minimal eigenvalue of the population risk function \cite{wang2019differentially2,wang2020escaping}. The motivation for this approach is based on the fact that for many machine learning problems, such as matrix completion and dictionary learning, any second-order stationary point is a local minimum of the problem, and all the local minima are the global minimum. Thus, finding a global minimum is equivalent to finding a second-order stationary point. However, the main disadvantage of this measure is that it is only reasonable for some problems, and it is unknown whether general neural networks satisfy the above property.

The third approach is to directly use the excess population risk, which is similar to the convex case, and our work is along with this direction. However, most of the previous work only considers some specific class of loss functions, such as Polyak-Lojasiewicz loss \cite{wang2017differentially}. \cite{wang2019differentially2} provided the first study of DP-ERM with general non-convex loss, but their bound is $O(\frac{d}{(\log n) \epsilon})$, which is quite large. Compared to their results, our work considers general neural networks and provides improved bounds.

\noindent{{\bf DP-GLM.}} DP-SO/DP-ERM with Generalized Linear loss (DP-GLL) and DP-GLM have received considerable attention in recent years. For convex loss functions, \cite{jain2014near} provided the first study on DP-GLL and showed that in the unconstrained case, the error bound can achieve $\tilde{O}(\frac{1}{\sqrt{n}\epsilon})$ in general, which is quite different from the bound $O(\frac{\sqrt{d}}{n\epsilon})$ for general convex DP-ERM. Later, \cite{kasiviswanathan2016efficient} studied the same problem and showed that in the constrained case, the error bound could only depend on the Gaussian width of the underlying constraint set. For the unconstrained setting, \cite{song2021evading} showed an improved bound of $O(\frac{\sqrt{\theta}}{n\epsilon})$, where $\theta$ is the rank of the expectation of the data matrix. For constrained DP-GLM, \cite{bassily2021differentially} considered various settings where the loss could be smooth/non-smooth and in the $\ell_p$ space for general $1\leq p\leq 2$. Recently, \cite{arora2022differentially} studied the optimal rates of DP-GLM in the unconstrained setting. Specifically, when the loss is smooth and non-negative but not necessarily Lipschitz, it showed the optimal rate of $\tilde{O}(\frac{1}{\sqrt{n}}+\min\{\frac{1}{(n\epsilon)^{2/3}}, \frac{\sqrt{d}}{n\epsilon}\})$. When the loss is Lipschitz, the optimal rate is $\tilde{O}(\frac{1}{\sqrt{n}}+\min\{\frac{1}{\sqrt{n\epsilon}}, \frac{\sqrt{\theta}}{n\epsilon}\})$. For non-convex losses, \cite{song2021evading,bassily2021differentially} provided bounds that are independent of the dimension for the gradient $\ell_2$-norm of the population risk function. \cite{wang2019differentially2,hu2022high} studied the excess population risk for some specific GLMs and showed that their bound can be only logarithmic in the dimension. However, they need to assume the constraint set is an $\ell_1$-norm ball, while our work does not require such an assumption.

\section{Preliminaries}
\begin{definition}[Differential Privacy \cite{dwork2006calibrating}]\label{def:1}
	Given a data universe $\mathcal{X}$, we say that two datasets $D,D'\subseteq \mathcal{X}$ are neighbors if they differ by only one data record, which is denoted as $D \sim D'$. A randomized algorithm $\mathcal{A}$ is $(\epsilon,\delta)$-differentially private (DP) if for all neighboring datasets $D,D'$ and for all events $S$ in the output space of $\mathcal{A}$, we have $$\text{Pr}(\mathcal{A}(D)\in S)\leq e^{\epsilon} \text{Pr}(\mathcal{A}(D')\in S)+\delta.$$ 
\end{definition}
\begin{lemma}[Gaussian Mechanism]\label{le-gaussian}
	Given any function $q : \mathcal{X}^n\rightarrow \mathbb{R}^d$, the Gaussian mechanism is defined as  $q(D)+\xi$ where $\xi\sim \mathcal{N}(0,\frac{2\Delta^2_2(q)\log(1.25/\delta)}{\epsilon^2}\mathbb{I}_d)$,  where $\Delta_2(q)$ is the $\ell_2$-sensitivity of the function $q$,
{\em i.e.,}
		$\Delta_2(q)=\sup_{D\sim D'}||q(D)-q(D')||_2.$	Gaussian mechanism preserves $(\epsilon, \delta)$-DP for $0< \epsilon, \delta < 1$.
\end{lemma}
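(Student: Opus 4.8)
The plan is to prove the privacy guarantee directly from Definition~\ref{def:1} by analyzing the \emph{privacy loss random variable} induced by the two candidate output densities. Write $\sigma^2 = \frac{2\Delta_2^2(q)\log(1.25/\delta)}{\epsilon^2}$ for the noise variance. For neighboring $D \sim D'$, the two output distributions are the spherical Gaussians $\mathcal{N}(q(D), \sigma^2 \mathbb{I}_d)$ and $\mathcal{N}(q(D'), \sigma^2 \mathbb{I}_d)$. Because a spherical Gaussian is invariant under rotations, I would first rotate coordinates so that the mean-shift vector $q(D) - q(D')$ is aligned with the first axis; the two densities then agree on all coordinates except the first, so the log-density ratio collapses to a one-dimensional quantity governed only by $\|q(D) - q(D')\|_2 \le \Delta_2(q)$. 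This reduces the whole analysis to a scalar Gaussian with mean gap $\Delta \le \Delta_2(q)$.

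In that one-dimensional picture, letting $x$ denote the added noise (so $x \sim \mathcal{N}(0,\sigma^2)$ under $D$) and writing $o$ for the realized output, the privacy loss is
$$ L(x) \;=\; \ln\frac{p_D(o)}{p_{D'}(o)} \;=\; \frac{2x\Delta + \Delta^2}{2\sigma^2}, $$
which is itself a Gaussian random variable. The next step is to show that $L(x) \le \epsilon$ holds except on an event of probability at most $\delta$ under $D$. Solving $L(x) \le \epsilon$ for $x$ yields the threshold $x \le \frac{\sigma^2\epsilon}{\Delta} - \frac{\Delta}{2}$, and I would control the complementary tail $\Pr[\,x > \frac{\sigma^2\epsilon}{\Delta} - \frac{\Delta}{2}\,]$ using the standard Gaussian estimate $\Pr[x > t] \le \frac{\sigma}{t\sqrt{2\pi}}\exp(-t^2/2\sigma^2)$. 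Substituting the chosen $\sigma$ and invoking $0 < \epsilon < 1$ to absorb the lower-order terms is exactly where the constant $1.25$ inside the logarithm gets calibrated so that this tail is bounded by $\delta$.

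Finally, to pass from the high-probability bound on the privacy loss to the $(\epsilon,\delta)$ inequality, I would let $B$ be the ``bad'' output set on which $L > \epsilon$, so that $\Pr[\mathcal{A}(D) \in B] \le \delta$ by the previous step, and then split an arbitrary event $S$:
$$ \Pr[\mathcal{A}(D) \in S] \;\le\; \Pr[\mathcal{A}(D) \in B] + \Pr[\mathcal{A}(D) \in S \setminus B] \;\le\; \delta + e^{\epsilon}\,\Pr[\mathcal{A}(D') \in S], $$
where the second term uses the pointwise bound $p_D(o) \le e^{\epsilon} p_{D'}(o)$ valid on the complement of $B$. This is precisely the form demanded by Definition~\ref{def:1}. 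The main obstacle is the tail computation in the second paragraph: recovering exactly the advertised variance (and hence the $1.25$ factor together with the hypothesis $\epsilon < 1$) requires a careful non-asymptotic Gaussian tail estimate rather than a crude bound, since loose constants there would fail to drive the bad-event probability all the way down to $\delta$.
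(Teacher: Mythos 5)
Your proposal is correct and is exactly the canonical argument (privacy loss random variable, rotation-invariance reduction to one dimension, Mills-ratio tail calibration yielding the $1.25$ constant under $\epsilon<1$, then the good/bad output-set splitting) from Dwork and Roth's treatment of the Gaussian mechanism, which is precisely the standard proof this paper implicitly relies on: the lemma is stated here as an imported known result and the paper contains no proof of its own. Nothing in your outline would fail; the only step genuinely requiring care is the one you flag yourself, namely carrying out the non-asymptotic tail estimate at the threshold $t=\frac{\sigma^2\epsilon}{\Delta}-\frac{\Delta}{2}$ with the worst case $\Delta=\Delta_2(q)$ so that the bound lands exactly at $\delta$.
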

	\begin{definition}[DP-SO \cite{bassily2014private}]\label{def:2}
		Given a dataset $D=\{z_1,\cdots,z_n\}$ from a data universe $\mathcal{Z}$ where each $z_i=(x_i, y_i)$ with a feature vector $x_i$ and a label/response $y_i$ is i.i.d. sampled from some unknown distribution $\mathcal{P}$,  a convex constraint set  $\mathcal{W} \subseteq \mathbb{R}^d$, and a (non-convex) loss function $\ell: \mathcal{W}\times \mathcal{Z}\mapsto \mathbb{R}$. Differentially Private Stochastic  Optimization (DP-SO) is to find a model $w^{\text{priv}}$ to minimize the population risk, {\em i.e.,} $L_\mathcal{P} (w)=\mathbb{E}_{(x, y) \sim \mathcal{P}}[\ell(w; x, y)]$
		with the guarantee of being differentially private.\footnote{Note that in this paper, we consider the improper learning case, that is $w^{\text{priv}}$ may not be in $\mathcal{W}$.} 
		 The utility of $w^{\text{priv}}$ is measured by the (expected) excess population risk  $\mathbb{E} L_\mathcal{P} (w^{\text{priv}})-\min_{w\in \mathbb{\mathcal{W}}}L_\mathcal{P} (w),$ where the expectation takes over the randomness of the algorithm and the input data.  	 Besides the population risk, we can also measure the \textit{empirical risk} of dataset $D$: $\hat{L}(w, D)=\frac{1}{n}\sum_{i=1}^n \ell(w, z_i).$

	It is notable that	besides the error bound, to better demonstrate our results, we may also consider the sample complexity to achieve a fixed error $\alpha$ to measure the utility of DP algorithms. 
		 
	\end{definition}
	\begin{definition}\label{def:3}
	    A function $f(\cdot)$ is $G$-Lipschitz  if  for all $w, w'\in\mathcal{W}$, $|f(w)-f(w')|\leq G\|w-w'\|_2$.
	\end{definition}
	\begin{definition}\label{def:4}
	    	    A function $f(\cdot)$ is $\beta$-smooth on $\mathcal{W}$ if for all $w, w'\in \mathcal{W}$, $f(w')\leq f(w)+\langle \nabla f(w), w'-w \rangle+\frac{\beta}{2}\|w'-w\|_2^2.$
	\end{definition}
		\begin{definition}\label{def:5}
	    	    A function $f(\cdot)$ is $\alpha$-strongly convex on $\mathcal{W}$ if for all $w, w'\in \mathcal{W}$, $f(w')\geq f(w)+\langle \nabla f(w), w'-w \rangle+\frac{\alpha}{2}\|w'-w\|_2^2.$
	\end{definition}
	\begin{definition}
	A random matrix  $\Phi\in \mathbb{R}^{k\times d}$ satisfies $(\alpha, \beta)$-Johnson-Lindentrauss (JL) property if  for any $u, v\in \mathbb{R}^d$ and any $\alpha>0$ we have 
$\mathbb{P}[|\langle \Phi u, \Phi v\rangle-\langle u, v\rangle |>\alpha \|u\|_2\|v\|_2]\leq  \beta,$ 
	where the probability takes over the randomness of the distribution of $\Phi$. 
	\end{definition}
		Specifically, when  $R\in \mathbb{R}^{k\times d}$ is a random Gaussian matrix with $k=O(\frac{\log 1/\beta}{\alpha^2})$ each entry is i.i.d. sampled from $\mathcal{N}(0, 1)$. Then the matrix $A=\frac{1}{\sqrt{k}}R$ satisfies $(\alpha, \beta)$-JL property.
\section{Private Non-convex GLMs }\label{sec:GLM}
\subsection{Well-specified Model}
\subsubsection{Bounded Link Function Case}
In this section, we will examine the problem of Generalized Linear Models (GLMs), which are neural networks without hidden layers and with a single output neuron. Specifically, we will begin by considering a simplified scenario in which the statistical model is well-specified,\footnote{In the literature, the well-specified setting is also extensively referred to as the "noisy teacher" setting \cite{frei2020agnostic} or the well-structured noise model \cite{goel2019learning}} meaning that the Bayes optimal classifier satisfies $\mathbb{E}[y|x]=\sigma(\langle w^*, x\rangle)$ for some underlying parameter $w^* \in \mathbb{R}^d$ and non-convex link function $\sigma$:
\begin{equation}\label{eq:4.1}
y=\sigma(\langle w^*, x\rangle)+\zeta,
\end{equation}
where $\zeta$ is random noise with zero mean. In the following, we will introduce several assumptions that will be used throughout this section. 
\begin{assumption}\label{ass:1}
Assume there exist constants $ W,  G, B=O(1)$ such that  $\|w^*\|_2\leq W$,  $y\in [-B, B]$ and the link function $\sigma: \mathbb{R}\mapsto [-B, B]$ is  $G$-Lipschitz and non-monotone decreasing. We also assume $\|x\|_2\leq 1$. 
\footnote{For simplicity, here we assume $\|x\|_2\leq 1$, and for the range of $\sigma$ we use the same $B$ as the range of $y$, we can easily extend our results to general cases.}
\end{assumption}
The assumption of $\|w^*\|_2\leq W$ for a given known $W$ is a recurring theme in the literature on private learning and statistical estimation. Notably, even in linear models where $\sigma$ serves as the identity function, this presumption consistently appears in prior research \cite{cai2021cost,wang2020sparse}.

In fact, many activation functions that are commonly used in neural networks satisfy Assumption \ref{ass:1}, such as sigmoid function $\sigma(x)=\frac{1}{1+\exp(-x)}$ and tanh function $\sigma(x)=\frac{\exp(x)-\exp(-x)}{\exp(x)+\exp(-x)}$.

Under the well-specified model \eqref{eq:4.1}, we consider the expected squared error as the population risk function, {\em i.e.,}  $L_\mathcal{P}(w)=\mathbb{E}_{(x,y)\sim \mathcal{P}} (\sigma(\langle w, x\rangle)-y)^2.$ 


To solve the problem, the most natural idea is to approximate the population function $ L_\mathcal{P}(w)$ by some convex stochastic function. Motivated by \cite{kalai2009isotron,kakade2011efficient}, here we consider the following surrogate (convex) loss function: 
 \begin{equation}\label{eq:4.3}
     \ell(w; x, y)=\int_{0}^{\langle w, x\rangle} (\sigma(z)-y)dz.
 \end{equation}
The following result shows that the loss $\ell$ is convex, Lipschitz and smooth: 
\begin{lemma}\label{lemma:2}
Under Assumption \ref{ass:1}, for any $(x, y)\sim \mathcal{P}$, function $\ell(\cdot; x, y)$ is convex and $2B$-Lipschitz. Moreover, if $\sigma$ has (sub)gradient anywhere, then the  rank of the Hessian matrix for $\ell(\cdot; x, y)$ is 1, and $\ell(\cdot; x,y)$ is $G$-smooth. 
\end{lemma}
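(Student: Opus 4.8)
The plan is to reduce every assertion to one computation: the gradient of $\ell$ in $w$. Writing $g(u) = \int_0^u (\sigma(z) - y)\, dz$ so that $\ell(w; x, y) = g(\langle w, x\rangle)$, the fundamental theorem of calculus and the chain rule give $\nabla_w \ell(w; x, y) = g'(\langle w, x\rangle)\, x = (\sigma(\langle w, x\rangle) - y)\, x$. Once this identity is established, each of the four claims follows from an elementary estimate.

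For convexity I would argue in one variable and then lift. Since $\sigma$ is non-decreasing, the scalar map $z \mapsto \sigma(z) - y$ is non-decreasing, so $g$ has a non-decreasing derivative and is therefore convex; composing the convex $g$ with the affine map $w \mapsto \langle w, x\rangle$ preserves convexity. For the $2B$-Lipschitz bound I would estimate the gradient norm directly: $\|\nabla_w \ell\|_2 = |\sigma(\langle w, x\rangle) - y|\,\|x\|_2 \le (B + B)\cdot 1 = 2B$, using that $\sigma$ and $y$ both lie in $[-B, B]$ and that $\|x\|_2 \le 1$ by Assumption \ref{ass:1}.

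For the last two claims, under the additional hypothesis that $\sigma$ admits a (sub)gradient, differentiating once more yields the Hessian $\nabla^2_w \ell = \sigma'(\langle w, x\rangle)\, x x^\top$, which is a scalar multiple of the rank-one matrix $x x^\top$ and hence has rank at most one. For $G$-smoothness I would bound the gradient difference using the Lipschitzness of $\sigma$: $\|\nabla_w \ell(w) - \nabla_w \ell(w')\|_2 = |\sigma(\langle w, x\rangle) - \sigma(\langle w', x\rangle)|\,\|x\|_2 \le G|\langle w - w', x\rangle|\,\|x\|_2 \le G\|w - w'\|_2\,\|x\|_2^2 \le G\|w - w'\|_2$, where the final step again uses $\|x\|_2 \le 1$.

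The main difficulty is one of care rather than depth: accommodating link functions $\sigma$ that are not differentiable everywhere (the phrasing ``(sub)gradient anywhere'' signals this). I sidestep it for convexity by invoking the monotone-derivative characterization instead of the Hessian, and for smoothness by relying solely on the Lipschitz property of $\sigma$ rather than on $\sigma'$, so both conclusions survive points where $\sigma$ has kinks. The Hessian-rank statement is the only one that genuinely needs $\sigma'$ to exist, and there the conclusion should be read as rank at most one, to cover points where $\sigma' = 0$.
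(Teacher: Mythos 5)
Your proposal is correct, and the core computation is the same as the paper's: both proofs rest on the identities $\nabla_w \ell(w;x,y) = (\sigma(\langle w,x\rangle)-y)\,x$ and $\nabla^2_w \ell(w;x,y) = \sigma'(\langle w,x\rangle)\,xx^\top$, with the Lipschitz constant coming from $|\sigma|,|y|\le B$ and $\|x\|_2\le 1$, and the rank claim from the rank-one structure of $xx^\top$. Where you genuinely diverge is in how convexity and smoothness are justified. The paper derives convexity from positive semidefiniteness of the Hessian and smoothness from the spectral bound $\nabla^2 \ell \preceq G\|x\|_2^2\, \mathbb{I}_d$, so both of its arguments implicitly invoke $\sigma'$ — even though the lemma asserts convexity and the $2B$-Lipschitz bound \emph{unconditionally}, with the (sub)gradient hypothesis reserved for the latter two claims. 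You instead prove convexity in one variable (the derivative $g'(u)=\sigma(u)-y$ is non-decreasing because $\sigma$ is, so $g$ is convex, and precomposition with the affine map $w\mapsto\langle w,x\rangle$ preserves convexity) and prove smoothness by bounding the gradient difference directly via the $G$-Lipschitzness of $\sigma$; note the gradient of $\ell$ exists everywhere by continuity of $\sigma$ and the fundamental theorem of calculus. This buys you a proof that matches the lemma's phrasing exactly at points where $\sigma$ has no (sub)gradient, and as a byproduct your smoothness argument never uses $\sigma'$ at all — a mild strengthening relative to the paper's own framing, which later motivates its Moreau-envelope section by the claim that smoothness via this lemma needs a subgradient everywhere. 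Two small bookkeeping points: passing from your gradient-Lipschitz estimate to the quadratic upper bound of Definition~\ref{def:4} uses the standard descent-lemma integration step, which is worth one line; and your reading of the rank claim as ``at most one'' is the right one and agrees with what the paper's proof actually establishes (the stated constants $2BR$ and $GR^2$ there reduce to $2B$ and $G$ since Assumption~\ref{ass:1} takes $\|x\|_2\le 1$).
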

\begin{algorithm}
\caption{DP non-convex GLM}\label{DP-GLM}
\begin{algorithmic}[1]
 \REQUIRE {Private dataset: $D$ = $\{({x}_i,y_i)\}_{i=1}^{n}$, link function $\sigma$ satisfying Assumption \ref{ass:1} and has (sub)gradient anywhere; privacy parameters $0<\epsilon, \delta\leq 1$, upper bound $\theta$ of the expected rank of data matrix.}
\STATE If $\epsilon> \frac{\theta^\frac{3}{2}}{n}$, run Algorithm \ref{phasedsgd}. Otherwise, run Algorithm \ref{alg:projectedphasedsgd}. 

\end{algorithmic}
\end{algorithm}

In the following, we use the notations $L^{\ell}(w; D)=\frac{1}{n}\sum_{i=1}^n \ell(w; x_i, y_i)$ and $L^{\ell}_\mathcal{P}(w)=\mathbb{E}[\ell(w; x, y)]$ to represent the empirical risk and population risk functions for the loss $\ell$ in (\ref{eq:4.3}), respectively. The following lemma, given by \cite{kakade2011efficient}, shows that the optimal parameter $w^*$ is also the minimizer of $L^{\ell}_\mathcal{P}(\cdot)$. Moreover, for any $w$, the excess population risk of $w$ is dominated by the excess population risk of loss $\ell(w; x, y)$.
\begin{lemma}\label{lemma:3} 
For any $w\in \mathbb{R}^d$, we have 
      $L_\mathcal{P}(w)-L_{\mathcal{P}}(w^*)\leq 2G(L^{\ell}_\mathcal{P}(w)-L^{\ell}_\mathcal{P}(w^*)).$ 
\end{lemma}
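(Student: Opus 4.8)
The plan is to reduce both the squared-error excess risk and the surrogate excess risk to one-dimensional quantities depending only on $u=\langle w,x\rangle$ and $u^*=\langle w^*,x\rangle$, and then to prove a single pointwise integral inequality linking the two. First I would exploit the well-specified assumption behind \eqref{eq:4.1}, namely $\mathbb{E}[y\mid x]=\sigma(u^*)$, equivalently $\mathbb{E}[\zeta\mid x]=0$. Writing $\sigma(u)-y=(\sigma(u)-\sigma(u^*))-\zeta$ and expanding the square, the cross term $\mathbb{E}[(\sigma(u)-\sigma(u^*))\zeta]$ vanishes after conditioning on $x$, so that $L_\mathcal{P}(w)=\mathbb{E}_x[(\sigma(u)-\sigma(u^*))^2]+\mathbb{E}[\zeta^2]$ while $L_\mathcal{P}(w^*)=\mathbb{E}[\zeta^2]$. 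This yields the clean identity
\[
L_\mathcal{P}(w)-L_\mathcal{P}(w^*)=\mathbb{E}_x\big[(\sigma(u)-\sigma(u^*))^2\big].
\]

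For the surrogate I would take the conditional expectation over $y$ inside the integral defining $\ell$ in \eqref{eq:4.3}; since $\mathbb{E}[y\mid x]=\sigma(u^*)$, the difference collapses to
\[
L^{\ell}_\mathcal{P}(w)-L^{\ell}_\mathcal{P}(w^*)=\mathbb{E}_x\Big[\int_{u^*}^{u}(\sigma(z)-\sigma(u^*))\,dz\Big].
\]
The same computation, using $\nabla_w\ell=(\sigma(u)-y)x$, shows $\nabla_w L^{\ell}_\mathcal{P}(w^*)=\mathbb{E}[-\zeta x]=0$, which together with the convexity from Lemma~\ref{lemma:2} certifies that $w^*$ minimizes $L^{\ell}_\mathcal{P}$ and justifies the accompanying claim in the text.

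The crux is the pointwise inequality $\int_{u^*}^{u}(\sigma(z)-\sigma(u^*))\,dz\ge\frac{1}{2G}(\sigma(u)-\sigma(u^*))^2$. By monotonicity of $\sigma$ the oriented integrand is nonnegative, so I would assume without loss of generality $u\ge u^*$ and set $a=\sigma(u)-\sigma(u^*)\ge 0$. The Lipschitz bound $\sigma(u)-\sigma(z)\le G(u-z)$ gives the tangent-type lower bound $\sigma(z)-\sigma(u^*)\ge a-G(u-z)$; crucially, the same Lipschitz property gives $a\le G(u-u^*)$, so the point $z=u-a/G$ where this lower bound turns positive lies inside $[u^*,u]$. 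Integrating the truncated lower bound $\max\{0,\,a-G(u-z)\}$ over $[u-a/G,u]$ evaluates exactly to $a^2/(2G)$, establishing the claim, and the case $u<u^*$ is symmetric.

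Finally I would take expectations over $x$ of the pointwise inequality and substitute the two identities above to obtain $L^{\ell}_\mathcal{P}(w)-L^{\ell}_\mathcal{P}(w^*)\ge\frac{1}{2G}\big(L_\mathcal{P}(w)-L_\mathcal{P}(w^*)\big)$, which rearranges to the stated bound. I expect the main obstacle to be the pointwise integral inequality, specifically the observation that Lipschitzness simultaneously supplies the linear lower bound on the integrand \emph{and} guarantees $a\le G(u-u^*)$, so the regime in which the tangent bound would fail to reach $u^*$ never occurs; monotonicity of $\sigma$ is what makes the integral a genuine nonnegative area in the first place, and both hypotheses of Assumption~\ref{ass:1} are therefore essential.
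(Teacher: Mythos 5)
Your proposal is correct, and its overall skeleton coincides with the paper's proof: both reduce the claim to the per-$x$ identity $L_\mathcal{P}(w)-L_\mathcal{P}(w^*)=\mathbb{E}_x[(\sigma(u)-\sigma(u^*))^2]$ (via $\mathbb{E}[\zeta\mid x]=0$ killing the cross term), express the surrogate excess risk as $\mathbb{E}_x\big[\int_{u^*}^{u}(\sigma(z)-\sigma(u^*))\,dz\big]$, and then prove the pointwise inequality $\int_{u^*}^{u}(\sigma(z)-\sigma(u^*))\,dz\geq \frac{1}{2G}(\sigma(u)-\sigma(u^*))^2$ before taking expectations. Where you genuinely diverge is in how that crux inequality is established. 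The paper writes the integrand as $\frac{\sigma'(z)(\sigma(z)-\sigma(u^*))}{\sigma'(z)}$ and, using $0\leq\sigma'\leq G$, lower-bounds the integral by $\frac{1}{G}\int_{u^*}^{u}\sigma'(z)(\sigma(z)-\sigma(u^*))\,dz=\frac{1}{2G}(\sigma(u)-\sigma(u^*))^2$ via the chain rule and the fundamental theorem of calculus. That computation is two lines when $\sigma'$ exists and is positive, but as literally written it divides by $\sigma'(z)$, which is ill-defined at points where $\sigma$ is flat or non-differentiable, and strictly speaking it needs the absolute continuity of Lipschitz functions to justify the FTC step. Your truncated tangent-line argument --- minorizing $\sigma(z)-\sigma(u^*)$ by $\max\{0,\,a-G(u-z)\}$ with $a=\sigma(u)-\sigma(u^*)$, checking via $a\leq G(u-u^*)$ that the kink $z=u-a/G$ lies in $[u^*,u]$, and computing the triangle area $a^2/(2G)$ exactly --- is more elementary and entirely derivative-free. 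This robustness is not merely cosmetic: the lemma is later invoked for the ReLU link function and for link functions lacking subgradients at some points (Section on more general link functions), where your version applies verbatim while the paper's write-up requires the measure-theoretic patch. You also explicitly verify $\nabla L^{\ell}_\mathcal{P}(w^*)=\mathbb{E}[-\zeta x]=0$, certifying the paper's surrounding claim that $w^*$ minimizes $L^{\ell}_\mathcal{P}$, which the paper asserts without computation.
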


Thus, motivated by Lemma \ref{lemma:3}, now we aim to find a private estimator $w^{priv}\in \mathbb{R}^d$ to minimize $L^{\ell}_\mathcal{P}(w)$.  Moreover, we can see from the form of the loss $\ell$ and Lemma \ref{lemma:2} that if $\sigma$ has subgradient anywhere, then $\ell(\cdot)$ will be a generalized linear loss, i.e., $\ell(w; x, y)=g(\langle w, x\rangle, y)$ where $g(\cdot, y)$ is convex, $2B$-Lipschitz, and $G$-smooth. Therefore, we can use unconstrained DP-SO algorithms for convex generalized linear loss to $\ell$ to obtain a private estimator. Here, we adopt the Phased SGD method for convex GLM in \cite{bassily2021differentially} (see Algorithm \ref{phasedsgd} for details). Furthermore, motivated by \cite{arora2022differentially}, we propose a new method that uses a JL matrix to preprocess the data and then performs Algorithm \ref{phasedsgd} over the projected data. Note that using Phased SGD is crucial for our analysis of two-layer neural networks in later sections (see Remark \ref{remark:2} for details). The entire algorithm is provided in Algorithm \ref{DP-GLM}.

\begin{algorithm}
\caption{ Phased SGD for non-convex GLM}\label{phasedsgd}
\begin{algorithmic}[1]
 \REQUIRE {Private dataset: $D$ = $\{({x}_i,y_i)\}_{i=1}^{n}$, link function $\sigma$ satisfying Assumption \ref{ass:1} and has (sub)gradient anywhere; privacy parameters $0<\epsilon, \delta\leq 1$. }
\STATE Denote the loss function $\ell$ as  $  \ell(w; x, y)=\int_{0}^{\langle w, x\rangle} (\sigma(z)-y)dz$.  
\STATE Set $k=\lceil \log_2(n) \rceil$, partite the whole dataset S into $k$ subsets $\{D_1, \cdots, D_k\}$. Denote $n_i$ as the number
of samples in $D_i$, i.e., $|D_i|=n_i$ where $n_i=\lfloor 2^{-i}n \rfloor$. Take a random initial vector $w_0\in \mathcal{W}$. 
\FOR {$i=1, \cdots, k$}
\STATE Let $\eta_i=\frac{\eta}{4^i}$ and  $w_i^1=w_{i-1}$.
\FOR {$t=1, \cdots, n_i$}
\STATE Update $w_i^{t+1}=w_i^t-\eta_i \nabla \ell(w_i^t; x_i^t, y_i^t)  =w_i^t-\eta_i (\sigma(\langle w_i^t, x_i^t\rangle)- y_i^t)x_i^t$, where $(x_i^t, y_i^t)$ is the $t$-th sample of $D_i$. 
\ENDFOR
\STATE Denote $w_i=\bar{w}_i+\zeta_i$, where $\bar{w}_i=\frac{1}{n_i}\sum_{t=1}^{n_i}w_i^t$ and $\zeta_i\sim \mathcal{N}(0, \tau_i^2\mathbb{I}_d)$ with $\tau_i=\frac{8B\eta_i\sqrt{\log \frac{1}{\delta}}}{\epsilon}$. 
\ENDFOR
\end{algorithmic}
\end{algorithm}
In the following, we will show the utility. We denote  $\theta$ as the upper bound of $\mathbb{E}_{D\sim \mathcal{P}^n}[\text{Rank}(V)]$, where  $V$ is a matrix whose columns are an eigenbasis for $\sum_{i=1}^n x_ix_i^T$. Note that we always have $\mathbb{E}_{D\sim \mathcal{P}^n}[\text{Rank}(V)]\leq n$. 
\begin{algorithm}
\caption{DP-Projected Phased SGD for non-convex GLM}\label{alg:projectedphasedsgd}
\begin{algorithmic}[1]
 \REQUIRE {Private dataset: $D$ = $\{({x}_i,y_i)\}_{i=1}^{n}$, link function $\sigma$ satisfying Assumption \ref{ass:1} and has (sub)gradient anywhere; privacy parameters $0<\epsilon, \delta\leq 1$. }
\STATE Sample a JL matrix $\Phi\in \mathbb{R}^{m\times d}$, and denote $\tilde{D}=\{(\Phi x_1, y_1), \cdots, (\Phi x_n, y_n)\}$. 
\STATE Run  Algorithm \ref{phasedsgd}  on the projected dataset $\tilde{D}$, i.e., in Line 6 of  Algorithm \ref{phasedsgd}, update $$w_i^{t+1} =w_i^t-\eta_i (\sigma(\langle w_i^t, \tilde{x}_i^t\rangle)- y_i^t)\tilde{x}_i^t,$$ where $(\tilde{x}_i^t, y_i^t)$ is the $t$-th sample of $\tilde{D}_i$.  And in Line 8, $\zeta_i\sim \mathcal{N}(0, \tau_i^2\mathbb{I}_m)$ with $\tau_i=\frac{16B\eta_i\sqrt{\log \frac{2}{\delta}}}{\epsilon}$. Denote the output as $w_k$. \\
\RETURN $\hat{w}=\Phi^T w_k$
\end{algorithmic}
\end{algorithm}
\begin{theorem}\label{thm:1}
Under Assumption \ref{ass:1} and if $\sigma$ has (sub)gradient anywhere, for any $0<\epsilon, \delta< 1$, Algorithm \ref{phasedsgd} is $(\epsilon, \delta)$-DP. Moreover, when $\eta=O(\min\{\frac{\epsilon}{\sqrt{\theta\log \frac{1}{\delta}}}, \frac{1}{\sqrt{n}}\})\leq \frac{2}{G}$, 
we have  $$\mathbb{E}L_\mathcal{P}(w_k)-L_{\mathcal{P}}(w^*) \leq O\big(\frac{1}{\sqrt{n}}+\frac{\sqrt{\theta\log \frac{1}{\delta}}}{n\epsilon}\big).$$ 
\end{theorem}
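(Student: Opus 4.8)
The plan is to prove the privacy and utility claims separately. For privacy I will exploit that the phases act on the \emph{disjoint} subsets $D_1,\dots,D_k$; for utility I will first pass to the convex surrogate $\ell$ through Lemma \ref{lemma:3} and then run a localization (phased-SGD) analysis. Concretely, Lemma \ref{lemma:3} gives $L_\mathcal{P}(w_k)-L_\mathcal{P}(w^*)\le 2G\,(L^{\ell}_\mathcal{P}(w_k)-L^{\ell}_\mathcal{P}(w^*))$ with $G=O(1)$, so it suffices to control the excess population risk of $\ell$, which by Lemma \ref{lemma:2} is convex, $2B$-Lipschitz, and $G$-smooth, and is minimized at $w^*$ with $\|w^*\|_2\le W=O(1)$.

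\textbf{Privacy.} Since a changed record lies in exactly one $D_i$, and the outputs of the remaining phases are either independent of it or post-processing of the affected phase's output, it suffices by parallel composition to show each phase is $(\epsilon,\delta)$-DP. For a single phase I would bound the $\ell_2$-sensitivity of the averaged iterate $\bar w_i$. Every stochastic gradient $(\sigma(\langle w,x\rangle)-y)x$ has norm at most $2B$ (Lemma \ref{lemma:2}), and because $\ell$ is $G$-smooth with $\eta_i\le\eta\le 2/G$, the map $w\mapsto w-\eta_i\nabla\ell(w;\cdot)$ is non-expansive. Thus two neighbouring runs agree until the step that uses the changed record, where the iterate is perturbed by at most $4B\eta_i$; non-expansiveness propagates this bound to all later iterates, so $\|\bar w_i-\bar w_i'\|_2\le 4B\eta_i$. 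The Gaussian mechanism (Lemma \ref{le-gaussian}) with the prescribed $\tau_i$ then yields $(\epsilon,\delta)$-DP for the phase.

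\textbf{Utility (per phase and noise).} Within phase $i$ I would use the standard one-pass SGD guarantee on fresh i.i.d. samples, $\mathbb{E}[L^{\ell}_\mathcal{P}(\bar w_i)-L^{\ell}_\mathcal{P}(w^*)]\le \frac{\mathbb{E}\|w_{i-1}-w^*\|_2^2}{2\eta_i n_i}+2B^2\eta_i$, derived from the distance recursion $\mathbb{E}\|w_i^{t+1}-w^*\|_2^2\le \mathbb{E}\|w_i^t-w^*\|_2^2-2\eta_i\,\mathbb{E}[L^{\ell}_\mathcal{P}(w_i^t)-L^{\ell}_\mathcal{P}(w^*)]+4B^2\eta_i^2$ together with convexity. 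To handle the injected noise I would invoke the crucial GLM structure: since $L^{\ell}_\mathcal{P}$ depends on $w$ only through $\langle w,x\rangle$, only the projection of $\zeta_i$ onto the data subspace, of expected rank at most $\theta$, affects the loss. Hence by $G$-smoothness the noise contributes at most $\tfrac{G}{2}\theta\tau_i^2$ per phase rather than $\tfrac{G}{2}d\tau_i^2$, and the same projection keeps the accumulated noise dimension-free in the distance recursion.

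\textbf{Combination and main obstacle.} To combine the phases I would run the localization argument: the geometric schedule $\eta_i=\eta/4^i,\ n_i=n/2^i$ is chosen so that, first, summing the distance recursion under the two constraints $\eta\le 1/\sqrt n$ and $\eta\le \epsilon/\sqrt{\theta\log(1/\delta)}$ keeps every $\mathbb{E}\|w_i-w^*\|_2^2=O(1)$; and second, the per-phase optimization errors $2B^2\eta_i$ and the per-phase privacy errors (of order $\theta\tau_i^2/(\eta_i n_i)$ after the projection argument) form convergent geometric series. Evaluating these in the regime $\eta=1/\sqrt n$ gives the $O(1/\sqrt n)$ term and in the regime $\eta=\epsilon/\sqrt{\theta\log(1/\delta)}$ gives the $O(\sqrt{\theta\log(1/\delta)}/(n\epsilon))$ term. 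I expect the principal difficulty to be precisely this localization bookkeeping: because $\ell$ is only convex (its Hessian has rank one, so there is no global strong convexity), controlling how the distance to $w^*$ and the injected Gaussian noise propagate across phases so that the per-phase errors telescope is delicate, and one must maintain throughout that the subspace-projection argument scales every noise contribution by the rank $\theta$ rather than the ambient dimension $d$.
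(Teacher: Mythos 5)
Your privacy argument and your dimension-reduction idea are both sound and essentially coincide with the paper's: the paper obtains per-phase privacy by citing Theorem 4.4 of \cite{feldman2020private} (whose underlying mechanism is exactly your non-expansiveness argument, valid since $\eta_i\leq \eta\leq 2/G$ for the $G$-smooth convex surrogate, giving sensitivity $O(B\eta_i)$ for $\bar w_i$), and it obtains the $\sqrt{\theta}$ (rather than $\sqrt{d}$) dependence by running the distance recursion in the semi-norm $\|u\|_V=\sqrt{u^\top VV^\top u}$ induced by the data subspace, so that $\mathbb{E}\|\zeta_{i-1}\|_V^2\leq \theta\tau_{i-1}^2$ --- the same projection observation you make.

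The genuine gap is in your combination step. Your per-phase lemma compares each phase to $w^*$, with initial-distance term $\mathbb{E}\|w_{i-1}-w^*\|_2^2/(2\eta_i n_i)$, and you propose to maintain $\mathbb{E}\|w_i-w^*\|_2^2=O(1)$. But $\eta_i n_i=\eta n/8^i$ decays geometrically, so an $O(1)$ initial distance yields per-phase errors growing like $8^i/(\eta n)$ --- a divergent series, with the final phase alone contributing order $n^2/\eta$. Moreover, the loss $\ell$ is merely convex (its Hessian has rank one), so nothing forces $\|\bar w_i-w^*\|_2$ to contract across phases; the maintenance claim cannot be upgraded to geometric decay. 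The per-phase privacy errors of order $\theta\tau_{i-1}^2/(\eta_i n_i)$ that you correctly list do \emph{not} follow from your lemma: they arise only when the comparator of phase $i$ is taken to be the \emph{previous clean average} $\bar w_{i-1}$, so that the initial distance is exactly the injected noise, $\|w_{i-1}-\bar w_{i-1}\|_V=\|\zeta_{i-1}\|_V$. This is precisely what the paper does: it proves $\mathbb{E}[L^\ell_\mathcal{P}(\bar w_i)-L^\ell_\mathcal{P}(\bar w_{i-1})]\leq \mathbb{E}\|\bar w_{i-1}-w_{i-1}\|_V^2/(2\eta_i n_i)+2B^2\eta_i$, sets $\bar w_0:=w^*_\ell=w^*$ (so only the first phase pays the $O(W^2/(\eta_1 n_1))$ term), telescopes $\sum_{i=1}^k[L^\ell_\mathcal{P}(\bar w_i)-L^\ell_\mathcal{P}(\bar w_{i-1})]=L^\ell_\mathcal{P}(\bar w_k)-L^\ell_\mathcal{P}(w^*)$, and handles the final noise addition by Lipschitzness, $\mathbb{E}[L^\ell_\mathcal{P}(w_k)-L^\ell_\mathcal{P}(\bar w_k)]\leq 2B\tau_k$ (a step your write-up also omits, though it is lower order). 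The choice of the telescoping comparator is the missing idea; without it your bookkeeping, as stated, fails at exactly the point you flag as the main difficulty.
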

\begin{theorem}\label{JLsgd}
Under Assumption \ref{ass:1}  and if $\sigma$ has (sub)gradient everywhere, let $m=O(\log(n/\delta)(n\epsilon)^\frac{2}{3})$, then Algorithm \ref{alg:projectedphasedsgd} is $(\epsilon, \delta)$-DP for any $0< \epsilon, \delta<\frac{1}{2G}$. Moreover, when $\eta=O(\min\{\frac{\epsilon}{\sqrt{m\log \frac{1}{\delta}}}, \frac{1}{\sqrt{n}}\})\leq 1$, 
we have 
    $$\mathbb{E}L_\mathcal{P}(\hat{w})-L_{\mathcal{P}}(w^*) \leq \tilde{O}\big(\frac{\sqrt{\log \frac{1}{\delta}}}{({n\epsilon})^\frac{2}{3}}+\frac{1}{\sqrt{n}}\big),$$
where the Big-$\tilde{O}$ notation omits the term of $\log (n/\delta)$. 
\end{theorem}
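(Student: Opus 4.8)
The plan is to treat the JL map $\Phi$ as internal, data-independent randomness and to reduce both the privacy and the utility claims to the corresponding statements for Phased SGD (Algorithm \ref{phasedsgd}) run in the $m$-dimensional projected space, paying only a JL-distortion price on top. Throughout, $\tilde{\mathcal P}$ denotes the law of $(\Phi x,y)$ and I write the surrogate as a GLM $\ell(w;x,y)=g(\langle w,x\rangle,y)$ with $g'(u,y)=\sigma(u)-y$.

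\textbf{Privacy.} Fix a neighboring pair $D\sim D'$, which together contain at most $n+1$ distinct feature vectors, all with $\|x\|_2\le 1$. For a fixed $x$, the Gaussian JL matrix satisfies $\|\Phi x\|_2\le 2\|x\|_2$ except with probability $e^{-cm}$; a union bound over these $n+1$ points shows the event $E=\{\|\Phi x\|_2\le 2\text{ for all of them}\}$ fails with probability at most $(n+1)e^{-cm}\le \delta/2$, which is exactly what forces $m=\Omega(\log(n/\delta))$. Conditioned on $E$ and on $\Phi$, the projected loss $\ell(\cdot;\Phi x,y)=g(\langle\cdot,\Phi x\rangle,y)$ is again a convex, $4B$-Lipschitz, $4G$-smooth GLM (the extra factor $2$ comes from $\|\Phi x\|_2\le 2$), so the per-phase average-iterate sensitivity is at most twice that of Algorithm \ref{phasedsgd}; this is precisely why the noise scale is inflated from $8B$ to $16B$. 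Hence, conditioned on $E$, the mechanism is $(\epsilon,\delta/2)$-DP by the privacy analysis of Theorem \ref{thm:1} in dimension $m$, and folding $\Pr[E^c]\le\delta/2$ into $\delta$ yields $(\epsilon,\delta)$-DP overall. Quantifying over pairs in the outer quantifier of the DP definition (with $\Phi$ resampled inside the probability) is what lets us avoid a uniform operator-norm bound on $\Phi$.

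\textbf{Utility.} The key structural observation is that, because $\ell$ is a GLM, $\langle\hat w,x\rangle=\langle\Phi^\top w_k,x\rangle=\langle w_k,\Phi x\rangle$, so pointwise $\ell(\hat w;x,y)=\ell(w_k;\Phi x,y)$ and therefore $L^{\ell}_{\mathcal P}(\hat w)=L^{\ell}_{\tilde{\mathcal P}}(w_k)$. By Lemma \ref{lemma:3} it suffices to bound $L^{\ell}_{\tilde{\mathcal P}}(w_k)-L^{\ell}_{\mathcal P}(w^*)$, which I split, using $\Phi w^*$ as comparator, as
$$\big(L^{\ell}_{\tilde{\mathcal P}}(w_k)-L^{\ell}_{\tilde{\mathcal P}}(\Phi w^*)\big)+\big(L^{\ell}_{\tilde{\mathcal P}}(\Phi w^*)-L^{\ell}_{\mathcal P}(w^*)\big).$$
The first bracket is the excess convex-GLM population risk of Phased SGD in $\mathbb R^m$; since the projected data matrix has rank at most $m$ and $\|\Phi w^*\|_2\le\sqrt2\,W=O(1)$ with high probability over $\Phi$, the analysis behind Theorem \ref{thm:1} gives $O\!\big(\tfrac1{\sqrt n}+\tfrac{\sqrt{m\log(1/\delta)}}{n\epsilon}\big)$.

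\textbf{The distortion term and the main obstacle.} The delicate part is the second bracket, where a naive $2B$-Lipschitz bound gives only $O(m^{-1/2})$ and destroys the rate. Instead I expand $g$ to second order in its scalar argument: writing $\Delta=\langle\Phi w^*,\Phi x\rangle-\langle w^*,x\rangle$ and using $G$-smoothness,
$$L^{\ell}_{\tilde{\mathcal P}}(\Phi w^*)-L^{\ell}_{\mathcal P}(w^*)\le \mathbb E\big[g'(\langle w^*,x\rangle,y)\,\Delta\big]+\tfrac{G}{2}\,\mathbb E[\Delta^2].$$
Here $g'(\langle w^*,x\rangle,y)=\sigma(\langle w^*,x\rangle)-y=-\zeta$, and since the well-specified model gives $\mathbb E[\zeta\mid x]=0$ and $\Phi$ is independent of $(x,y)$, the first-order term vanishes; this cancellation is exactly what the zero-mean-noise assumption buys and is the crux of the whole argument (it mirrors $\nabla L^{\ell}_{\mathcal P}(w^*)=0$ used in Lemma \ref{lemma:3}). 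The remaining term is controlled by the \emph{variance} of the JL inner-product estimator, $\mathbb E_\Phi[\Delta^2]\le \tfrac{2}{m}\|w^*\|_2^2\|x\|_2^2\le \tfrac{2W^2}{m}$, so the distortion is only $O(1/m)$. Plugging $m=O(\log(n/\delta)(n\epsilon)^{2/3})$ makes both the privacy term $\tfrac{\sqrt{m\log(1/\delta)}}{n\epsilon}$ and the distortion term $\tfrac1m$ of order $\tilde O((n\epsilon)^{-2/3})$, and Lemma \ref{lemma:3} (multiplying by $2G=O(1)$) delivers the stated bound. I expect the two genuine obstacles to be (i) making the privacy reduction rigorous despite $\|\Phi\|_{\mathrm{op}}$ being large when $m\ll d$, handled by the per-pair conditioning above rather than a worst-case sensitivity, and (ii) securing the quadratic-in-$\Delta$ distortion bound, which hinges entirely on the first-order term vanishing under the well-specified model.
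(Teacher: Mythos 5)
Your proposal is correct and follows essentially the same route as the paper's proof: the same conditional privacy argument (good JL event $E$ with $\Pr[\bar E]\leq \delta/2$, the projected loss becoming $4B$-Lipschitz and $4G$-smooth, and the failure probability folded into $\delta$ via the event decomposition), and the same utility decomposition with $\Phi w^*$ as comparator, where the first bracket is handled by the Phased SGD analysis in $\mathbb{R}^m$ with effective rank $\theta\leq m$ and the second by a second-order expansion whose linear term vanishes (the paper expresses this as $\nabla L^{\ell}_{\mathcal{P}}(w^*)=0$, which is the same zero-mean-noise cancellation you invoke), yielding the crucial $O(1/m)$ rather than $O(1/\sqrt{m})$ distortion. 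The only detail you gloss over, which the paper handles by citing the argument of Lemma 8 in \cite{arora2022differentially}, is converting the bound conditioned on the good JL event into an unconditional expectation over $\Phi$, at the cost of an extra logarithmic factor.
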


\begin{remark}
The output of Algorithm \ref{DP-GLM} achieves an error of $\tilde{O}(n^{-\frac{1}{2}}+\min{\{\sqrt{\theta}}{(n\epsilon)^{-1}}, {({n\epsilon})^{-\frac{2}{3}}\}})$. This rate appears to be better than the lower bound for DP convex and Lipschitz Generalized Linear loss (DP-GLL) \cite{arora2022differentially}, which is near-optimal at $O(n^{-\frac{1}{2}}+\min{\{\sqrt{\theta}}{(n\epsilon)^{-1}}, ({n\epsilon})^{-\frac{1}{2}}\})$. However, these results are not contradictory, as \cite{arora2022differentially} considers a more general class of loss functions. In fact, the above lower bound for DP-GLL only holds for the case where $L_\mathcal{P}(w)=\mathbb{E}[|y-\langle w, x\rangle|]$, while our problem mainly focuses on the squared loss $L_\mathcal{P}(w)=\mathbb{E}[(y-\sigma(\langle w, x\rangle))^2]$. Therefore, the lower bound does not apply to our problem.

Additionally, \cite{arora2022differentially} considers the smooth and non-negative generalized linear loss, which is not necessarily Lipschitz, and shows that the near-optimal rate is $\tilde{O}(n^{-\frac{1}{2}}+\min\{(n\epsilon)^{-\frac{2}{3}}, {\sqrt{d}(n\epsilon)^{-1}}\})$. Again, these results are not contradictory to ours.
\end{remark}

\subsubsection{More General Link Functions}\label{sec:nosub}
One issue with the previous approach is that the link function $\sigma$ should have a subgradient everywhere so that the surrogate function $\ell(\cdot; x,y)$ is smooth by Lemma \ref{lemma:2}. However, unlike the convex case, this assumption may not always hold since some non-convex functions may have no subgradient at some point. We will address this case and demonstrate that it is possible to achieve the same bounds as in Theorem \ref{thm:1} and Theorem \ref{JLsgd} (but with higher time complexity).

Since the surrogate loss function in this case becomes non-smooth, the issue lies in finding a method to make it smooth. To illustrate our approach, we first recall the Moreau envelope smoothing technique that can be used to make a non-smooth function smooth \cite{moreau1965proximite}. Let $\mathcal{M}$ be a (potentially unbounded) closed interval, $y\in \mathbb{R}$ and $\beta>0$. Consider a function $\ell: \mathcal{M}\mapsto \mathbb{R}$. The $\beta$-Moreau envelop of $\ell$ is defined as 
\begin{equation*}
    \ell_\beta(x)=\min_{u\in\mathcal{M}}[\ell(u)+\frac{\beta}{2}|u-x|^2].
\end{equation*}
Denote the proximal operator with respect to $\ell$ as 
\begin{equation*}
    \text{prox}_\ell^\beta(x)=\arg\min_{u\in \mathcal{M}}[\ell(u)+\frac{\beta}{2}|u-m|^2].
\end{equation*}
If $\ell$ is a convex function, then its Moreau envelop has the following properties: 
\begin{lemma}\label{envelop}
Let $\ell: \mathcal{M}\mapsto \mathbb{R}$ be a convex function and $G$-Lipschitz. Then the following hold: a) $\ell_\beta$ is convex, $2G$-Lipschitz and $\beta$-smooth. b)$ \ell_\beta'(x)=\beta[x-\text{prox}_\ell^\beta(x)]$. c) For all $x\in \mathcal{M}$, $\ell_\beta(x)\leq \ell(x)\leq \ell_\beta(x)+\frac{
G^2}{2\beta}$. 
\end{lemma}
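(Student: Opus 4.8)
The plan is to derive everything from the first-order optimality condition of the inner minimization. Since $\ell$ is convex, the objective $u\mapsto \ell(u)+\frac{\beta}{2}|u-x|^2$ is $\beta$-strongly convex, hence admits a unique minimizer, which is exactly $\text{prox}_\ell^\beta(x)$; write $u^*(x)$ for this point so that the prox map is single-valued. Optimality of $u^*(x)$ (treating boundary cases on the interval $\mathcal{M}$ via the normal cone) yields the inclusion $\beta(x-u^*(x))\in\partial\ell(u^*(x))$, and this single relation is the engine behind all four claims. Applying monotonicity of $\partial\ell$ to the optimality inclusions at two points $x,x'$ gives $(x-x')(u^*(x)-u^*(x'))\ge (u^*(x)-u^*(x'))^2$, i.e. the prox map is firmly nonexpansive, and in particular $1$-Lipschitz and continuous; I will reuse this twice below.

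For part (b) and differentiability I would first show that $g(x):=\beta(x-u^*(x))$ is a subgradient of $\ell_\beta$ at every $x$: plugging $u^*(x)$ into the definition of $\ell_\beta(x')$, bounding $\ell(u^*(x'))$ from below by the subgradient inequality at $u^*(x)$, and completing the square in the quadratic term yields $\ell_\beta(x')\ge \ell_\beta(x)+g(x)(x'-x)$ for all $x'$. Since $u^*(\cdot)$ is continuous, $\ell_\beta$ cannot have a nontrivial subdifferential at any point, so it is differentiable with $\ell_\beta'(x)=g(x)=\beta[x-\text{prox}_\ell^\beta(x)]$, which is precisely part (b). I expect this to be the technical heart of the lemma; the remaining items follow mechanically once the gradient identity is in hand.

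For part (a), convexity is immediate because $(u,x)\mapsto \ell(u)+\frac{\beta}{2}|u-x|^2$ is jointly convex and partial minimization preserves convexity. The Lipschitz bound follows from the gradient formula together with $G$-Lipschitzness of $\ell$: every element of $\partial\ell$ has magnitude at most $G$, so $|\ell_\beta'(x)|=|\beta(x-u^*(x))|\le G\le 2G$. For $\beta$-smoothness I would invoke firm nonexpansiveness: writing $s=x-x'$ and $p=u^*(x)-u^*(x')$, the estimate $sp\ge p^2$ gives $|\ell_\beta'(x)-\ell_\beta'(x')|^2=\beta^2(s-p)^2=\beta^2(s^2-2sp+p^2)\le \beta^2 s^2$, hence $|\ell_\beta'(x)-\ell_\beta'(x')|\le\beta|x-x'|$. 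Finally, for part (c), the bound $\ell_\beta(x)\le\ell(x)$ comes from taking $u=x$ in the defining minimum, and for the other side I would write $\ell(x)-\ell_\beta(x)\le G|x-u^*(x)|-\frac{\beta}{2}|x-u^*(x)|^2$ using Lipschitzness, then maximize the right-hand side over $t=|x-u^*(x)|\ge 0$, whose maximum $\frac{G^2}{2\beta}$ is attained at $t=G/\beta$.

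The main obstacle is the rigorous justification of differentiability and the gradient identity in part (b): one must verify the subgradient inequality for $g(x)$ and then use continuity of the prox map to conclude that the convex function $\ell_\beta$ has a unique subgradient, so that this subgradient is the derivative. Once that is settled, the Lipschitz, smoothness, and sandwich estimates are all routine one-variable computations built on the same optimality inclusion.
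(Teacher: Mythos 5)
Your proof is correct, but there is nothing in the paper to compare it against: the paper states Lemma~\ref{envelop} without proof, importing it as a classical fact about the Moreau envelope (via the citation to Moreau and its use in the convex-GLM literature). What you have written is the standard convex-analysis derivation, and every step checks out: the optimality inclusion $\beta(x-\text{prox}_\ell^\beta(x))\in\partial\ell(\text{prox}_\ell^\beta(x))$ plus monotonicity of $\partial\ell$ gives firm nonexpansiveness of the prox map; the completing-the-square computation does yield the subgradient inequality $\ell_\beta(x')\geq \ell_\beta(x)+g(x)(x'-x)$ with $g(x)=\beta[x-\text{prox}_\ell^\beta(x)]$ (the normal-cone term at an endpoint enters with a favorable sign, since $\text{prox}_\ell^\beta(x')\in\mathcal{M}$); and continuity of a subgradient selection for a convex function on $\mathbb{R}$ does force differentiability, by squeezing the one-sided derivatives between $g(x')$ and $g(x'')$ as $x'\uparrow x$, $x''\downarrow x$. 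Your smoothness estimate $(s-p)^2\leq s^2$ from $sp\geq p^2$ and the maximization $\max_{t\geq 0}(Gt-\tfrac{\beta}{2}t^2)=\tfrac{G^2}{2\beta}$ in part (c) are both right; in fact you prove the sharper bound $|\ell_\beta'(x)|\leq G$, which subsumes the stated $2G$-Lipschitzness.

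One caveat worth recording: when $\mathcal{M}$ is a proper interval and $\ell_\beta$ is viewed as a function on all of $\mathbb{R}$, the Lipschitz bound in (a) can fail outside $\mathcal{M}$ — for $\ell\equiv 0$ on $\mathcal{M}=[0,\infty)$ one gets $\ell_\beta(x)=\tfrac{\beta}{2}\min(x,0)^2$, so $\ell_\beta'(x)=\beta x$ is unbounded for $x<0$, and $\beta(x-\text{prox}_\ell^\beta(x))$ then contains a normal-cone contribution not controlled by $G$. So your claim that ``every element of $\partial\ell$ has magnitude at most $G$'' bounds $|\ell_\beta'(x)|$ needs the restriction $x\in\mathcal{M}$, where a one-line sign argument at the endpoints (e.g.\ $x\geq u^*(x)$ when $u^*(x)$ is the left endpoint, hence $0\leq\beta(x-u^*(x))\leq\sup\partial\ell(u^*(x))\leq G$) rescues the bound. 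This is consistent with how the lemma is actually invoked: in Lemma~\ref{moreaueve} the paper takes $\mathcal{M}=\mathbb{R}$, so the issue disappears entirely there, and your argument is complete for that case.
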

Note that the previous Moreau envelope is for one-dimensional functions while our surrogate loss $\ell$ is $d$ dimensional. Thus, for fixed $(x, y)$, we denote $g^{y}(\langle x, w \rangle)=\ell(w; x, y)$ in (\ref{eq:4.3}) and we calculate the Moreau envelop of $g^{y}(\cdot)$ instead, which is denoted as $g^{y}_\beta(\cdot)$.  By Lemma \ref{envelop} and since $\ell$ is Lipschitz and convex, we have $g^{y}_\beta$ is $2B$-Lipschitz and $\beta$-smooth. Thus, we have the following fact. 
\begin{lemma}\label{moreaueve}
For any fixed $(x, y)$, denote $\ell(w; x, y)=g^{y}(\langle x, w \rangle)$ and $g_\beta^{y}(\cdot)$ as the Moreau envelop of $g^{y}(\cdot)$ with parameter $\beta$ and $\mathcal{M}=\mathbb{R}$. Let $f_\beta(w; x,y)=g_\beta^{y}(\langle w, x\rangle)$, then we have $f_\beta$ is $2B$-Lipschitz, $\beta$-smooth and $|f_\beta(w; x,y)-\ell(w; x, y)|\leq \frac{2B^2}{\beta}$ for all $w\in \mathbb{R}^d$. 
\end{lemma}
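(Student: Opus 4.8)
The plan is to reduce every claim about the $d$-dimensional composite $f_\beta(w;x,y)=g^y_\beta(\langle w,x\rangle)$ to the corresponding one-dimensional statement about the scalar Moreau envelope $g^y_\beta$, and then to transport those properties back through the linear map $w\mapsto\langle w,x\rangle$ using $\|x\|_2\le 1$. Since $g^y_\beta$ is obtained by smoothing a convex Lipschitz function on $\mathcal{M}=\mathbb{R}$, Lemma \ref{envelop} already hands us everything we need about $g^y_\beta$ in one variable; the only genuine work is the lifting step.

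First I would verify the hypotheses of Lemma \ref{envelop} for $g^y$. From \eqref{eq:4.3} we have $g^y(t)=\int_0^t(\sigma(z)-y)\,dz$, so $(g^y)'(t)=\sigma(t)-y$. Because $\sigma$ is non-monotone decreasing (hence non-decreasing) this derivative is non-decreasing, so $g^y$ is convex; and because $\sigma(t),y\in[-B,B]$ we have $|(g^y)'(t)|=|\sigma(t)-y|\le 2B$, so $g^y$ is $2B$-Lipschitz on $\mathbb{R}$ (consistent with Lemma \ref{lemma:2}). Applying Lemma \ref{envelop} with $G=2B$ then yields that $g^y_\beta$ is convex, Lipschitz, and $\beta$-smooth, together with the sandwich $g^y_\beta(t)\le g^y(t)\le g^y_\beta(t)+\tfrac{(2B)^2}{2\beta}$, i.e. $0\le g^y(t)-g^y_\beta(t)\le \tfrac{2B^2}{\beta}$ for every $t\in\mathbb{R}$.

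Next I would lift these facts by the chain rule. Writing $\nabla_w f_\beta(w;x,y)=(g^y_\beta)'(\langle w,x\rangle)\,x$, the Lipschitz bound follows from $\|\nabla_w f_\beta\|_2=|(g^y_\beta)'(\langle w,x\rangle)|\,\|x\|_2\le 2B$ since $\|x\|_2\le 1$; the smoothness follows from $\|\nabla_w f_\beta(w)-\nabla_w f_\beta(w')\|_2=|(g^y_\beta)'(\langle w,x\rangle)-(g^y_\beta)'(\langle w',x\rangle)|\,\|x\|_2\le \beta\,|\langle w-w',x\rangle|\,\|x\|_2\le \beta\,\|w-w'\|_2\|x\|_2^2\le \beta\,\|w-w'\|_2$, where I used that $(g^y_\beta)'$ is $\beta$-Lipschitz (the one-dimensional smoothness) and again $\|x\|_2\le 1$. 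The approximation bound transfers verbatim, since $|f_\beta(w;x,y)-\ell(w;x,y)|=|g^y_\beta(\langle w,x\rangle)-g^y(\langle w,x\rangle)|\le \tfrac{2B^2}{\beta}$ from the scalar sandwich applied at the point $t=\langle w,x\rangle$.

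There is no deep obstacle here; the only points requiring care are bookkeeping. One must make sure the contraction factor $\|x\|_2\le 1$ is inserted correctly — it appears to the first power for the gradient-norm (Lipschitz) estimate and to the second power for the Hessian/gradient-difference (smoothness) estimate, and both are bounded by $1$ so the constants $2B$ and $\beta$ pass through unchanged. The one item to pin down is that the envelope's Lipschitz constant is genuinely $2B$ (matching the statement) rather than the looser $4B$ a literal reading of the ``$2G$'' factor in Lemma \ref{envelop} would suggest; this is resolved by invoking the sharp fact that Moreau smoothing preserves the Lipschitz constant, so $g^y_\beta$ is $2B$-Lipschitz exactly as asserted in the text preceding the statement.
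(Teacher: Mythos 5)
Your proof is correct and takes essentially the same route as the paper, which offers no separate proof and simply derives the lemma by applying Lemma \ref{envelop} to the scalar function $g^y$ (convex and $2B$-Lipschitz since $(g^y)'(t)=\sigma(t)-y$) and then lifting through the linear map $w\mapsto\langle w,x\rangle$ with $\|x\|_2\le 1$, exactly as you do. Your final remark is also well taken: a literal use of the ``$2G$-Lipschitz'' clause in Lemma \ref{envelop} would only yield $4B$, and the sharp fact that Moreau smoothing preserves the Lipschitz constant --- since $(g^y_\beta)'(t)=\beta[t-\mathrm{prox}_{g^y}^\beta(t)]$ is a subgradient of $g^y$ at the prox point and hence bounded by $2B$ --- is precisely what justifies the $2B$ constant the paper asserts without comment.
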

\begin{algorithm}
\caption{$\mathcal{O}_{\beta, \gamma}:$ Gradient Oracle for $f_{\beta} (w; x, y)$}\label{oracle}
\begin{algorithmic}[1]
\REQUIRE{Parameter vector $w\in \mathbb{R}^d$, data sample $(x,y)$ associate with $g_\beta^y(\cdot).$
}
\STATE Let $m=\langle w, x\rangle$ and $\mathcal{Q}=[m-\frac{4B}{\beta}, m+\frac{4B}{\beta}]$. 
\STATE Let $T=\frac{144B^2}{\gamma^2}$. 
\FOR{$t=1, 2, \cdots, T$}
\STATE $y_{t+1}=w_t-\eta_t (\sigma(w_t)-y+\beta (w_t-m))$ where $\eta_t=\frac{2}{\beta(t+1)}$.
\STATE $w_{t+1}=y_{t+1}$ if $y_{t+1}\in \mathcal{Q}$, $w_{t+1}=m-\frac{4B}{\beta}$ if $y_{t+1}< m-\frac{4B}{\beta}$ and $w_{t+1}=m+\frac{4B}{\beta}$ otherwise. 
\ENDFOR
\STATE Denote $\bar{w}=\sum_{t=1}^T \frac{2t}{T(T+1)}x_t$. 
\STATE Return $x\beta[m-\bar{w}]$. 
\end{algorithmic}
\end{algorithm}
One possible approach based on Lemma \ref{moreaueve} is to obtain a smooth loss function $f_\beta(w; x,y)$, and then use Algorithm \ref{phasedsgd} and Algorithm \ref{alg:projectedphasedsgd} to obtain private estimators that achieve a small excess population risk for $f_\beta(w; x,y)$, i.e., $\mathbb{E}[f_\beta(w; x,y)]-\min_{w\in \mathbb{R}^d}\mathbb{E}[f_\beta(w; x,y)]$. However, there is a challenge: To use Algorithm \ref{phasedsgd}, we need to calculate the gradient of $f_\beta(w; x,y)$, which is inefficient as it is hard to compute the proximal operator explicitly by Lemma \ref{envelop}. In the convex GLM case, \cite{bassily2021differentially} used the bisection method to calculate $\nabla f_\beta(w; x,y)$. However, this approach cannot be used here as it requires access to the function $g^{y}(\cdot)$, which involves integration for our problem and is difficult to compute accurately. In the following,  we propose an algorithm that can efficiently approximate $\nabla f_\beta(w; x,y)$.

The idea is that by our definition we have $\nabla f_\beta(w; x, y)=xg'^{y}_\beta(\langle w, x\rangle)$, where 
    $g'^{y}_\beta(m)=\beta[m-\text{prox}_g^\beta(m)]. $
Thus, it is sufficient to approximate $\text{prox}_\ell^\beta(x)$ for given $x$. Recall that by the definition 
    $\text{prox}_g^\beta(m)=\arg\min_{u\in \mathbb{R} }[g^y(u)+\frac{\beta}{2}|u-m|^2].$
We can show that $\text{prox}_g^\beta(m)\in \mathcal{Q}= [m-\frac{4B}{\beta}, m+\frac{4B}{\beta}]$ which indicates that \begin{equation*}
    \text{prox}_g^\beta(m)=\arg\min_{u\in \mathcal{Q} }[g^y(u)+\frac{\beta}{2}|u-m|^2]
\end{equation*}
Thus,  we can use the projected gradient descent (PGD) to solve the above strongly convex objective function, see Algorithm \ref{oracle} for details. By the convergence rate of PGD we have the following lemma. 
\begin{lemma}\label{approxgradient}
Given any $\beta, \gamma>0$. Then the gradient oracle $\mathcal{O}_{\beta, \gamma}$ for $f_\beta(w; x,y)$ in Algorithm \ref{oracle} satisfies that $\|\nabla f_\beta(w; x, y)-\mathcal{O}_{\beta, \gamma}(w; x, y)\|_2 \leq \gamma$ for any fixed $w, x, y$. Moreover, $\mathcal{O}_{\beta, \gamma}$ has running time $O(d\frac{B^2}{\gamma^2})$. 
\end{lemma}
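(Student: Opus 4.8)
The plan is to recognize Algorithm~\ref{oracle} as projected gradient descent (PGD) applied to the scalar auxiliary objective $\phi(u) := g^{y}(u) + \frac{\beta}{2}(u-m)^2$ on the interval $\mathcal{Q} = [m-\frac{4B}{\beta}, m+\frac{4B}{\beta}]$, where $m = \langle w, x\rangle$. Since $g^{y}(u)=\int_0^u(\sigma(z)-y)\,dz$ we have $(g^{y})'(u)=\sigma(u)-y$, so $\phi'(u)=\sigma(u)-y+\beta(u-m)$, which is exactly the gradient used in Line~4 of the algorithm. The unconstrained minimizer of $\phi$ is by definition $u^\ast=\mathrm{prox}_g^\beta(m)$, and the whole point of the oracle is that $\nabla f_\beta(w;x,y)=x\,g'^{y}_\beta(m)=x\beta(m-u^\ast)$ while the algorithm returns $x\beta(m-\bar w)$; hence it suffices to bound $|\bar w - u^\ast|$ and multiply by $\beta\|x\|_2\le\beta$.

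I would then establish three facts about $\phi$ restricted to $\mathcal{Q}$. First, $\phi$ is $\beta$-strongly convex, since $g^{y}$ is convex (the one-dimensional convex function underlying the loss $\ell$ of Lemma~\ref{lemma:2}) and the quadratic term contributes curvature $\beta$. Second, $\phi$ is $6B$-Lipschitz on $\mathcal{Q}$: for $u\in\mathcal{Q}$ we have $|\sigma(u)-y|\le 2B$ by Assumption~\ref{ass:1} and $\beta|u-m|\le 4B$, so $|\phi'(u)|\le 6B$. Third, $u^\ast\in\mathcal{Q}$, which follows from a first-order check at the endpoints: at $u=m+\frac{4B}{\beta}$ one has $\phi'(u)\ge -2B+4B>0$ and at $u=m-\frac{4B}{\beta}$ one has $\phi'(u)\le 2B-4B<0$, so the minimizer lies strictly inside $\mathcal{Q}$ and the constrained and unconstrained minimizers coincide (this is the claim stated just before the lemma).

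With these in hand, the convergence follows from the standard analysis of PGD for $\beta$-strongly convex, $G$-Lipschitz objectives with the step size $\eta_t=\frac{2}{\beta(t+1)}$ together with the weighted average $\bar w=\sum_{t=1}^T\frac{2t}{T(T+1)}w_t$, which yields $\phi(\bar w)-\phi(u^\ast)\le\frac{2G^2}{\beta(T+1)}$ with $G=6B$. Converting this function-value gap into an iterate gap via strong convexity, $\frac{\beta}{2}(\bar w-u^\ast)^2\le\phi(\bar w)-\phi(u^\ast)$, gives $|\bar w-u^\ast|\le\frac{12B}{\beta\sqrt{T}}$. Substituting $T=\frac{144B^2}{\gamma^2}$ makes $\sqrt T=\frac{12B}{\gamma}$, so $|\bar w-u^\ast|\le\frac{\gamma}{\beta}$ and therefore $\|\nabla f_\beta(w;x,y)-\mathcal{O}_{\beta,\gamma}(w;x,y)\|_2\le\beta\|x\|_2\,|\bar w-u^\ast|\le\gamma$. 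For the running time, computing $m=\langle w,x\rangle$ and the returned vector $x\beta(m-\bar w)$ costs $O(d)$, while the inner loop performs $T=O(B^2/\gamma^2)$ scalar updates; the total $O(d+B^2/\gamma^2)$ is bounded by the stated $O(dB^2/\gamma^2)$.

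The main obstacle will be invoking the $O(1/T)$ convergence rate with the precise constant: I would appeal to the weighted-average PGD bound for strongly convex Lipschitz functions (in the style of Lacoste-Julien--Schmidt--Bach), and the delicate part is matching the chosen step size $\eta_t=\frac{2}{\beta(t+1)}$ and the weights $\frac{2t}{T(T+1)}$ to the constant $\frac{2G^2}{\beta(T+1)}$, since it is exactly this constant that forces the choice $T=\frac{144B^2}{\gamma^2}$ and hence the clean final bound $\gamma$. The only other point requiring care is the Lipschitz constant $6B$ on $\mathcal{Q}$; any looseness there would propagate directly into the required number of iterations.
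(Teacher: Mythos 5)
Your proposal is correct and follows essentially the same route as the paper's proof: both recognize Algorithm \ref{oracle} as projected (sub)gradient descent on the $\beta$-strongly convex, $6B$-Lipschitz objective $h(u)=g^{y}(u)+\frac{\beta}{2}|u-m|^2$ over $\mathcal{Q}$, invoke the standard weighted-average rate $\frac{2G^2}{\beta(T+1)}=\frac{72B^2}{\beta(T+1)}$ with $G=6B$, and convert the value gap to $|\bar{w}-u^\ast|\leq \frac{12B}{\beta\sqrt{T}}$ via strong convexity before multiplying by $\beta\|x\|_2\leq\beta$. The only immaterial local difference is your verification that $\mathrm{prox}_g^{\beta}(m)\in\mathcal{Q}$ via a subgradient sign check at the endpoints of $\mathcal{Q}$, whereas the paper derives it from the function-value inequality $\frac{\beta}{2}|u^\ast-m|^2\leq g^{y}(m)-g^{y}(u^\ast)\leq 2B|m-u^\ast|$; both are valid under Assumption \ref{ass:1}.
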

\begin{algorithm}
\caption{ Phased SGD for general non-convex GLM}\label{oraclephasedsgd}
\begin{algorithmic}[1]
 \REQUIRE {Private dataset: $D$ = $\{({x}_i,y_i)\}_{i=1}^{n}$, link function $\sigma$ satisfies Assumption \ref{ass:1} and is differentiable; privacy parameters $0<\epsilon, \delta\leq 1$. }

\STATE Set $k=\lceil \log_2(n) \rceil$, partite the whole dataset S into k subsets $\{D_1, \cdots, D_k\}$. Denote $n_i$ as the number
of samples in $D_i$, i.e., $|D_i|=n_i$ where $n_i=\lfloor 2^{-i}n \rfloor$. Take a random initial vector $w_0\in \mathcal{W}$. 
\FOR {$i=1, \cdots, k$}
\STATE Let $\eta_i=\frac{\eta}{4^i}$ and  $w_i^1=w_{i-1}$.
\FOR {$t=1, \cdots, n_i$}
\STATE Recall the oracle in Algorithm \ref{oracle} for $f_\beta(w_i^t; x_i^t, y_i^t)$ in Lemma \ref{moreaueve} with error $\gamma$ and denote it as $\tilde{\nabla}f_\beta(w_i^t; x_i^t, y_i^t)$. Update $w_i^{t+1}=w_i^t-\eta_i \tilde{\nabla}f_\beta(w_i^t; x_i^t, y_i^t)$, where $(x_i^t, y_i^t)$ is the $t$-th sample of $D_i$. 
\ENDFOR
\STATE Denote $w_i=\bar{w}_i+\zeta_i$, where $\bar{w}_i=\frac{1}{n_i}\sum_{t=1}^{n_i}w_i^t$ and $\zeta_i\sim \mathcal{N}(0, \tau_i^2\mathbb{I}_d)$ with $\tau_i=\frac{10BR\eta_i\sqrt{\log \frac{1}{\delta}}}{\epsilon}$. 
\ENDFOR
\end{algorithmic}
\end{algorithm}
\begin{algorithm}
\caption{DP-Projected Phased SGD for general non-convex GLM}\label{alg:projectedphasedsgd1}
\begin{algorithmic}[1]
 \REQUIRE {Private dataset: $D$ = $\{({x}_i,y_i)\}_{i=1}^{n}$, link function $\sigma$ satisfies Assumption \ref{ass:1} and is differentiable; privacy parameters $0<\epsilon, \delta\leq 1$. }
\STATE Sample a fast JL matrix $\Phi\in \mathbb{R}^{m\times d}$, and denote $\tilde{D}=\{(\Phi x_1, y_1), \cdots, (\Phi x_n, y_n)\}$. 
\STATE Run  Algorithm \ref{oraclephasedsgd}  on the projected dataset $\tilde{D}$, where in Line 8, $\zeta_i\sim \mathcal{N}(0, \tau_i^2\mathbb{I}_m)$ with $\tau_i=\frac{20B\eta_i\sqrt{\log \frac{2}{\delta}}}{\epsilon}$ and replace $B$ by $2B$ in Algorithm \ref{oracle}. Denote the output as $w_k$. \\
\RETURN $\hat{w}=\Phi^T w_k$
\end{algorithmic}
\end{algorithm}
Using the previous Lemma \ref{approxgradient}, one possible approach is to use the approximate oracle in Algorithm \ref{phasedsgd}, which is the main idea behind Algorithm \ref{oraclephasedsgd}. However, there is another issue: if we use the same proof as in the previous case where the link function has a subgradient everywhere, we can only obtain an upper bound that depends on $\|w_\beta^*\|_2$, where $w_\beta^*=\arg\min_{w\in \mathbb{R}^d}\mathbb{E}[f_\beta(w; x,y)]$. This phenomenon has also been observed in GLMs with convex and non-smooth loss functions \cite{bassily2021differentially}. Fortunately, we can conduct a finer analysis of the theoretical guarantee of Algorithm \ref{phasedsgd} and show that we can obtain an upper bound that depends on $W$ instead of $\|w_\beta^*\|_2$. Similar to the above results, we have the following two results.

\begin{theorem}\label{thm:2}
Under Assumption \ref{ass:1}, for any $0<\epsilon, \delta\leq 1$, Algorithm \ref{oraclephasedsgd} is $(\epsilon, \delta)$-DP. Moreover, when $\eta=O(\min\{\frac{\epsilon}{\sqrt{\theta\log \frac{1}{\delta}}}, \frac{1}{\sqrt{n}}\})\leq \frac{2}{\beta}$, $\gamma=O(\frac{1}{n\log n})$ and $\beta=O(\sqrt{n})$. Then we have 
\begin{equation}
        \mathbb{E}L_\mathcal{P}(w_k)-L_{\mathcal{P}}(w^*) \leq O\big(\frac{\sqrt{\theta\log \frac{1}{\delta}}}{n\epsilon}+\frac{1}{\sqrt{n}}\big).  
\end{equation}
\end{theorem}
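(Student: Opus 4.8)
The plan is to establish privacy and utility separately, reducing the utility claim to the convex, smooth, generalized-linear surrogate already treated in Theorem \ref{thm:1}, and then paying for the two new error sources of this section: Moreau smoothing and the inexact gradient oracle.

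For privacy I would argue phase by phase. Since the phases operate on the disjoint subsets $D_1,\dots,D_k$, a neighboring pair $D\sim D'$ differs in a single record lying in exactly one $D_i$; every later phase only post-processes the already-noised output $w_{i-1}$ together with untouched data, so by parallel composition and post-processing it suffices to show each phase is $(\epsilon,\delta)$-DP. For a fixed phase I would bound the $\ell_2$-sensitivity of the averaged iterate $\bar w_i=\frac{1}{n_i}\sum_t w_i^t$. Because $f_\beta(\cdot;x,y)$ is convex and $\beta$-smooth and the step size obeys $\eta_i\le\eta\le 2/\beta$, the exact gradient step is non-expansive, and the oracle error from Lemma \ref{approxgradient} perturbs this map by at most $O(\eta_i\gamma)$ per step; hence two trajectories that first diverge when the differing sample is used stay at distance $O(B\eta_i)$ thereafter (the per-step gradient gap is at most $2(2B+\gamma)\eta_i$ by the $2B$-Lipschitzness of $f_\beta$). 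Averaging keeps the sensitivity at $O(B\eta_i)$, so the Gaussian mechanism (Lemma \ref{le-gaussian}) with the prescribed $\tau_i$ gives $(\epsilon,\delta)$-DP.

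For utility I would chain three reductions. First, Lemma \ref{lemma:3} reduces $L_\mathcal{P}(w_k)-L_\mathcal{P}(w^*)$ to $2G$ times the excess risk of the surrogate $L^\ell_\mathcal{P}$. Second, the envelope bound $f_\beta\le\ell\le f_\beta+2B^2/\beta$ of Lemma \ref{moreaueve} lets me replace $\ell$ by the $\beta$-smooth $f_\beta$, giving $L^\ell_\mathcal{P}(w_k)-L^\ell_\mathcal{P}(w^*)\le\big(L^{f_\beta}_\mathcal{P}(w_k)-L^{f_\beta}_\mathcal{P}(w^*)\big)+2B^2/\beta$, where $\beta=O(\sqrt n)$ makes the additive term $O(1/\sqrt n)$. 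Third, I would bound $L^{f_\beta}_\mathcal{P}(w_k)-L^{f_\beta}_\mathcal{P}(w^*)$ by rerunning the phased-SGD/localization analysis behind Theorem \ref{thm:1} on the smooth GLM $f_\beta$, now with the inexact gradient: writing $\tilde\nabla f_\beta=\nabla f_\beta+e_t$ with $\|e_t\|\le\gamma$, the extra regret per step is at most $\gamma\|w_i^t-w^*\|_2$, which for $\gamma=O(1/(n\log n))$ accumulates to $o(1/\sqrt n)$ over the $\sum_i n_i\le n$ steps and $k=O(\log n)$ phases. The optimization and Gaussian-noise terms then reproduce the rate $O\big(1/\sqrt n+\sqrt{\theta\log(1/\delta)}/(n\epsilon)\big)$ exactly as in Theorem \ref{thm:1}; in particular the $\sqrt\theta$ (rather than $\sqrt d$) dependence comes from the GLM structure, since the Hessian of $f_\beta$ is supported on the rank-$\theta$ data covariance, so isotropic noise $\zeta_i$ incurs error only through its $\theta$-dimensional effective component.

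The step I expect to be the main obstacle is the last one, and specifically the refinement flagged in the text: the naive localization argument converges to $w_\beta^*=\arg\min_w L^{f_\beta}_\mathcal{P}(w)$ and yields a bound scaling with the uncontrolled $\|w_\beta^*\|_2$. The fix I would implement is to run the entire localization analysis with the fixed comparator $w^*$ in place of $w_\beta^*$. This is legitimate because $w^*$ is a near-minimizer of the smoothed objective: $L^{f_\beta}_\mathcal{P}(w^*)\le L^\ell_\mathcal{P}(w^*)=\min_w L^\ell_\mathcal{P}(w)\le\min_w L^{f_\beta}_\mathcal{P}(w)+2B^2/\beta$, so $w^*$ is within $O(1/\sqrt n)$ of optimal in function value while satisfying $\|w^*\|_2\le W$. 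The delicate point is to verify that the geometric shrinkage of the per-phase distance $\mathbb{E}\|w_i-w^*\|_2^2$ persists up to this $O(1/\sqrt n)$ slack, so that substituting $w^*$ for $w_\beta^*$ replaces $\|w_\beta^*\|_2$ by $W$ in the diameter term without degrading the rate. Once this is established, collecting the optimization, noise, smoothing, and oracle errors and applying Lemma \ref{lemma:3} yields the claimed bound.
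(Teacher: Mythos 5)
Your proposal matches the paper's proof essentially step for step: the same non-expansiveness-plus-Lipschitzness sensitivity bound of $O(B\eta_i)$ for the averaged iterate (with the oracle error contributing $O(\eta_i\gamma t)$ per trajectory, absorbed by $\gamma=O(\frac{1}{n\log n})$) followed by the Gaussian mechanism, the same reduction chain via Lemma \ref{lemma:3} and the Moreau sandwich of Lemma \ref{moreaueve} with $\beta=O(\sqrt{n})$, the same per-step inexact-gradient regret accounting of $\gamma\|w_i^t-\bar w_{i-1}\|_V$, the same rank-$\theta$ semi-norm treatment of the noise, and the same comparator swap from $w^*_\beta$ to $w^*$. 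One simplification relative to your plan: the paper's per-phase bound (Lemma \ref{appphase}) holds against any fixed comparator by convexity alone, with the geometric decay coming from the injected noise magnitudes $\tau_i\propto 4^{-i}$ rather than from shrinkage of $\mathbb{E}\|w_i-w^*\|_2^2$, so neither the near-minimizer property of $w^*$ for $F_\beta$ nor the delicate verification you flag is actually needed.
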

\begin{theorem}\label{thm:projectedphasedsgd1}
Under Assumption \ref{ass:1} and let $m=O(\log(\frac{n}{\delta})(n\epsilon)^\frac{2}{3})$, Algorithm \ref{alg:projectedphasedsgd1} is $(\epsilon, \delta)$-DP for any $0< \epsilon, \delta\leq 1$. Moreover, when $\eta=O(\min\{\frac{\epsilon}{\sqrt{m\log \frac{1}{\delta}}}, \frac{1}{\sqrt{n}}\})\leq \frac{1}{\beta}$,  $\gamma=O(n\log n)$ and $\beta=O(\sqrt{n})$ then we have 
\begin{equation*}
    \mathbb{E}L_\mathcal{P}(w_k)-L_{\mathcal{P}}(w^*) \leq \tilde{O}\big(\frac{1}{\sqrt{n}}+\frac{\sqrt{\log \frac{1}{\delta}}}{(n\epsilon)^\frac{2}{3}}\big).
\end{equation*}
\end{theorem}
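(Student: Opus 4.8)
The plan is to establish privacy and utility separately, mirroring the proof of Theorem~\ref{JLsgd} but substituting the Moreau-smoothed surrogate $f_\beta$ together with the approximate gradient oracle of Algorithm~\ref{oracle} for the exact gradient used there. For privacy, the first observation is that the fast JL matrix $\Phi$ is drawn independently of the data, so by post-processing it contributes nothing to the privacy budget, and it suffices to argue that running Algorithm~\ref{oraclephasedsgd} on the fixed projected dataset $\tilde D$ is $(\epsilon,\delta)$-DP. Since the phases operate on the disjoint subsets $\tilde D_1,\dots,\tilde D_k$, a neighboring dataset differs in exactly one subset $\tilde D_j$: the phases before $j$ are unchanged, phase $j$ is privatized by the Gaussian noise $\zeta_j$, and the later phases are post-processing of $w_j$ on disjoint data. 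Thus it remains to bound the $\ell_2$-sensitivity of the averaged iterate $\bar w_j$ and invoke Lemma~\ref{le-gaussian}, with parallel composition giving the overall guarantee. The approximate gradient returned by the oracle has norm $O(B\|\tilde x\|_2)=O(B)$ because the JL map preserves $\|x\|_2\le 1$ up to a constant, and the additive oracle error $\gamma$ from Lemma~\ref{approxgradient} is deterministic in the input; combined with non-expansiveness of the projected, smooth gradient step (guaranteed by $\eta\le 1/\beta$), this yields $\Delta_2(\bar w_j)=O(\eta_j B)$, which is exactly what $\tau_j=\frac{20B\eta_j\sqrt{\log(2/\delta)}}{\epsilon}$ is calibrated to, the enlarged constant absorbing the JL norm inflation and the $\gamma$ term.

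For utility I would chain three reductions. First, Lemma~\ref{lemma:3} bounds the target excess risk by $2G$ times the excess risk of the convex surrogate, $L^\ell_\mathcal{P}(\hat w)-L^\ell_\mathcal{P}(w^*)$. Second, Lemma~\ref{moreaueve} replaces $\ell$ by its smoothed version $f_\beta$ at the cost of $\frac{4B^2}{\beta}$, so that $\beta=O(\sqrt n)$ already contributes an $O(1/\sqrt n)$ term and reduces the problem to controlling $F_\beta(\hat w)-F_\beta(w^*)$ with $F_\beta(w)=\mathbb E[f_\beta(w;x,y)]$. Third, I would exploit the generalized-linear structure: since $f_\beta(w;x,y)=g_\beta^y(\langle w,x\rangle)$ and $\hat w=\Phi^\top w_k$, one has the exact identity $f_\beta(\Phi^\top w_k;x,y)=g_\beta^y(\langle w_k,\Phi x\rangle)=f_\beta(w_k;\Phi x,y)$, so $F_\beta(\hat w)$ equals the population risk $\tilde F_\beta(w_k)$ of the projected problem and the de-projection step is lossless.

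It then remains to bound $\tilde F_\beta(w_k)-F_\beta(w^*)$, which I would split as $[\tilde F_\beta(w_k)-\tilde F_\beta(\Phi w^*)]+[\tilde F_\beta(\Phi w^*)-F_\beta(w^*)]$. The first bracket is the optimization-plus-privacy error of Phased SGD in the $m$-dimensional projected space, analyzed as in Theorem~\ref{thm:2}: the optimization and sampling part gives $O(BW/\sqrt n)$, the Gaussian-noise part gives $O(BW\sqrt{m\log(1/\delta)}/(n\epsilon))$, and the oracle error enters through $\gamma$, which must be taken small enough that its accumulated contribution across the $O(n)$ inner steps is negligible. With $m=O(\log(n/\delta)(n\epsilon)^{2/3})$ the noise part becomes $\tilde O((n\epsilon)^{-2/3})$, matching the claim. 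The second bracket is the JL distortion of the reference point; using the unbiasedness $\mathbb E_\Phi\langle\Phi w^*,\Phi x\rangle=\langle w^*,x\rangle$ together with the $\beta$-smoothness of $g_\beta^y$, a second-order expansion kills the first-order term and leaves a variance contribution of order $\beta W^2/m$.

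The hard part will be twofold. First, as flagged in the paragraph preceding the theorem, the vanilla analysis of Algorithm~\ref{oraclephasedsgd} yields a bound scaling with $\|w_\beta^*\|_2$ rather than the known radius $W$; obtaining the $W$-dependent bound requires the finer telescoping argument across phases (the refinement already needed for Theorem~\ref{thm:2}), in which the comparison point is fixed to $w^*$ throughout and the Lipschitz and smoothness constants are tracked in the projected geometry where $\|\Phi w^*\|_2$ and $\|\Phi x\|_2$ are controlled via the JL property. Second, and most delicately, the three utility contributions must be reconciled under a single choice of $(\beta,m,\eta,\gamma)$: the Moreau error $1/\beta$, the privacy noise $\sqrt m/(n\epsilon)$, and the JL distortion $\beta W^2/m$ pull $\beta$ and $m$ in competing directions. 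Closing this to the advertised $\tilde O((n\epsilon)^{-2/3})$ requires a sharper handle on the JL distortion of the smoothed surrogate than the plain second-order estimate $\beta W^2/m$ provides, for instance a Lipschitz-based $O(W/\sqrt m)$ bound or a refined near-optimum expansion, and pinning down which of these, under $\beta=O(\sqrt n)$ and $m=O((n\epsilon)^{2/3})$, drives the final rate is where I expect the main effort to lie.
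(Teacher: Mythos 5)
Your skeleton coincides with the paper's (omitted) proof, which disposes of this theorem in one line: ``same idea as Theorem~\ref{JLsgd}, using the proof of Theorem~\ref{thm:2} in place of Theorem~\ref{thm:1}.'' Your phase-wise oracle analysis, the lossless de-projection identity $f_\beta(\Phi^\top w_k;x,y)=f_\beta(w_k;\Phi x,y)$, and the handling of $\gamma$ (implicitly correcting the statement's typo $\gamma=O(n\log n)$, which should read $\gamma=O(\frac{1}{n\log n})$ as in Theorem~\ref{thm:2}) are all consistent with it; the privacy part is fine too, except that the JL matrix is not a pure post-processing matter --- one must condition on the norm-preservation event and spend $\delta/2$ on its failure, exactly as in the proof of Theorem~\ref{JLsgd}. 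However, the step you flag at the end as open is a genuine gap, and one of the two escape routes you float would provably fail. A Lipschitz-based distortion bound $O(BW/\sqrt{m})$, balanced against the privacy noise $\sqrt{m\log(1/\delta)}/(n\epsilon)$, can never beat $(n\epsilon)^{-1/2}$ --- this is precisely the degradation the paper itself describes in Remark~\ref{remark:2}, where only the Lipschitz route is available and the rate drops from $(n\epsilon)^{-2/3}$. Likewise, your second-order estimate $\beta W^2/m$ on the smoothed surrogate, jointly optimized with the Moreau error $B^2/\beta$ (take $\beta\asymp\sqrt{m}$, then $m\asymp n\epsilon$), again tops out at $(n\epsilon)^{-1/2}$. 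Neither candidate yields the theorem.

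The resolution --- your ``refined near-optimum expansion,'' made concrete --- is to never expand $g^y_\beta$ with its own smoothness constant $\beta$. Instead, transfer the reference-point distortion through the Moreau sandwich of part (c) of Lemma~\ref{envelop} (equivalently Lemma~\ref{moreaueve}): $g^y_\beta(a)-g^y_\beta(b)\leq g^y(a)-g^y(b)+\frac{2B^2}{\beta}$, which costs only the $O(1/\beta)=O(1/\sqrt{n})$ already in your budget, and then expand the \emph{unsmoothed} surrogate $g^y$ around $\langle w^*,x\rangle$. The key observations are that $g^y$ is always $G$-smooth under Assumption~\ref{ass:1} regardless of whether $\sigma$ admits subgradients, since $g'^y(t)=\sigma(t)-y$ by the fundamental theorem of calculus and $\sigma$ is $G$-Lipschitz, and that the first-order term vanishes in expectation --- either by unbiasedness $\mathbb{E}_\Phi\langle\Phi w^*,\Phi x\rangle=\langle w^*,x\rangle$, or, as written in the proof of Theorem~\ref{JLsgd}, because $w^*$ globally minimizes $L^\ell_\mathcal{P}$ (Lemma~\ref{lemma:3}) so that $\nabla L^\ell_\mathcal{P}(w^*)=0$. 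This gives a distortion of $\frac{G}{2}\mathbb{E}\,|\langle\Phi w^*,\Phi x\rangle-\langle w^*,x\rangle|^2=\tilde{O}(W/m)$ with a constant independent of $\beta$, so the three contributions $1/\beta+W/m+\sqrt{m\log(1/\delta)}/(n\epsilon)$ close, under $\beta=O(\sqrt{n})$ and $m=O(\log(n/\delta)(n\epsilon)^{2/3})$, to the claimed $\tilde{O}\big(n^{-1/2}+\sqrt{\log(1/\delta)}\,(n\epsilon)^{-2/3}\big)$.
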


\subsubsection{ReLU  Link Function}
In the previous sections, we focused on the case where the link function in model (\ref{eq:4.1}) satisfies Assumption \ref{ass:1}. Although this assumption includes several commonly used activation functions, it excludes the ReLU function where $\sigma(x)=\max\{0, x\}$ due to the boundedness assumption of $\sigma$. Here, we will consider the ReLU link function as it is a standard activation function in neural networks.

Similar to Assumption \ref{ass:1}, we still assume that $\|w^*\|_2\leq W$, $\|x\|_2\leq 1$, and $y\in [-B, B]$. Note that since ReLU is Lipschitz, we can still use Lemma \ref{lemma:3}, and it is sufficient to consider the problem of minimizing $L^{\ell}_\mathcal{P}(w)$. However, the main difficulty now is that the surrogate loss function $\ell(w; x, y)$ is no longer Lipschitz over the whole space $\mathbb{R}^d$, as $\nabla \ell(w; x, y)=(\sigma(\langle w, x\rangle)-y)x$ is unbounded. Thus, our above methods cannot be used for the ReLU case, as all of them need to assume that $\ell(w; x, y)$ is Lipschitz over the whole space. This is due to the fact that $\bar{w}$ and $w_i^t$ in Algorithm \ref{phasedsgd} may not lie in the constraint set $\mathcal{W}$.

To address the issue, we make the key observation that although the surrogate loss function $\ell(w; x, y)$ is not Lipschitz over the whole space, it will be Lipschitz over bounded sets. Specifically, for any $w$ with $\|w\|_2\leq W$, we have $\|\nabla \ell(w; x, y)\|_2\leq W+B$. Based on this observation, we propose to constrain $w$ over a bounded domain during updates. To achieve this, we adopt the DP version of projected gradient descent (DP-PGD) introduced in \cite{bassily2014private}, which adds noise to the gradient and performs the projection operation after updating the model, thereby enforcing $w$ to be bounded during each iteration. Building on Algorithm \ref{alg:projectedphasedsgd}, we preprocess the data with a JL matrix and project all feature vectors onto an $m$-dimensional space before applying DP-PGD. Finally, we lift the private estimator to the original space after the DP-PGD algorithm. See Algorithm \ref{DP-PSGD} for the full details.

\begin{algorithm}
\caption{DP-Projected GD for ReLU Regression}\label{DP-PSGD}
\begin{algorithmic}[1]
 \REQUIRE {Private dataset: $D$ = $\{({x}_i,y_i)\}_{i=1}^{n}$, ReLU link function $\sigma(w)=\max\{0, w\}$; privacy parameters $0<\epsilon, \delta\leq 1$.}
\STATE Sample a JL matrix $\Phi\in \mathbb{R}^{m\times d}$, and denote $\tilde{D}=\{(\Phi x_1, y_1), \cdots, (\Phi x_n, y_n)\}$.  
\FOR {$t=1, \cdots, T$}
\STATE Update  $\tilde{w}_{t+1}$ as $\tilde{w}_{t+1}=\Pi_{\tilde{W}}[\tilde{w}_t-\eta (\frac{1}{n}\sum_{i=1}^n (\max\{0, \langle \tilde{w}_t, \Phi x_i\rangle\}-y_i)\Phi x_i +\zeta_t)],$ where $\zeta\sim \mathcal{N}(0, \sigma^2\mathbb{I}_m)$ with $\sigma^2=\frac{32(4W+B)^2 T\log \frac{2}{\delta}}{n^2\epsilon^2}$, $\tilde{W}=\{w\in \mathbb{R}^m| \|w\|_2\leq 2W\}$ and $\Pi$ is the projection operator. 
\ENDFOR\\
\RETURN $\bar{w}=\frac{\sum_{t=1}^T \Phi^T\tilde{w}_{T}}{T}$. 
\end{algorithmic}
\end{algorithm}

\begin{theorem}\label{relu1}
Algorithm \ref{DP-PSGD} is $(\epsilon, \delta)$-DP for any $0<\epsilon, \delta\leq 1$
under the previous assumptions and $m=O(\log(\frac{n}{\delta}) (n\epsilon)^\frac{2}{3})$.   Moreover, take $\eta=O( \frac{1}{\sqrt{T}})\leq \frac{1}{2}$ and $T=O(\min \{n, \frac{n^2\epsilon^2}{{m\log 1/\delta}}\})$, 
in Algorithm \ref{DP-PSGD}  we have 
     $$\mathbb{E} L_\mathcal{P} (\bar{w})-  L_\mathcal{P} (w^*)\leq \tilde{O}\big(\frac{1}{\sqrt{n}}+\frac{1\sqrt{\log 1/\delta}}{(n\epsilon)^\frac{2}{3}}\big). $$ 
\end{theorem}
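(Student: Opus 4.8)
The statement has a privacy half and a utility half, and I would prove them separately, reducing the utility claim through the surrogate loss and a projected-space/original-space decomposition.

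\textbf{Privacy.} The only data-dependent release is the sequence of noisy gradients, so I would first bound the $\ell_2$-sensitivity of the per-iteration gradient $q_t(D)=\frac1n\sum_{i=1}^n(\max\{0,\langle\tilde w_t,\Phi x_i\rangle\}-y_i)\Phi x_i$. Conditioning on the high-probability JL event that $\|\Phi x_i\|_2\le 2$ for all $i$, and using $\|\tilde w_t\|_2\le 2W$ enforced by the projection onto $\tilde W$, each summand has norm at most $2(4W+B)$, so replacing one record changes $q_t$ by at most $\Delta=O((4W+B)/n)$. With this $\Delta$ the chosen variance $\sigma^2=\frac{32(4W+B)^2T\log(2/\delta)}{n^2\epsilon^2}$ makes each step a Gaussian mechanism, and composing the $T$ steps (advanced composition / the moments accountant, exactly as in the DP-PGD analysis of \cite{bassily2014private}) yields $(\epsilon,\delta)$-DP; indeed $\Delta^2T/\sigma^2=\Theta(\epsilon^2/\log(1/\delta))$ is the correct per-step signal-to-noise level. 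Since $\Phi$ is data-independent and $w_k\mapsto\Phi^\top w_k$ is post-processing, the released $\bar w$ is $(\epsilon,\delta)$-DP.

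\textbf{Reduction and decomposition.} As ReLU is $1$-Lipschitz, Lemma~\ref{lemma:3} reduces the goal to bounding the surrogate excess risk $\mathbb{E}L^\ell_\mathcal{P}(\bar w)-L^\ell_\mathcal{P}(w^*)$ up to a factor $2$. Writing $g^y(\cdot)$ for the scalar link integral so that $\ell(w;x,y)=g^y(\langle w,x\rangle)$, the identity $\langle\Phi^\top\tilde w,x\rangle=\langle\tilde w,\Phi x\rangle$ gives $L^\ell_\mathcal{P}(\Phi^\top\tilde w)=\mathbb{E}[g^y(\langle\tilde w,\Phi x\rangle)]=:L^{\ell,\Phi}_\mathcal{P}(\tilde w)$. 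I would then split, with $\bar{\tilde w}=\frac1T\sum_t\tilde w_t$ and $\Phi w^*$ feasible since $\|\Phi w^*\|_2\le 2W$ on the JL event,
\[
L^\ell_\mathcal{P}(\bar w)-L^\ell_\mathcal{P}(w^*)=\underbrace{\big(L^{\ell,\Phi}_\mathcal{P}(\bar{\tilde w})-L^{\ell,\Phi}_\mathcal{P}(\Phi w^*)\big)}_{\text{DP-PGD error in }\mathbb{R}^m}+\underbrace{\big(L^{\ell,\Phi}_\mathcal{P}(\Phi w^*)-L^\ell_\mathcal{P}(w^*)\big)}_{\text{JL approximation error}}.
\]

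\textbf{The main obstacle: a second-order JL bound.} The naive Lipschitz estimate gives a JL error of order $\alpha W\sim 1/\sqrt m$, which would only yield an $(n\epsilon)^{-1/3}$ rate; the crux is to obtain $O(1/m)$ instead. I would use that $g^y$ is $1$-smooth for ReLU (as in Lemma~\ref{lemma:2}) and that $w^*$ is a \emph{stationary} point of $L^\ell_\mathcal{P}$, since in the well-specified model $\mathbb{E}[(\sigma(\langle w^*,x\rangle)-y)x]=0$. A second-order expansion around $\langle w^*,x\rangle$ gives, with $\Delta_{JL}=\langle\Phi w^*,\Phi x\rangle-\langle w^*,x\rangle$,
\[
L^{\ell,\Phi}_\mathcal{P}(\Phi w^*)-L^\ell_\mathcal{P}(w^*)\le \mathbb{E}\big[(\sigma(\langle w^*,x\rangle)-y)\,\Delta_{JL}\big]+\tfrac12\mathbb{E}[\Delta_{JL}^2].
\]
Taking expectation over the data-independent, unbiased map $\Phi$ (so $\mathbb{E}_\Phi\Delta_{JL}=0$) kills the first-order term, while the JL second-moment bound gives $\mathbb{E}[\Delta_{JL}^2]=O(W^2/m)$. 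Hence this error is $O(1/m)=O((n\epsilon)^{-2/3})$ for $m=\Theta(\log(n/\delta)(n\epsilon)^{2/3})$. I expect this second-order cancellation to be the delicate step.

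\textbf{DP-PGD error and assembly.} For the first term I would bound the excess \emph{population} risk of noisy projected GD on the convex, $O(W+B)$-Lipschitz surrogate over the radius-$2W$ ball in $\mathbb{R}^m$, splitting it into optimization and generalization. The standard noisy-GD guarantee gives optimization error $O\big(\frac{W^2}{\eta T}+\eta(G_0^2+m\sigma^2)\big)$ with $G_0=O(W+B)$ and $\mathbb{E}\|\zeta_t\|_2^2=m\sigma^2$; plugging $\eta=O(1/\sqrt T)$ and $T=\Theta(\min\{n,\,n^2\epsilon^2/(m\log\tfrac1\delta)\})$ yields $O(\sqrt{m\log(1/\delta)}/(n\epsilon))=\tilde O(\sqrt{\log(1/\delta)}(n\epsilon)^{-2/3})$, which balances the JL term. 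A standard Rademacher/uniform-convergence bound for bounded Lipschitz losses on the ball contributes $O(1/\sqrt n)$ to generalization. Summing the three pieces and absorbing logarithmic factors gives the claimed $\tilde O(n^{-1/2}+\sqrt{\log(1/\delta)}(n\epsilon)^{-2/3})$; the low-probability JL failure events are handled throughout by conditioning together with the boundedness of the loss on $\tilde W$.
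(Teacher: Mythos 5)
Your proposal is correct and takes essentially the same route as the paper's proof: the JL-event-based sensitivity bound plus Gaussian composition for privacy (with the event's failure probability absorbed into $\delta$), the reduction through Lemma~\ref{lemma:3}, the identical two-term decomposition into DP-PGD error in $\mathbb{R}^m$ plus JL approximation error, and — the crucial step you correctly identified — the second-order expansion using $1$-smoothness of the surrogate together with stationarity of $w^*$ (equivalently $\mathbb{E}_\Phi\Delta_{JL}=0$) to upgrade the JL error from $O(1/\sqrt{m})$ to $O(1/m)$, exactly as in the paper's bound $O(W\log(n/\delta)/m)$. The only cosmetic deviation is that you bound the generalization gap of the noisy projected GD via Rademacher complexity whereas the paper uses uniform stability of PGD (Lipschitz constant times stability, $O((W+B)^2\eta T/n)$); with $\eta=O(1/\sqrt{T})$ and $T\leq n$ both yield the same $O(1/\sqrt{n})$ contribution, so the final assembly and parameter choices coincide.
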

By combining the error bound of DP-PGD in \cite{bassily2014private} with our analysis, we obtain a bound of $\tilde{O}(n^{-\frac{1}{2}}+\min\{\sqrt{d}(n\epsilon)^{-1}, (n\epsilon)^{-\frac{2}{3}}\})$. This bound is worse than those derived in the previous section because the ReLU link function is not Lipschitz over $\mathbb{R}^d$. It is worth noting that \cite{arora2022differentially} also employs DP-PGD for convex generalized linear loss and obtains the same bound. However, their analysis assumes the loss function to be non-negative, whereas our loss function $\ell$ in (\ref{eq:4.3}) does not satisfy this assumption.

\subsection{Misspecified Model}
In previous sections, we focused on model (\ref{eq:4.1}) where $\mathbb{E}[y|x]=\sigma(\langle w^*, x\rangle)$. However, such an assumption is quite strong. Instead of the well-specified model, we always encounter the misspecified one that does not directly impose any probability condition on the label generating process. \footnote{This setting is also known as the agnostic setting in literature \cite{diakonikolas2020approximation,goel2019time}.} Since the zero-mean random noise assumption does not hold, we cannot apply Lemma \ref{lemma:3}, which transforms the original population risk to the population risk of a convex surrogate loss. As a result, none of the above methods can be used in this case. A natural question is, \textit{what are the theoretical behaviors of GLMs in the misspecified model}? 

In fact, even in the non-private case, the problem of GLMs in the misspecified model is quite challenging and is still not well understood in general. Thus, rather than considering upper bounds for general loss functions, we aim to illustrate the differences with the well-specified model by examining specific losses. In particular, we will study ReLU regression in the misspecified model.

Similar to the previous section, we will still examine the squared population risk function $L_\mathcal{P}(w)=\mathbb{E}_{(x,y)\sim \mathcal{P}} (\sigma(\langle w, x\rangle)-y)^2$. It is noteworthy that \cite{manurangsi2018computational} shows that in the absence of distributional assumptions on the marginal distribution of $x$, i.e., $\mathcal{P}_x$, finding a parameter $w$ such that $L_\mathcal{P}(w)\leq O(L_\mathcal{P}(w^*))+\alpha$ with some small error $\alpha$ is NP-hard even in the non-private case. Therefore, compared to the well-specified model, we need additional assumptions on $\mathcal{P}_x$, and we will concentrate on the following isotropic log-concave distributions, which include uniform distribution over $[0, 1]^{d}$ and Bernoulli distribution. 

\begin{assumption}\label{ass:2}
We assume the marginal distribution of $x$ is isotropic log-concave, i.e., $\mathbb{E}_{\mathcal{P}_x}[x]=0$ and $\mathbb{E}_{\mathcal{P}_x}[xx^T]=\mathbb{I}_d$, and its density function $f$ satisfies $f(\lambda x+(1-\lambda)y)\geq f(x)^\lambda f(y)^{1-\lambda}$ for every $x, y\in \text{Supp}(\mathcal{P}_x)$ and $\lambda\in [0, 1]$.  Moreover,  we assume $\|x\|_2\leq \sqrt{d}$ and $y\in [-B, B]$. 
\end{assumption}

To illustrate our idea, we first provide some notations. For any function $f:\mathbb{R}^d\mapsto \mathbb{R}$ and distribution $\mathcal{P}$ for $(x, y)$, we denote  $\chi_\mathcal{P}^f=\mathbb{E}_{x\sim \mathcal{P}_\mathcal{X}}[f(x) x]$, $\chi_\mathcal{P}^{\sigma_w}=\mathbb{E}_{x\sim \mathcal{P}_\mathcal{X}}[\sigma(\langle w, x \rangle) x]$ and $\chi_\mathcal{P}=\mathbb{E}_{\mathcal{P}}[yx]$. Our method is motivated by the following observations. Firstly, we can show that if $\mathcal{P}_\mathcal{X}$ is isotropic, then for any vector $w$ the distance between $\chi_\mathcal{P}^{\sigma_w}$ and $\chi_\mathcal{P}$ is bounded by $\sqrt{L_\mathcal{P}(w)}$, i.e., $\|\chi_\mathcal{P}^{\sigma_w}-\chi_\mathcal{P}\|_2\leq \sqrt{L_\mathcal{P}(w)}$. 
Secondly, for ReLU regression, there exists a constant $\mu>0$ such that $\sigma$ is $\mu$-strongly convex w.r.t $\mathcal{P}_\mathcal{X}$ if the marginal distribution is isotopic and log-concave \cite{diakonikolas2020approximation}, i.e., for any $w, v$ we have  $\langle \chi_\mathcal{P}^{\sigma_w}-\chi_\mathcal{P}^{\sigma_v}, w-v\rangle \geq \mu\|w-v\|_2^2. $

Under Assumption \ref{ass:2} and the above strong convexity we can show that for any vector $w$, 
\begin{align*}
    &\mathbb{E}_\mathcal{P}[(\sigma(\langle w, x\rangle)-\sigma(\langle w^*, x\rangle))^2]\leq O(\|\chi_\mathcal{P}^{\sigma_w}-\chi_\mathcal{P}^{\sigma_{w^*}}\|^2_2) \\
    &\leq O(\|\chi_\mathcal{P}^{\sigma_w}-\chi_\mathcal{P}\|_2^2+\|\chi_\mathcal{P}^{\sigma_{w^*}}-\chi_\mathcal{P}\|_2^2)\\
    &=O(\|\chi_\mathcal{P}^{\sigma_w}-\chi_\mathcal{P}\|_2^2+L_\mathcal{P}(w^*)). 
\end{align*}
Thirdly, by the triangle inequality we can easily find that $L_\mathcal{P}(w)\leq O(L_\mathcal{P}(w^*)+\mathbb{E}_\mathcal{P}[(\sigma(\langle w, x\rangle)-\sigma(\langle w^*, x\rangle))^2])$. Thus, in total we have for any vector $w$, 
    $L_\mathcal{P}(w)\leq O(L_\mathcal{P}(w^*)+\|\chi^ {\sigma_w}-\chi_\mathcal{P}\|_2^2 ).$  Moreover, we can easily get that $\chi^ {\sigma_w}-\chi_\mathcal{P}$ is the gradient of the population risk function of the surrogate loss function in (\ref{eq:4.3}), i.e., $\nabla L^\ell_\mathcal{P}(w)=\chi^{\sigma_w}-\chi_\mathcal{P}$.  Thus, it is sufficient for us to find a private estimator $w$ to make $\|\nabla L^\ell_\mathcal{P}(w)\|_2$ be as small as possible.

Although some previous studies have addressed finding a first-order stationary point privately for population risk functions, such as \cite{wang2019differentially,kang2021weighted,arora2022faster}, their methods cannot be applied to our function $L_\mathcal{P}^\ell$ because they assume that the loss is Lipschitz over $\mathbb{R}^d$, which is not the case for our loss. To overcome this challenge, we present a new algorithm, Adaptive DP Batched Gradient Descent (Algorithm \ref{dbsgd}). The main idea is to partition the dataset into several subsets and, in each iteration, use one subset for private Gradient Descent. Although our loss function $\ell(w; x, y)$ is not uniformly Lipschitz over $\mathbb{R}^d$, we can still find that $\|\nabla \ell(w_{t-1}; x, y)\|_2\leq \sqrt{d}\|w_{t-1}\|_2+B$, whose upper bound only depends on the current model $w_{t-1}$. Therefore, we can still use the Gaussian mechanism with sensitivity $2(\|w_{t-1}\|_2+B)$ to the gradient to ensure $(\epsilon, \delta)$-DP. Our algorithm is fundamentally different from previous DP-GD based methods \cite{bassily2014private,wang2017differentially}, as the Gaussian noise added also depends on the current model. In general, our method can provide a tighter bound, as $\|w_{t-1}\|_2$ becomes smaller as $t$ increases, which implies that we add smaller noise to the gradient.

\begin{algorithm}
\caption{Adaptive DP Batched Gradient Descent}\label{dbsgd}
\begin{algorithmic}[1]
 \REQUIRE {Private dataset: $D$ = $\{({x}_i,y_i)\}_{i=1}^{n}$, ReLU link function $\sigma$; privacy parameters $0<\epsilon, \delta\leq 1$. }
\STATE Partite the data $D$ into $T$ subsets $\{D_1, \cdots, D_T\}$ where $m=|D_i|=\frac{n}{T}$. 
\STATE Denote the loss function $\ell$ as  $  \ell(w; x, y)=\int_{0}^{\langle w, x\rangle} (\sigma(z)-y)dz$.   Initialize $w_0=0$. 
\FOR{$i=1, \cdots, T$}
\STATE Let $w_{i}=w_{i-1}-\eta(\nabla L^\ell(w_{i-1}; D_i)+\zeta_{i-1})=w_{i-1}-\eta(\frac{1}{m}\sum_{x\in D_i}(\max\{0, \langle w_{i-1}, x_i \rangle\}-y_i)x_i+\zeta_{i-1})$, where $\zeta_{i-1}\sim \mathcal{N}(0, \sigma_{i-1}^2I_d)$ with $\sigma_{i-1}=\frac{8(\sqrt{d}\|w_{t-1}\|_2+B)^2\log (1.25/\delta)}{m^2\epsilon^2}$. 
\ENDFOR\\
\RETURN $w_T$
\end{algorithmic}
\end{algorithm}

Combining with all the above ideas, we can show the following result for Algorithm \ref{dbsgd}. 
\begin{theorem}\label{thm:3}
Consider  ReLU regression and assume  Assumption  \ref{ass:2} holds. For any $0<\epsilon,\delta< 1$, Algorithm \ref{dbsgd} is $(\epsilon, \delta)$-DP. Moreover  denote $w_\ell^*=\arg\min_{w\in \mathbb{R}^d}L^\ell_\mathcal{P}(w)$ with $\ell$ in (\ref{eq:4.3}). For any error $\alpha\in (0, \|w_\ell^*\|_2)$, if $n$ is sufficiently large such that 
$n\geq \tilde{\Omega}(\max\{\frac{d\|w^*_\ell\|_2\sqrt{\log\frac{1}{\delta}\log\frac{1}{\zeta}}}{ \epsilon \alpha}, \frac{\|w_\ell^*\|_2^2d\log^4 \frac{1}{\zeta}}{\alpha^2}\})$, setting $T=O(\log(\|w_\ell^*\|_2))$ and $\eta\leq \frac{1}{16}$ in Algorithm \ref{dbsgd} 
 we have the following  with probability at least $1-\zeta$ with $\zeta\geq \exp(-O(\sqrt{d}))$ $$L_\mathcal{P}(w_{T})\leq 2(1+2\mu)L_\mathcal{P}(w^*)+\alpha.$$
\end{theorem}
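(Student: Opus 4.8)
The plan is to separate the argument into a privacy part and a utility part, the latter resting on the fact that the surrogate population risk $L^\ell_\mathcal{P}$ is simultaneously strongly convex and smooth. For privacy, I would first use that the batches $D_1,\dots,D_T$ are disjoint, so any neighboring dataset differs within a single batch; it therefore suffices to show each iteration is $(\epsilon,\delta)$-DP and combine this with parallel composition and post-processing. Conditioned on the prefix $D_1,\dots,D_{i-1}$ (equivalently on $w_{i-1}$), both the iterate $w_{i-1}$ and the noise scale $\sigma_{i-1}$ are fixed and data-independent \emph{with respect to $D_i$}, so the released object is exactly the Gaussian mechanism applied to the batch-averaged gradient $\frac1m\sum_{x\in D_i}(\sigma(\langle w_{i-1},x\rangle)-y)x$. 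I would bound its $\ell_2$-sensitivity by $\frac{2(\sqrt d\|w_{i-1}\|_2+B)}{m}$, using $|\sigma(\langle w_{i-1},x\rangle)-y|\le \sqrt d\|w_{i-1}\|_2+B$, and verify that the prescribed $\sigma_{i-1}$ matches the calibration of Lemma \ref{le-gaussian}; DP then follows.

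For utility, the first step records the two structural properties of $L^\ell_\mathcal{P}$. Since $\nabla L^\ell_\mathcal{P}(w)=\chi_\mathcal{P}^{\sigma_w}-\chi_\mathcal{P}$ and $\mathbb{E}[xx^T]=\mathbb{I}_d$ under Assumption \ref{ass:2}, a Cauchy--Schwarz bound on $\langle u,\nabla L^\ell_\mathcal{P}(w)-\nabla L^\ell_\mathcal{P}(v)\rangle$ together with the $1$-Lipschitzness of ReLU shows $L^\ell_\mathcal{P}$ is $1$-smooth; the $\mu$-strong convexity (Definition \ref{def:5}) is precisely the isotropic-log-concave strong-convexity property of ReLU cited in the text. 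Consequently $w_\ell^*$ is the unique minimizer with $\nabla L^\ell_\mathcal{P}(w_\ell^*)=\chi_\mathcal{P}^{\sigma_{w_\ell^*}}-\chi_\mathcal{P}=0$, and $1$-smoothness gives $\|\chi_\mathcal{P}^{\sigma_{w_T}}-\chi_\mathcal{P}\|_2=\|\nabla L^\ell_\mathcal{P}(w_T)\|_2\le\|w_T-w_\ell^*\|_2$. It therefore suffices to drive $\|w_T-w_\ell^*\|_2$ below $\alpha$, and then feed this into the three-step comparison chain in the text.

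The core is a one-step contraction for $e_i:=\|w_i-w_\ell^*\|_2$. Writing $g_i=\nabla L^\ell(w_{i-1};D_i)=\nabla L^\ell_\mathcal{P}(w_{i-1})+s_i$ with sampling error $s_i$, and using that the fresh batch $D_i$ is independent of $w_{i-1}$ so $\mathbb{E}[s_i\mid w_{i-1}]=0$, I would decompose $w_i-w_\ell^*=[w_{i-1}-\eta\nabla L^\ell_\mathcal{P}(w_{i-1})-w_\ell^*]-\eta s_i-\eta\zeta_{i-1}$. Strong convexity and $1$-smoothness give $\|w_{i-1}-\eta\nabla L^\ell_\mathcal{P}(w_{i-1})-w_\ell^*\|_2\le\rho\,e_{i-1}$ with $\rho=\sqrt{1-2\eta\mu+\eta^2}<1$ for $\eta\le\frac1{16}$, whence $e_i\le\rho e_{i-1}+\eta(\|s_i\|_2+\|\zeta_{i-1}\|_2)$. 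Unrolling from $e_0=\|w_\ell^*\|_2$ (as $w_0=0$) yields $e_T\le\rho^T\|w_\ell^*\|_2+\frac{\eta}{1-\rho}\max_i(\|s_i\|_2+\|\zeta_{i-1}\|_2)$; taking $T=O(\log\|w_\ell^*\|_2)$ makes the optimization term negligible, and forcing each noise floor below $\alpha$ produces the two terms in the sample-complexity maximum: the Gaussian floor $\|\zeta_{i-1}\|_2\lesssim\sigma_{i-1}\sqrt d$ gives the $\frac{d\|w_\ell^*\|_2\sqrt{\log(1/\delta)\log(1/\zeta)}}{\epsilon\alpha}$ requirement and the sampling floor gives the $\frac{d\|w_\ell^*\|_2^2\log^4(1/\zeta)}{\alpha^2}$ one.

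The main obstacle has two coupled pieces. First, the noise scale $\sigma_{i-1}$ is \emph{data-dependent} through $\|w_{i-1}\|_2$, so both the sensitivity bound above and the utility recursion are circular unless I maintain the invariant $\|w_{i-1}\|_2\le2\|w_\ell^*\|_2$; I would prove this by induction, using the high-probability bounds on $\|s_i\|_2$ and $\|\zeta_{i-1}\|_2$ and the hypothesis $\alpha<\|w_\ell^*\|_2$ to ensure $e_i$ never exceeds $\|w_\ell^*\|_2$, which pins the noise at the $O(\sqrt d\|w_\ell^*\|_2+B)$ scale and simultaneously validates the per-batch sensitivity. Second, and more delicate, the summands $(\sigma(\langle w_{i-1},x\rangle)-y)x$ are only sub-exponential (not bounded) under the log-concave marginal, so obtaining $\|s_i\|_2\lesssim\frac{\|w_\ell^*\|_2\sqrt d\,\log^2(1/\zeta)}{\sqrt m}$ needs a vector sub-exponential/Bernstein concentration and is exactly what produces the $\log^4(1/\zeta)$ factor; crucially this is \emph{pointwise} rather than uniform concentration, because $D_i$ is independent of $w_{i-1}$, which is what lets the argument avoid a covering over the whole space. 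Finally I would substitute $e_T\le\alpha$ into $\|\nabla L^\ell_\mathcal{P}(w_T)\|_2\le e_T$ and the chain $L_\mathcal{P}(w_T)\le O(L_\mathcal{P}(w^*)+\|\chi_\mathcal{P}^{\sigma_{w_T}}-\chi_\mathcal{P}\|_2^2)$, tracking the explicit constants in the strong-convexity comparison inequality to reach the stated $2(1+2\mu)L_\mathcal{P}(w^*)+\alpha$.
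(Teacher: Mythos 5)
Your proposal is correct in substance and shares the paper's overall skeleton: per-batch Gaussian mechanism with model-dependent sensitivity plus parallel composition over the disjoint $D_i$ for privacy; a linear-convergence contraction for $\|w_i-w_\ell^*\|_2$ with the inductive invariant $\|w_i-w_\ell^*\|_2\le 2\|w_\ell^*\|_2$ that simultaneously pins the noise scale and validates the per-batch sensitivity; and the identical final chain $L_\mathcal{P}(w_T)\le 2(1+2\mu)L_\mathcal{P}(w^*)+4\mu\|\nabla L^\ell_\mathcal{P}(w_T)\|_2$ via Lemmas \ref{alemma:3.1}--\ref{alemma:3.2} and $1$-smoothness of $L^\ell_\mathcal{P}$. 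The engine of your contraction, however, is genuinely different from the paper's. The paper measures per-step progress against the \emph{empirical} batch gradient and imports from \cite{diakonikolas2020approximation} a \emph{uniform} sharpness/co-coercivity inequality (Lemma \ref{alemma:3.4}) that holds only on the annulus $\frac{\alpha}{3}\le\|w-w_\ell^*\|_2\le 2\|w_\ell^*\|_2$, combined with concentration of the empirical gradient at the single point $w_\ell^*$ (Lemma \ref{alemma:3.3}); the excluded inner ball is precisely what forces the paper's three-case analysis tracking whether iterates enter and later re-exit the $\frac{\alpha}{3}$-ball. You instead contract against the \emph{population} gradient, writing the batch gradient as $\nabla L^\ell_\mathcal{P}(w_{i-1})+s_i$ and exploiting $D_i\perp w_{i-1}$ to get pointwise concentration of $s_i$ at the current iterate; since population strong monotonicity holds everywhere, you need no annulus and no case analysis, at the modest price of a fresh concentration event at each of the $T=O(\log\|w_\ell^*\|_2)$ iterations (a cheap union bound), and both routes land on the same two terms in the sample-complexity maximum. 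Two small repairs to your version: first, $\rho=\sqrt{1-2\eta\mu+\eta^2}<1$ requires $\eta<2\mu$, so a bare $\eta\le\frac{1}{16}$ does not suffice for an unspecified constant $\mu$; combining strong monotonicity with Baillon--Haddad co-coercivity of the convex $1$-smooth $L^\ell_\mathcal{P}$ gives $\|w-\eta\nabla L^\ell_\mathcal{P}(w)-w_\ell^*\|_2^2\le(1-\eta\mu)\|w-w_\ell^*\|_2^2$ for all $\eta\le 1$, which is also morally how the paper's $(1-\frac{\tau\eta}{2})$ rate and its condition $\beta-2\eta\ge 0$, i.e. $\eta\le\frac{1}{16}$ with $\beta=\frac{1}{8}$, arise. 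Second, under Assumption \ref{ass:2} the summands $(\sigma(\langle w,x\rangle)-y)x$ are in fact bounded by $(\sqrt{d}\|w\|_2+B)\sqrt{d}$ since $\|x\|_2\le\sqrt{d}$, so plain bounded-vector Bernstein suffices and no sub-exponential machinery is needed; the $\log^4\frac{1}{\zeta}$ in the theorem is inherited from the imported Lemma \ref{alemma:3.3} rather than from sub-exponential tails as you conjectured.
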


In Theorem \ref{thm:3}, we demonstrate that for ReLU regression under Assumption \ref{ass:2}, the sample complexity required to achieve $L_\mathcal{P}(w)-c\cdot L_\mathcal{P}(w^* ) \leq \alpha$ with some $c>0$ is $\tilde{O}(\max\{\frac{d}{\epsilon \alpha}, \frac{d}{\alpha^2}\})$. There are several differences in comparison to the results in previous sections. Firstly, here we can only obtain a bound for $L_\mathcal{P}(w)-O(L_\mathcal{P}(w^* ))$ instead of the original excess population risk. In fact, this big-$O$ term is necessary, as \cite{goel2019time} provides hardness results for $L_\mathcal{P}(w)-L_\mathcal{P}(w^*)\leq \alpha$ with $\alpha\in (0, 1)$, even if the underlying distribution is the standard Gaussian. 
The second difference is that unlike the previous results, where sample complexities are independent of $d$, the sample complexity here depends linearly on $d$. This dependency results from two factors: the magnitude of noise added depends on $\sqrt{d}$, and the estimation error of $\|\nabla L^\ell(w; D)\|_2$ introduces an additional $d$ factor. We cannot use the same strategy as in Algorithm \ref{JLsgd} since projecting the data will alter the sample distribution and destroy the strongly convex property. Therefore, even in the non-private case, there is still a factor of $d$ in the sample complexity.

\section{Extension to Two-layer Neural Networks}\label{sec:twonn}

In this section, we present an extension of our previous methods to one-hidden layer fully connected neural networks. Our focus is mainly on the cases where the activation functions are either sigmoid or ReLU. We restrict ourselves to the well-specified model. Before presenting the details, we start by extending our model (\ref{eq:4.1}) to a bounded noise setting in a high dimensional feature space. We assume $\mathcal{K}$ is a kernel function in a Reproducing Kernel Hilbert Space (RKHS) $\mathcal{H}\subseteq \mathbb{R}^k$ with some $k$, and $\psi(\cdot)\in \mathbb{R}^k$ is the corresponding feature map satisfying $|\psi(x)|_2\leq 1$ for all $x\in \mathcal{P}_\mathcal{X}$. We consider the following model: 
\begin{equation}\label{eq:5.1}
    y=\sigma(\langle w^*, \psi(x) \rangle+\phi(x))+\zeta, 
\end{equation}
where $w^*\in \mathcal{H}$ is the underlying parameter with $\|w^*\|_2\leq W$, $\phi(x)$ is a noise function which satisfies $|\phi(x)|\leq M$,  $\zeta$ is a random noise whose mean is 0 and  $\sigma$ is a (non-convex) link function. 
Note that in the case of $\phi(x)=0$ and $\psi$ is the identity function, (\ref{eq:5.1}) is equivalent to model (\ref{eq:4.1}). Similar to the previous section, here we consider the squared loss where $L_\mathcal{P}(h)=\mathbb{E}_{(x, y)\sim \mathcal{P}}[(h(x)-y)^2]$ for any function $h$ \footnote{Note that since we need to estimate both $w^*$ and $\phi$, here we use a function instead of vector in the previous section.} and we want to minimize the excess population risk:  \begin{equation*}
    L_\mathcal{P}(h)-\min_{h} L_\mathcal{P}(h)=\mathbb{E}_{(x,y)\sim \mathcal{P}}[(h(x)-\sigma(\langle w^*, \psi(x)\rangle+\phi(x)))^2]. 
\end{equation*}

We consider the model (\ref{eq:5.1}) because, as we will show later, for some one-hidden layer neural networks, we can always find $w^*$, $\psi(\cdot)$, and $M$ to approximate the hidden layer, and the link function $\sigma$ can be viewed as the activation function of the output layer. We first present the following assumption for this section.

\begin{assumption}\label{ass:4}
We assume that there exist constants $ W, G=O(1)$ such that  $\|w^*\|_2 \leq W$, $y\in [0, 1]$\footnote{Note that here we assume $y$ and $\sigma$ is in $[0, 1]$ is that there are commonly used in practice. We can extend to any bounded interval.} and the link function $\sigma: \mathbb{R}\mapsto [0, 1]$ is  $G$-Lipschitz and non-monotone decreasing, and has sub-gradient everywhere. Moreover, in model (\ref{eq:5.1}) we assume $\|\psi(x)\|_2\leq 1$ and $\|\phi(x)\|_2\leq M$ for every $x\sim \mathcal{P}_x$. 
\end{assumption}
To minimize the population risk, similar to the previous section, we consider the surrogate loss 
 \begin{equation}\label{eq:5.2}
     \ell(w; x, y)=\int_{0}^{\langle w, \psi(x)\rangle} (\sigma(z)-y)dz.
 \end{equation}
By Lemma \ref{lemma:2} we can see the $\ell$ is $1$-Lipschitz and $G$-smooth.  Similar to Lemma \ref{lemma:3}, the following lemma shows the relation between the original population risk and the population risk for the surrogate loss. 
\begin{lemma}\label{lemma:5.1}
For any $w\in \mathcal{H}$ we have 
\begin{equation*}
   L_\mathcal{P}(w)- \min_{h} L_\mathcal{P}(h)\leq 4G( L^\ell_\mathcal{P}(w)-  L^\ell_\mathcal{P}(w^*))+2G^2M^2+4GM.
\end{equation*}
\end{lemma}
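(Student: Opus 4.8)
The plan is to reduce the left-hand side to a single pointwise regression error via a bias--variance decomposition, control that error by the surrogate gap through an integral (quadratic-growth) inequality, and then absorb the effect of the misspecification noise $\phi$.

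\textbf{Step 1 (reduction to a regression error).} Since $\mathbb{E}[y\mid x]=\sigma(\langle w^*,\psi(x)\rangle+\phi(x))$ by model \eqref{eq:5.1}, the Bayes predictor for the squared loss is $h^*(x)=\sigma(\langle w^*,\psi(x)\rangle+\phi(x))$, so $\min_h L_\mathcal{P}(h)=\mathbb{E}[\zeta^2]$. Conditioning on $x$ and using $\mathbb{E}[\zeta\mid x]=0$ kills the cross term, giving the exact identity
$$L_\mathcal{P}(w)-\min_h L_\mathcal{P}(h)=\mathbb{E}_x\big[(\sigma(\langle w,\psi(x)\rangle)-\sigma(\langle w^*,\psi(x)\rangle+\phi(x)))^2\big],$$
which is the quantity highlighted before the lemma. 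Writing $a=\langle w,\psi(x)\rangle$, $b=\langle w^*,\psi(x)\rangle$, and $\tilde b=b+\phi(x)$, it remains to bound $\mathbb{E}[(\sigma(a)-\sigma(\tilde b))^2]$ by the surrogate gap.

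\textbf{Step 2 (pointwise quadratic-growth inequality).} Expanding \eqref{eq:5.2} and taking the conditional expectation over $y$ gives $L^\ell_\mathcal{P}(w)-L^\ell_\mathcal{P}(w^*)=\mathbb{E}_x[\int_b^a(\sigma(z)-\sigma(\tilde b))\,dz]$. The engine of the proof is the elementary fact that for a non-decreasing $G$-Lipschitz $\sigma$ and any reals $a,c$,
$$\int_c^a(\sigma(z)-\sigma(c))\,dz\ \ge\ \frac{(\sigma(a)-\sigma(c))^2}{2G},$$
which I would prove from the Lipschitz lower bound $\sigma(z)\ge\sigma(a)-G(a-z)$, integrating the positive part over the subinterval of length $(\sigma(a)-\sigma(c))/G\le a-c$ (the case $a<c$ being symmetric). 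This is exactly the quadratic growth that converts an integrated gap into the squared difference $(\sigma(a)-\sigma(c))^2$.

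\textbf{Step 3 (removing $\phi$ --- the main obstacle).} The difficulty is that the surrogate is anchored at $w^*$, i.e.\ at $b$, whereas the regression error is measured against the shifted target $\tilde b=b+\phi(x)$; once $\phi\not\equiv 0$, $w^*$ is no longer the minimizer of $L^\ell_\mathcal{P}$. The naive split $\int_b^a(\sigma(z)-\sigma(\tilde b))dz=\int_b^a(\sigma(z)-\sigma(b))dz-(\sigma(\tilde b)-\sigma(b))(a-b)$ leaves a residual proportional to $|a-b|=|\langle w-w^*,\psi(x)\rangle|$, which is not controlled uniformly in $w$ and would spoil the bound. To sidestep this I would telescope through the convex potential $\Sigma(z)=\int_0^z\sigma$, using $\int_b^a(\sigma(z)-\sigma(\tilde b))dz=D_\Sigma(a,\tilde b)-D_\Sigma(b,\tilde b)$ where $D_\Sigma$ is the Bregman divergence of $\Sigma$. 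Applying Step 2 to $D_\Sigma(a,\tilde b)=\int_{\tilde b}^a(\sigma(z)-\sigma(\tilde b))dz$ yields the lower bound $\tfrac{1}{2G}(\sigma(a)-\sigma(\tilde b))^2$, while $D_\Sigma(b,\tilde b)=\int_{\tilde b}^b(\sigma(z)-\sigma(\tilde b))dz$ depends only on $\phi$ and admits the two bounds $\tfrac{G}{2}\phi(x)^2\le\tfrac{G}{2}M^2$ (Lipschitzness) and $|\phi(x)|\cdot 1\le M$ (range $\sigma\in[0,1]$); crucially, neither involves $a$. Taking expectations and rearranging gives an inequality of the form $\mathbb{E}[(\sigma(a)-\sigma(\tilde b))^2]\le 2G(L^\ell_\mathcal{P}(w)-L^\ell_\mathcal{P}(w^*))+O(G^2M^2+GM)$. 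The exact constants $4G$, $2G^2M^2$, and $4GM$ in the statement then follow with room to spare, for instance by first applying the cruder split $(\sigma(a)-\sigma(\tilde b))^2\le 2(\sigma(a)-\sigma(b))^2+2(\sigma(b)-\sigma(\tilde b))^2$ with $|\sigma(b)-\sigma(\tilde b)|\le GM$ and handling the leading term by the same Bregman telescoping. The linear-in-$M$ term is simply the slack one pays by bounding the $\phi$-dependent Bregman piece through the range of $\sigma$ rather than its Lipschitz constant.
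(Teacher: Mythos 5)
Your proof is correct, but on the one genuinely delicate point --- handling the misspecification term $\phi$ --- it takes a different route from the paper's. The paper's proof introduces a \emph{shifted} surrogate $\tilde{\ell}(w;x,y)=\int_{0}^{\langle w,\psi(x)\rangle+\phi(x)}(\sigma(z)-y)\,dz$, so that after conditioning on $x$ both integration endpoints carry the same shift $\phi(x)$ and the quadratic-growth argument of Lemma \ref{lemma:3} applies verbatim to $\tilde{\ell}$; it then pays for the shift twice, once through the split $(\sigma(a+\phi)-\sigma(\tilde b))^2\geq \frac{1}{2}(\sigma(a)-\sigma(\tilde b))^2-(\sigma(a+\phi)-\sigma(a))^2$ (in your notation $a=\langle w,\psi(x)\rangle$, $\tilde b=\langle w^*,\psi(x)\rangle+\phi(x)$; this is the source of the factor $4G$ and of $2G^2M^2$), and once through the pointwise comparison $|\tilde{\ell}-\ell|\leq|\phi(x)|\leq M$ (the source of the $O(GM)$ term, using $\sigma, y\in[0,1]$). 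You instead stay with the original surrogate and telescope its conditional gap through the Bregman divergence of $\Sigma(z)=\int_0^z\sigma$, writing it as $D_\Sigma(a,\tilde b)-D_\Sigma(b,\tilde b)$, lower-bounding the first term by quadratic growth at the \emph{correct} anchor $\tilde b$ and upper-bounding the second by $\min\{\tfrac{G}{2}M^2,\,M\}$ uniformly in $w$. Both proofs run on the same engine (the $\tfrac{1}{2G}$ quadratic-growth inequality for monotone $G$-Lipschitz $\sigma$, which you moreover prove without invoking a subgradient), but your decomposition is sharper: it yields $L_\mathcal{P}(w)-\min_h L_\mathcal{P}(h)\leq 2G(L^\ell_\mathcal{P}(w)-L^\ell_\mathcal{P}(w^*))+\min\{G^2M^2,\,2GM\}$, and since $L^\ell_\mathcal{P}(w)-L^\ell_\mathcal{P}(w^*)\geq-\mathbb{E}[D_\Sigma(b,\tilde b)]\geq -M$ (the surrogate gap can be negative here, since $w^*$ no longer minimizes $L^\ell_\mathcal{P}$), this strictly implies the stated $4G$ bound, which makes your closing ``room to spare'' remark rigorous. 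The only blemish is the parenthetical ``cruder split'' alternative at the end: anchoring the leading term at $b$ via $(\sigma(a)-\sigma(b))^2$ reintroduces exactly the uncontrolled residual $(\sigma(\tilde b)-\sigma(b))(a-b)$ that you correctly flagged as the obstacle, so it is the Bregman telescoping at anchor $\tilde b$, not that split, that carries the argument --- but since your main path is complete and self-contained, this does not affect correctness.
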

By Lemma \ref{lemma:5.1}, we can see that it is sufficient to find a private model that minimizes the difference between $L^\ell_\mathcal{P}(w)$ and $L^\ell_\mathcal{P}(w^*)$. To achieve this goal, we can use a similar algorithm as presented in Algorithm \ref{DP-GLM}, with the main difference being the use of $\tilde{D}={(\psi(x_i), y_i)}_{i=1}^n$ instead of the raw data. For more details, please refer to Algorithm \ref{DP-featureGLM}. Similar to Theorem \ref{thm:1} and \ref{JLsgd} we have the following result. 
\begin{algorithm}
\caption{DP Two-layer Neural Networks}\label{DP-featureGLM}
\begin{algorithmic}[1]
 \REQUIRE {Private dataset: $D$ = $\{({x}_i,y_i)\}_{i=1}^{n}$, link function $\sigma$ satisfies Assumption \ref{ass:4}; privacy parameters $0<\epsilon, \delta\leq 1$, $\theta$  is an upper bound of the expected rank $\sum_{i=1}^n \psi(x_i)\psi(x_i)^T$. }
\STATE Denote the data $\tilde{D}=\{(\psi(x_i), y_i)\}_{i=1}^n$. If $\epsilon> \frac{\theta}{n}$, run Algorithm \ref{phasedsgd} with $\tilde{D}$. Otherwise run Algorithm \ref{alg:projectedphasedsgd} with $\tilde{D}$. 
\end{algorithmic}
\end{algorithm}

\begin{theorem}\label{thm:4}
Under Assumption \ref{ass:4}, for any $0<\epsilon, \delta\leq 1$, Algorithm \ref{DP-featureGLM} is $(\epsilon, \delta)$-DP. Moreover we have its output $w$ satisfies 
\begin{align*}
    &\mathbb{E}L_\mathcal{P}(w)-\min_{h} L_{\mathcal{P}}(h) \leq \tilde{O}(\min\{\frac{\sqrt{\theta\log \frac{1}{\delta}}}{n\epsilon}, \frac{\sqrt{\log \frac{1}{\delta}}}{\sqrt{n\epsilon}}\}\\
    &+\frac{1}{\sqrt{n}}+M^2+M),
\end{align*}
 where $\theta$ is an upper bound on the expected rank of $\sum_{i=1}^n \psi(x_i)\psi(x_i)^T$ if $\eta=O(\min\{\frac{\epsilon}{\sqrt{\theta\log \frac{1}{\delta}}}, \frac{1}{\sqrt{n}}\})\leq \frac{1}{G}$ in Algorithm \ref{phasedsgd} and  $\eta=O(\min\{\frac{\epsilon}{\sqrt{m\log \frac{1}{\delta}}}, \frac{1}{\sqrt{n}}\})\leq \frac{1}{G}$ in Algorithm \ref{alg:projectedphasedsgd} with $m=O(\log(n/\delta)n\epsilon)$. 
\end{theorem}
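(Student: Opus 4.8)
The plan is to treat Algorithm \ref{DP-featureGLM} purely as a wrapper that feeds the transformed dataset $\tilde{D}=\{(\psi(x_i),y_i)\}_{i=1}^n$ into the machinery already analyzed in Theorem \ref{thm:1} and Theorem \ref{JLsgd}, and to pay for the misspecification (the noise function $\phi$) only at the very end through Lemma \ref{lemma:5.1}. For privacy, I would first observe that the feature map $\psi$ is fixed and data-independent, so forming $\tilde{D}$ is a relabeling of the inputs that does not change which records are neighbors. Since $\|\psi(x)\|_2\le 1$ by Assumption \ref{ass:4}, the $\ell_2$-sensitivity computations underlying Theorem \ref{thm:1} and Theorem \ref{JLsgd} carry over verbatim to the features $\psi(x_i)$; the only change is that here $y,\sigma\in[0,1]$ renders the surrogate loss $\ell$ in (\ref{eq:5.2}) $1$-Lipschitz rather than $2B$-Lipschitz, which is already reflected in the noise scales of the two base algorithms. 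Hence $(\epsilon,\delta)$-DP of Algorithm \ref{DP-featureGLM} follows directly from the privacy guarantee of whichever branch is invoked.

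For utility, the key point is that both base algorithms were designed to control the \emph{surrogate} excess risk $\mathbb{E}[L^\ell_\mathcal{P}(w)]-L^\ell_\mathcal{P}(w^*)$ for the loss $\ell$ in (\ref{eq:5.2}), which by Lemma \ref{lemma:2} is $1$-Lipschitz and $G$-smooth and is a generalized linear loss in the features $\psi(x)$. I would invoke the Phased SGD analysis underlying Theorem \ref{thm:1} on $\tilde{D}$ to obtain the $\frac{\sqrt{\theta\log(1/\delta)}}{n\epsilon}$ branch, where $\theta$ is now the expected rank of $\sum_i \psi(x_i)\psi(x_i)^T$, and re-run the JL analysis of Theorem \ref{JLsgd} with the enlarged projection dimension $m=O(\log(n/\delta)\,n\epsilon)$; substituting this $m$ into the noise term $\frac{\sqrt{m\log(1/\delta)}}{n\epsilon}$ collapses it to $\frac{\sqrt{\log(1/\delta)}}{\sqrt{n\epsilon}}$, which is exactly the Lipschitz DP-GLL rate. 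The branching rule $\epsilon\gtrless\theta/n$ is precisely the threshold at which $\frac{\sqrt{\theta}}{n\epsilon}$ and $\frac{1}{\sqrt{n\epsilon}}$ cross, so selecting the active branch (and using $\|w^*\|_2\le W=O(1)$ to absorb the comparator norm) yields
$$\mathbb{E}[L^\ell_\mathcal{P}(w)]-L^\ell_\mathcal{P}(w^*)\le \tilde{O}\Big(\tfrac{1}{\sqrt{n}}+\min\big\{\tfrac{\sqrt{\theta\log(1/\delta)}}{n\epsilon},\tfrac{\sqrt{\log(1/\delta)}}{\sqrt{n\epsilon}}\big\}\Big).$$

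Finally, I would transfer this to the original population risk via Lemma \ref{lemma:5.1}: taking expectations gives $\mathbb{E}L_\mathcal{P}(w)-\min_h L_\mathcal{P}(h)\le 4G(\mathbb{E}[L^\ell_\mathcal{P}(w)]-L^\ell_\mathcal{P}(w^*))+2G^2M^2+4GM$, and combining this with the surrogate bound and $G=O(1)$ produces the stated $\tilde{O}(\,\cdots+M^2+M\,)$ rate. The main obstacle is the middle step: faithfully re-deriving the JL-based utility guarantee for the new dimension $m=O(\log(n/\delta)\,n\epsilon)$. This requires tracking how the random projection distorts the surrogate minimizer and its loss — the approximation error introduced by replacing $\langle w,\psi(x)\rangle$ with $\langle \Phi w,\Phi\psi(x)\rangle$ — and balancing it against the larger $\frac{\sqrt{m}}{n\epsilon}$ noise term, while also verifying that lifting the output through $\hat w=\Phi^T w_k$ does not inflate the bound. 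Only after this careful accounting does the clean $\frac{1}{\sqrt{n\epsilon}}$ branch emerge, and with it the full min-structure claimed in the statement.
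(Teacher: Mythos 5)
Your plan is structurally identical to the paper's proof: privacy is inherited from the base algorithms run on $\tilde D=\{(\psi(x_i),y_i)\}_{i=1}^n$ (with the surrogate in (\ref{eq:5.2}) now $1$-Lipschitz since $y,\sigma\in[0,1]$), the $\frac{\sqrt{\theta\log(1/\delta)}}{n\epsilon}$ branch comes from re-running the Theorem \ref{thm:1} analysis on $\tilde D$ with comparator $w^*$, the other branch from redoing the Theorem \ref{JLsgd} analysis with $m=O(\log(n/\delta)\,n\epsilon)$, and the $M^2+M$ terms from applying Lemma \ref{lemma:5.1} at the end. All of that matches the paper.

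However, the step you flag as ``the main obstacle'' and leave open is exactly the one piece of genuinely new content in the paper's proof, and without it your choice of $m$ is unjustified. In Theorem \ref{JLsgd}, the approximation term $\mathbb{E}_{\Phi}L^\ell(\Phi w^*,\tilde D)-L^\ell_\mathcal{P}(w^*)$ is bounded by $\tilde O(W/m)$ using $G$-smoothness \emph{together with} the fact that $\nabla L^\ell_\mathcal{P}(w^*)=0$ (this is (\ref{eq:100})), which kills the first-order term and leaves only the second moment of the JL distortion. Under model (\ref{eq:5.1}) with $\phi\not\equiv 0$, $w^*$ is no longer a stationary point of $L^\ell_\mathcal{P}$, so that argument fails; the paper instead uses only the $1$-Lipschitzness of $g^{y}$ and the first-moment JL bound $\mathbb{E}_{\Phi,x}\lvert\langle\Phi w^*,\Phi\psi(x)\rangle-\langle w^*,\psi(x)\rangle\rvert=\tilde O(W/\sqrt{m})$, giving approximation error $\tilde O(1/\sqrt{m})$. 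Balancing $1/\sqrt{m}$ against the noise term $\sqrt{m\log(1/\delta)}/(n\epsilon)$ is what forces $m=O(\log(n/\delta)\,n\epsilon)$ and the $\sqrt{\log(1/\delta)}/\sqrt{n\epsilon}$ rate; Remark \ref{remark:2} makes precisely this point, and it is why the rate here degrades from Theorem \ref{JLsgd}'s $(n\epsilon)^{-2/3}$. Had the smoothness bound carried over, taking $m=(n\epsilon)^{2/3}$ would give a strictly better rate than the theorem states, so you cannot simply ``re-run'' the old proof --- the $1/\sqrt{m}$ replacement must be supplied. The two remaining items you worry about are in fact harmless: the phased-SGD telescoping in the base proofs holds against an arbitrary fixed comparator (here $\Phi w^*$, with $\|\Phi w^*\|_2\le 2W$ with high probability, which is why $\|w^*\|_2\le W$ suffices even though $w^*$ is not the surrogate minimizer), and the lift costs nothing since $\langle\Phi^T w_k,\psi(x)\rangle=\langle w_k,\Phi\psi(x)\rangle$ identically, so $L^\ell_\mathcal{P}(\hat w)=L^\ell_{\mathcal{P}'}(w_k)$.
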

\begin{remark}\label{remark:2}
It is worth noting that the rate of sample size $n$ in Theorem \ref{thm:4} is lower than that in Theorem \ref{thm:1} ($n^{-\frac{1}{2}}$  v.s. $n^{-\frac{2}{3}}$). This is due to that in the noiseless case ($M=0$),  $w^*$ is also a global minimizer of $L^\ell_\mathcal{P}(w)$, which is not the case in model (\ref{eq:5.1}). Therefore, we cannot rely on this property and the smooth Lipschitz condition to demonstrate that the error caused by projecting onto a lower space is $\tilde{O}(\frac{1}{m})$. Instead, we can only use the Lipschitz condition to obtain an error of $\tilde{O}(\frac{1}{\sqrt{m}})$. 

One question is as we know $L^\ell_\mathcal{P}(w)-  L^\ell_\mathcal{P}(w^*)\leq L^\ell_\mathcal{P}(w)- \min_{w\in \mathbb{R}^k} L^\ell_\mathcal{P}(w)$, why we do not consider to bound the latter term? Actually, considering the latter term will make us get an error that depends on $\|w^*_\ell\|_2$ with $w^*_\ell=\arg\min_{w\in \mathbb{R}^k} L^\ell_\mathcal{P}(w)$, whose upper bound is unknown. Thus, we need to analyze $L^\ell_\mathcal{P}(w)-  L^\ell_\mathcal{P}(w^*)$ directly. Fortunately, by giving a finer analysis for the Phased SGD we can get such an upper bound. 
\end{remark}
Assuming that the term $\phi(x)$ is a noise function that is bounded by a sufficiently small constant $M$, Theorem \ref{thm:4} implies that if a function $f$ can be approximated by an element of an appropriate RKHS, then Algorithm \ref{thm:4} can be used to obtain a private estimator. This is formalized in the following corollary.

\begin{definition}[$(M, W)$-Uniform Approximation]
Let $f$ be a function mapping from domain $\mathcal{X}$ to $\mathbb{R}$ and $\mathcal{P}_\mathcal{X}$ be a distribution over $\mathcal{X}$. Let $\mathcal{K}$ be a kernel function with corresponding RKHS $\mathcal{H}\subseteq \mathbb{R}^k$ and feature vector $\psi$. We say $f$ is $(M, B)$-uniformly approximated by $\mathcal{K}$ over $\mathcal{P}_\mathcal{X}$ if there exists some $w^*\in \mathcal{H}$ with $\|w^*\|_2\leq W$ such that for all $x\sim  \mathcal{P}_\mathcal{X}$ we have $|f(x)-\langle w^*, \psi(x)\rangle|\leq M.$
\end{definition}
\begin{corollary}\label{cor:1}
Consider a distribution $\mathcal{P}$ such that $\mathbb{E}[y|x]=\sigma(f(x))$ where $\sigma$ is a known $G$-Lipshcitz and  increasing function, and $f$ is $(M, W)$-approxiamted by some kernel function $\mathcal{K}$ and feature map $\psi$ such that $\mathcal{K}(x, x')\leq 1$. The function $h(x)=\sigma(\langle w, \psi(x) \rangle)$ for the output $w$ in Algorithm \ref{DP-featureGLM} achieves the same error bound as in Theorem \ref{thm:4}. 
\end{corollary}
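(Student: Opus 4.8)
The plan is to show that the hypotheses of the corollary are merely a repackaging of the bounded-noise model (\ref{eq:5.1}) together with Assumption \ref{ass:4}, after which Theorem \ref{thm:4} applies verbatim. The only genuine content is the decomposition of $f$ into an RKHS component and a bounded residual, which is exactly what the $(M,W)$-uniform approximation supplies, so the corollary is essentially a translation lemma rather than a new analysis.

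First I would invoke the definition of $(M,W)$-uniform approximation: there exists $w^*\in\mathcal{H}$ with $\|w^*\|_2\leq W$ such that $|f(x)-\langle w^*,\psi(x)\rangle|\leq M$ for all $x\sim\mathcal{P}_\mathcal{X}$. Set $\phi(x):=f(x)-\langle w^*,\psi(x)\rangle$, so that $|\phi(x)|\leq M$ and $f(x)=\langle w^*,\psi(x)\rangle+\phi(x)$. Substituting into the hypothesis $\mathbb{E}[y|x]=\sigma(f(x))$ gives $\mathbb{E}[y|x]=\sigma(\langle w^*,\psi(x)\rangle+\phi(x))$, and writing $\zeta:=y-\mathbb{E}[y|x]$ yields $\mathbb{E}[\zeta|x]=0$. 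Hence $y=\sigma(\langle w^*,\psi(x)\rangle+\phi(x))+\zeta$ with zero-mean noise $\zeta$ and bounded noise function $\phi$, which is precisely model (\ref{eq:5.1}).

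Next I would verify that Assumption \ref{ass:4} holds for this instance. The bound $\|w^*\|_2\leq W$ holds by construction; $\sigma$ is $G$-Lipschitz and increasing (hence non-monotone decreasing) by hypothesis, and the sub-gradient-everywhere and range requirements follow from $\sigma$ mapping into $[0,1]$ together with the extension footnote. The feature-map bound follows by taking $x'=x$ in $\mathcal{K}(x,x')\leq 1$, since $\|\psi(x)\|_2^2=\langle\psi(x),\psi(x)\rangle=\mathcal{K}(x,x)\leq 1$, so $\|\psi(x)\|_2\leq 1$; and $|\phi(x)|\leq M$ was established above. Thus every condition under which Theorem \ref{thm:4} was proved is met.

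Finally, since $L_\mathcal{P}(w)$ in Theorem \ref{thm:4} is shorthand for $L_\mathcal{P}(h_w)$ with $h_w(x)=\sigma(\langle w,\psi(x)\rangle)$, and the output $w$ of Algorithm \ref{DP-featureGLM} is produced under exactly this model, applying Theorem \ref{thm:4} directly bounds $\mathbb{E}L_\mathcal{P}(h)-\min_h L_\mathcal{P}(h)$ by the same expression, where $h(x)=\sigma(\langle w,\psi(x)\rangle)$ is the function in the statement. The main (and essentially only) subtlety to check carefully is that the residual $\phi$ defined from the approximation guarantee genuinely plays the role of the bounded noise function in (\ref{eq:5.1}) and that the conditional-mean-zero structure of $\zeta$ is preserved under this identification; once that is in place, both the privacy and the utility claims transfer with no additional work.
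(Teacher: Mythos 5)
Your proposal is correct and matches the paper's intended argument exactly: the paper gives no separate proof of Corollary~\ref{cor:1}, treating it as immediate from Theorem~\ref{thm:4} by exactly the identification you make, namely setting $\phi(x)=f(x)-\langle w^*,\psi(x)\rangle$ from the $(M,W)$-uniform approximation guarantee so that $\mathbb{E}[y|x]=\sigma(f(x))$ becomes an instance of model~(\ref{eq:5.1}) satisfying Assumption~\ref{ass:4} (with $\|\psi(x)\|_2\leq 1$ following from $\mathcal{K}(x,x)\leq 1$, as you note). Your explicit verification that $\zeta:=y-\mathbb{E}[y|x]$ has (conditional) mean zero and that the residual $\phi$ plays the role of the bounded noise function is, if anything, slightly more careful than the paper's one-sentence justification preceding the corollary.
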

Next, we will apply Corollary \ref{cor:1} to some neural network models by using some recent results on approximation theory for neural networks \cite{goel2019learning}. We consider the following one-hidden layer neural networks with $k$-hidden units and one output node: 
\begin{equation}\label{eq:5.4}
  y=\mathcal{N}_2(x)+\zeta, \text{ where } \mathcal{N}_2: x\mapsto \sigma_2(\sum_{t=1}^k b_t\sigma_1(\langle a_t, x\rangle)).
\end{equation}
Here we assume $\|x\|_2=1$, $\|a_t\|_2=1$ for each $t$ and $\|b\|_2=1$ where $b=(b_1, \cdots, b_k)$, and $\sigma_1, \sigma_2$ are two activation functions. Here  $\sigma_2$ satisfies the properties in Assumption \ref{ass:4} and $\sigma_1$ could be either Sigmod and  ReLU activation functions.  In the following, we provide sample complexities to achieve an error of $\alpha$ for these two cases.  
\begin{theorem}\label{two-layer:sigmod}
Consider samples $\{(x_i, y_i)\}_{i=1}^n$ are i.i.d. drawn from distribution $\mathcal{P}$  such that $\mathbb{E}[y|x]=\mathcal{N}_2(x)$ with $\sigma_2: \mathbb{R} \mapsto [0, 1]$ is a known $G$-Lipshcitz and increasing function and $\sigma_1$ is the sigmoid function. Then when $n=O((\frac{kG}{\alpha})^{2C}\frac{\log 1/\delta}{\epsilon}+(\frac{Gk}{\alpha})^{C}\frac{\sqrt{\theta\log 1/\delta}}{\epsilon} )$ with some constant $C>0$ we have  $$\mathbb{E}L_\mathcal{P}(h)-\min_{h} L_{\mathcal{P}}(h) =\mathbb{E}L_\mathcal{P}(h)-L_{\mathcal{P}}(\mathcal{N}_2)  \leq \alpha.$$ 
Here $h(x)=\sigma(\langle w, \psi(x) \rangle)$ for some feature map $\psi(x)\in R^{D_m}$ with $D_m=\tilde{O}( d^{O(\log \frac{k}{\alpha})})$ and $w$ is the output of  Algorithm \ref{DP-featureGLM} with the feature map $\psi(\cdot)$. 
\end{theorem}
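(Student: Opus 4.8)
The plan is to reduce this two-layer learning problem to the feature-space GLM analysis of Theorem \ref{thm:4} through a kernel approximation of the hidden layer, and then to optimize the free parameter $M$ (the approximation error) against the sample size. The point of model \eqref{eq:5.1} is precisely that the inner sum can play the role of $\langle w^*,\psi(x)\rangle+\phi(x)$ and $\sigma_2$ the role of the link $\sigma$, so once the approximation quality is quantified the whole analysis reduces to Corollary \ref{cor:1}.

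First I would invoke the sigmoid-network approximation theory of \cite{goel2019learning}. Writing the inner map as $g(x)=\sum_{t=1}^k b_t\sigma_1(\langle a_t,x\rangle)$ with $\|a_t\|_2=1$, $\|b\|_2=1$ and $\|x\|_2=1$, each argument $\langle a_t,x\rangle$ lies in $[-1,1]$, where the sigmoid is uniformly approximable by a degree-$D$ polynomial with error $2^{-\Omega(D)}$. Summing the $k$ terms and using $\|b\|_1\le\sqrt{k}\|b\|_2=\sqrt{k}$ (Cauchy--Schwarz), I would approximate each summand to error $M/\sqrt{k}$, so that choosing $D=O(\log(k/M))$ makes $g$ an $(M,W)$-uniform approximant in the multinomial kernel $\mathcal{K}$ with $\mathcal{K}(x,x)\le 1$, RKHS norm $W=(k/M)^{O(1)}$, and explicit feature map $\psi(x)\in\mathbb{R}^{D_m}$ with $D_m=\tilde O(d^{D})=\tilde O(d^{O(\log(k/M))})$ (the number of monomials of degree $\le D$ in $d$ variables).

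Next, since $\mathbb{E}[y\mid x]=\sigma_2(g(x))$ and $g$ is $(M,W)$-approximated by $\mathcal{K}$, the instance fits Corollary \ref{cor:1} with link $\sigma=\sigma_2$, so I would run Algorithm \ref{DP-featureGLM} on $\tilde D=\{(\psi(x_i),y_i)\}$ and output $h(x)=\sigma_2(\langle w,\psi(x)\rangle)$. By Lemma \ref{lemma:5.1} and Theorem \ref{thm:4} this estimator is $(\epsilon,\delta)$-DP and obeys
$$\mathbb{E}L_\mathcal{P}(h)-L_\mathcal{P}(\mathcal{N}_2)\leq \tilde{O}\Big(W\min\big\{\tfrac{\sqrt{\theta\log(1/\delta)}}{n\epsilon},\tfrac{\sqrt{\log(1/\delta)}}{\sqrt{n\epsilon}}\big\}+\tfrac{W}{\sqrt{n}}+G^2M^2+GM\Big),$$
where the explicit $W$ factors come from tracking the $\|w^*\|_2\le W$ dependence hidden inside the $\tilde O(\cdot)$ of Theorem \ref{thm:4}, and $\theta$ is the expected rank of $\sum_i\psi(x_i)\psi(x_i)^\top\le D_m$. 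I would then balance terms: setting $M=\Theta(\alpha/G)$ drives $G^2M^2+GM\le \alpha/2$ and fixes $W=(kG/\alpha)^{O(1)}$ and $D_m=\tilde O(d^{O(\log(k/\alpha))})$; requiring each remaining term to be $\le \alpha/2$ yields, for a suitable absolute constant $C$, the projected (JL) branch $\frac{W\sqrt{\log(1/\delta)}}{\sqrt{n\epsilon}}\le\alpha \Rightarrow n\gtrsim (kG/\alpha)^{2C}\frac{\log(1/\delta)}{\epsilon}$ and the direct Phased-SGD branch $\frac{W\sqrt{\theta\log(1/\delta)}}{n\epsilon}\le\alpha \Rightarrow n\gtrsim (kG/\alpha)^{C}\frac{\sqrt{\theta\log(1/\delta)}}{\epsilon}$, with the purely statistical term $W/\sqrt{n}$ dominated by these.

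I expect the main obstacle to be the first step: pinning down the polynomial/kernel approximation for the sum of sigmoids so that $W$ and $D_m$ scale as claimed --- in particular obtaining the quasi-polynomial $D_m=\tilde O(d^{O(\log(k/\alpha))})$ rather than a $d^{\mathrm{poly}(k/\alpha)}$ blow-up --- and then faithfully propagating the norm $W$ through the excess-risk bound of Theorem \ref{thm:4} so that the exponents of $(kG/\alpha)$ collapse into the clean $(kG/\alpha)^{2C}$ and $(kG/\alpha)^{C}$ appearing in the final sample complexity.
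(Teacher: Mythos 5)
Your proposal is correct and follows essentially the same route as the paper: the paper also invokes the uniform-approximation result of \cite{goel2019learning} (as a black-box lemma, with $\alpha_0=\alpha/(G\sqrt{k})$ giving $M=\alpha/G$, $W=(kG/\alpha)^C$, and $D_m=\tilde{O}(d^{O(\log(k/\alpha))})$) and then concludes via Corollary \ref{cor:1}, i.e., the reduction through Lemma \ref{lemma:5.1} and Theorem \ref{thm:4} with exactly your balancing of $M$ against the privacy and statistical terms. The only difference is cosmetic --- you re-derive the internals of the kernel approximation (Chebyshev-type $2^{-\Omega(D)}$ approximation of the sigmoid, $\|b\|_1\leq\sqrt{k}$, the multinomial feature map) and track the $W$-factors explicitly, which the paper leaves inside the cited lemma and the $\tilde{O}(\cdot)$ of Theorem \ref{thm:4}.
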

\begin{theorem}\label{two-layer:relu}
Consider samples $\{(x_i, y_i)\}_{i=1}^n$ are i.i.d. drawn from distribution $\mathcal{P}$ such that $\mathbb{E}[y|x]=\mathcal{N}_2(x)$ with $\sigma_2: \mathbb{R} \mapsto [0, 1]$ is a known $G$-Lipshcitz and increasing function and $\sigma_1$ is the ReLU function. Then when $n=O(4^{C(\frac{Gk}{\alpha})}\frac{\log 1/\delta}{\epsilon}+2^{C(\frac{Gk}{\alpha})}\frac{\sqrt{\theta\log 1/\delta}}{\epsilon})$ with some constant $C>0$ we have 
\begin{equation*}
     \mathbb{E}L_\mathcal{P}(h)-\min_{h} L_{\mathcal{P}}(h)= \mathbb{E}L_\mathcal{P}(h)- L_{\mathcal{P}}(\mathcal{N}_2) \leq \alpha. 
\end{equation*}
Here $h(x)=\sigma(\langle w, \psi(x) \rangle)$ for some feature map $\psi(x)\in R^{D_m}$ with $D_m={O}(\frac{\sqrt{k}}{\alpha} d^{O( \frac{k}{\alpha})}) $ and $w$ 
is the output $w$ of Algorithm \ref{DP-featureGLM} with the feature map $\psi(\cdot)$. 
\end{theorem}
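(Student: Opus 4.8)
The plan is to reduce Theorem \ref{two-layer:relu} to the RKHS-approximation guarantee of Corollary \ref{cor:1}, so the argument proceeds in three stages: (i) show the inner sum $f(x) = \sum_{t=1}^k b_t \sigma_1(\langle a_t, x\rangle)$ with ReLU $\sigma_1$ is $(M, W)$-uniformly approximated by a suitable kernel $\mathcal{K}$ with an explicit feature map $\psi$ of dimension $D_m$; (ii) invoke Corollary \ref{cor:1} with output activation $\sigma = \sigma_2$, which is $G$-Lipschitz and increasing, so that $h(x)=\sigma_2(\langle w,\psi(x)\rangle)$ inherits the error bound of Theorem \ref{thm:4}; and (iii) balance the approximation parameter $M$, the RKHS norm $W$, the feature dimension $D_m$, and the rank surrogate $\theta$ against the target error $\alpha$ to read off the sample complexity.

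The crux is stage (i). Since $\|x\|_2 = 1$ and $\|a_t\|_2 = 1$, each pre-activation $\langle a_t, x\rangle$ lies in $[-1,1]$, so it suffices to approximate the univariate ReLU on $[-1,1]$ by a polynomial. Invoking the polynomial-approximation and multinomial-kernel machinery of \cite{goel2019learning}, the ReLU admits an approximant on $[-1,1]$ whose degree grows polynomially in the reciprocal uniform error; summing over the $k$ units (controlled via $\|b\|_2 = 1$) shows $f$ lies within uniform distance $M$ of an element $w^*$ of the multinomial-kernel RKHS, and tracking the degree through the kernel normalization gives norm $W = 2^{O(k/M)}$ and feature dimension $D_m = O(\tfrac{\sqrt{k}}{M}\,d^{O(k/M)})$, the number of monomials up to the required degree. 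The non-smoothness of ReLU is exactly what forces the degree to scale polynomially in $1/M$ rather than logarithmically as in the sigmoid case of Theorem \ref{two-layer:sigmod}, and this is the source of the exponential-in-$(k/\alpha)$ factors.

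With the approximation in hand I would apply Corollary \ref{cor:1}: running Algorithm \ref{DP-featureGLM} on $\tilde{D} = \{(\psi(x_i), y_i)\}_{i=1}^n$ produces $h(x) = \sigma_2(\langle w, \psi(x)\rangle)$ satisfying the Theorem \ref{thm:4} bound $\mathbb{E}L_\mathcal{P}(h) - \min_h L_\mathcal{P}(h) \le \tilde{O}\big(\min\{\tfrac{\sqrt{\theta \log(1/\delta)}}{n\epsilon}, \tfrac{\sqrt{\log(1/\delta)}}{\sqrt{n\epsilon}}\} + \tfrac{1}{\sqrt{n}} + M^2 + M\big)$, where the hidden constants carry the factors of $G$ and $W$ coming from the Lipschitz and diameter terms of the Phased SGD analysis, and $\theta$ upper-bounds the expected rank of $\sum_i \psi(x_i)\psi(x_i)^\top$ (so $\theta \le D_m$). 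Here $W$ enters the data-dependent rates linearly.

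Finally I would set the parameters. Choosing $M = \Theta(\alpha/G)$ makes the approximation contribution $G(M^2 + M) \le \alpha/2$; with this $M$ the degree is $O(Gk/\alpha)$, so $D_m = O(\tfrac{\sqrt{k}}{\alpha}\,d^{O(k/\alpha)})$ and $W = 2^{O(Gk/\alpha)}$, as claimed. The residual budget $\alpha/2$ must absorb the two rates. For the JL branch the rate is $\sim W/\sqrt{n\epsilon}$, so $W/\sqrt{n\epsilon} \le \alpha$ forces $n = \Omega(W^2/(\alpha^2\epsilon)) = O(4^{C Gk/\alpha}\tfrac{\log(1/\delta)}{\epsilon})$ (the $1/\alpha^2$ folded into the exponent); for the Phased SGD branch the rate is $\sim W\sqrt{\theta}/(n\epsilon)$, so $W\sqrt{\theta}/(n\epsilon) \le \alpha$ forces $n = \Omega(W\sqrt{\theta}/(\alpha\epsilon)) = O(2^{C Gk/\alpha}\tfrac{\sqrt{\theta\log(1/\delta)}}{\epsilon})$, the $1/\sqrt{n}$ versus $1/n$ rate explaining the quadratic-versus-linear dependence on $W$. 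Taking $n$ above the sum covers whichever branch Algorithm \ref{DP-featureGLM} selects. The hard part is stage (i): pinning down the joint dependence of the approximation degree, the norm $W$, and the dimension $D_m$ on $k$ and $M$ for the non-smooth ReLU, since it is precisely the tension between smaller $M$ and blowing up $W$ and $D_m$ that fixes the exponential sample complexity; the differential-privacy and optimization pieces then follow directly from the already-established Theorem \ref{thm:4}.
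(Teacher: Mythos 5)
Your proposal is correct and follows essentially the same route as the paper: the paper's proof simply invokes the uniform-approximation result for ReLU networks from \cite{goel2019learning} (its Lemma on $(\sqrt{k}\alpha_0,\,2^{C\sqrt{k}/\alpha_0})$-approximation, instantiated with $\alpha_0=\alpha/(G\sqrt{k})$) together with Corollary \ref{cor:1}, which is exactly your stages (i)--(iii), with your stage (i) sketching the content of that cited lemma and your parameter balancing correctly recovering the $4^{C Gk/\alpha}$ versus $2^{C Gk/\alpha}$ split from the quadratic versus linear dependence on $W$ in the two branches of Theorem \ref{thm:4}.
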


\begin{remark}
The results for one-hidden layer neural networks are quite intricate. Firstly, the sample complexity now depends on $\text{poly}(k)$ in the sigmoid case and depends on the exponential of $k$ and $\frac{1}{\alpha}$ in the ReLU case, which is due to the approximation errors using feature maps. However, it is noteworthy that, similar to the GLM case, the sample complexities are still independent of the data dimension. The second difference is that Algorithm \ref{DP-featureGLM} is inefficient in the ReLU case, as the dimension of the feature map will be exponential. Hence, developing efficient algorithms for privately learning one-hidden layer networks will remain an open problem. 
\end{remark}

\section{Private Multi-layer Neural Networks via DP-SGD}\label{sec:nn}
In previous sections, we examined GLMs and one-hidden layer neural networks, but there are three critical issues with those results: (1) While we proposed several new algorithms, DP-SGD based methods \cite{abadi_deep_2016} are preferred in practice for private neural network training. Can we obtain utility guarantees for vanilla DP-SGD in \cite{abadi_deep_2016}? Alternatively, how do different factors such as the number of nodes, clipping threshold, and iteration number impact the utility theoretically? (2) Most of the aforementioned results rely on the well-specified model assumption and the squared loss in population risk, which can be too stringent in practice. Can we provide utility analysis without these assumptions? (3) Previous methods for one-hidden layer networks heavily depend on their specific forms and cannot be extended to general multi-layer structures.
To address these issues, we study the utility of the projected version of DP-SGD for general multi-layer neural networks in this section. 

We consider fully connected neural networks with depth (number of layers) $L$, width $m$ in each layer,  and input data dimension $d$. Such a network could be represented by its weight matrices at each layer: For $L\geq 2$, let $\mathbf{W}_{1} \in \mathbb{R}^{m\times d}$ be the weight matrix between the input layer and the first hidden layer, $\mathbf{W}_{l}\in \mathbb{R}^{m\times m}$ with $l =2,\cdots, L-1$ as the weight matrices between hidden layers  and $\mathbf{W}_{L} \in \mathbb{R}^{1\times m}$ be the weight matrix between the last hidden layer to the output layer.\footnote{For simplicity, we assume the widths of each hidden layer are the same. Our result can be extended to the setting where the widths of each layer are not equal in the same order.} For simplicity we denote $\mathbf{W} = (\mathbf{W}_1,...,\mathbf{W}_L)$. 
Then the neural network on sample $x$ can be written as 
\begin{equation*}
    f(\mathbf{W},x) =(\sqrt{m})\cdot\mathbf{W}_{L}\sigma(\mathbf{W}_{L-1}\sigma(\mathbf{W}_{L-2}...\sigma(\mathbf{W}_1x)...)),
\end{equation*}
where $\sigma(\cdot)$ is the entry-wise activation function. In this paper, for convenience, we only consider the ReLU activation function $\sigma(s) = \max\{0,s\}$, which is arguably one of the most difficult activation functions to analyze due to its non-smoothness.  The general analysis framework is able to extend to other activation functions like tanh, and sigmoid, as long as the function is smooth almost everywhere.

Besides the neural network, we also have a non-negative, differentiable, and  $S$-Lipschitz convex loss function $\ell(f(\mathbf{W},x), y)$ (denoted as $\ell(\mathbf{W};x,y)$)  which measures the difference between the prediction of network and the ground truth.  In total, now our excess population risk is defined as $\mathbb{E}_{(\mathbf{x},y)\sim \mathcal{D}}\ell(\mathbf{W};x,y)  - \min_{\mathbf{W}\in \mathcal{W}}\mathbb{E}_{(\mathbf{x},y)\sim \mathcal{D}}\ell(\mathbf{W};x,y).$ We consider the following assumption throughout the whole part,  which is commonly used in the previous work on analyzing theoretical behaviors of multi-layer neural networks such as  \cite{chen_provably_2020,cao2019generalizationa}. 

\begin{assumption}
\label{assumption:6.2} Assume $||x||_2\leq 1$ for all $x\in\mathcal{P}_x$ and  the parameter space of the network is  $\mathcal{W}=\mathcal{B}(\mathbf{0}, R)$, i.e., for all $\mathbf{W}\in \mathcal{W}: ||\mathbf{W}_l||_F\leq R$, for all  $l\in [L].$
\end{assumption}

We aim to provide an upper bound on the excess population risk for DP-SGD in Algorithm \ref{algo:6.2}  instead of developing new algorithms. Note that there are slight differences between Algorithm \ref{algo:6.2} and the one in \cite{abadi_deep_2016}. First, in the first step of Algorithm \ref{algo:6.2}, the initial weight matrices are i.i.d. sampled from a specific Gaussian distribution, which is crucial for our utility analysis. Secondly, in step 6, we need to perform the projection after using the noisy and clipped sub-sampled gradients to update our weight matrices. In fact, the projection step is also necessary for our analysis. Finally, instead of using the weight matrices in the last iteration, our output is the average of all the intermediate weight matrices. We use the average for convenience of analysis, but we can still obtain a similar utility for the last iteration weight matrices by using the same strategy as in \cite{shamir2013stochastic}.

The main idea of our utility analysis is based on recent developments in the Neural Tangent Kernel (NTK) technique \cite{jacot_neural_2020}, which explains the generalization behaviors and provides theoretical guarantees for SGD in overparameterized neural networks. To introduce the idea of NTK, we first recall the definition of a Neural Tangent Random Feature function.  
\begin{definition}[Neural Tangent Random Feature]
Let $\mathbf{W}^{(0)}$ be generated via the initialization process in Algorithm \ref{algo:6.2}. Then the Neural Tangent Random Feature (NTRF) function is defined as
$$f_{ntk}(\mathbf{W},\boldsymbol{x}) = f(\mathbf{W}^{(0)},\boldsymbol{x}) + \left \langle\partial_{\mathbf{W}}f(\mathbf{W}^{(0)},\boldsymbol{x}) ,\mathbf{W}\right \rangle.$$ Consider the parameter space  $\mathcal{B}(\mathbf{W}^{(0)},\omega)$, the corresponding NTRF function class is denoted as
\begin{align*}
\mathcal{F}(\mathbf{W^{(0)}},\omega) = \{f_{ntk}(\mathbf{W},x): \mathbf{W}\in \mathcal{B}(\mathbf{W}^{(0)},\omega), ||x||_2 \leq 1\}.
\end{align*}
\end{definition}
Note that an NTRF function is linear. 
The idea of using NTK to analyze the generalization performance of overparameterized neural networks is based on the observation that the dynamic of wide neural networks under SGD is similar to that of the corresponding local linearization. In detail, let $W^{(t)}$ denotes the updated parameter vector after the $t$-th iteration, and $L_f = \sum_{(x,y)\in {D}} \ell(f(W^{(t)};x),y)$ denotes the sum of loss with respect to $f$. Via continuous time gradient descent we have $W^{(t+\Delta t)}-W^{(t)} = -\eta \Delta t \frac{\partial L(t)}{\partial W}$. Since  $\frac{\partial f(W^{(t)}, D_\mathcal{X})}{\partial t}=\nabla_Wf(W^{(t)},D_\mathcal{X})\frac{\partial   W^{(t)}}{\partial t}$, and by chain role $\frac{\partial W^{(t)}}{\partial t} = -\eta \nabla_Wf(W^{(t)},D_\mathcal{X})^T\nabla_{f(W^{(t)},D_{\mathcal{X}})}L$, where $$f(W^{(t)},D_\mathcal{X}) = vec([f(W^{(t)},x_i)]_{x\in [n]})$$ is the vector of $f(W^{(t)},x_i)$. The evolution of the neural network $f$ and $f_{ntk}$ can be written by 
\begin{align*}
& \underbrace{\frac{\partial f(W^{(t)}, D_\mathcal{X})}{\partial t} = - \eta\Theta_t(D_\mathcal{X}, D_\mathcal{X}) \nabla_{f} L_{f(W^{(t)},D_{\mathcal{X}})}}_{Gradient \ of\ Neural \ Network},\\
&\underbrace{\frac{\partial f_{ntk}(W^{(t)}, D_\mathcal{X})}{\partial t} = - \eta \Theta_0(D_\mathcal{X},D_\mathcal{X}) \nabla_{f} L_{f_{ntk}(W^{(t)},D_{\mathcal{X}})}}_{Gradient\ of \ Local\ Linearization},
\end{align*}
where $ \Theta_0(\mathcal{X},\mathcal{X}) =\mathbb{E}_{W^{(0)}}\nabla_{W} f(W^{(0)}, D_\mathcal{X}) \nabla_{W}f(W^{(0)},D_\mathcal{X})^T$ is the NTK matrix and 
 ${\Theta}_t(D_\mathcal{X}, D_\mathcal{X}) = \nabla_{W} f(W^{(t)}, D_\mathcal{X}) \nabla_{W}f(W^{(t)},D_\mathcal{X})^T $ is the empirical NTK. Recently, \cite{arora_exact_2019} gives the first non-asymptotic convergence rate for the NTK matrix $\Theta_t$ and shows $||{\Theta}_t - \Theta_0||_F \to 0$ when $m$ is sufficiently large, {\em i.e.,}
 the empirical NTK is proved to converge to a deterministic kernel under the infinite width setting \cite{jacot_neural_2020} with high probability.  Based on this, when $m$ is sufficiently large, from the above two equations 
we can see a basic idea to approximate the gradients of  neural networks is  using their linearizations, which are convex. Moreover, we  can also control the difference between $f({W}^{(t)}, x)$ and $f_{ntk}({W}^{(t)}, x)$ by  the term $||\Theta_t-\Theta_0||_F$.
Thus, via NTK, analyzing the utility of SGD for neural networks will become similar to analyzing the utility of SGD for convex loss. Motivated by the above intuition, we finally get the following theorem for the utility of Algorithm \ref{algo:6.2}.

\begin{theorem}\label{thm:6.3.2}
There exist constants $c_1,c_2$ so that given the number of steps T and $q = M/n$, for any $\epsilon < c_1 q^2 T$, Algorithm \ref{algo:6.2} is $(\epsilon,\delta)$-DP for any $0< \delta <1$ if we have $
\sigma_t \geq c_2\frac{qC\sqrt{ T\log(1/\delta)}}{|B_t|\epsilon}
$. Moreover, for any $\xi \in (0,e^{-1}], 0<\gamma_1, \gamma<1$, and $R >0$, there exists  $$
m^*(\xi,R,L,S,T,C) = \widetilde{\Omega}(\textrm{Poly}(S,L,R)T^{7}C^{-8}[\log(1/\xi)]^{3})$$ such that if $C\leq O(\min \{SL\sqrt{m}, R\})$, $m \geq m^*$ , $M\geq \Omega(\log \frac{T}{\gamma_1})$ and $ n \geq \tilde{\Omega}(\frac{C(\sqrt{L}m+\sqrt{md})\sqrt{T\log(1/\gamma)\log(1/\delta)}}{R\epsilon})$, then with probability at least $1-\xi-\gamma-\gamma_1$ over the randomness of the algorithm, the excess population risk $L_{\mathcal{D}}(\hat{\mathbf{W}}) -\min_{\mathbf{W}\in \mathcal{W}}L_{\mathcal{D}}(\mathbf{W})$ of  the output in Algorithm \ref{algo:6.2} with step size $\eta = \Theta(\frac{\sqrt{L}R}{C \sqrt{m T}})$  is upper bounded by 
\begin{align}\label{eq:6.3.1}
\small 
&\underbrace{\sqrt{\frac{\log(\frac{1}{\xi})}{T}}}_{Convergence \ rate}   +\underbrace{\inf_{f\in \mathcal{F}(\mathbf{W}^{(0)},\frac{R}{\sqrt{m}})}\{ \frac{1}{T}\sum_{i=1}^{T} \ell (f(\mathbf{x}_i), y_i)\}}_{Approximation \ error }  \notag  \\
& + SL^\frac{3}{2}R\cdot\widetilde{O}(\underbrace{\frac{\max(L,\frac{d}{m})\log(\frac{1}{\gamma})\log(\frac{1}{\delta})m^2\sqrt{T}}{n^2\epsilon^2}}_{Privacy \ error}).  
\end{align}
\end{theorem}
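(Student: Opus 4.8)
\emph{Privacy.} The privacy claim is a black-box application of the moments accountant of \cite{abadi_deep_2016}. Each per-example gradient is clipped to $\ell_2$-norm at most $C$ before aggregation, so the subsampled gradient sum used in each update has $\ell_2$-sensitivity $O(C/|B_t|)$ with sampling ratio $q=M/n$; adding Gaussian noise of standard deviation $\sigma_t \geq c_2 qC\sqrt{T\log(1/\delta)}/(|B_t|\epsilon)$ and composing over the $T$ rounds yields $(\epsilon,\delta)$-DP in the regime $\epsilon < c_1 q^2 T$. The projection in Step 6 and the final averaging are post-processing, so they do not weaken the guarantee.

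\emph{Utility: reduction to the NTK regime.} The heart of the argument is to run the iterates inside a neighborhood of the initialization where the network behaves like its linearization. First I would import the now-standard NTK estimates (in the style of \cite{cao2019generalizationa,chen_provably_2020,arora_exact_2019,jacot_neural_2020}): for $m\geq m^*$ and any $\mathbf{W}$ in $\mathcal{B}(\mathbf{W}^{(0)},\omega)$ with $\omega=\Theta(R/\sqrt{m})$, one has (i) $\|\nabla_{\mathbf{W}}f(\mathbf{W},\mathbf{x})\|_F = O(\sqrt{m})$, (ii) a negligibly small linearization error $|f(\mathbf{W},\mathbf{x})-f_{ntk}(\mathbf{W},\mathbf{x})|$, and (iii) a semi-smoothness / almost-convexity inequality controlling $\ell(f(\mathbf{W}^{(t)};\mathbf{x}),y)-\ell(f_{ntk}(\mathbf{W};\mathbf{x}),y)$ by $\langle \nabla_{\mathbf{W}}\ell(\mathbf{W}^{(t)};\mathbf{x},y),\,\mathbf{W}^{(t)}-\mathbf{W}\rangle$ plus a lower-order remainder. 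The projection onto $\mathcal{W}=\mathcal{B}(\mathbf{0},R)$ in Assumption \ref{assumption:6.2}, together with the controlled norm of the Gaussian $\mathbf{W}^{(0)}$, guarantees that every iterate stays in this ball, so these estimates apply throughout. Using (iii), convexity of $\ell$, and the Lipschitz loss, clipping can be shown to leave the gradient direction essentially intact whenever $C\le O(SL\sqrt{m})$.

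\emph{Utility: regret decomposition and assembly.} Granting (i)--(iii), DP-SGD becomes projected online gradient descent on the approximately convex surrogates $\mathbf{W}\mapsto\ell(\mathbf{W};\mathbf{x}_{i_t},y_{i_t})$ with noisy clipped steps $\tilde g_t+\zeta_t$. I would write the standard bound $\sum_t\langle \tilde g_t+\zeta_t,\mathbf{W}^{(t)}-\mathbf{W}\rangle \le \|\mathbf{W}^{(0)}-\mathbf{W}\|^2/(2\eta)+\tfrac{\eta}{2}\sum_t\|\tilde g_t+\zeta_t\|^2$, take $\mathbf{W}$ to realize the best comparator in $\mathcal{F}(\mathbf{W}^{(0)},R/\sqrt{m})$, divide by $T$, and apply Jensen plus online-to-batch (the single-pass structure makes each $\mathbf{W}^{(t)}$ depend only on earlier samples). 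The three terms then arise as follows: the comparator's cumulative loss is exactly the approximation-error term $\inf_{f\in\mathcal{F}(\mathbf{W}^{(0)},R/\sqrt{m})}\tfrac1T\sum_i\ell(f(\mathbf{x}_i),y_i)$; the martingale (Azuma) concentration of the online-to-batch conversion, with losses bounded by $O(1)$ and the optimization regret balanced by $\eta=\Theta(\sqrt{L}R/(C\sqrt{mT}))$, yields the convergence-rate term $\sqrt{\log(1/\xi)/T}$; and the injected noise contributes $\eta\,\sigma_t^2\,p$ to the per-round average excess risk, where $p=O(\max(Lm^2,md))=O(\max(L,d/m)\,m^2)$ is the parameter dimension. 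Substituting $\sigma_t^2=\Theta(C^2T\log(1/\delta)/(n^2\epsilon^2))$ and $\eta$ leaves a residual factor $C$, and substituting the clipping relation $C\le SL\sqrt{m}$ converts it into the $SL^{3/2}R\,m^2$ structure of the privacy-error term, with $\log(1/\gamma)$ coming from the high-probability control of the noise cross-terms.

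\emph{Main obstacle.} The delicate point is the tension between the width requirement and the privacy noise. The NTK estimates (i)--(iii) and the small linearization error force $m\geq m^*=\widetilde\Omega(\mathrm{Poly}(S,L,R)T^{7}C^{-8})$, yet the noise $\zeta_t$ has squared norm $\approx\sigma_t^2\cdot\max(Lm^2,md)$, which grows with $m$ and drives the $m^2$ factor in the privacy error. One must verify that, after each projection, the iterate plus the accumulated large-norm noise still lies within $\mathcal{B}(\mathbf{W}^{(0)},\omega)$ so the linearization never breaks down, while simultaneously extracting the $\log(1/\xi),\log(1/\gamma),\log(1/\delta)$ factors from the noise terms. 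Carrying (iii) through the non-smoothness of ReLU via a semi-smoothness inequality, rather than ordinary smoothness, and transferring the empirical guarantee to the population risk of the averaged output $\hat{\mathbf{W}}$ (cf.\ \cite{shamir2013stochastic} for the last-iterate variant), are the remaining technical subtleties.
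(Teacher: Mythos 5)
Your proposal is correct and follows essentially the same route as the paper's own proof: privacy by black-box moments accountant; your NTK estimates (i)--(iii) are exactly Lemmas \ref{lemma:6.3.1}--\ref{lemma:6.3.3}; the noisy-clipped regret bound with the clipping factor $\max\bigl(1,\tfrac{\|\mathbf{g}_t\|_F}{C}\bigr)\le O(SL\sqrt{m}/C)$ converting into the $SL^{3/2}R$ prefactor is the paper's Lemma \ref{lemma:6.3.4}; the noise contribution $\eta\sigma^2(m^2L+md)$, the NTRF comparator, and Azuma/online-to-batch assembly match the final argument verbatim. One small correction to your sketch: the paper does not use the projection onto $\mathcal{B}(\mathbf{0},R)$ to keep iterates in the NTK neighborhood $\mathcal{B}(\mathbf{W}^{(0)},\omega)$; instead it bounds the cumulative displacement $\sum_{t}\|\mathbf{W}^{(t)}-\mathbf{W}^{(t-1)}\|_F\le 2T\eta R\le\omega$, using the sample-size condition $n\ge\tilde{\Omega}(\cdot)$ together with Lemma \ref{lemma:20} and the Gaussian tail bound to force $\|\mathbf{G}_t\|_F\le R$ per step --- which is precisely the verification you flagged as the remaining subtlety.
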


\begin{remark}
Compared to the results in previous sections, Theorem \ref{thm:6.3.2} provides a more complex upper bound. This upper bound is composed of three terms: The first term represents the sum of convergence rate and sampling error. The second term is the minimum value of $\frac{1}{T}\sum_{i=1}^{T} \ell (f(\mathbf{x}_i), y_i)$ among all reference functions in the NTRF function class. This term arises due to the approximation error caused by using NTRF functions to approximate neural networks. The last term corresponds to the error resulting from the addition of extra noise to gradients to ensure differential privacy.  It is notable that when $\epsilon=\infty$, i.e., when in the non-private case, our result will be $O({\inf_{f\in \mathcal{F}(\mathbf{W}^{(0)},\frac{R}{\sqrt{m}})}\{ \frac{1}{T}\sum_{i=1}^{T} \ell (f(\mathbf{x}_i), y_i)\}}+\frac{1}{\sqrt{T}})$. Moreover, when $m, T\to  \infty$, the error will tend to zero.  
\end{remark}
\begin{remark}\label{remark:6}
Compared to the convex case, the impact of parameters $T$ and $m$ on the bound in Theorem \ref{thm:6.3.2} is more complicated. For network width $m$, if it is large enough, then $f_{ntk}(\mathbf{W}^{(0)},x)$ will converge to a well-trained neural network, as pointed out by \cite{lee_wide_2020,jacot_neural_2020,Lee2018deepNeural}. In the interpolation regime, the training error can be zero, which means the approximation error tends to zero as $m$ becomes sufficiently large. However, $m$ cannot be arbitrarily large because the privacy error depends on $\text{poly}(m)$. As for parameter $T$, it should not be too large or too small. When $T$ is large, the privacy error increases, and when $T$ is small, the convergence error becomes large. Furthermore, the upper bound is independent of the clipping threshold $C$ because we assume that $C\leq R$ and the step size $\eta$ depends on $\frac{1}{C}$ (which implies that $C$ cannot be too small). Thus, when $C$ is in some range, it will not have a significant impact on performance. However, the effect of $C$ when it is large or small remains an open problem. 
\end{remark} 
\begin{remark}
The main weakness of Theorem \ref{thm:6.3.2} is the assumption of $n\geq O(m)$, which contradicts the overparameterized setting in NTK theory, where the number of nodes could be far greater than the sample size $n$. To address this weakness, recent studies have proposed additional assumptions on the gradient or loss of neural networks, such as low-rank gradients \cite{zhou2020bypassing} and restricted Lipschitz continuity \cite{li2022does}. However, all of these works only analyze the excess population risk for convex loss functions. Since we can show, via NTK theory, that the loss function is locally convex and has a bounded gradient with large enough $m$ with high probability, we believe that it is possible to remove the dependency on the number of weights in the utility by combining our theoretical analysis with those assumptions. This will be left as future work.
\end{remark}

\vspace{-0.15in}
\subsection{Experimental Investigation } \label{sec:exp}
\begin{algorithm}[!ht]
\caption{DP-SGD for Multi-layer Neural Networks}
\label{algo:6.2}
\begin{algorithmic}[1]
 \REQUIRE {: Private dataset: $D$, convex set $\mathcal{W} = \mathcal{B}(\mathbf{0},R)$}. Parameters:  learning rate $\eta$, mini-batch size $M$, iteration $T$, privacy parameter $\epsilon \leq 1$, $\delta \leq 1/n^2$, clipping constant $C$.
\STATE Generate each entry of $\mathbf{W}_l^{(0)}$ independently from $N(0,2/m)$, $l\in [L-1]$. 
Generate each entry of $\mathbf{W}_L^{(0)}$ independently from $N(0,1/m)$.
\FOR{$t=0$ to $T-1$}
   \STATE For each data $(x_i, y_i)\in D$ sample it probability $p$. Denote the batch as $B_t$.
   \STATE For each $(x_j^{(t)}, y_j^{(t)})\in B_t$, denote  $g_t(\mathbf{x}_j^{(t)}) =\nabla\ell(\mathbf{W}^{(t)};\mathbf{x}_j^{(t)},y_j^{(t)}).$
   \STATE Let $\tilde{g}_t({x}_j^{(t)}) = g_t({x}_j^{(t)})/\max(1, \frac{||g_t({x}_j^{(t)})||_2}{C}).$
   \STATE Update weight matrices as  $\mathbf{W^{(t+1)}} = \Pi_{\mathcal{W}}(\mathbf{W}^{(t)}-\eta\cdot(\frac{1}{|B_t|}\sum_{(x^{(t)}_j, y_j^{(t)})\in B_t}{\tilde{g}_t}({x}_j^{(t)}) +\mathbf{G}_t))$, where $\mathbf{G}_t\sim \mathcal{N}(\mathbf{0},\sigma^2\mathbb{I})$ drawn independently each iteration.
\ENDFOR\\
\RETURN $\tilde{\mathbf{W}}_{priv} = \frac{1}{T}\sum_{t=1}^{T}\mathbf{W}^{(t)}$
\end{algorithmic}
\end{algorithm}
In order to validate the usefulness of the aforementioned theorem and investigate the effect of hyperparameters on the error, as mentioned in Remark \ref{remark:6}, we conducted experiments using a three-layer MLP model on the MNIST dataset. The training set comprised 60,000 samples, with each sample represented by a 784-dimensional vector.

\begin{figure}[!htbp]
\centering
	\includegraphics[width=0.9\linewidth]{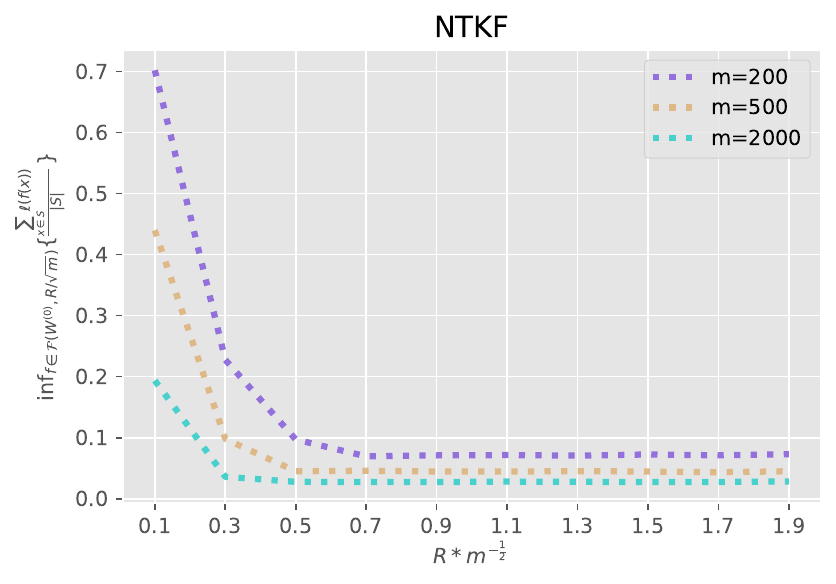}
	\caption{Results of the first term in Theorem 8.} 
    \label{fig:6.1}
\end{figure}
\begin{figure*}[!htbp]
\centering
\begin{minipage}{0.4\linewidth}
	\includegraphics[width=\linewidth]{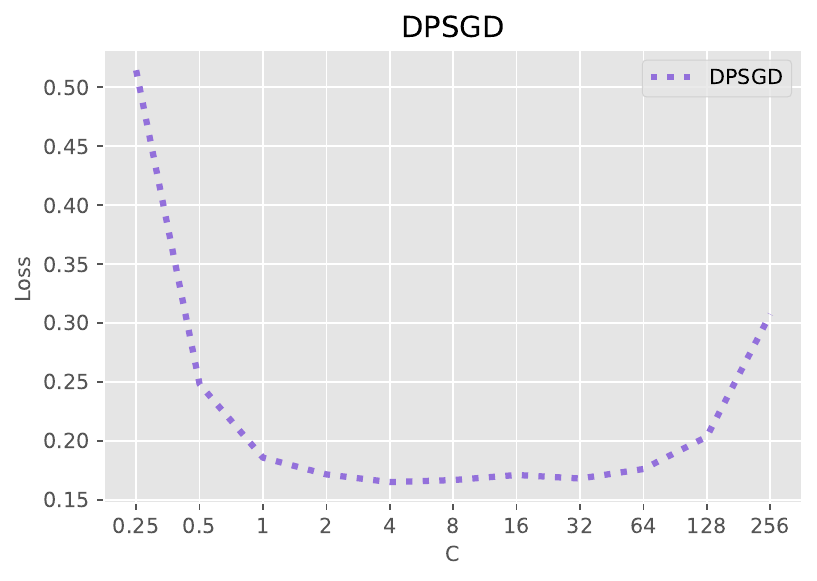}
	\caption{ Impact of Clipping Constant.}
    \label{fig:Clipping}
\end{minipage}
\begin{minipage}{0.4\linewidth}
	\includegraphics[width=\linewidth]{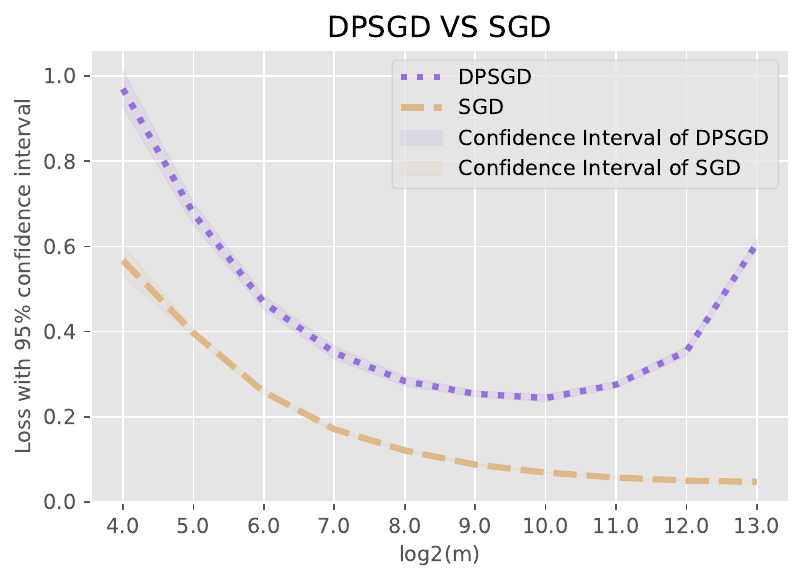}
	\caption{ Impact of network width $m$.}
    \label{SGD_m}
\end{minipage}\\
\begin{minipage}[b]{0.4\linewidth}
	\includegraphics[width=\linewidth]{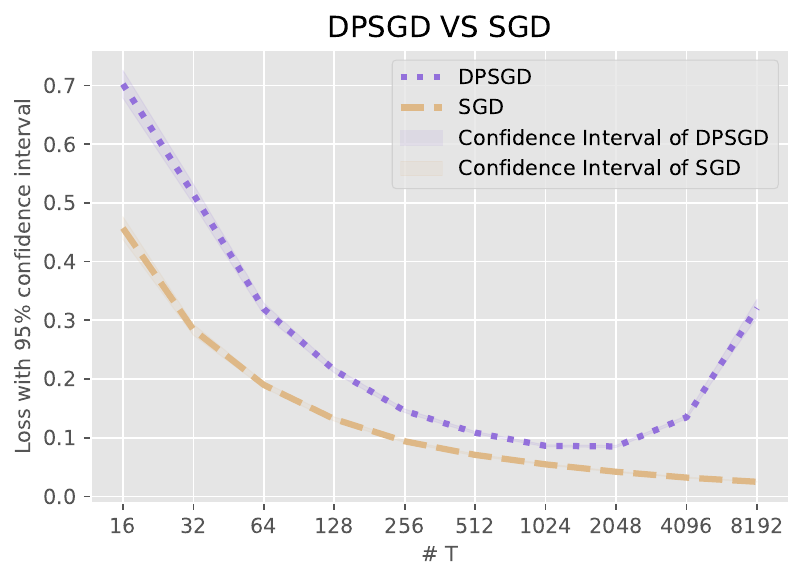}
	\caption{ Impact of training iteration $T$.}
    \label{SGD_t}
\end{minipage}
\begin{minipage}[b]{0.4\linewidth}
	\includegraphics[width = \linewidth]{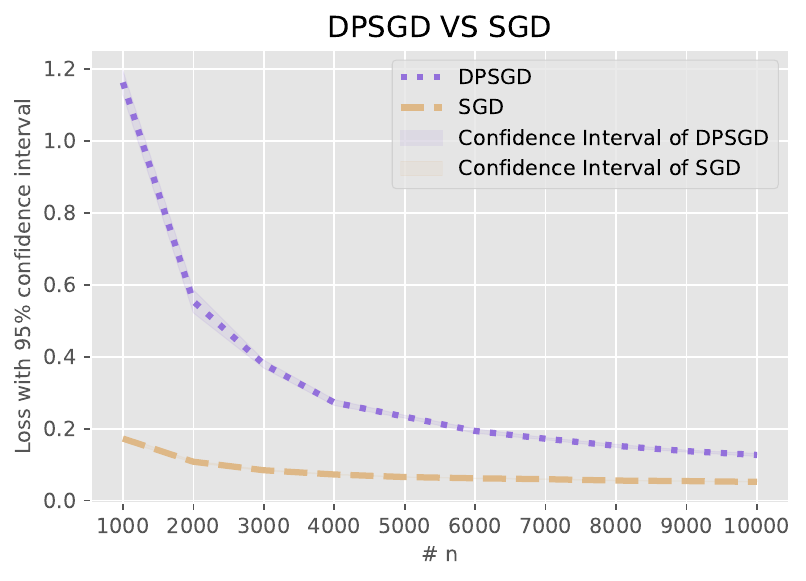}
	\caption{ Impact of sample size $n$.}
    \label{SGD_n}
\end{minipage}
\end{figure*}
\begin{enumerate}
\item First, we aim to study the NTRF approximation error in the bound of Theorem \ref{thm:6.3.2} with different values of $R$ and $m$, which can be approximated by solving the convex optimization problem $\inf_{f\in\mathcal{F}(W^{(0)},R/\sqrt{m})}{\frac{1}{|S|}\sum_{(x, y)\in S}\ell(f(x), y)}$ with projected stochastic gradient descent. In this particular experiment, we set the width of MLPs for each layer as $\{200,500,2000\}$, respectively. Each model is trained for 200 epochs with a learning rate of $10^{-2}$. $R/\sqrt{m}$ varies from 0.1 to 1.9 with a step of 0.2. Figure \ref{fig:6.1} reports the average results of 10 runs.

\item Next, we investigate the impact of the clipping constant $C$ on the testing loss. To do so, we apply the DP-SGD optimizer to a three-layer MLP model with a width of 256, and train the model for 200 epochs with a learning rate of $\eta=0.01$, using a training set of 60,000 samples. We set $\epsilon=1$ and $\delta=1/n^2$. The results are shown in Figure \ref{fig:Clipping}.

\item In Figure \ref{SGD_m}, we plot the mean testing error and 95\% confidence interval based on 10 runs of the DP-SGD optimizer with different values of $m$. The optimizer is applied to a three-layer MLP model trained with a learning rate of 0.01 and 200 epochs, using $n=5,000$, $\epsilon=1$, $\delta=1/n^2$, and $C=20$.

\item In Figure \ref{SGD_t}, we plot the testing error mean value and 95\% confidence interval of 10 runs with different value of $T$, which illustrates how the testing loss will change as $T$ increases with the setting of $n=60,000$ and $m=256$. Other parameters are the same as above.

\item In Figure \ref{SGD_n}, we plot the impact of n on the testing error of DP-SGD. In this setting, we choose the parameters satisfying the condition in Theorem \ref{thm:6.3.2}, with $m=(n^{\frac{14}{15}})/2, T=50n^{\frac{2}{15}}$, $\epsilon=1,\delta=1/n^2$ and $C=20$. It is notable that our parameter setting is to make each term in the upper bound (\ref{eq:6.3.1}) decrease when $n$ becomes larger. 
\end{enumerate}
\vspace{-0.1in}
\noindent {\bf Analysis.} The results presented in the above figures provide insightful findings on the impact of different hyperparameters on the excess population risk in DP-SGD-trained neural networks. Figure \ref{fig:6.1} shows that the approximation error in the upper bound of Theorem \ref{thm:6.3.2} yields a small and meaningful value, and that increasing the size of the hyperparameter space $R$ results in a smaller approximation error. Moreover, when the network width $m$ is increased, the approximation error tends to zero, indicating that the NTRF space can better fit wider neural networks on the training data. Figure \ref{fig:Clipping} illustrates that when the clipping constant $C$ is chosen from $[1,64]$, the excess population risk remains unaffected, which aligns with our theoretical analysis. The curves in Figure \ref{SGD_m} and Figure \ref{SGD_t} show that the excess population risk has a trade-off in choosing the network width $m$ and training iteration $T$, with neither of them being too large or too small. This is consistent with our theoretical findings and our discussions in Remark \ref{remark:6}. Furthermore, Figure \ref{SGD_n} demonstrates that the performance of DP-SGD is similar to that of non-private SGD when the sample size is sufficiently large. These findings highlight the importance of carefully tuning hyperparameters in DP-SGD-trained neural networks and provide valuable guidance for practical applications.
\vspace{-0.15in}
\section{Conclusion}
\vspace{-0.1in}
We presented a comprehensive study on the theoretical guarantees of DP Multi-layer Neural Networks. We started by considering the case where there are no hidden nodes, i.e., non-convex Generalized Linear Models. In the well-specified model, we studied the cases where the link function is Lipschitz and bounded (such as sigmoid) or unbounded (such as ReLU). We also analyzed ReLU regression in the misspecified model to highlight its difference from the well-specified model. Next, we extended our techniques to two-layer neural networks with sigmoid or ReLU activation functions in the well-specified model. Finally, we analyzed the standard DP-SGD method for general multi-layer neural networks and provided an upper bound for the excess population risk.
\bibliography{Differential_Privacy}

\begin{thebibliography}{10}

\bibitem{dwork2006calibrating}
C.~Dwork, F.~McSherry, K.~Nissim, and A.~Smith, ``Calibrating noise to
  sensitivity in private data analysis,'' in {\em Theory of cryptography
  conference}, pp.~265--284, Springer, 2006.

\bibitem{wang2019differentially}
D.~Wang and J.~Xu, ``Differentially private empirical risk minimization with
  smooth non-convex loss functions: A non-stationary view,'' in {\em
  Proceedings of the AAAI Conference on Artificial Intelligence}, vol.~33,
  pp.~1182--1189, 2019.

\bibitem{wang2019differentially2}
D.~Wang, C.~Chen, and J.~Xu, ``Differentially private empirical risk
  minimization with non-convex loss functions,'' in {\em International
  Conference on Machine Learning}, pp.~6526--6535, PMLR, 2019.

\bibitem{wang2019efficient}
L.~Wang, B.~Jayaraman, D.~Evans, and Q.~Gu, ``Efficient privacy-preserving
  stochastic nonconvex optimization,'' {\em arXiv preprint arXiv:1910.13659},
  2019.

\bibitem{song2021evading}
S.~Song, T.~Steinke, O.~Thakkar, and A.~Thakurta, ``Evading the curse of
  dimensionality in unconstrained private glms,'' in {\em International
  Conference on Artificial Intelligence and Statistics}, pp.~2638--2646, PMLR,
  2021.

\bibitem{agarwal2017finding}
N.~Agarwal, Z.~Allen-Zhu, B.~Bullins, E.~Hazan, and T.~Ma, ``Finding
  approximate local minima faster than gradient descent,'' in {\em Proceedings
  of the 49th Annual ACM SIGACT Symposium on Theory of Computing},
  pp.~1195--1199, 2017.

\bibitem{abadi_deep_2016}
M.~Abadi, A.~Chu, I.~Goodfellow, H.~B. McMahan, I.~Mironov, K.~Talwar, and
  L.~Zhang, ``Deep learning with differential privacy,'' in {\em Proceedings of
  the 2016 ACM SIGSAC conference on computer and communications security},
  pp.~308--318, 2016.

\bibitem{zhou2020bypassing}
Y.~Zhou, Z.~S. Wu, and A.~Banerjee, ``Bypassing the ambient dimension: Private
  sgd with gradient subspace identification,'' {\em arXiv preprint
  arXiv:2007.03813}, 2020.

\bibitem{bassily2021differentially}
R.~Bassily, C.~Guzm{\'a}n, and M.~Menart, ``Differentially private stochastic
  optimization: New results in convex and non-convex settings,'' {\em Advances
  in Neural Information Processing Systems}, vol.~34, 2021.

\bibitem{zhang2001private}
Q.~Zhang, J.~Ma, J.~Lou, and L.~Xiong, ``Private stochastic non-convex
  optimization with improved utility rates,'' in {\em Proceedings of the
  Thirtieth International Joint Conference on Artificial Intelligence}, 2021.

\bibitem{xiao2023theory}
H.~Xiao, Z.~Xiang, D.~Wang, and S.~Devadas, ``A theory to instruct
  differentially-private learning via clipping bias reduction,'' in {\em 2023
  IEEE Symposium on Security and Privacy (SP)}, pp.~2170--2189, IEEE Computer
  Society, 2023.

\bibitem{wang2020escaping}
D.~Wang and J.~Xu, ``Escaping saddle points of empirical risk privately and
  scalably via dp-trust region method,'' in {\em Joint European Conference on
  Machine Learning and Knowledge Discovery in Databases}, pp.~90--106,
  Springer, 2020.

\bibitem{wang2017differentially}
D.~Wang, M.~Ye, and J.~Xu, ``Differentially private empirical risk minimization
  revisited: faster and more general,'' in {\em Proceedings of the 31st
  International Conference on Neural Information Processing Systems},
  pp.~2719--2728, 2017.

\bibitem{jain2014near}
P.~Jain and A.~G. Thakurta, ``(near) dimension independent risk bounds for
  differentially private learning,'' in {\em International Conference on
  Machine Learning}, pp.~476--484, PMLR, 2014.

\bibitem{kasiviswanathan2016efficient}
S.~P. Kasiviswanathan and H.~Jin, ``Efficient private empirical risk
  minimization for high-dimensional learning,'' in {\em International
  Conference on Machine Learning}, pp.~488--497, PMLR, 2016.

\bibitem{arora2022differentially}
R.~Arora, R.~Bassily, C.~Guzm{\'a}n, M.~Menart, and E.~Ullah, ``Differentially
  private generalized linear models revisited,'' {\em arXiv preprint
  arXiv:2205.03014}, 2022.

\bibitem{hu2022high}
L.~Hu, S.~Ni, H.~Xiao, and D.~Wang, ``High dimensional differentially private
  stochastic optimization with heavy-tailed data,'' in {\em Proceedings of the
  41st ACM SIGMOD-SIGACT-SIGAI Symposium on Principles of Database Systems},
  pp.~227--236, 2022.

\bibitem{bassily2014private}
R.~Bassily, A.~Smith, and A.~Thakurta, ``Private empirical risk minimization:
  Efficient algorithms and tight error bounds,'' in {\em Foundations of
  Computer Science (FOCS), 2014 IEEE 55th Annual Symposium on}, pp.~464--473,
  IEEE, 2014.

\bibitem{frei2020agnostic}
S.~Frei, Y.~Cao, and Q.~Gu, ``Agnostic learning of a single neuron with
  gradient descent,'' {\em Advances in Neural Information Processing Systems},
  vol.~33, pp.~5417--5428, 2020.

\bibitem{goel2019learning}
S.~Goel and A.~R. Klivans, ``Learning neural networks with two nonlinear layers
  in polynomial time,'' in {\em Conference on Learning Theory}, pp.~1470--1499,
  PMLR, 2019.

\bibitem{cai2021cost}
T.~T. Cai, Y.~Wang, and L.~Zhang, ``The cost of privacy: Optimal rates of
  convergence for parameter estimation with differential privacy,'' {\em The
  Annals of Statistics}, vol.~49, no.~5, pp.~2825--2850, 2021.

\bibitem{wang2020sparse}
D.~Wang and J.~Xu, ``On sparse linear regression in the local differential
  privacy model,'' {\em IEEE Transactions on Information Theory}, vol.~67,
  no.~2, pp.~1182--1200, 2020.

\bibitem{kalai2009isotron}
A.~T. Kalai and R.~Sastry, ``The isotron algorithm: High-dimensional isotonic
  regression.,'' in {\em COLT}, Citeseer, 2009.

\bibitem{kakade2011efficient}
S.~M. Kakade, A.~T. Kalai, V.~Kanade, and O.~Shamir, ``Efficient learning of
  generalized linear and single index models with isotonic regression,'' in
  {\em Proceedings of the 24th International Conference on Neural Information
  Processing Systems}, pp.~927--935, 2011.

\bibitem{moreau1965proximite}
J.-J. Moreau, ``Proximit{\'e} et dualit{\'e} dans un espace hilbertien,'' {\em
  Bulletin de la Soci{\'e}t{\'e} math{\'e}matique de France}, vol.~93,
  pp.~273--299, 1965.

\bibitem{diakonikolas2020approximation}
I.~Diakonikolas, S.~Goel, S.~Karmalkar, A.~R. Klivans, and M.~Soltanolkotabi,
  ``Approximation schemes for relu regression,'' in {\em Conference on Learning
  Theory}, pp.~1452--1485, PMLR, 2020.

\bibitem{goel2019time}
S.~Goel, S.~Karmalkar, and A.~Klivans, ``Time/accuracy tradeoffs for learning a
  relu with respect to gaussian marginals,'' {\em Advances in Neural
  Information Processing Systems}, vol.~32, 2019.

\bibitem{manurangsi2018computational}
P.~Manurangsi and D.~Reichman, ``The computational complexity of training relu
  (s),'' {\em arXiv preprint arXiv:1810.04207}, 2018.

\bibitem{kang2021weighted}
Y.~Kang, Y.~Liu, B.~Niu, and W.~Wang, ``Weighted distributed differential
  privacy erm: Convex and non-convex,'' {\em Computers \& Security}, vol.~106,
  p.~102275, 2021.

\bibitem{arora2022faster}
R.~Arora, R.~Bassily, T.~Gonz{\'a}lez, C.~Guzm{\'a}n, M.~Menart, and E.~Ullah,
  ``Faster rates of convergence to stationary points in differentially private
  optimization,'' {\em arXiv preprint arXiv:2206.00846}, 2022.

\bibitem{chen_provably_2020}
Y.-L. Chen, Z.~Wang, and M.~Kolar, ``Provably {Training} {Neural} {Network}
  {Classifiers} under {Fairness} {Constraints},'' {\em arXiv:2012.15274 [cs,
  math, stat]}, Dec. 2020.
\newblock arXiv: 2012.15274.

\bibitem{cao2019generalizationa}
Y.~Cao and Q.~Gu, ``Generalization error bounds of gradient descent for
  learning over-parameterized deep relu networks,'' 2019.

\bibitem{shamir2013stochastic}
O.~Shamir and T.~Zhang, ``Stochastic gradient descent for non-smooth
  optimization: Convergence results and optimal averaging schemes,'' in {\em
  International conference on machine learning}, pp.~71--79, PMLR, 2013.

\bibitem{jacot_neural_2020}
A.~Jacot, F.~Gabriel, and C.~Hongler, ``Neural {Tangent} {Kernel}:
  {Convergence} and {Generalization} in {Neural} {Networks},'' {\em
  arXiv:1806.07572 [cs, math, stat]}, Feb. 2020.
\newblock arXiv: 1806.07572.

\bibitem{arora_exact_2019}
S.~Arora, S.~S. Du, W.~Hu, Z.~Li, R.~R. Salakhutdinov, and R.~Wang, ``On exact
  computation with an infinitely wide neural net,'' {\em Advances in Neural
  Information Processing Systems}, vol.~32, 2019.

\bibitem{lee_wide_2020}
J.~Lee, L.~Xiao, S.~S. Schoenholz, Y.~Bahri, R.~Novak, J.~Sohl-Dickstein, and
  J.~Pennington, ``Wide {Neural} {Networks} of {Any} {Depth} {Evolve} as
  {Linear} {Models} {Under} {Gradient} {Descent},'' {\em Journal of Statistical
  Mechanics: Theory and Experiment}, vol.~2020, p.~124002, Dec. 2020.
\newblock arXiv: 1902.06720.

\bibitem{Lee2018deepNeural}
J.~Lee, Y.~Bahri, R.~Novak, S.~Schoenholz, J.~Pennington, and
  J.~Sohl-dickstein, ``Deep neural networks as gaussian processes,'' in {\em
  International Conference on Learning Representations}, 2018.

\bibitem{li2022does}
X.~Li, D.~Liu, T.~Hashimoto, H.~A. Inan, J.~Kulkarni, Y.~T. Lee, and A.~G.
  Thakurta, ``When does differentially private learning not suffer in high
  dimensions?,'' {\em arXiv preprint arXiv:2207.00160}, 2022.

\bibitem{feldman2020private}
V.~Feldman, T.~Koren, and K.~Talwar, ``Private stochastic convex optimization:
  optimal rates in linear time,'' in {\em Proceedings of the 52nd Annual ACM
  SIGACT Symposium on Theory of Computing}, pp.~439--449, 2020.

\bibitem{bubeck2015convex}
S.~Bubeck {\em et~al.}, ``Convex optimization: Algorithms and complexity,''
  {\em Foundations and Trends{\textregistered} in Machine Learning}, vol.~8,
  no.~3-4, pp.~231--357, 2015.

\bibitem{hardt2016train}
M.~Hardt, B.~Recht, and Y.~Singer, ``Train faster, generalize better: Stability
  of stochastic gradient descent,'' in {\em International conference on machine
  learning}, pp.~1225--1234, PMLR, 2016.

\bibitem{allen-zhu_convergence_2019}
Z.~Allen-Zhu, Y.~Li, and Z.~Song, ``A {Convergence} {Theory} for {Deep}
  {Learning} via {Over}-{Parameterization},'' {\em arXiv:1811.03962 [cs, math,
  stat]}, June 2019.
\newblock arXiv: 1811.03962.

\bibitem{zou_stochastic_2018}
D.~Zou, Y.~Cao, D.~Zhou, and Q.~Gu, ``Stochastic {Gradient} {Descent}
  {Optimizes} {Over}-parameterized {Deep} {ReLU} {Networks},'' {\em
  arXiv:1811.08888 [cs, math, stat]}, Dec. 2018.
\newblock arXiv: 1811.08888.

\bibitem{cao2019generalizationb}
Y.~Cao and Q.~Gu, ``Generalization bounds of stochastic gradient descent for
  wide and deep neural networks,'' {\em Advances in Neural Information
  Processing Systems}, vol.~32, pp.~10836--10846, 2019.

\bibitem{RHEE1986303}
W.~Rhee and M.~Talagrand, ``Uniform bound in the central limit theorem for
  banach space valued dependent random variables,'' {\em Journal of
  Multivariate Analysis}, vol.~20, no.~2, pp.~303--320, 1986.

\end{thebibliography}
\bibliographystyle{ieeetr}
\onecolumn 
\appendices

\section{Omitted Proofs in Section \ref{sec:GLM}} 
\begin{proof}[\bf Proof of Lemma \ref{lemma:2}] 
Note that by the definition of $\ell$ we have for any $w$, 
\begin{align*}
    & \nabla \ell(w; x, y)=(\sigma(\langle w, x\rangle -y)\cdot x,\\
    & \nabla^2 \ell(w; x, y)=\sigma'(\langle w, x\rangle \cdot xx^T, 
\end{align*}
where $\sigma'(\cdot)$ is a subgradient of $\sigma$. Since $\sigma$ is increasing, we have $\sigma'(\cdot)\geq 0$. Therefore we have $\nabla^2 \ell(w; x, y)\succ 0$ and  $\ell(\cdot; x, y)$ is convex and its Hessian matrix has rank at most 1. Since $\|\nabla \ell(w; x, y)\|_2\leq 2BR$ and $ \nabla^2 \ell(w;x, y)\prec  GR^2I_{d}$, $\ell(\cdot; x, y)$ is $2B
$-Lipschitz and $G$-smooth. 
\end{proof}
\begin{proof}[{\bf Proof of Lemma \ref{lemma:3} }] 
For any fixed $x$ we have 
\begin{align*}
    &\mathbb{E}_{y}[\ell(w; x, y)]-  \mathbb{E}_{y}[\ell(w^*; x, y)] = \mathbb{E}_{y}\int_{\langle w^*, x\rangle }^{\langle w, x\rangle} (\sigma(z)-y)dz \\
    &= \int_{\langle w^*, x\rangle }^{\langle w, x\rangle} (\sigma(z)-\mathbb{E}_{y}y)dz = \int_{\langle w^*, x\rangle }^{\langle w, x\rangle} (\sigma(z)-\sigma(\langle w^*, x\rangle))dz \\
    &= \int_{\langle w^*, x\rangle }^{\langle w, x\rangle} \frac{\sigma'(z)(\sigma(z)-\sigma(\langle w^*, x\rangle))}{ \sigma'(z)}dz \\
    &\geq \frac{1}{2G}(\sigma (\langle w, x\rangle)-\sigma(\langle w^*, x\rangle))^2,
\end{align*}
where the last inequality is due to the fact that $\sigma$ is monotonically increasing and $G$-Lipschitz. Thus, taking the expectation of $x$ we have 
\begin{align*}
    L^{\ell}_\mathcal{P}(w)-L^{\ell}_\mathcal{P}(w^*) &\geq \frac{1}{2G} \mathbb{E}_x (\sigma (\langle w, x\rangle)-\sigma(\langle w^*, x\rangle))^2 \\
    & = \frac{1}{2G}(L_\mathcal{P}(w)-L_\mathcal{P}(w^*)). 
\end{align*}
\end{proof}
\begin{proof}[{\bf Proof of Theorem \ref{thm:1}}]
 Before the proof,  we  first provide some notations.  For any $u, u'\in \mathbb{R}^d$, let $\|u\|_V=\sqrt{u^TVV^Tu}$ as the semi-norm of $u$ induced by $V$, and let $\langle u, u'\rangle_V=u^TVV^Tu'$.

Since $\ell(\cdot; x,y)$ is $2B$-Lipschitz and $G$-smooth, by Theorem 4.4 in \cite{feldman2020private} we can see it is $(\epsilon, \delta)$-DP when $\eta\leq \frac{2}{G}$. For utility, it is sufficient to show that 

\begin{equation}
        \mathbb{E}L_\mathcal{P}(w_k)-L_{\mathcal{P}}(w^*) \leq O(GBW(\frac{\sqrt{\theta\log \frac{1}{\delta}}}{n\epsilon}+\frac{1}{\sqrt{n}})).
\end{equation}
Our proof follows the proof of convex GLM in \cite{bassily2021differentially}. We first show the following lemma. 
\begin{lemma}
For each epoch $i$ we have $\mathbb{E}[L_\mathcal{P}^\ell(\bar{w}_i)-L_\mathcal{P}^\ell(\bar{w}_{i-1})]\leq \frac{\mathbb{E}\|\bar{w}_{i-1}-w_{i-1}\|_V^2 }{2\eta_i n_i}+2B^2\eta_i.$
\end{lemma}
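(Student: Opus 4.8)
The plan is to treat this as a standard single-pass SGD convergence bound for the convex, Lipschitz surrogate loss $\ell$, with the comparison point taken to be the previous epoch's average $\bar{w}_{i-1}$, but carried out entirely in the $V$-seminorm so that the eventual dimension dependence is through $\theta = \mathrm{Rank}(V)$ rather than $d$. First I would record two structural facts about the per-step gradients $g_i^t = \nabla \ell(w_i^t; x_i^t, y_i^t) = (\sigma(\langle w_i^t, x_i^t\rangle) - y_i^t)x_i^t$. First, each $g_i^t$ is a scalar multiple of $x_i^t$ and hence lies in the column span of $V$, so $VV^T g_i^t = g_i^t$; consequently $\langle w_i^t - \bar{w}_{i-1}, g_i^t\rangle_V = \langle w_i^t - \bar{w}_{i-1}, g_i^t\rangle$ and $\|g_i^t\|_V = \|g_i^t\|_2$. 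Second, by Lemma \ref{lemma:2} the loss $\ell(\cdot; x, y)$ is $2B$-Lipschitz (the gradient norm is $|\sigma-y|\,\|x\|_2 \le 2B$ \emph{everywhere}, since $\sigma$ is bounded under Assumption \ref{ass:1}), so $\|g_i^t\|_2 \le 2B$ and $\|g_i^t\|_V^2 \le 4B^2$. The global Lipschitz bound is what lets the analysis proceed without a projection step.

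Next I would expand the $V$-seminorm recursion with comparison point $u := \bar{w}_{i-1}$. Writing $w_i^{t+1} = w_i^t - \eta_i g_i^t$ and using bilinearity of $\langle \cdot, \cdot\rangle_V$,
\[
\|w_i^{t+1} - u\|_V^2 = \|w_i^t - u\|_V^2 - 2\eta_i \langle w_i^t - u, g_i^t\rangle + \eta_i^2 \|g_i^t\|_2^2,
\]
where the first structural fact was used to replace the two $V$-terms by the ordinary inner product and norm. Because the dataset is partitioned across epochs and each sample is used once in order, $w_i^t$ depends only on $z_i^1, \dots, z_i^{t-1}$ (and the history through epoch $i-1$) and is independent of the fresh sample $z_i^t$, so $\mathbb{E}[g_i^t \mid w_i^t] = \nabla L^\ell_\mathcal{P}(w_i^t)$. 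Taking conditional expectation, invoking convexity of $L^\ell_\mathcal{P}$ via $\langle \nabla L^\ell_\mathcal{P}(w_i^t), w_i^t - u\rangle \ge L^\ell_\mathcal{P}(w_i^t) - L^\ell_\mathcal{P}(u)$, and bounding $\mathbb{E}\|g_i^t\|_2^2 \le 4B^2$, I would obtain
\[
\mathbb{E}[L^\ell_\mathcal{P}(w_i^t) - L^\ell_\mathcal{P}(u)] \le \frac{\mathbb{E}\|w_i^t - u\|_V^2 - \mathbb{E}\|w_i^{t+1} - u\|_V^2}{2\eta_i} + 2B^2\eta_i.
\]

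Then I would telescope over $t = 1, \dots, n_i$, discard the nonnegative terminal term $\mathbb{E}\|w_i^{n_i+1} - u\|_V^2$, divide by $n_i$, and apply Jensen's inequality together with convexity of $L^\ell_\mathcal{P}$ to pass from the average of the iterate losses to $L^\ell_\mathcal{P}(\bar{w}_i)$ with $\bar{w}_i = \frac{1}{n_i}\sum_t w_i^t$. Recalling $w_i^1 = w_{i-1}$ and $u = \bar{w}_{i-1}$ converts the initial-distance term into $\mathbb{E}\|\bar{w}_{i-1} - w_{i-1}\|_V^2$, giving exactly the claimed bound. The only delicate point is the bookkeeping in the second step: the whole advantage over a naive analysis is that distances must be measured in $\|\cdot\|_V$, and this is legitimate precisely because the gradients never leave $\mathrm{col}(V)$, so the seminorm recursion is an exact identity rather than an inequality and no orthogonal component is lost. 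Keeping the comparison point at $\bar{w}_{i-1}$ (rather than $w^*$) is what will later allow the per-epoch bounds to be chained, with the injected noise in $w_{i-1}$ entering only through $\mathbb{E}\|\zeta_{i-1}\|_V^2 = \theta\,\tau_{i-1}^2$.
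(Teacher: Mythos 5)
Your proof is correct and takes essentially the same route as the paper's: the exact recursion for $\|w_i^t-\bar{w}_{i-1}\|_V^2$ (valid because each stochastic gradient is a scalar multiple of $x_i^t$ and hence lies in the column span of $V$), the $2B$-Lipschitz bound $\|g_i^t\|_2\leq 2B$, convexity of $L^\ell_\mathcal{P}$ applied after conditioning on the fresh sample, and then telescoping plus Jensen with comparison point $\bar{w}_{i-1}$. If anything, you make explicit the independence and measurability bookkeeping (fresh samples within the epoch, the randomness of $V$ and of $\bar{w}_{i-1}$) that the paper's proof leaves implicit.
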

\begin{proof}
For simplicity we omit the subscript $i$ in $w_i^t$ and $\eta_i$. Denote $\Phi^t=\|w^{t}-\bar{w}_{i-1}\|_V^2$, we have 
\begin{align*}
    \Phi^{t+1}&=\Phi^t-2\eta\langle \nabla \ell(w^t; x^t, y^t), w^t-\bar{w}_{i-1}\rangle +\eta^2 \|\nabla \ell(w^t; x^t, y^t)\|_2^2\\
    &\leq \Phi^t-2\eta\langle \nabla \ell(w^t; x^t, y^t), w^t-\bar{w}_{i-1}\rangle+4\eta^2B^2, 
 \end{align*}
 where the first inequality is due to the fact that $\nabla \ell(w^t; x^t, y^t)$ in the span of $V$ and $\ell$ is $2B$-Lipschitz. Thus, 
\begin{equation*}
    \langle \nabla \ell(w^t; x^t, y^t), w^t-\bar{w}_{i-1}\rangle\leq \frac{\Phi^t-\Phi^{t+1}}{2\eta}+ 2B^2\eta.
\end{equation*}
By the convexity of $L_{\mathcal{P}}^\ell$ and take the expectation w.r.t all the data we have 
\begin{align*}
  \mathbb{E}[L_\mathcal{P}^\ell(w^t)-L_\mathcal{P}^\ell(\bar{w}_{i-1})] &\leq  \mathbb{E}[ \langle \nabla L_\mathcal{P}^\ell(w^t), w^t-\bar{w}_{i-1}\rangle]\\
  &\leq \mathbb{E} [\frac{\Phi^t-\Phi^{t+1}}{2\eta}]+2B^2\eta. 
\end{align*}
Thus, we have 
\begin{align*}
     \mathbb{E}[L_\mathcal{P}^\ell({\bar{w}_{i}})-L_\mathcal{P}^\ell(\bar{w}_{i-1})]\leq \frac{\mathbb{E}[\Phi^1]}{2\eta n_i}+2B^2\eta=\frac{\mathbb{E}\|\bar{w}_{i-1}-w_{i-1}\|_V^2 }{2\eta n_i}+2B^2\eta. 
\end{align*}
\end{proof}
Now we back to our proof. Denote $\bar{w}_0=w_\ell^*$ and $\zeta_0=w_0-w_\ell^*$, where $w_\ell^*=\arg\min_{w\in \mathbb{R}^d}L_\mathcal{P}^\ell(w)$,  we have 
\begin{align*}
    &\mathbb{E}[L_\mathcal{P}^\ell(w_k)-L_\mathcal{P}^\ell(w_\ell^*) ]\\
    &= \mathbb{E}[L_\mathcal{P}^\ell(w_k)-L_\mathcal{P}^\ell(\bar{w}_k)]+\sum_{i=1}^k\mathbb{E}[L_\mathcal{P}^\ell(\bar{w}_i)-L_\mathcal{P}^\ell(\bar{w}_{i-1})]\\
    &\leq \sum_{i=1}^k (\frac{\mathbb{E}\|\zeta_{i-1}\|_V^2 }{2\eta_i n_i}+2B^2\eta_i)+\mathbb{E}[L_\mathcal{P}^\ell(w_k)-L_\mathcal{P}^\ell(\bar{w}_k)].
\end{align*}
Note that for all $2\leq i\leq k$, we have $$\mathbb{E}\|\zeta_{i-1}\|_V^2=\mathbb{E}_V[\mathbb{E}_{\zeta_{i-1}}[\zeta^T_{i-1}VV^T \zeta_{i-1}|V]]\leq \theta \tau_{i-1}^2.$$ And when $i=1$, $\mathbb{E}\|\zeta_{i-1}\|_V^2\leq \|w_0-w_\ell^*\|_2^2.$ For $\mathbb{E}[L_\mathcal{P}^\ell(w_k)-L_\mathcal{P}^\ell(\bar{w}_k)]$ we have 
\begin{equation*}
    \mathbb{E}[L_\mathcal{P}^\ell(w_k)-L_\mathcal{P}^\ell(\bar{w}_k)]\leq 2B\mathbb{E}[\langle \zeta_k, x\rangle]\leq 2B\tau_k\leq O(\frac{WB\sqrt{\log \frac{1}{\delta}}}{\epsilon n^\frac{5}{2}}).
\end{equation*}
In total we have 
\begin{align*}
    &\sum_{i=1}^k (\frac{\mathbb{E}\|\zeta_{i-1}\|_V^2 }{2\eta_i n_i}+2B^2\eta_i)+\mathbb{E}[L_\mathcal{P}^\ell(w_k)-L_\mathcal{P}^\ell(\bar{w}_k)]\\ &\leq \sum_{i=2}^k (\frac{\theta\tau^2_{i-1}}{2\eta_in_i}+2B^2\eta_i)+\frac{\|w_0-w_\ell^*\|^2_2}{2\eta_1n_1}+2B^2R^2\eta_1+ O(\frac{WB\sqrt{\log \frac{1}{\delta}}}{\epsilon n^\frac{5}{2}})\\ 
    &\leq O( BW(\frac{\sqrt{\theta \log \frac{1}{\delta}}}{n\epsilon}+\frac{1}{\sqrt{n}})).
\end{align*}
Note that by Lemma \ref{lemma:3} we can see that $w_\ell^*=w^*$. Thus we have 
$  \mathbb{E}L_\mathcal{P}(w)-L_{\mathcal{P}}(w^*)\leq 2G(  \mathbb{E}L^{\ell}_\mathcal{P}(w)-L^{\ell}_\mathcal{P}(w^*))\leq O(GBW(\frac{\sqrt{\theta\log \frac{1}{\delta}}}{n\epsilon}+\frac{1}{\sqrt{n}})).$
\end{proof}
\begin{proof}[{\bf Proof of Theorem \ref{JLsgd}}]
We first prove the privacy guarantee. Let $\alpha\leq 1$ be a parameter to be set later. From the JL property we know that with $m=O(\frac{\log n/\delta}{\alpha^2})$, then with probability at least $1-\frac{\delta}{2}$ for all feature vectors we have $\|\Phi x_i\|_2\leq (1+\alpha)\|x_i\|_2\leq 2\|x_i\|_2$, and $\|\Phi w^*\|_2\leq 2\|w^*\|_2\leq 2W$. In the following we will show if the previous events hold (which is denoted as $E$) then the Algorithm is $(\epsilon, \frac{\delta}{2})$-DP. 

To see this note that $\ell(w_i^t; \tilde{x}_i^t, y_i^t)=\ell(w_i^t; \Phi {x}_i^t, y_i^t)$, we have $\|\nabla \ell(w_i^t; \tilde{x}_i^t, y_i^t)\|_2= \|(\sigma(\langle w_i^t, \Phi {x}_i^t\rangle)- y_i^t)\Phi {x}_i^t\|_2\leq 4B$  and $\ell(w_i^t; \tilde{x}_i^t, y_i^t)$ is $4G$-smooth. Thus, by Theorem \ref{thm:1} it is $(\epsilon, \frac{\delta}{2})$-DP if $\eta\leq \frac{2}{4G}$. 

We then show the whole algorithm $\mathcal{A}$ is $(\epsilon,\delta)$-DP. Consider any event of the output $S$ and any neighboring datasets $D\sim D'$, we have 
\begin{align*}
    &\mathcal{P}(\mathcal{A}(D)\in S)=\mathcal{P}(\mathcal{A}(D)\in S\bigcap E)+\mathcal{P}(\mathcal{A}(D)\in S\bigcap \bar{E})\\
    &\leq e^\epsilon \mathcal{P}(\mathcal{A}(D')\in S\bigcap E)+\frac{\delta}{2}+\mathcal{P}(\mathcal{A}(D)\in \bar{E})\\
    &\leq e^\epsilon \mathcal{P}(\mathcal{A}(D')\in S)+\delta.
\end{align*}
Next we will show the utility. For simplicity we denote the projected distribution $\mathcal{P}'=\Phi \mathcal{P}$ we first decompose the excess population risk as the following:
\begin{align*}
    &\mathbb{E}[L_\mathcal{P}^\ell(\hat{w})]-L_\mathcal{P}^\ell(w^*)=\mathbb{E}[L_{\mathcal{P}'}^\ell(w_k)]-\mathbb{E}_{\Phi, \tilde{D}}L^\ell(\Phi w^*,  \tilde{D})+\mathbb{E}_{\Phi, \tilde{D}}L^\ell(\Phi w^*, \tilde{D})-L_\mathcal{P}^\ell(w^*). 
\end{align*}
For the second term by Lemma \ref{lemma:3} we know $w^*$ is also the global minimizer of $L_\mathcal{P}^\ell(w^*)$ and by Lemma \ref{lemma:2} we know it is $G$-smooth, thus we have 
\begin{align}
    \mathbb{E}_{\Phi, \tilde{D}}L^\ell(\Phi w^*, \tilde{D})-L_\mathcal{P}^\ell(w^*) \notag
    &=\mathbb{E}_{\Phi, (x_i, y_i)\sim \mathcal{P}}[g^{y_i}(\langle \Phi w^*, \Phi x_i\rangle)-g^{y_i}(\langle w^*,  x_i \rangle)] \notag\\
    &\leq \nabla L^\ell_\mathcal{P}(w^*) + \frac{G}{2}\mathbb{E}\| \langle \Phi w^*, \Phi x_i\rangle-\langle w^*,  x_i\rangle\|_2^2 \notag \\
    &=\frac{G}{2}\mathbb{E}\| \langle \Phi w^*, \Phi x_i\rangle-\langle w^*,  x_i\rangle\|_2^2  \label{eq:100}
\end{align}
where the last inequality is due to that 
\begin{align*}
    &\mathbb{E}_{x_i, \Phi}| \langle \Phi w^*, \Phi x_i\rangle-\langle w^*,  x_i\rangle|=\mathbb{E}_{x_i} \mathbb{E}_{\Phi} |\langle \Phi w^*, \Phi x_i\rangle-\langle w^*,  x_i\rangle|^2\leq \tilde{O}(W\frac{1}{{m}}).
\end{align*}

To bound the first term, we first $\Phi$  and use a similar analysis as in the proof of Theorem \ref{thm:1}. The main difference here we use the following lemma instead of $\|\bar{w}_{i-1}-w_{i-1}\|_V^2$. Note that it is always true as $\|\bar{w}_{i-1}-w_{i-1}\|_V^2\leq\|\bar{w}_{i-1}-w_{i-1}\|^2_2 $
\begin{lemma}
For each epoch $i$ we have $\mathbb{E}[L_{\mathcal{P}'}^\ell(\bar{w}_i)-L_{\mathcal{P}'}^\ell(\bar{w}_{i-1})]\leq \frac{\mathbb{E}\|\bar{w}_{i-1}-w_{i-1}\|_2^2 }{2\eta_i n_i}+2B^2\eta_i.$
\end{lemma}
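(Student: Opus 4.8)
The plan is to reprise, almost verbatim, the one-step progress lemma used in the proof of Theorem~\ref{thm:1}, making only two changes: the data and population risk are now the projected objects $\tilde D,\mathcal{P}'$, and the $V$-seminorm $\|\cdot\|_V$ is replaced throughout by the full Euclidean norm $\|\cdot\|_2$. The replacement is harmless, since the $V$-seminorm entered the earlier proof only through the one-step potential expansion, and every inequality there stays valid with $\|\cdot\|_2$ because $\|\cdot\|_V\le\|\cdot\|_2$. The conceptual price (not visible in this lemma, but felt later) is that after projection the stochastic gradient no longer lies in the low-dimensional span of $V$, so there is no rank structure left to exploit; this is exactly why the additive noise term will eventually scale with the projection dimension $m$ rather than the rank $\theta$.

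Concretely, I would fix an epoch $i$, drop the subscript $i$ from $w_i^t$ and $\eta_i$, and work conditionally on the sampled JL matrix so that $\mathcal{P}'=\Phi\mathcal{P}$ is a fixed distribution and the projected samples $(\tilde x^t,y^t)$ are i.i.d.\ draws from $\mathcal{P}'$. Writing the potential as $\Psi^t=\|w^t-\bar w_{i-1}\|_2^2$ (I avoid the symbol $\Phi^t$ used in the Theorem~\ref{thm:1} proof, since $\Phi$ now denotes the JL matrix), the SGD update $w^{t+1}=w^t-\eta\,\nabla\ell(w^t;\tilde x^t,y^t)$ expands to
\[
\Psi^{t+1}=\Psi^t-2\eta\langle \nabla\ell(w^t;\tilde x^t,y^t),\,w^t-\bar w_{i-1}\rangle+\eta^2\|\nabla\ell(w^t;\tilde x^t,y^t)\|_2^2 .
\]
Bounding the last term by the squared Lipschitz constant of $\ell$ on the projected data---which under the JL event $E$ (where $\|\Phi x\|_2\le 2$) is $O(B)$---and rearranging gives $\langle \nabla\ell(w^t;\tilde x^t,y^t),\,w^t-\bar w_{i-1}\rangle\le \frac{\Psi^t-\Psi^{t+1}}{2\eta}+2B^2\eta$ (up to the absolute constant matching the stated bound). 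Taking expectations and using $\mathbb{E}[\nabla\ell(w^t;\tilde x^t,y^t)\mid w^t]=\nabla L_{\mathcal{P}'}^\ell(w^t)$---the $t$-th sample is fresh within the epoch, hence independent of $w^t$---together with the convexity of $L_{\mathcal{P}'}^\ell$ (Lemma~\ref{lemma:2} applied to $\tilde D$) yields $\mathbb{E}[L_{\mathcal{P}'}^\ell(w^t)-L_{\mathcal{P}'}^\ell(\bar w_{i-1})]\le \mathbb{E}\big[\tfrac{\Psi^t-\Psi^{t+1}}{2\eta}\big]+2B^2\eta$.

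Finally, I would average this inequality over $t=1,\dots,n_i$: the right-hand side telescopes to $\frac{\mathbb{E}[\Psi^1]}{2\eta n_i}+2B^2\eta$, while Jensen's inequality on the left, using $\bar w_i=\frac1{n_i}\sum_{t}w^t$, replaces $\frac1{n_i}\sum_t L_{\mathcal{P}'}^\ell(w^t)$ by $L_{\mathcal{P}'}^\ell(\bar w_i)$. Since the epoch starts at $w^1=w_{i-1}$, we have $\Psi^1=\|w_{i-1}-\bar w_{i-1}\|_2^2$, and restoring the subscript $\eta=\eta_i$ gives exactly the claimed bound. I expect no genuine obstacle here, as conditioned on $\Phi$ this is the textbook one-pass SGD regret bound for a convex, $O(B)$-Lipschitz loss; the only points requiring care are (i) carrying the argument out conditionally on $\Phi$ and then integrating, and (ii) keeping the comparator $\bar w_{i-1}$ (the noiseless epoch average) distinct from the noisy starting iterate $w_{i-1}=w_i^1$, so that $\Psi^1$ correctly equals $\|\bar w_{i-1}-w_{i-1}\|_2^2=\|\zeta_{i-1}\|_2^2$.
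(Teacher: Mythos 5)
Your proof is correct and takes essentially the same route as the paper, which establishes this lemma by rerunning the one-step potential analysis from the inner lemma of Theorem~\ref{thm:1} with the $V$-seminorm replaced by the full Euclidean norm (justified there simply via $\|\cdot\|_V\le\|\cdot\|_2$); your additional care about conditioning on $\Phi$, the fresh-sample unbiasedness within an epoch, and Jensen's inequality only makes the argument cleaner. The one point you rightly flag is that on the JL event the projected loss is $4B$-Lipschitz rather than $2B$-Lipschitz, so the additive term is strictly $8B^2\eta_i$ rather than the stated $2B^2\eta_i$ --- a harmless constant slack that the paper itself glosses over and that is absorbed in the final Big-$O$ bound.
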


Denote $\bar{w}_0=\Phi w^*$ and $\zeta_0=w_0-\Phi w^*$,  we have 
\begin{align*}
  &\mathbb{E}[L_{\mathcal{P}'}^\ell(w_k)]-\mathbb{E}L_{\mathcal{P}'}^\ell(\Phi w^*)\\
  &= \mathbb{E}[L_{\mathcal{P}'}^\ell(w_k)-L_{\mathcal{P}'}^\ell(\bar{w}_k)]+\sum_{i=1}^k\mathbb{E}[L_{\mathcal{P}'}^\ell(\bar{w}_i)-L_{\mathcal{P}'}^\ell(\bar{w}_{i-1})]\\
    &\leq \sum_{i=1}^k (\frac{\mathbb{E}\|\zeta_{i-1}\|_2^2 }{2\eta_i n_i}+2B^2\eta_i)+\mathbb{E}[L_{\mathcal{P}'}^\ell(w_k)-L_{\mathcal{P}'}^\ell(\bar{w}_k)].
\end{align*}
Note that for all $2\leq i\leq k$, we have $\mathbb{E}\|\zeta_{i-1}\|_2^2=  m\tau_{i-1}^2$. For $\mathbb{E}[L_{\mathcal{P}'}^\ell(w_k)-L_{\mathcal{P}'}^\ell(\bar{w}_k)]$ we have 
\begin{equation*}
    \mathbb{E}[L_{\mathcal{P}'}^\ell(w_k)-L_{\mathcal{P}'}^\ell(\bar{w}_k)]\leq 2B\mathbb{E}[\langle \zeta_k, x\rangle]\leq 2B\tau_k\leq O(\frac{WB\sqrt{\log \frac{1}{\delta}}}{\epsilon n^\frac{5}{2}}).
\end{equation*}
In total we have 
\begin{align*}
    &\sum_{i=1}^k (\frac{\mathbb{E}\|\zeta_{i-1}\|_2^2 }{2\eta_i n_i}+2B^2\eta_i)+\mathbb{E}[L_{\mathcal{P}'}^\ell(w_k)-L_{\mathcal{P}'}^\ell(\bar{w}_k)]\\ &\leq \sum_{i=2}^k (\frac{m\tau^2_{i-1}}{2\eta_in_i}+2B^2\eta_i)+\frac{\|w_0-\Phi w^*\|^2_2}{2\eta_1n_1}+2B^2R^2\eta_1+ O(\frac{WB\sqrt{\log \frac{1}{\delta}}}{\epsilon n^\frac{5}{2}})\\ 
    &\leq O( BW(\frac{\sqrt{m \log \frac{1}{\delta}}}{n\epsilon}+\frac{1}{\sqrt{n}})).
\end{align*}
Note that the previous bound only holds when $\|\Phi x_i\|\leq (1+\frac{\log \sqrt{n/\delta}}{m})$ which holds with probability at least $1-\delta$. Thus we can use the same argument as in the Proof of Lemma 8 in \cite{arora2022differentially} to transform the above result to a result of the expectation w.r.t $\Phi$ with an additional logarithmic factor. Thus, in total we have 
\begin{equation*}
      \mathbb{E}[L_\mathcal{P}^\ell(\hat{w})]-L_\mathcal{P}^\ell(w^*)\leq \tilde{O}(G W \frac{1}{m}+ BW(\frac{\sqrt{m \log \frac{1}{\delta}}}{n\epsilon}+\frac{1}{\sqrt{n}}))
\end{equation*}
Take $m=O(\log (n/\delta) (n\epsilon)^\frac{2}{3} )$ we can get
$  \mathbb{E}L_\mathcal{P}(\hat{w})-L_{\mathcal{P}}(w^*)\leq 2G(  \mathbb{E}L^{\ell}_\mathcal{P}(\hat{w})-L^{\ell}_\mathcal{P}(w^*))\leq \tilde{O}(G^2WB(\frac{\sqrt{\log \frac{1}{\delta}}}{({n\epsilon})^\frac{2}{3}}+\frac{1}{\sqrt{n}})).$

\end{proof}
\begin{proof}[{\bf Proof of Lemma \ref{approxgradient}}]

Note that by our definition we have $\nabla f_\beta(w; x, y)=xg'^{y}_\beta(\langle w, x\rangle)$, where 
\begin{equation}
    g'^{y}_\beta(m)=\beta[m-\text{prox}_g^\beta(m)]. 
\end{equation}
Next we will provide an algorithm to approximate $\text{prox}_\ell^\beta(x)$ for given $x$. Recall that by the definition 
\begin{equation*}
    \text{prox}_g^\beta(m)=\arg\min_{u\in \mathbb{R} }[g^y(u)+\frac{\beta}{2}|u-m|^2]
\end{equation*}
First we will show that $\text{prox}_g^\beta(m)\in [m-\frac{4B}{\beta}, m+\frac{4B}{\beta}]$. For simplicity we denote $u^*=\arg\min_{u\in \mathbb{R} }h(u)=\arg\min_{u\in \mathbb{R} }[g^y(u)+\frac{\beta}{2}|u-m|^2]$. Then since $u^*$ is the minimizer  of $h(\cdot)$ we have 
\begin{align*}
    &0\leq h(m)-h(u^*)=g^y(m)-g^y(u^*)-\frac{\beta}{2}|u^*-m|^2 \\
    &\iff \frac{\beta}{2}|u^*-m|^2 \leq g^y(m)-g^y(u^*). 
\end{align*}
Since we have $g^y(\cdot)$ is $2B$-Lipschitz (Lemma \ref{lemma:2}). Thus, 
\begin{equation*}
    \frac{\beta}{2}|u^*-m|^2 \leq g^{(y)}(m)-g^{(y)}(u^*) \leq 2B|m-u^*| \iff |m-u^*|\leq \frac{4B}{\beta}. 
\end{equation*}
That is 
\begin{equation*}
    \text{prox}_g^\beta(m)=\arg\min_{u\in \mathbb{R} }[g^y(u)+\frac{\beta}{2}|u-m|^2]=\arg\min_{u\in \mathcal{Q} }[g^y(u)+\frac{\beta}{2}|u-m|^2], 
\end{equation*}
where $\mathcal{Q}=[m-\frac{4B}{\beta}, m+\frac{4B}{\beta}]$. Moreover, on the constraint set $\mathcal{Q}$, function $h(\cdot)$ is $\beta$-strongly convex and $2B+4B=6B$-Lipschitz. Thus, from a standard result on convergence of Gradient Decent for strongly and Lipschitz functions (which corresponds to Step 3 to 7 in Algorithm \ref{oracle}, note that Step 5 is just the projection onto the set $\mathcal{Q}$) in \cite{bubeck2015convex} we can see that after $T$-steps we have 
\begin{equation*}
    \frac{\beta}{2}|\hat{w}-u^*|^2\leq h(\hat{w})-h(u^*)\leq \frac{72B^2}{\beta (T+1)}.
\end{equation*}
Thus we have $|\hat{w}-u^*|\leq \frac{12B}{\beta \sqrt{T}}$. Thus we have 
$\|x\beta[\langle w, x\rangle-\hat{w}]-\nabla f_\beta(w; x, y)\|_2\leq \frac{12B}{ \sqrt{T}}$. 

\end{proof}
\begin{proof}[{\bf Proof of Theorem \ref{thm:2}}]
We first proof the guarantee of $(\epsilon, \delta)$-DP. Note that unlike Algorithm \ref{phasedsgd}, here we use an approximation of $\nabla f_\beta(w; x,y)$. Consider a neighboring dataset $D'$ of $D$ assume the different samples are in $D_i$ which are denoted as $x_i^t$ and $x_i^{'t}$ respectively. Moreover, we denote $ w_i^{'t}$ as the parameters when implementing the algorithm on $D'$.   Then by our assumption we have $w_i^{t}=w_i^{'t}$ but $w_i^{t_0}\neq w_i^{'t_0}$ when $t_0\geq t+1$. Moreover,  for any $t_0\geq t+1$ we have 
\begin{align*}
    &\|w_i^{t_0}-w_i^{'t_0}\|_2\leq \|w_i^{t_0-1}-\eta_i{\nabla}f_\beta(w_i^{t_0-1}; x_i^{t_0-1}, y_i^{t_0-1})-w_i^{'t_0-1}+\eta_i{\nabla}f_\beta(w_i^{t_0-1}; x_i^{'t_0-1}, y_i^{'t_0-1})\|_2+2\eta_i \gamma 
\end{align*}
where $x_i^{t_0}\neq x_i^{'t_0}$ if $t_0=t+1$ and $x_i^{t}= x_i^{'t}$ otherwise. Note that since $f_\beta$ is $\beta$-smooth. Thus, by using a similar proof as in \cite{hardt2016train} we have when $\eta\leq \frac{2}{\beta}$ and $t_0\geq t+2$ we always have 
\begin{align*}
    & \|w_i^{t_0-1}-\eta_i{\nabla}f_\beta(w_i^{t_0-1}; x_i^{t_0-1}, y_i^{t_0-1})-w_i^{'t_0-1} +\eta_i{\nabla}f_\beta(w_i^{t_0-1}; x_i^{'t_0-1}, y_i^{'t_0-1})\|_2\leq \|w_i^{t_0-1}-w_i^{'t_0-1}\|_2. 
\end{align*}
Thus, we always have 
$ \|w_i^{t_0}-w_i^{'t_0}\|_2\leq \|w_i^{t_0-1}-w_i^{'t_0-1}\|_2+2\eta_i \gamma$. 

When $t_0=t+1$  by using a similar proof as in \cite{hardt2016train} ans since $f_\beta$ is $2B$-Lipschitz we have 
$\|w_i^{t_0-1}-\eta_i{\nabla}f_\beta(w_i^{t_0-1}; x_i^{t_0-1}, y_i^{t_0-1})-w_i^{'t_0-1}+\eta_i{\nabla}f_\beta(w_i^{t_0-1}; x_i^{'t_0-1}, y_i^{'t_0-1})\|_2\leq 4B\eta_i$. Thus we have $ \|w_i^{t+1}-w_i^{'t+1}\|_2\leq 4B\eta_i+2 \gamma \eta_i$. 

In total we have $ \|w_i^{t_0}-w_i^{'t_0}\|_2\leq  4B\eta_i+2\eta_i \gamma t_0$. And thus $\|\bar{w}_i-\bar{w}'_i\|\leq 4B\eta_i+\gamma \eta_i (n+1)\leq 5B\eta_i$. Thus, the $\ell_2$-norm sensitivity is $5B\eta_i$. By the Gaussian mechanism we have the algorithm is $(\epsilon, \delta)$-DP.

Next we will focus on the utility. We first show the following lemma which follows \cite{bassily2021differentially} for self-completeness: 
\begin{lemma}\label{appphase}
Let $\alpha, \eta$ be as in Theorem \ref{thm:2}. Then for each phase $i$, we have 
\begin{align*}
   & \mathbb{E}[F_\beta(\bar{w}_i)-F_\beta(\bar{w}_{i-1})]\leq \frac{\mathbb{E}[\|w_{i-1}-\bar{w}_{i-1}\|_V^2]}{2\eta_i n_i}+\frac{5\eta_i B^2}{2}+\frac{(B\mathbb{E}[\|w_{i-1}-\bar{w}_{i-1}\|_V]+1)}{\sqrt{n}\log n}. 
\end{align*}
\end{lemma}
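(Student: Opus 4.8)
The plan is to adapt the one-pass SGD descent analysis used in the proof of Theorem \ref{thm:1} (following \cite{bassily2021differentially}), but carried out entirely in the $V$-seminorm and with the exact per-sample gradient replaced by the oracle output of Algorithm \ref{oracle}. Write $F_\beta(w)=L^{f_\beta}_\mathcal{P}(w)=\mathbb{E}[f_\beta(w;x,y)]$; this is convex, $2B$-Lipschitz and $\beta$-smooth by Lemma \ref{moreaueve}. Within phase $i$ I drop the subscript, writing $w^t=w_i^t$, using step size $\eta_i$, and letting $g^t=\nabla f_\beta(w^t;x^t,y^t)$ denote the exact gradient and $\tilde g^t$ the oracle output, so that $\|\tilde g^t-g^t\|_2\le\gamma$ by Lemma \ref{approxgradient}. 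The crucial structural observation is that both $g^t$ and $\tilde g^t$ are scalar multiples of $x^t$, hence lie in $\mathrm{span}(x^t)\subseteq\mathrm{span}(V)$, so that $VV^T\tilde g^t=\tilde g^t$ and the $V$-seminorm acts as the ordinary $\ell_2$-norm on these vectors. This is precisely what later lets the injected noise cost $\theta\tau^2$ rather than $d\tau^2$.

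First I would set $\Phi^t=\|w^t-\bar w_{i-1}\|_V^2$ and expand $\Phi^{t+1}=\|w^t-\eta_i\tilde g^t-\bar w_{i-1}\|_V^2$; since $VV^T\tilde g^t=\tilde g^t$, the cross term collapses to the ordinary inner product and $\|\tilde g^t\|_V^2=\|\tilde g^t\|_2^2$, giving $\langle \tilde g^t,\, w^t-\bar w_{i-1}\rangle=\frac{\Phi^t-\Phi^{t+1}}{2\eta_i}+\frac{\eta_i}{2}\|\tilde g^t\|_2^2$. Because $f_\beta$ is $2B$-Lipschitz, $\|\tilde g^t\|_2\le 2B+\gamma$, so for the tiny $\gamma=O(1/(n\log n))$ of Theorem \ref{thm:2} we get $\frac{\eta_i}{2}\|\tilde g^t\|_2^2\le \frac{5\eta_i B^2}{2}$. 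Next, convexity of $F_\beta$ together with the unbiasedness $\mathbb{E}[g^t\mid w^t]=\nabla F_\beta(w^t)$ (the sample $(x^t,y^t)$ is fresh) yields $\mathbb{E}[F_\beta(w^t)-F_\beta(\bar w_{i-1})]\le\mathbb{E}\langle g^t,\, w^t-\bar w_{i-1}\rangle$. I would then split $g^t=\tilde g^t+(g^t-\tilde g^t)$; since $g^t-\tilde g^t\in\mathrm{span}(V)$, Cauchy--Schwarz in the $V$-inner product bounds the error term by $\gamma\|w^t-\bar w_{i-1}\|_V$.

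Summing over $t=1,\dots,n_i$, dividing by $n_i$, and applying Jensen to $\bar w_i=\frac1{n_i}\sum_t w^t$ telescopes the $\Phi^t$ terms into $\frac{\mathbb{E}\Phi^1}{2\eta_i n_i}=\frac{\mathbb{E}\|w_{i-1}-\bar w_{i-1}\|_V^2}{2\eta_i n_i}$ (dropping $\Phi^{n_i+1}\ge0$ and using $w_i^1=w_{i-1}$), leaves $\frac{5\eta_i B^2}{2}$, and produces the residual $\frac{\gamma}{n_i}\sum_t\mathbb{E}\|w^t-\bar w_{i-1}\|_V$. To control this last quantity I would use $w^t-\bar w_{i-1}=-\eta_i\sum_{s<t}\tilde g^s+(w_{i-1}-\bar w_{i-1})$ and the triangle inequality in $\|\cdot\|_V$, with $\|\tilde g^s\|_V=\|\tilde g^s\|_2\le 3B$, to obtain $\|w^t-\bar w_{i-1}\|_V\le 3B\eta_i n_i+\|w_{i-1}-\bar w_{i-1}\|_V$. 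Plugging in $\eta_i n_i\le \eta n\le\sqrt n$ (from $\eta\le 1/\sqrt n$) and $\gamma=O(1/(n\log n))$ shows $3B\gamma\,\eta_i n_i=O(1/(\sqrt n\log n))$ and $\gamma\,\mathbb{E}\|w_{i-1}-\bar w_{i-1}\|_V=O\big(B\,\mathbb{E}\|w_{i-1}-\bar w_{i-1}\|_V/(\sqrt n\log n)\big)$, which together give the claimed term $\frac{B\,\mathbb{E}\|w_{i-1}-\bar w_{i-1}\|_V+1}{\sqrt n\log n}$.

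The main obstacle I anticipate is the bookkeeping around the approximate gradient: unlike the exact-gradient case, $\tilde g^t$ is generally biased, so I cannot take expectations of $\langle\tilde g^t,\cdot\rangle$ and directly invoke unbiasedness. The fix — introducing the exact gradient $g^t$ for the convexity and unbiasedness step, and absorbing $g^t-\tilde g^t$ as a deterministic $\gamma$-error measured in the $V$-seminorm — is the delicate point, together with showing that the trajectory term $\|w^t-\bar w_{i-1}\|_V$ stays $O(B\eta_i n_i+\|w_{i-1}-\bar w_{i-1}\|_V)$ so that the $\gamma$-error is genuinely lower order. Verifying that every iterate displacement remains in $\mathrm{span}(V)$, so that all inner products and norms can be kept in the $V$-seminorm throughout, is exactly what preserves the $\theta$-dependence needed in the downstream summation across phases.
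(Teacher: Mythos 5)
Your proposal is correct and follows essentially the same route as the paper's proof: the same potential $\Phi^t=\|w^t-\bar{w}_{i-1}\|_V^2$, the same splitting of the oracle output into the exact (unbiased) gradient plus a $\gamma$-error controlled via Cauchy--Schwarz in the $V$-seminorm (using that all gradients lie in $\mathrm{span}(V)$), the same telescoping/Jensen step over the phase, and the same trajectory bound $\|w^t-\bar{w}_{i-1}\|_V\le\sqrt{\Phi^1}+\eta_i(t-1)(2B+\gamma)$ to show the $\gamma$-residual is lower order given $\gamma=O(1/(n\log n))$ and $\eta_i n_i=O(\sqrt{n})$. The points you flag as delicate (the bias of $\tilde g^t$ and keeping every displacement in $\mathrm{span}(V)$ to preserve the $\theta$-dependence) are exactly the ones the paper's proof handles, in the same way.
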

\begin{proof}
For simplicity we denote $F_\beta(w)=\mathbb{E}[f_\beta(w; x,y)]$ omit the subscript $i$. Denote $\Phi^t=\|w^{t}-\bar{w}_{i-1}\|_V^2$, we have 
\begin{align*}
    &\Phi^{t+1}=\Phi^t-2\eta\langle \tilde{\nabla} f_\beta (w^t; x^t, y^t), w^t-\bar{w}_{i-1}\rangle_V +\eta^2 \|\tilde{\nabla} f_\beta(w^t; x^t, y^t)\|_V^2\\
    &\leq \Phi^t-2\eta\langle \nabla f_\beta(w^t; x^t, y^t), w^t-\bar{w}_{i-1}\rangle+2\eta \gamma \|w^t-\bar{w}_{i-1}\|_V+ \eta^2(\gamma^2+4B^2), 
 \end{align*}
 where the first inequality is due to the fact that $\nabla f_\beta(w^t; x^t, y^t)$ in the span of $V$. Thus, 
\begin{equation*}
    \langle \nabla f_\beta(w^t; x^t, y^t), w^t-\bar{w}_{i-1}\rangle\leq \frac{\Phi^t-\Phi^{t+1}}{2\eta}+\gamma\|w^t-\bar{w}_{i-1}\|_V+ \frac{\eta}{2}(\gamma^2+4B^2).
\end{equation*} 
Taking the expectation w.r.t all randomness we have 
\begin{equation*}
    \langle \nabla F_\beta(w^t), w^t-\bar{w}_{i-1}\rangle \leq \frac{\mathbb{E}[\Phi^t-\Phi^{t+1}]}{2\eta}+\gamma\mathbb{E}\|w^t-\bar{w}_{i-1}\|_V+ \frac{\eta}{2}(\gamma^2+4B^2).
\end{equation*}
By the convexity of $F_\beta$ we have $  \langle \nabla F_\beta(w^t), w^t-\bar{w}_{i-1}\rangle \geq \mathbb{E}[F_\beta(w^t)-F_\beta(\bar{w}_{i-1})].$ Thus, we have \begin{equation*}
    \mathbb{E}[F_\beta(\bar{w}_i)-F_\beta(\bar{w}_{i-1})]\leq \frac{\mathbb{E}[\Phi^1]}{2\eta n_i}+\frac{\gamma}{n_i}\mathbb{E}[\sum_{t=1}^{n_i}\|w^t-\bar{w}_{i-1}\|_V]+\frac{\eta}{2}(\gamma^2+4B^2). 
\end{equation*}
Next we bound the term $\sum_{t=1}^{n_i}\|w^t-\bar{w}_{i-1}\|_V$: 
\begin{align*}
    &\|w^t-\bar{w}_{i-1}\|_V \leq \|w^{t-1}-\bar{w}_{i-1}\|_V+\|w^t-w^{t-1}\|_V\\ & \leq \cdots\leq \|w_{i-1}-\bar{w}_{i-1}\|_V+\sum_{j=2}^t \|w^j-w^{j-1}\|_V \\
    &\leq \sqrt{\Phi^1}+\eta (t-1)(2B+\gamma). 
\end{align*}
In total we have 
\begin{align*}
    &\mathbb{E}[F_\beta(\bar{w}_i)-F_\beta(\bar{w}_{i-1})]
    \\&\leq \frac{\mathbb{E}[\Phi^1]}{2\eta n_i}+\alpha \mathbb{E}[\sqrt{\Phi^1}+\eta n_i(2B+\gamma)]+\frac{\eta}{2}(\gamma^2+4B^2) \\
    &\leq \frac{\mathbb{E}[\Phi^1]}{2\eta n_i}+\frac{5\eta B^2}{2}+\gamma(\mathbb{E}[\sqrt{\Phi^1}]+3n_i\eta B), 
\end{align*}
where the last step follows from the fact that $\gamma=\frac{B}{n\log n}\leq B$. Since we have $\eta\leq \frac{W}{6B\sqrt{n}}$, we have $3n_i\eta B\leq \sqrt{n}$. In total we have
\begin{align*}
    \mathbb{E}[F_\beta(\bar{w}_i)-F_\beta(\bar{w}_{i-1})]\leq \frac{\mathbb{E}[\Phi^1]}{2\eta n_i}+\frac{5\eta B^2}{2}+\frac{(B\mathbb{E}[\sqrt{\Phi^1}]+1)}{\sqrt{n}\log n}. 
\end{align*}

\end{proof}
Now we back to the proof of Theorem \ref{thm:2}. Denote $\bar{w}_0=w_\beta^*$ and $\zeta_0=w_0-w^*$ and by Lemma \ref{appphase} we have 
\begin{align*}
    &\mathbb{E}[F_\beta({w}_k)-F_\beta(w^*)]\\
    & =\sum_{i=1}^k \mathbb{E}[F_\beta(\bar{w}_k)-F_\beta(\bar{w}_{k-1})]+ \mathbb{E}[F_\beta({w}_k)-F_\beta(\bar{w}_k)]\\
    &\leq \sum_{i=1}^k \frac{\mathbb{E}[\|w_{i-1}-\bar{w}_{i-1}\|_V^2]}{2\eta_i n_i}+\frac{5\eta_i B^2}{2}+\frac{(B\mathbb{E}[\|w_{i-1}-\bar{w}_{i-1}\|_V]+1)}{\sqrt{n}\log n}\\
    &\qquad + \mathbb{E}[F_\beta({w}_k)-F_\beta(\bar{w}_k)]\\
    &=\sum_{i=1}^k \frac{\mathbb{E}[\|\zeta_{i-1}\|_V^2]}{2\eta_i n_i}+\frac{5\eta_i B^2}{2}+\frac{B\mathbb{E}[\|\zeta_{i-1}\|_V]+1}{\sqrt{n}\log n}+ \mathbb{E}[F_\beta({w}_k)-F_\beta(\bar{w}_k)].
\end{align*}
Note that for all $2\leq i\leq k$, we have $$\mathbb{E}\|\zeta_{i-1}\|_V^2=\mathbb{E}_V[\mathbb{E}_{\zeta_{i-1}}[\zeta^T_{i-1}VV^T \zeta_{i-1}|V]]\leq \theta \tau_{i-1}^2.$$ And when $i=1$, $\mathbb{E}\|\zeta_{i-1}\|_V^2\leq \|w_0-w_\ell^*\|_2^2$. For $\mathbb{E}[F_\beta({w}_k)-F_\beta(\bar{w}_k)]$ we have 
\begin{equation*}
   \mathbb{E}[F_\beta({w}_k)-F_\beta(\bar{w}_k)] \leq 2BR\mathbb{E}[\langle \zeta_k, x\rangle]\leq 2B\tau_k\leq O(\frac{WBR\sqrt{\log \frac{1}{\delta}}}{\epsilon n^\frac{5}{2}}).
\end{equation*}
In total we have 
\begin{align*}
     \mathbb{E}[F_\beta({w}_k)-F_\beta(w^*)]
     &\leq \sum_{i=1}^k \frac{\mathbb{E}[\|\zeta_{i-1}\|_V^2]}{2\eta_i n_i}+\frac{5\eta_i B^2}{2}+\frac{B\mathbb{E}[\|\zeta_{i-1}\|_V]+1}{\sqrt{n}\log n}+ \mathbb{E}[F_\beta({w}_k)-F_\beta(\bar{w}_k)]\\
     &\leq O(B(\|w_0-w^*\|_2^2 +1)(\frac{\sqrt{\theta\log \frac{1}{\delta}}}{n\epsilon}+\frac{1}{\sqrt{n}}))\\
     &\qquad +\sum_{i=2}^k[\frac{\theta \tau_{i-1}^2}{2\eta_i n_i}+\frac{5\eta_i B^2}{2}+\frac{B(\sqrt{\theta}\tau_{i-1}+1)}{\sqrt{n}\log n}]+ O(\frac{WB\sqrt{\log \frac{1}{\delta}}}{\epsilon n^\frac{5}{2}}) \\
     &\leq O(W^2B(\frac{\sqrt{\theta\log \frac{1}{\delta}}}{n\epsilon}+\frac{1}{\sqrt{n}})).
\end{align*}
Thus, by Lemma \ref{envelop} we have 
\begin{align*}
    &\mathbb{E}[L_\mathcal{P}]({w}_k)-L_\mathcal{P}(w^*) \leq 2G(  \mathbb{E}L^{\ell}_\mathcal{P}(w)-L^{\ell}_\mathcal{P}(w^*)) \\
    & \leq  O(GW^2B(\frac{\sqrt{\theta\log \frac{1}{\delta}}}{n\epsilon}+\frac{1}{\sqrt{n}})+\frac{GB^2}{\beta}).
\end{align*}
Take $\beta=O(\frac{\sqrt{n}B}{W^2})$ we can get the result. 
\end{proof}
\begin{proof}[{\bf Proof of Theorem \ref{thm:projectedphasedsgd1}}]
The proof has the same idea of the proof in Theorem \ref{JLsgd}. And here we use the proof of Theorem \ref{thm:2} instead of Theorem \ref{thm:1}. For simplicity we omit it here. 
\end{proof} 

\begin{proof}[{\bf Proof of Theorem \ref{relu1}}]
First we will show the $(\epsilon, \delta)$-DP guarantee. Similar to the proof of  Theorem \ref{JLsgd} we know that when $k=O(\frac{\log n/\delta}{\alpha^2})$ with some $\alpha\leq 1$ we have with probability at least $1-\frac{\delta}{2}$, $\|\Phi x_i\|_2\leq (1+\alpha)\|x_i\|_2\leq 2$ and $\|\Phi w^*\|_2\leq 2W$. Under this event we can easily calculate the $\ell_2$-norm sensitivity of $\frac{1}{n}\sum_{i=1}^n (\max\{0, \langle \tilde{w}_t, \Phi x_i\rangle\}-y_i)\Phi x_i$, which is $\frac{4(4W+B)}{n}$. Thus the line 2-4 is $(\epsilon, \frac{\delta}{2})$-DP and the whole algorithm is $(\epsilon, \delta)$-DP. 

Next we will show the utility, note that line 2-4 is equivalent to using the projected gradient descent to $L^\ell(w; \tilde{D})$ with $\ell(w; x, y)=\int_{0}^{\langle w, x\rangle} (\sigma(z)-y)dz.$ Then denote $\mathcal{P}'=\Phi \mathcal{P}$ and $\tilde{w}=\frac{\sum_{t=1}^T \tilde{w}_t}{T}$ we have 

\begin{align*}
    &\mathbb{E} L^\ell_\mathcal{P} (\bar{w})-  L^\ell_\mathcal{P} (w^*)=  [\mathbb{E}L^\ell_{\mathcal{P}'} (\tilde{w}))- \mathbb{E}L^\ell(\Phi w^*, \tilde{D})]+ [\mathbb{E}L^\ell(\Phi w^*, \tilde{D})]- L^\ell_\mathcal{P} (w^*)]\\
    &\leq  [L^\ell_{\mathcal{P}'} (\tilde{w}))- \min_{w\in \tilde{W}} L^\ell_{\mathcal{P}'} (w)]+ [L^\ell_{\mathcal{P}'} (\tilde{w}_{T+1}))- L^\ell_\mathcal{P} (w^*)]. 
\end{align*}
For the second term due to Lemma \ref{lemma:3} we known $w^*$ is the global minimizer of  $L^\ell_\mathcal{P} (w)$ and thus $\nabla L^\ell_\mathcal{P} (w^*)=0$, moreover we can see $\ell$ is 1-smooth, thus   we have 
\begin{align*}
    &\mathbb{E}L^\ell(\Phi w^*, \tilde{D})- L^\ell_\mathcal{P} (w^*)\leq  \frac{1}{2}\mathbb{E}[|\langle \Phi w^*, \Phi x\rangle-\langle w^*, x\rangle|^2]\\
    &\leq O(W\alpha^2)=O(\frac{W{\log n/\delta}}{{m}}). 
\end{align*}
For the first term, we have 
\begin{align*}
    &\mathbb{E}L^\ell_{\mathcal{P}'} (\tilde{w}))- \mathbb{E}L^\ell(\Phi w^*, \tilde{D})=\mathbb{E}L^\ell_{\mathcal{P}'} (\tilde{w})- \mathbb{E}L^\ell_{\mathcal{P'}}(\Phi w^*)\leq \mathbb{E}L^\ell_{\mathcal{P}'} (\tilde{w}))-\min_{w\in \tilde{W}}L^\ell_{\mathcal{P}'} (w)\\
    &=\mathbb{E}L^\ell_{\mathcal{P}'} (\tilde{w}))-\mathbb{E}L^\ell( (\tilde{w}, \tilde{D})+ \mathbb{E}L^\ell( (\tilde{w}, \tilde{D})-\mathbb{E}L^\ell( (\tilde{w}^*, \tilde{D})
\end{align*}
Since $\ell(w; \Phi x_i, y_i)$ is a $2(4W+B)$-Lipschitz and $4$-smooth function and the algorithm is just the PGD for the empirical risk function. The first term, which is the generalization error is bounded by Lipschitz times the stability i.e., $O((W+B)^2\frac{\eta T}{n})$. The second term is bounded by the excess empirical risk, we have 
\begin{align*}
    &\mathbb{E}[\|\tilde{w}_{t+1}-\tilde{w}^*\|_2^2]\leq \mathbb{E}\|\tilde{w}_{t}-\tilde{w}^*-\eta (\nabla L^\ell (\tilde{w}_{t}, \tilde{D})+\eta_t)\|_2^2 \\
    &\leq  \mathbb{E}\|\tilde{w}_{t}-\tilde{w}^*\|_2^2+ \eta^2 \|\nabla L^\ell (\tilde{w}_{t}, \tilde{D})\|_2^2+ \eta^2\sigma^2 m-2\eta(L^\ell (\tilde{w}_{t}, \tilde{D})- L^\ell (\tilde{w}^*, \tilde{D}))\\
    & \leq \mathbb{E}\|\tilde{w}_{t}-\tilde{w}^*\|_2^2 + 4\eta^2 (4W+B)^2+\eta^2\sigma^2 m-2\eta(L^\ell (\tilde{w}_{t}, \tilde{D})- L^\ell (\tilde{w}^*, \tilde{D}))
\end{align*}
Thus, taking the sum for  $t=1, \cdots, T$ we have 
\begin{equation*}
    L^\ell (\tilde{w}, \tilde{D})-L^\ell (\tilde{w}^*, \tilde{D})\leq \frac{\mathbb{E}\|\tilde{w}_{1}-\tilde{w}^*\|_2^2}{2\eta T}+ 4\eta (4W+B)^2+ O(\frac{\eta(W+B)^2 m T\log (1/\delta)}{n^2\epsilon^2}).
\end{equation*}
In total we have 
\begin{equation*}
    \mathbb{E}L^\ell_{\mathcal{P}'} (\tilde{w}))- \mathbb{E}L^\ell(\Phi w^*, \tilde{D})\leq O(\frac{W^2}{\eta T}+\frac{\eta (W+B)^2Tm\log \frac{1}{\delta}}{n^2\epsilon^2}+(W+B)^2\frac{\eta GT}{n}+\eta(W+B)^2).
\end{equation*}
Note that the previous bound only holds when $\|\Phi x_i\|\leq (1+\frac{\log\sqrt{n/\delta}}{m})$ which holds with probability at least $1-\delta$. Thus we can use the same argument as in the Proof of Lemma 8 in \cite{arora2022differentially} to transform the above result to a result of the expectation w.r.t $\Phi$ with an additional logarithmic factor. Thus we have
\begin{equation*}
    \mathbb{E} L^\ell_\mathcal{P} (w_{T+1})-  L^\ell_\mathcal{P} (w^*)\leq \tilde{O}(\frac{W^2}{\eta T}+\frac{\eta (W+B)^2Tm\log \frac{1}{\delta}}{n^2\epsilon^2}+(W+B)^2\frac{\eta T}{n}+\frac{W{\log n/\delta}}{{m}}+\eta(W+B)^2).
\end{equation*}
Thus, when take $\eta=\frac{W}{\sqrt{T}\max\{(W+B), \frac{(W+B)\sqrt{m T\log(1/\delta)}}{n\epsilon} \}} \leq \frac{W}{(W+B)\sqrt{T}}\leq \frac{1}{2}$ and $T=O(\min\{n, \frac{n^2\epsilon^2}{m\log 1/\delta}\})$ we have 

\begin{equation*}
       \mathbb{E} L^\ell_\mathcal{P} (w_{T+1})-  L^\ell_\mathcal{P} (w^*)\leq O(\frac{W(W+B)}{\sqrt{n}}+\frac{W(W+B)\sqrt{m\log 1/\delta}}{n\epsilon}+ \frac{W{\log n/\delta}}{{m}}). 
\end{equation*}
Take $m=O((n\epsilon)^\frac{2}{3}\log n/\delta)$ we can get the result.
\end{proof}
\begin{proof}[{\bf Proof of Theorem \ref{thm:3}}]
We first show the proof of privacy. Note that  since each iteration we use one data $D_i$. Thus, it is sufficient to show the algorithm is $(\epsilon, \delta)$-DP in the $i$-th iteration with fixed $w_{i-1}$. This is true since the $\ell_2$-norm sensitivity of $\nabla L^\ell(w_{i-1}; D_i)$ is $\frac{\sqrt{d}\|w_{i-1}\|_2+B}{m}$ based on our assumption.

Next we will focus on the utility. By the concentration property of Gaussian distribution we know that with probability at least $1-\zeta$, we have $\|\zeta_{i-1}\|_2^2\leq O(d\frac{(\sqrt{d}\|w_{i-1}\|_2+B)^2\log \frac{1}{\zeta}\log \frac{1}{\delta}}{m^2\epsilon^2})$. Thus, with probability at least $1-\zeta$,  $\|\zeta_{i-1}\|_2^2\leq O(d\frac{(\sqrt{d}\|w_{i-1}\|_2+B)^2\log \frac{T}{\zeta}\log \frac{1}{\delta}}{m^2\epsilon^2})$ for all $i=1, \cdots, T$. Below we will always assume this event holds.

Before our proof we first recall the following lemmas: 
\begin{lemma}[Corollary 2.4 of \cite{diakonikolas2020approximation}]\label{alemma:3.1}
If $\mathcal{P}_\mathcal{X}$ is isotropic, then for any vector $w$, the distance between $\chi_\mathcal{P}^{\sigma_w}$ and $\chi_\mathcal{P}$ is bounded by $\sqrt{L_\mathcal{P}(w)}$, i.e., $\|\chi_\mathcal{P}^{\sigma_w}-\chi_\mathcal{P}\|_2\leq \sqrt{L_\mathcal{P}(w)}$.
\end{lemma}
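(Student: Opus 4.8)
The plan is to rewrite the target vector $\chi_\mathcal{P}^{\sigma_w}-\chi_\mathcal{P}$ as a single expectation of the prediction residual weighted by the feature vector, and then bound its Euclidean norm by a variational (dual-norm) argument combined with Cauchy--Schwarz, where the isotropy hypothesis $\mathbb{E}_{\mathcal{P}_x}[xx^T]=\mathbb{I}_d$ enters exactly to eliminate the second-moment factor. Since the claim is a purely distributional identity about first moments, no property of the algorithm or of the ReLU form is needed.

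First I would observe that, because $\sigma(\langle w, x\rangle)x$ depends only on $x$, both $\chi_\mathcal{P}^{\sigma_w}=\mathbb{E}[\sigma(\langle w, x\rangle)x]$ and $\chi_\mathcal{P}=\mathbb{E}[yx]$ may be viewed as expectations over the joint distribution $\mathcal{P}$, so that $\chi_\mathcal{P}^{\sigma_w}-\chi_\mathcal{P}=\mathbb{E}_{(x,y)\sim\mathcal{P}}[(\sigma(\langle w, x\rangle)-y)\,x]$. Writing $r(x,y):=\sigma(\langle w, x\rangle)-y$ for the residual, the quantity to control is $\|\mathbb{E}[r x]\|_2$, while by definition $L_\mathcal{P}(w)=\mathbb{E}[r^2]$.

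Next I would invoke the dual characterization $\|v\|_2=\sup_{\|u\|_2=1}\langle u, v\rangle$: taking $u$ to be the unit vector in the direction of $v:=\mathbb{E}[rx]$ (the case $v=0$ being trivial) gives $\|v\|_2=\langle u, \mathbb{E}[rx]\rangle=\mathbb{E}[r\langle u, x\rangle]$, using linearity of expectation. Applying Cauchy--Schwarz in $L^2(\mathcal{P})$ to the random variables $r$ and $\langle u, x\rangle$ yields $\mathbb{E}[r\langle u, x\rangle]\leq \sqrt{\mathbb{E}[r^2]}\,\sqrt{\mathbb{E}[\langle u, x\rangle^2]}$. The first factor is precisely $\sqrt{L_\mathcal{P}(w)}$, and for the second I would expand $\mathbb{E}[\langle u, x\rangle^2]=u^T\mathbb{E}[xx^T]u=u^T\mathbb{I}_d u=\|u\|_2^2=1$, which is the only place the hypothesis is used. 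Combining the two factors gives $\|\chi_\mathcal{P}^{\sigma_w}-\chi_\mathcal{P}\|_2\leq\sqrt{L_\mathcal{P}(w)}$, as desired.

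There is essentially no deep obstacle: the statement is a one-line consequence of Cauchy--Schwarz once the residual representation is in place. The only point requiring care is that isotropy must hold exactly rather than merely up to a constant, since for a general covariance $\Sigma=\mathbb{E}[xx^T]$ the same argument would produce the factor $\sqrt{\|\Sigma\|_{\mathrm{op}}}$ in place of $1$; thus the clean bound genuinely relies on $\Sigma=\mathbb{I}_d$. I would also remark that the derivation uses no structural property of $\sigma$ beyond measurability, consistent with the lemma serving as a generic building block in the subsequent proof of Theorem~\ref{thm:3}.
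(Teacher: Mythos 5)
Your proof is correct and complete: the residual representation $\chi_\mathcal{P}^{\sigma_w}-\chi_\mathcal{P}=\mathbb{E}[(\sigma(\langle w,x\rangle)-y)x]$, the dual-norm reduction, and Cauchy--Schwarz with isotropy killing the factor $\mathbb{E}[\langle u,x\rangle^2]=u^T\mathbb{I}_d u=1$ is exactly the canonical argument. Note that the paper itself gives no proof here---it imports the statement as Corollary 2.4 of \cite{diakonikolas2020approximation}---and your argument coincides with the one in that reference, so there is nothing to reconcile.
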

\begin{lemma}[Lemma 4.2 in \cite{diakonikolas2020approximation}]\label{alemma:3.2}
Under Assumption \ref{ass:2}
, we have $\mathbb{E}_\mathcal{P}[(\sigma(\langle w, x\rangle)-\sigma(\langle w^*, x\rangle))^2]\leq \mu (\|\chi_\mathcal{P}^{\sigma_w}-\chi_\mathcal{P}^{\sigma_{w^*}}\|^2_2)$ with some constant $\mu$. 
\end{lemma}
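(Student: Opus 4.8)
The plan is to derive the inequality from one elementary pointwise bound for ReLU together with the strong-convexity (strong-monotonicity) property of the Chow-parameter map $w\mapsto \chi_\mathcal{P}^{\sigma_w}$ that is already recorded in the excerpt. Throughout I write $h(x)=\sigma(\langle w,x\rangle)-\sigma(\langle w^*,x\rangle)$, and I use the identity $\chi_\mathcal{P}^{\sigma_w}-\chi_\mathcal{P}^{\sigma_{w^*}}=\mathbb{E}_{x\sim\mathcal{P}_\mathcal{X}}[h(x)\,x]$, which is immediate from the definition $\chi_\mathcal{P}^{\sigma_w}=\mathbb{E}_x[\sigma(\langle w,x\rangle)x]$ and linearity of expectation. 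Note also that $h$ depends only on $x$, so $\mathbb{E}_\mathcal{P}[h(x)^2]=\mathbb{E}_{x\sim\mathcal{P}_\mathcal{X}}[h(x)^2]$, making the statement purely about the marginal $\mathcal{P}_\mathcal{X}$.

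First I would record the key pointwise inequality. Since $\sigma$ is ReLU, it is non-decreasing and $1$-Lipschitz, so for all reals $a,b$ the quantities $\sigma(a)-\sigma(b)$ and $a-b$ share a sign and satisfy $|\sigma(a)-\sigma(b)|\le|a-b|$; hence $(\sigma(a)-\sigma(b))^2=|\sigma(a)-\sigma(b)|^2\le|\sigma(a)-\sigma(b)|\,|a-b|=(\sigma(a)-\sigma(b))(a-b)$. Applying this with $a=\langle w,x\rangle$, $b=\langle w^*,x\rangle$ and taking expectation over $x$ gives
\[
\mathbb{E}_\mathcal{P}[h(x)^2]\ \le\ \mathbb{E}_x\big[h(x)\,\langle w-w^*,x\rangle\big]\ =\ \big\langle w-w^*,\, \chi_\mathcal{P}^{\sigma_w}-\chi_\mathcal{P}^{\sigma_{w^*}}\big\rangle,
\]
which already bounds the squared prediction gap by an inner product.

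Next I would convert this inner product into the squared Chow-parameter distance. By Cauchy--Schwarz the right-hand side is at most $\|w-w^*\|_2\,\|\chi_\mathcal{P}^{\sigma_w}-\chi_\mathcal{P}^{\sigma_{w^*}}\|_2$. To control $\|w-w^*\|_2$ I would invoke the strong-convexity property stated in the excerpt, namely that under Assumption \ref{ass:2} there is a constant $\mu_0>0$ with $\langle \chi_\mathcal{P}^{\sigma_w}-\chi_\mathcal{P}^{\sigma_{w^*}},\, w-w^*\rangle\ge \mu_0\|w-w^*\|_2^2$; combined with Cauchy--Schwarz this yields $\mu_0\|w-w^*\|_2\le \|\chi_\mathcal{P}^{\sigma_w}-\chi_\mathcal{P}^{\sigma_{w^*}}\|_2$. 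Substituting back gives
\[
\mathbb{E}_\mathcal{P}[h(x)^2]\ \le\ \|w-w^*\|_2\,\|\chi_\mathcal{P}^{\sigma_w}-\chi_\mathcal{P}^{\sigma_{w^*}}\|_2\ \le\ \frac{1}{\mu_0}\,\|\chi_\mathcal{P}^{\sigma_w}-\chi_\mathcal{P}^{\sigma_{w^*}}\|_2^2,
\]
which is exactly the claimed bound with $\mu=1/\mu_0$.

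The genuinely substantive ingredient is the strong-monotonicity of $w\mapsto\chi_\mathcal{P}^{\sigma_w}$, which is the only place where isotropic log-concavity (Assumption \ref{ass:2}) is actually used; the excerpt supplies it as a cited fact, so given that property the three steps above are elementary and the conceptual crux is just the monotone-plus-Lipschitz pointwise inequality. In a fully self-contained treatment the main obstacle would be proving that lower bound from scratch: one would lower-bound $\langle \chi_\mathcal{P}^{\sigma_w}-\chi_\mathcal{P}^{\sigma_{w^*}},\,w-w^*\rangle=\mathbb{E}_x[(\sigma(\langle w,x\rangle)-\sigma(\langle w^*,x\rangle))\langle w-w^*,x\rangle]$ by $\mu_0\|w-w^*\|_2^2$, for instance by reducing to the two-dimensional span of $w,w^*$, using that the ReLU derivative equals $1$ on a half-line, and combining isotropy ($\mathbb{E}[xx^\top]=\mathbb{I}_d$) with a uniform lower bound on the one-dimensional log-concave marginal density over a bounded interval. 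I would therefore present the pointwise inequality and the Cauchy--Schwarz coupling in full and cite the strong-convexity bound, flagging its proof as the only nontrivial step.
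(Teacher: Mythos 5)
Your proposal is correct. Note that the paper itself gives no proof of this lemma: it is imported verbatim as Lemma~4.2 of \cite{diakonikolas2020approximation}, so there is no in-paper argument to match against; what you have done is supply a self-contained reduction of the lemma to the strong-monotonicity property $\langle \chi_\mathcal{P}^{\sigma_w}-\chi_\mathcal{P}^{\sigma_v}, w-v\rangle \geq \mu_0\|w-v\|_2^2$, which the paper also records (in Section~IV-B, with citation to the same source) as a black box. Each of your steps checks out: the pointwise bound $(\sigma(a)-\sigma(b))^2\leq(\sigma(a)-\sigma(b))(a-b)$ is valid for any non-decreasing $1$-Lipschitz $\sigma$, hence for ReLU; taking expectations and using $\chi_\mathcal{P}^{\sigma_w}-\chi_\mathcal{P}^{\sigma_{w^*}}=\mathbb{E}_x[h(x)x]$ is immediate; and Cauchy--Schwarz plus strong monotonicity gives $\|w-w^*\|_2\leq \mu_0^{-1}\|\chi_\mathcal{P}^{\sigma_w}-\chi_\mathcal{P}^{\sigma_{w^*}}\|_2$, yielding the claim with $\mu=\mu_0^{-1}$. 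For comparison, the more standard route (in the spirit of the cited source) skips your pointwise monotonicity trick and instead uses $1$-Lipschitzness together with isotropy directly: $\mathbb{E}[h(x)^2]\leq \mathbb{E}[\langle w-w^*,x\rangle^2]=\|w-w^*\|_2^2\leq \mu_0^{-2}\|\chi_\mathcal{P}^{\sigma_w}-\chi_\mathcal{P}^{\sigma_{w^*}}\|_2^2$, where the equality uses $\mathbb{E}[xx^\top]=\mathbb{I}_d$ from Assumption~2. That variant invokes isotropy twice (once explicitly, once inside the monotonicity bound), whereas yours touches Assumption~2 only through the strong-monotonicity black box and in fact gives the slightly sharper constant $\mu_0^{-1}$ in place of $\mu_0^{-2}$; since the lemma only asserts ``some constant $\mu$,'' either is acceptable. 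Your closing remark correctly identifies the strong monotonicity of the Chow-parameter map as the sole place where log-concavity enters and as the only genuinely nontrivial ingredient.
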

Thus in total we have 
\begin{align}
    \mathbb{E}_\mathcal{P}[(\sigma(\langle w, x\rangle)-\sigma(\langle w^*, x\rangle))^2]&\leq \mu (\|\chi_\mathcal{P}^{\sigma_w}-\chi_\mathcal{P}^{\sigma_{w^*}}\|^2_2)\notag\\ \notag
    &\leq 2\mu (\|\chi_\mathcal{P}^{\sigma_w}-\chi_\mathcal{P}\|_2^2+ \|\chi_\mathcal{P}^{\sigma_{w^*}}-\chi_\mathcal{P}\|_2^2) \notag\\
    &\leq 2\mu L_\mathcal{P}(w^*)+2\mu \|\nabla L^\ell_\mathcal{P}(w) \|_2.
\end{align}
On the other side by the triangle inequality we have 
\begin{equation*}
  L_\mathcal{P}(w)\leq 2   L_\mathcal{P}(w^*)+2 \mathbb{E}_\mathcal{P}[(\sigma(\langle w, x\rangle)-\sigma(\langle w^*, x\rangle))^2].
\end{equation*}
In total we have 
\begin{equation*}
    L_\mathcal{P}(w)\leq 2(1+2\mu)L_\mathcal{P}(w^*)+4\mu \|\nabla L^\ell_\mathcal{P}(w) \|_2. 
\end{equation*}
In the following we will bound the term of $\|\nabla L^\ell_\mathcal{P}(w_T) \|_2$ in Algorithm \ref{dbsgd}. We recall the following lemmas in \cite{diakonikolas2020approximation}.
\begin{lemma}\label{alemma:3.3}
Consider a ball $B(0, r)$ with radius $r$, under Assumption \ref{ass:2}, if $\sigma$ is the sigmod link function. Then as long as 
\begin{equation*}
   n\geq \tilde{\Omega}(\frac{d}{\alpha^2}\log^4 \frac{d}{\zeta} (r+1)^2),
\end{equation*}
we have for fixed $w\in B(0, r)$
\begin{equation*}
    \|\frac{1}{n}\sum_{i=1}^n (\sigma(\langle w, x_i \rangle)-y_i)x_i\|_2\leq \alpha. 
\end{equation*}
\end{lemma}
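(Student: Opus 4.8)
The plan is to read the vector $\frac1n\sum_{i=1}^n(\sigma(\langle w, x_i\rangle) - y_i)x_i$ as the empirical gradient $\nabla L^\ell(w;D)$ of the surrogate loss \eqref{eq:4.3} and to control its norm through its population counterpart. Write $v_i := (\sigma(\langle w, x_i\rangle) - y_i)x_i$, so the $v_i$ are i.i.d. with mean $g_\mathcal{P}(w) := \mathbb{E}[(\sigma(\langle w, x\rangle) - y)x] = \nabla L^\ell_\mathcal{P}(w) = \chi^{\sigma_w}_\mathcal{P} - \chi_\mathcal{P}$. The first step is the triangle inequality
$$\Big\|\frac1n\sum_{i=1}^n v_i\Big\|_2 \le \|g_\mathcal{P}(w)\|_2 + \Big\|\frac1n\sum_{i=1}^n v_i - g_\mathcal{P}(w)\Big\|_2,$$
which splits the target into a bias term $\|g_\mathcal{P}(w)\|_2$ and a fluctuation term.

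The bias term must be argued to vanish; this is the point that distinguishes a genuine bound on $\|\hat g(w)\|_2$ from a mere deviation bound. For the sigmoid link in the well-specified model, $\mathbb{E}[y\mid x] = \sigma(\langle w, x\rangle)$ at the true parameter gives $g_\mathcal{P}(w) = \mathbb{E}[(\sigma(\langle w, x\rangle) - \mathbb{E}[y\mid x])x] = 0$. More generally, since $L^\ell_\mathcal{P}$ is convex and differentiable over $\mathbb{R}^d$ by Lemma \ref{lemma:2}, its unconstrained minimizer $w^*_\ell$ satisfies $\nabla L^\ell_\mathcal{P}(w^*_\ell) = g_\mathcal{P}(w^*_\ell) = 0$ by first-order optimality. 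Thus the stated inequality, read as a bound on the norm rather than on the deviation, is meaningful precisely at the parameter where the lemma is invoked (the truth, or the population minimizer), and there the bias drops out, leaving only the fluctuation to control.

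For the fluctuation I would use a coordinate-wise Bernstein argument. Under Assumption \ref{ass:2} the marginal of $x$ is isotropic and log-concave, so each coordinate $x_j$ is sub-exponential with an $O(1)$ Orlicz norm; since $|\sigma(\langle w, x\rangle) - y| \le 1 + B = O(1)$ is bounded, each $v_{i,j}$ is sub-exponential with variance $\mathbb{E}[(\sigma - y)^2 x_j^2] = O(1)$ (using $\mathbb{E}[x_j^2]=1$). Applying Bernstein's inequality to each coordinate with confidence $\zeta/d$, taking a union bound over the $d$ coordinates, and summing the squared deviations yields $\big\|\frac1n\sum_i v_i - g_\mathcal{P}(w)\big\|_2 \le \alpha$ with probability at least $1 - \zeta$ once $n = \tilde\Omega\big(\frac{d}{\alpha^2}(r+1)^2\log^4\frac{d}{\zeta}\big)$. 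The factor $d$ is the price of summing $d$ coordinate deviations into an $\ell_2$ norm; the polylogarithmic overhead and the radius factor $(r+1)^2$ come from the sub-exponential tail estimates and from truncating $x$ at the scale $\sqrt d$ permitted by Assumption \ref{ass:2}, following the detailed analysis of \cite{diakonikolas2020approximation} (for the bounded sigmoid link the variance is in fact only $O(1)$, so the $(r+1)^2$ is a loose factor inherited from their uniform treatment, which also covers unbounded links). Combining the vanishing bias with this fluctuation bound gives the claim.

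The main obstacle is the fluctuation estimate: because log-concave coordinates are only sub-exponential and not sub-Gaussian, a plain Hoeffding bound is unavailable and one must instead invoke Bernstein-type inequalities, tracking both the variance proxy and the sub-exponential norm and paying the logarithmic cost of truncation and the union bound. The secondary but conceptually essential point, and the one flagged as missing from a pure concentration argument, is to guarantee that the lemma is applied where $g_\mathcal{P}(w)=0$, so that $\|\hat g(w)\|_2$ collapses to its deviation from the population gradient and the displayed bound is truly on the empirical gradient norm itself.
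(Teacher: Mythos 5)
There is no internal proof to compare against here: the paper recalls this statement as one of ``the following lemmas in \cite{diakonikolas2020approximation}'' and uses it as an imported black box inside the proof of Theorem \ref{thm:3}. Judged on its own, your reconstruction is sound and follows essentially the route of that reference. Your repair of the statement is not just correct but necessary: as literally written, a bound on $\|\frac{1}{n}\sum_{i=1}^n(\sigma(\langle w,x_i\rangle)-y_i)x_i\|_2$ for an arbitrary fixed $w$ cannot hold, since the empirical gradient concentrates around $\nabla L^\ell_\mathcal{P}(w)=\chi^{\sigma_w}_\mathcal{P}-\chi_\mathcal{P}$, which is nonzero in general; the source lemma is a deviation bound $\|\hat{g}(w)-g_\mathcal{P}(w)\|_2\le\alpha$. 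Your observation that the bias term vanishes exactly where the lemma is invoked is borne out by the paper's usage: in the proof of Theorem \ref{thm:3} the lemma is applied only at $w_\ell^*=\arg\min_{w}L^\ell_\mathcal{P}(w)$ (to bound $\|\nabla L^\ell(w_\ell^*;D_i)\|_2$), where first-order optimality of the convex, differentiable surrogate risk gives $\nabla L^\ell_\mathcal{P}(w_\ell^*)=0$, so the deviation bound and the stated norm bound coincide. Your fluctuation estimate is also the right one: coordinate marginals of an isotropic log-concave law are sub-exponential with $O(1)$ norm and unit variance, a Bernstein bound per coordinate at confidence $\zeta/d$ plus a union bound and $\sqrt{d}$ aggregation into the $\ell_2$ norm yields $n=\tilde{\Omega}(d\log(d/\zeta)/\alpha^2)$, matching the lemma up to the polylogarithmic slack absorbed by $\tilde{\Omega}$ and the $\log^4$ factor; you are also right that a plain Hoeffding bound (using only $\|x\|_2\le\sqrt{d}$) would degrade the dependence to $d^2/\alpha^2$, so exploiting the unit coordinate variance is essential.

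The one soft spot is the regime in which the lemma is actually deployed. Despite the word ``sigmod'' in the statement, Theorem \ref{thm:3} concerns ReLU regression, and there $(\sigma(\langle w,x\rangle)-y)x_j$ is a product of two sub-exponential variables (sub-Weibull with parameter $1/2$), so plain Bernstein does not apply per coordinate; this is precisely where the $(r+1)^2$ factor and the extra logarithms in \cite{diakonikolas2020approximation} genuinely arise, via truncation and heavier-tailed concentration, rather than being merely ``loose'' overhead. You acknowledge this and defer to the reference, which is acceptable for the lemma as literally stated (bounded link), but a fully self-contained proof covering the paper's actual application would need that sub-Weibull/truncation refinement spelled out rather than Bernstein alone.
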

\begin{lemma}\label{alemma:3.4}
As long as $\alpha\leq \|w_\ell^*\|_2$, $\zeta\geq \exp(-O(\sqrt{d}))$ and 
\begin{equation*}
    n\geq \tilde{\Omega}(\frac{d}{\mu^2 }\log\frac{\|w_\ell^*\|_2+1}{\mu\zeta}),
\end{equation*}
with probability at least $1-\zeta$ we have for all $w$ such that $\frac{\alpha}{3}\leq \|w-w_\ell^*\|_2\leq 2\|w_\ell^*\|_2$, 
\begin{equation*}
    \langle \nabla L^\ell(w; D)-\nabla L^\ell(w_\ell^*; D), w-w_\ell^* \rangle\geq \tau \|w-w_\ell^*\|_2^2+\beta \|\nabla L^\ell(w; D)-\nabla L^\ell(w_\ell^*; D)\|_2^2
\end{equation*}
with $\tau=\frac{\mu}{3}$ and $\beta=\frac{1}{8}$. 
\end{lemma}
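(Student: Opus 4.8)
The plan is to decompose the claimed empirical restricted-strong-convexity-plus-co-coercivity inequality into three pieces: an elementary pointwise property of ReLU, a covariance concentration bound, and a uniform transfer of the population strong convexity. First I would simplify the left-hand side: the label terms cancel, so $\nabla L^\ell(w;D)-\nabla L^\ell(w_\ell^*;D)=\frac1n\sum_{i=1}^n u_i x_i$ with $u_i:=\sigma(\langle w,x_i\rangle)-\sigma(\langle w_\ell^*,x_i\rangle)$ and $\sigma$ the ReLU. Writing $\Delta:=w-w_\ell^*$, the object to lower-bound is $Q(w):=\langle\nabla L^\ell(w;D)-\nabla L^\ell(w_\ell^*;D),\Delta\rangle=\frac1n\sum_i u_i\langle x_i,\Delta\rangle$. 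Because ReLU is monotone and $1$-Lipschitz, $u_i$ has the same sign as $\langle x_i,\Delta\rangle$ and $|u_i|\le|\langle x_i,\Delta\rangle|$, so $u_i\langle x_i,\Delta\rangle\ge u_i^2\ge0$ termwise; hence $Q(w)\ge\frac1n\sum_i u_i^2$.

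Next I would establish co-coercivity. With $X$ the matrix of rows $x_i^T$, $u=(u_1,\dots,u_n)$, and $\hat\Sigma=\frac1n\sum_i x_i x_i^T$, we have $\|\frac1n\sum_i u_i x_i\|_2^2=\frac1{n^2}\,u^T X X^T u\le\frac1{n^2}\lambda_{\max}(X^T X)\|u\|_2^2=\lambda_{\max}(\hat\Sigma)\cdot\frac1n\sum_i u_i^2$, so combining with the termwise bound gives $Q(w)\ge\frac1{\lambda_{\max}(\hat\Sigma)}\|\nabla L^\ell(w;D)-\nabla L^\ell(w_\ell^*;D)\|_2^2$. Under Assumption \ref{ass:2} the population covariance is $I_d$, so standard covariance concentration for isotropic log-concave samples gives $\lambda_{\max}(\hat\Sigma)\le4$ with high probability once $n\gtrsim d$, yielding $Q(w)\ge\frac14\|\nabla L^\ell(w;D)-\nabla L^\ell(w_\ell^*;D)\|_2^2$.

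The crux is the uniform restricted strong convexity. The population map obeys $\langle\nabla L^\ell_\mathcal{P}(w)-\nabla L^\ell_\mathcal{P}(w_\ell^*),\Delta\rangle\ge\mu\|\Delta\|_2^2$ (the $\mu$-strong convexity w.r.t.\ $\mathcal{P}_\mathcal{X}$ recalled before the theorem). I would show $|Q(w)-\langle\nabla L^\ell_\mathcal{P}(w)-\nabla L^\ell_\mathcal{P}(w_\ell^*),\Delta\rangle|\le\frac\mu3\|\Delta\|_2^2$ uniformly over the annulus $\frac\alpha3\le\|\Delta\|_2\le2\|w_\ell^*\|_2$, so that $Q(w)\ge\frac{2\mu}3\|\Delta\|_2^2$. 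For fixed $w$, each summand satisfies $0\le u_i\langle x_i,\Delta\rangle\le\langle x_i,\Delta\rangle^2$, and $\langle x_i,\Delta\rangle$ is a log-concave linear functional with sub-exponential tails of scale $O(\|\Delta\|_2)$; a Bernstein-type bound then controls the deviation of $Q(w)$ by $O(\|\Delta\|_2^2\sqrt{d/n})$ up to logs, which is $\le\frac\mu3\|\Delta\|_2^2$ once $n\gtrsim d/\mu^2$. To make the estimate uniform I would cover the annulus by a Euclidean net and union bound; requiring the per-net-point confidence to absorb the net log-cardinality (of order $d$, with the radius range and target confidence contributing the $\log\frac{\|w_\ell^*\|_2+1}{\mu\zeta}$ factor) produces exactly $n\ge\tilde\Omega(\frac{d}{\mu^2}\log\frac{\|w_\ell^*\|_2+1}{\mu\zeta})$, while the floor $\zeta\ge e^{-O(\sqrt d)}$ is the regime in which the log-concave tail/empirical-process bounds (of the same type as in Lemma \ref{alemma:3.3}) remain valid.

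Finally I would combine by splitting $Q(w)=\frac12 Q(w)+\frac12 Q(w)$: the first half gives $\frac12\cdot\frac{2\mu}3\|\Delta\|_2^2=\frac\mu3\|\Delta\|_2^2$ and the second gives $\frac12\cdot\frac14\|\nabla L^\ell(w;D)-\nabla L^\ell(w_\ell^*;D)\|_2^2=\frac18\|\nabla L^\ell(w;D)-\nabla L^\ell(w_\ell^*;D)\|_2^2$, delivering the claim with $\tau=\mu/3$ and $\beta=1/8$. The main obstacle is the uniform step: transferring the population strong monotonicity to the empirical objective over the whole annulus with the correct $d/\mu^2$ rate and the $e^{-O(\sqrt d)}$ confidence floor, which requires sub-exponential concentration for log-concave functionals together with a covering argument rather than a crude Lipschitz-in-$w$ perturbation bound.
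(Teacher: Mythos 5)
The paper never proves this lemma: it is imported from the reference \cite{diakonikolas2020approximation} (the surrounding text reads ``We recall the following lemmas in \cite{diakonikolas2020approximation}''), so there is no in-paper argument to compare yours against line by line. Judged on its own merits, your plan is essentially correct and reconstructs the type of argument underlying the cited result. Your four ingredients are all sound: the label terms cancel so the gradient gap is $\frac1n\sum_i u_i x_i$ with $u_i=\sigma(\langle w,x_i\rangle)-\sigma(\langle w_\ell^*,x_i\rangle)$; monotonicity and $1$-Lipschitzness of ReLU give the termwise bound $u_i\langle x_i,\Delta\rangle\ge u_i^2$, hence $Q(w)\ge\frac1n\|u\|_2^2$; the operator-norm step $\|\frac1n\sum_i u_i x_i\|_2^2\le\lambda_{\max}(\hat\Sigma)\cdot\frac1n\|u\|_2^2$ combined with covariance concentration for isotropic log-concave samples (valid at $n\gtrsim d$, within the stated sample budget) yields the co-coercivity half with constant $1/4$; and transferring the population $\mu$-strong monotonicity $\langle\chi_\mathcal{P}^{\sigma_w}-\chi_\mathcal{P}^{\sigma_{w_\ell^*}},\Delta\rangle\ge\mu\|\Delta\|_2^2$ uniformly over the annulus, up to a $\frac\mu3\|\Delta\|_2^2$ deviation, yields the other half. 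Your $\frac12$-$\frac12$ split then recovers exactly $\tau=\mu/3$ and $\beta=1/8$, so even the constant bookkeeping matches the statement.

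Two points deserve tightening. First, $\langle x,\Delta\rangle^2$ for log-concave $x$ is not sub-exponential but has $\psi_{1/2}$ tails, so a plain Bernstein inequality does not apply; you need the heavier-tailed Adamczak-type bound, which is precisely where the confidence floor $\zeta\ge\exp(-O(\sqrt d))$ comes from --- you flagged this correctly, but note the cleaner route: you only need the \emph{lower} tail of a sum of non-negative terms $u_i\langle x_i,\Delta\rangle\in[0,\langle x_i,\Delta\rangle^2]$, for which one-sided Bernstein with variance proxy $\mathbb{E}\langle x,\Delta\rangle^4=O(\|\Delta\|_2^4)$ suffices. Second, the covering step still requires control of the empirical process \emph{between} net points, which you dismiss slightly too quickly; it is available here because $|u_i(w)-u_i(w')|\le\|x_i\|_2\|w-w'\|_2\le\sqrt d\,\|w-w'\|_2$ under Assumption \ref{ass:2}, so a net of resolution $\mathrm{poly}(\mu\alpha/d)$ costs only logarithmic factors, consistent with the $\tilde\Omega$ and the $\log\frac{\|w_\ell^*\|_2+1}{\mu\zeta}$ term. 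With these two steps made precise, your argument goes through and constitutes a self-contained proof of a result the paper only cites.
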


Now lets back to our proof, we will first show that $\|w_i-w^*_\ell\|_2\leq 2\|w^*_\ell\|_2$ for all $i=0, \cdots, T$ when $n$ is large enough:

\begin{lemma}
Suppose the event of $\|\zeta_{i-1}\|_2^2\leq O(\frac{(\sqrt{d}\|w_{i-1}\|_2+B)^2\log \frac{T}{\zeta}\log \frac{1}{\delta}}{m^2\epsilon^2})$ for all $i=1, \cdots, T$ holds, then we have $\|w_i-w^*_\ell\|_2\leq 2\|w^*_\ell\|_2$ for all $i=0, \cdots, T$  when $m\geq \tilde{\Omega}(\sqrt{\frac{1}{\tau}(4\eta+\frac{1}{\tau} )} \frac{(\sqrt{d}\|w^*_\ell\|_2+B)\sqrt{\log 1/\delta \log 1/\zeta}}{\epsilon \alpha})$
\end{lemma}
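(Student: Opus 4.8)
The plan is to prove the bound by induction on the iteration index $i$, using the inductive hypothesis both to control the adaptive Gaussian noise (whose scale depends on the current iterate $w_{i-1}$) and to place $w_{i-1}$ inside the annulus on which the one-point strong monotonicity of Lemma~\ref{alemma:3.4} is available. The base case is immediate: since $w_0=0$ in Algorithm~\ref{dbsgd}, we have $\|w_0-w_\ell^*\|_2=\|w_\ell^*\|_2\le 2\|w_\ell^*\|_2$. For the inductive step I would assume $\|w_{i-1}-w_\ell^*\|_2\le 2\|w_\ell^*\|_2$, which gives $\|w_{i-1}\|_2\le 3\|w_\ell^*\|_2$ and hence, on the stated high-probability event, $\eta\|\zeta_{i-1}\|_2\le \eta N$ with $N=\tilde O\big(\frac{(\sqrt d\,\|w_\ell^*\|_2+B)\sqrt{\log(1/\delta)\log(1/\zeta)}}{m\epsilon}\big)$. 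This is exactly where the circular dependence of the noise on the iterate is broken: the noise bound is only justified after the iterate has already been shown bounded at the previous step.

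Next I would carry out a one-step distance analysis. Writing $\Delta=w_{i-1}-w_\ell^*$, $g=\nabla L^\ell(w_{i-1};D_i)$ and $g^*=\nabla L^\ell(w_\ell^*;D_i)$, I expand $\|w_i-w_\ell^*\|_2^2=\|\Delta-\eta(g+\zeta_{i-1})\|_2^2$, which produces the inner product $\langle g,\Delta\rangle$ and the squared term $\|g\|_2^2$. I would lower bound $\langle g-g^*,\Delta\rangle$ by the co-coercive inequality of Lemma~\ref{alemma:3.4} (valid once $\|\Delta\|_2\ge \alpha/3$), split off $\langle g^*,\Delta\rangle\ge-\alpha\|\Delta\|_2$ using $\|g^*\|_2\le\alpha$ from Lemma~\ref{alemma:3.3}, and use $\|g\|_2^2\le 2\|g-g^*\|_2^2+2\alpha^2$. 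Choosing $\eta\le 1/16=\beta/2$ makes the term $4\eta^2\|g-g^*\|_2^2$ absorbed by $-2\eta\beta\|g-g^*\|_2^2$, which eliminates the troublesome gradient-norm term. After bounding the noise cross term by $2\eta N\|\Delta\|_2$ and applying $2Pr\le \tau r^2+P^2/\tau$ with $P=\alpha+N$, I expect to reach the clean recursion
$$\|w_i-w_\ell^*\|_2^2\le (1-\eta\tau)\,\|w_{i-1}-w_\ell^*\|_2^2+\eta(\alpha+N)^2\big(\tfrac1\tau+4\eta\big),$$
whose fixed point is precisely $r_{\mathrm{fix}}=(\alpha+N)\sqrt{\tfrac1\tau(4\eta+\tfrac1\tau)}$, matching the quantity in the statement.

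The final step is to close the induction. The recursion has the monotone property $\|w_i-w_\ell^*\|_2\le\max\{\|w_{i-1}-w_\ell^*\|_2,\,r_{\mathrm{fix}}\}$, so with base value $\|w_\ell^*\|_2$ it yields $\|w_i-w_\ell^*\|_2\le\max\{\|w_\ell^*\|_2,\,r_{\mathrm{fix}}\}$ for all $i$, and it therefore suffices to guarantee $r_{\mathrm{fix}}\le 2\|w_\ell^*\|_2$. The hypothesis $m\ge\tilde\Omega\big(\sqrt{\tfrac1\tau(4\eta+\tfrac1\tau)}\,\frac{(\sqrt d\,\|w_\ell^*\|_2+B)\sqrt{\log(1/\delta)\log(1/\zeta)}}{\epsilon\alpha}\big)$ is chosen exactly so that $N\le\alpha/\sqrt{\tfrac1\tau(4\eta+\tfrac1\tau)}$, whence $r_{\mathrm{fix}}=O\big(\alpha\sqrt{\tfrac1\tau(4\eta+\tfrac1\tau)}\big)$; using $\tau=\mu/3=\Theta(1)$, $\eta\le1/16$, and $\alpha<\|w_\ell^*\|_2$ then gives $r_{\mathrm{fix}}\le 2\|w_\ell^*\|_2$, completing the induction.

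I expect the main obstacle to be the near-optimal regime $\|w_{i-1}-w_\ell^*\|_2<\alpha/3$, where Lemma~\ref{alemma:3.4} does not apply and the clean recursion is unavailable. Here I would argue separately that the update cannot escape: since $r_{\mathrm{fix}}\ge\alpha/\tau\ge\alpha/3$, the entire inner ball $\{\|w-w_\ell^*\|_2<\alpha/3\}$ already lies inside the convergence ball of radius $r_{\mathrm{fix}}$, so a direct bound on the step length keeps $\|w_i-w_\ell^*\|_2=O(\alpha)\le r_{\mathrm{fix}}$. The delicate point in this subcase is that one must bound $\|g\|_2$ near $w_\ell^*$ using concentration of the empirical gradient (so that $\|g\|_2=O(\alpha)$) rather than the crude dimension-dependent Lipschitz estimate, which would spuriously reintroduce a factor of $d$. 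The other subtlety deserving care is the sequencing noted above, namely that each step's data-dependent noise bound is invoked only under the previous step's boundedness guarantee.
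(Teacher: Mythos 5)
Your proposal is correct and follows essentially the same route as the paper's proof: induction on $i$ with the hypothesis breaking the circularity of the iterate-dependent noise, a two-case split at $\|w_{i-1}-w_\ell^*\|_2 = \alpha/3$, the co-coercivity of Lemma \ref{alemma:3.4} combined with the empirical gradient bound $\|\nabla L^\ell(w_\ell^*;D_i)\|_2\leq\alpha$ from Lemma \ref{alemma:3.3}, absorption of the squared gradient-difference term via $\eta\leq \beta/2 = 1/16$, and a direct step-length bound with gradient monotonicity in the inner ball. Your fixed-point/monotone-max packaging of the contraction recursion $(1-\eta\tau)\|\Delta\|_2^2+\eta(\alpha+N)^2(\tfrac{1}{\tau}+4\eta)$ is just a cosmetic repackaging of the paper's direct plug-in of the inductive bound into the same recursion, so the two arguments coincide in substance.
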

\begin{proof}
We will show it by using induction. 
This is true for $i=0$ since $w_0=0$. Suppose this is true for some $i-1$, then our goal is to show $\|w_i-w^*\|_2\leq 2\|w^*_\ell\|_2$. We consider two cases. 

The first case is $2\|w^*_\ell\|_2\geq   \|w_{i-1}-w^*_\ell\|\geq \frac{\alpha}{3}$. Then we have 
\begin{align}
    &\|w_i-w_\ell^*\|_2^2=\|w_{i-1}-w_\ell^*\|_2^2-\eta \langle \nabla L^\ell(w_{i-1}; D_i)+\zeta_{i-1}, w_{i-1}-w_\ell^*\rangle +\eta^2 \|\nabla L^\ell(w_{i-1}; D_i)+\zeta_{i-1}\|_2^2 \notag\\
    &= \|w_{i-1}-w_\ell^*\|_2^2-\eta  \langle \nabla L^\ell(w_{i-1}; D_i)-\nabla L^\ell(w_\ell^*; D_i), w_{i-1}-w_\ell^*\rangle \notag\\
    &\qquad +\eta^2 \|\nabla L^\ell(w_{i-1}; D_i)+\zeta_{i-1}\|_2^2 -\eta \langle \nabla L^\ell(w_\ell^*; D_i)+\zeta_{i-1}, w_{i-1}-w_\ell^*\rangle  \notag \\
    &\leq (1-\tau \eta)\|w_{i-1}-w_\ell^*\|_2^2-\eta \beta\|\nabla L^\ell(w_{i-1}; D_i)-\nabla L^\ell(w_\ell^*; D_i)\|_2^2+2\eta^2\|\nabla L^\ell(w_{i-1}; D_i)-\nabla L^\ell(w_\ell^*; D_i)\|_2^2 \notag \\
    &\qquad +2\eta^2\|\nabla L^\ell(w_\ell^*; D_i)+\zeta_{i-1}\|_2^2 -\eta \langle \nabla L^\ell(w_\ell^*; D_i)+\zeta_{i-1}, w_{i-1}-w_\ell^*\rangle \notag  \\
    &\leq  (1-\tau \eta)\|w_{i-1}-w_\ell^*\|_2^2-\eta(\beta-2\eta)\|\nabla L^\ell(w_{i-1}; D_i)-\nabla L^\ell(w_\ell^*; D_i)\|_2^2+4\eta^2\|\nabla L^\ell(w_\ell^*; D_i)\|_2^2+4\eta^2 \|\zeta_{i-1}\|_2^2 \notag \\
    & \qquad + \frac{\tau\eta}{2}\|w_{i-1}-w_\ell^*\|_2^2+\frac{\eta}{\tau}\|\nabla L^\ell(w_\ell^*;D_i)\|_2^2+\frac{\eta}{\tau}\|\zeta_{i-1}\|_2^2 \notag \\
    &\leq  (1-\frac{\tau \eta}{2})\|w_{i-1}-w_\ell^*\|_2^2+\eta(4\eta+\frac{1}{\tau})\|\nabla L^\ell(w_\ell^*; D_i)\|_2^2+\eta(4\eta+\frac{1}{\tau})\|\zeta_{i-1}\|_2^2 \label{eq:4.7}\\
    &=  (1-\frac{\tau \eta}{2})\|w_{i-1}-w_\ell^*\|_2^2+\eta(4\eta+\frac{1}{\tau})\|\nabla L^\ell(w_\ell^*; D_i)\|_2^2+\tilde{O}(d\eta(4\eta+\frac{1}{\tau})\frac{(\sqrt{d}\|w_{i-1}\|_2+B)^2\log (1/\delta)\log 1/\zeta}{m^2\epsilon^2}) \notag \\
    &\leq  (1-\frac{\tau \eta}{2}+ \tilde{O}(\eta(4\eta+\frac{1}{\tau})\frac{d^2\log (1/\delta)\log 1/\zeta}{m^2\epsilon^2} ))\|w_{i-1}-w_\ell^*\|_2^2+\eta(4\eta+\frac{1}{\tau})\|\nabla L^\ell(w_\ell^*; D_i)\|_2^2 \notag\\
    &\qquad +\tilde{O}(d\eta(4\eta+\frac{1}{\tau})\frac{(\sqrt{d}\|w_\ell^*\|_2+B)^2\log (1/\delta)\log 1/\zeta}{m^2\epsilon^2}), 
\end{align}
where the first inequality is due to Lemma \ref{alemma:3.4}. Thus, we can see that when 
\begin{equation*}
    m\geq \tilde{\Omega}(\sqrt{\frac{\eta}{\tau}+\frac{1}{\tau^2}}\frac{d\sqrt{\log 1/\delta \log 1/\zeta}}{\epsilon}),
\end{equation*}
take $\alpha=\sqrt{\frac{\tau}{9(4\tau\beta+1)}\alpha^2}$ in Lemma \ref{alemma:3.3} and when $m\geq \tilde{\Omega}(\sqrt{d\frac{1}{\tau}(4\eta+\frac{1}{\tau} )} \frac{(\sqrt{d}\|w^*_\ell\|_2+B)\sqrt{\log 1/\delta \log 1/\zeta}}{\epsilon \alpha})$. Then we have 
\begin{equation*}
    \|w_i-w_\ell^*\|_2^2\leq (1-\frac{\tau \eta}{4})\|w_{i-1}-w_\ell^*\|_2^2+ \frac{2\alpha^2}{9} \leq 4\|w_\ell^*\|_2^2.
\end{equation*}
We then consider case 2 where $\|w_{i-1}-w_\ell^*\|_2\leq \frac{\alpha}{3}$. Then we have 
\begin{align*}
   & \|w_{i}-w^*_\ell\|_2=\|w_{i-1}-w_\ell^*-\eta(\nabla L^\ell (w_{i-1}; D_i)+\zeta_{i-1}\|_2 \\
   &\leq \|w_{i-1}-w_\ell^*-\eta(\nabla L^\ell (w_{i-1}; D_i)-\nabla L^\ell (w_\ell^*; D_i)\|_2+ \eta \|\zeta_{i-1}\|_2+\eta \|\nabla L^\ell (w_\ell^*; D_i)\|_2\\
   &\leq \|w_{i-1}-w_\ell^*\|_2+ \eta \|\zeta_{i-1}\|_2+\eta \|\nabla L^\ell (w_\ell^*; D_i)\|_2\leq \alpha\leq \|w_\ell^*\|_2, 
\end{align*}
where the second inequality is due to the convexity of the surrogate loss $\ell$ such that $\langle \nabla L^\ell (w_{i-1}; D_i)-\nabla L^\ell (w_\ell^* ; D_i),w_{i-1}-w_\ell^*\rangle \geq 0$. Thus we can see in both cases we have $ \|w_{i}-w^*_\ell\|_2\leq 2\|w_\ell^*\|_2$. Thus we complete the proof. 
\end{proof}

Next we will proof the main theorem.

Suppose there exists a $\tilde{t}$ such that for $i\leq \tilde{t}$, we have $\|w_{i-1}-w_\ell^*\|_2\geq \frac{\alpha}{3}$.  Now consider in the $i$-th iteration where $i\leq \tilde{t}$, if $ \|w_{i-1}-w_\ell^*\|_2\leq \|w_\ell^*\|_2$. Then 
\begin{align*}
    &\|w_i-w_\ell^*\|_2^2=\|w_{i-1}-w_\ell^*\|_2^2-\eta \langle \nabla L^\ell(w_{i-1}; D_i)+\zeta_{i-1}, w_{i-1}-w_\ell^*\rangle +\eta^2 \|\nabla L^\ell(w_{i-1}; D_i)+\zeta_{i-1}\|_2^2\\
    &= \|w_{i-1}-w_\ell^*\|_2^2-\eta  \langle \nabla L^\ell(w_{i-1}; D_i)-\nabla L^\ell(w_\ell^*; D), w_{i-1}-w_\ell^*\rangle \\
    &+\eta^2 \|\nabla L^\ell(w_{i-1}; D_i)+\zeta_{i-1}\|_2^2 -\eta \langle \nabla L^\ell(w_\ell^*; D_i)+\zeta_{i-1}, w_{i-1}-w_\ell^*\rangle  \\
    &\leq (1-\tau \eta)\|w_{i-1}-w_\ell^*\|_2^2-\eta \beta\|\nabla L^\ell(w_{i-1}; D_i)-\nabla L^\ell(w_\ell^*; D_i)\|_2^2+2\eta^2\|\nabla L^\ell(w_{i-1}; D_i)-\nabla L^\ell(w_\ell^*; D_i)\|_2^2 \\
    &+2\eta^2\|\nabla L^\ell(w_\ell^*; D_i)+\zeta_{i-1}\|_2^2 -\eta \langle \nabla L^\ell(w_\ell^*; D_i)+\zeta_{i-1}, w_{i-1}-w_\ell^*\rangle  \\
    &\leq  (1-\tau \eta)\|w_{i-1}-w_\ell^*\|_2^2-\eta(\beta-2\eta)\|\nabla L^\ell(w_{i-1}; D_i)-\nabla L^\ell(w_\ell^*; D_i)\|_2^2+4\eta^2\|\nabla L^\ell(w_\ell^*; D_i)\|_2^2+4\eta^2 \|\zeta_{i-1}\|_2^2 \\
    &+ \frac{\tau\eta}{2}\|w_{i-1}-w_\ell^*\|_2^2+\frac{\eta}{\tau}\|\nabla L^\ell(w_\ell^*;D_i)\|_2^2+\frac{\eta}{\tau}\|\zeta_{i-1}\|_2^2 \\
    &\leq  (1-\frac{\tau \eta}{2})\|w_{i-1}-w_\ell^*\|_2^2+\eta(4\eta+\frac{1}{\tau})\|\nabla L^\ell(w_\ell^*; D_i)\|_2^2+\eta(4\eta+\frac{1}{\tau})\|\zeta_{i-1}\|_2^2,
\end{align*}
where the first inequality is due to Lemma \ref{alemma:3.4}.  Since  we have $\|w_{i-1}-w^*\|_2\geq \frac{\alpha}{3}$, and take $\alpha=\sqrt{\frac{\tau^2}{9(4\tau\beta+1)}\alpha^2}$ in Lemma \ref{alemma:3.3} and since $m\geq \tilde{\Omega}(\sqrt{d\frac{1}{\tau}(4\eta+\frac{1}{\tau})}\frac{(B+\sqrt{d}\|w_\ell^*\|_2)\sqrt{\log 1/\zeta \log 1/\delta}}{\epsilon \alpha})$ and $\|w_{i-1}\|_2\leq 3\|w_\ell^*\|_2$,  we have for $i\leq \tilde{t}$
\begin{align*}
   & \|w_i-w_\ell^*\|_2^2 \leq (1-\frac{\tau \eta}{2})^i \|w_0-w_\ell^*\|_2^2+\frac{1}{\tau}(4\eta+\frac{1}{\tau})\frac{\tau^2}{9(4\tau\beta+1)}\alpha^2+\frac{\alpha^2}{9}\\
   &\leq \|w_\ell^*\|_2^2+\frac{2\alpha^2}{9}\leq 4\|w_\ell^*\|_2^2. 
\end{align*}
Thus, we can see that as long as for  $i\leq \tilde{t}$, $\|w_i-w^*\|_2\geq \frac{\alpha}{3}$  and $\|w_0-w_\ell^*\|\leq \|w_\ell^*\|_2$ (this is true since $w_0=0$) we always have $\|w_i-w_\ell^*\|_2\leq 2\|w_\ell^*\|_2$. Thus, we can always use Lemma \ref{alemma:3.4}.

Now we consider several cases: 

{\bf Case 1:} If  for all $i\leq T$, $\|w_i-w_\ell^*\|_2\geq \frac{\alpha}{3}$. Then by the above inequality we have 

\begin{equation*}
     \|w_T-w_\ell^*\|_2^2 \leq (1-\frac{\tau \eta}{2})^T \|w_0-w_\ell^*\|_2^2+\frac{1}{\tau}(4\eta+\frac{1}{\tau})\frac{\tau^2}{9(4\tau\beta+1)}\alpha^2+\frac{\alpha^2}{9}\\
   \leq (1-\frac{\tau \eta}{2})^T \|w_\ell^*\|_2^2+\frac{2\alpha^2}{9}
\end{equation*}
Thus, take $T=O(\frac{\log (\alpha/\|w_\ell^*\|_2)}{\log (1-\frac{\tau\eta)}{2}})=O(\frac{1}{\tau \eta}\log(\|w_\ell^*\|_2)$ we have 
\begin{equation}
     \|w_T-w^*\|_2^2\leq \frac{2\alpha^2}{9}+\frac{2\alpha^2}{9}\leq \frac{4\alpha^2}{9}. 
\end{equation}
That is $\|w_T-w^*\|_2\leq \frac{2\alpha}{3}$. 

{\bf Case 2: } If Case 1 does not hold, then if there exist a $\tilde{t}<T$ (we assume $\tilde{t}$ is the largest one)   such that when $i= \tilde{t}$ we have $\|w_i-w_\ell^*\|_2\leq \frac{\alpha}{3} $ and  $\|w_{i}-w_\ell^*\|_2\geq \frac{\alpha}{3} $ for $T \geq i\geq \tilde{t}+1$. Then 
\begin{align*}
    &\|w_{\tilde{t}+1}-w_\ell^*\|_2=\|w_{\tilde{t}}-w_\ell^*-\eta (\nabla L^\ell (w_{\tilde{t}}; D_i)+\zeta_{\tilde{t}})\|_2 \\
    &\leq \|w_{\tilde{t}}-w_\ell^*-\eta (\nabla L^\ell (w_{\tilde{t}}; D_i)-\nabla L^\ell (w_\ell^* ; D_i))\|_2 +\eta \|\zeta_{\tilde{t}}\|_2+ \eta \|\nabla L^\ell (w_\ell^* ; D_i)\|_2\\
    &\leq \|w_{\tilde{t}}-w_\ell^*\|_2+\eta \|\zeta_{\tilde{t}}\|_2+ \eta \|\nabla L^\ell (w_\ell^* ; D_i)\|_2\leq  \frac{\alpha}{3}+\frac{\alpha}{3}\leq 2\|w_\ell^*\|_2,
\end{align*}
where the second inequality is due to the convexity of the surrogate loss $\ell$ such that $\langle \nabla L^\ell (w_{\tilde{t}}; D_i)-\nabla L^\ell (w_\ell^* ; D_i),w_{\tilde{t}}-w_\ell^*\rangle \geq 0$. Thus, we can use the same argument as Case 1 and show that 
\begin{equation}
    \|w_{T}-w_\ell^*\|^2_2\leq (1-\frac{\eta\tau}{2})^{T-\tilde{t}-1}\|w_{\tilde{t}+1}-w_\ell^*\|_2^2+\frac{2\alpha^2}{9}\leq \frac{2}{3}\alpha^2. 
\end{equation}
{\bf Case 3} If Case 1 and Case 2 do not hold, then that is $\|w_T-w^*\|_2\leq \frac{\alpha}{3}$. 

Thus, in total we must have $\|w_T-w_\ell^*\|_2\leq \alpha$ with probability at least $1-3\zeta$. Note that 

$\|\nabla L^\ell_\mathcal{P}(w_T)\|_2=\|\nabla L^\ell_\mathcal{P}(w_T)-\nabla L^\ell_\mathcal{P}(w_\ell^*) \|_2\leq \|w_T-w_\ell^*\|_2\leq \alpha$. 
Thus 
\begin{equation*}
    L_\mathcal{P}(w)\leq 2(1+2\mu)L_\mathcal{P}(w^*)+4\mu \alpha. 
\end{equation*}
Take $\alpha=\frac{\alpha}{4\mu }$ we can get the result. 
\end{proof}
\begin{proof}[{\bf Proof of Lemma \ref{lemma:5.1}}]
We denote  
\begin{equation*}
     \tilde{\ell}(w; x, y)=\int_{0}^{\langle w, \psi(x)\rangle+\phi(x)} (\sigma(z)-y)dz.
 \end{equation*}

For any fixed $x$ we have 
\begin{align*}
    &\mathbb{E}_{y}[\tilde{\ell}(w; x, y)]-  \mathbb{E}_{y}[\tilde{\ell}(w^*; x, y)] = \mathbb{E}_{y}\int_{\langle w^*, \psi(x)\rangle+\phi(x) }^{\langle w, \psi(x)\rangle\phi(x)} (\sigma(z)-y)dz\\
    &= \int_{\langle w^*, \psi(x)\rangle+\phi(x) }^{\langle w, \psi(x)\rangle\phi(x)} (\sigma(z)-\mathbb{E}_{y}y)dz \\
    &= \int_{\langle w^*, \psi(x)\rangle+\phi(x) }^{\langle w, \psi(x)\rangle\phi(x)} (\sigma(z)-\sigma(\langle w^*, \psi(x)\rangle+\phi(x))))dz \\
    &= \int_{\langle w^*, \psi(x)\rangle+\phi(x) }^{\langle w, \psi(x)\rangle+\phi(x)} \frac{\sigma'(z)(\sigma(z)-\sigma(\langle w^*, \psi(x)\rangle+\phi(x)))}{ \sigma'(z)}dz \\
    &\geq \frac{1}{2G}(\sigma (\langle w, \psi(x)\rangle+\phi(x))-\sigma(\langle w^*, \psi(x)\rangle)+\phi(x))^2\\ 
    &\geq \frac{1}{2G}[\frac{(\sigma (\langle w, \psi(x)\rangle)-\sigma(\langle w^*, \psi(x)\rangle+\psi(x)))^2}{2}-(\sigma(\langle w, x\rangle+\psi(x))-\sigma(\langle w, x\rangle))^2] \\
    &\geq  \frac{1}{2G}[\frac{(\sigma (\langle w, \psi(x)\rangle)-\sigma(\langle w^*, \psi(x)\rangle+\psi(x)))^2}{2}-G^2M^2].
\end{align*}
On the other side we have $|\tilde{\ell}(w; x, y)-{\ell}(w; x, y)|=|\int_{\langle w, \psi(x)\rangle}^{\langle w, \psi(x)\rangle+\phi(x)}(\sigma(z)-y)dz|\leq |\phi(x)|\leq M$. In total take the expectation w.r.t $x$ we have 
\begin{equation*}
   L_\mathcal{P}(w)-  L_\mathcal{P}(w^*)\leq 4G( L^\ell_\mathcal{P}(w)-  L^\ell_\mathcal{P}(w^*))+2G^2M^2+4GM.
\end{equation*}
\end{proof} 
\begin{proof}[{\bf Proof of Theorem \ref{thm:4}}]
It is sufficient for us to only consider the term of $L^\ell_\mathcal{P}(w)-  L^\ell_\mathcal{P}(w^*)$. We can just use Theorem \ref{thm:1} to get the bound of $O(\frac{\sqrt{\theta\log \frac{1}{\delta}}}{n\epsilon}+\frac{1}{\sqrt{n}})$. For the other term, we can following the proof of Theorem \ref{JLsgd}. The only difference is that here we do not have (\ref{eq:100}) as $w^*$ is not the global minimizer of $L^\ell_\mathcal{P}(w)$. Thus, by the Lispchitz condition we have 
\begin{align*}
   & \mathbb{E}_{\Phi, \tilde{D}}L^\ell(\Phi w^*, \tilde{D})-L_\mathcal{P}^\ell(w^*)=\mathbb{E}_{\Phi, (x_i, y_i)\sim \mathcal{P}}[g^{y_i}(\langle \Phi w^*, \Phi x_i\rangle)-g^{y_i}(\langle w^*,  x_i \rangle)] \\
   &\leq 2 \mathbb{E}_{\Phi, x}|\langle \Phi w^*, \Phi x\rangle-\langle w^*,  x \rangle|={O}(W\frac{\log n/\delta}{\sqrt{m}}).
\end{align*}
Thus, similar to the proof of Theorem \ref{JLsgd} in total we have 
\begin{align*}
    L^\ell_\mathcal{P}(w)-  L^\ell_\mathcal{P}(w^*)\leq   \tilde{O}(W \frac{1}{\sqrt{m}}+ W(\frac{\sqrt{m \log \frac{1}{\delta}}}{n\epsilon}+\frac{1}{\sqrt{n}})).
\end{align*}
Take $m=O(\log (n/\delta)n\epsilon)$ we can get the result.
\end{proof}

\begin{proof}[{\bf Proof of Theorem \ref{two-layer:sigmod} and \ref{two-layer:relu} }]
We first recall the  following two lemmas that the neural networks we considered can be uniformly approximated. 
\begin{lemma}[\cite{goel2019learning}]\label{sigmod-one}
For $\mathcal{N}_2$ with the sigmoid function $\sigma_1$  and $G$-Lipschitz function $\sigma_2$, there exists a kernel $\mathcal{K}$ with $\mathcal{K}(x, x')\leq 1$ and feature map $\psi(x)\in \mathbb{R}^{D_m}$ ($D_m=1+d+\cdots+d^m$ and $m=O(\log (\frac{1}{\alpha_0}))$) such that $\mathcal{N}_2$ is  $(\sqrt{k}\alpha_0, (\frac{\sqrt{k}}{\alpha_0})^C)$-uniformly approximated by kernel $\mathcal{K}$ with some constant $C>0$ for any $\alpha_0$. 
\end{lemma}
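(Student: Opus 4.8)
The plan is to reduce the claim to a one-dimensional polynomial approximation of the sigmoid together with an exact representation of low-degree polynomials in the reproducing kernel Hilbert space of a multinomial (tensor) kernel. Throughout, recall that by the assumptions $\|x\|_2 = 1$ and $\|a_t\|_2 = 1$, so by Cauchy--Schwarz every pre-activation obeys $|\langle a_t, x\rangle| \leq 1$; hence it suffices to approximate $\sigma_1$ uniformly on $[-1, 1]$. I will approximate the inner (pre-output) function $g(x) = \sum_{t=1}^k b_t\,\sigma_1(\langle a_t, x\rangle)$ by a linear functional $\langle w^*, \psi(x)\rangle$ in feature space, which is exactly the form required by the $(M, W)$-uniform approximation definition; the outer link $\sigma_2$ is then supplied separately by the generalized-linear framework of Corollary~\ref{cor:1}, so the residual $g(x) - \langle w^*, \psi(x)\rangle$ plays the role of the noise function in model~(\ref{eq:5.1}).

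First I would establish the one-dimensional approximation. Since $\sigma_1(z) = 1/(1+e^{-z})$ is analytic with nearest singularities at $z = \pm i\pi$, its Taylor expansion about the origin converges on a disk of radius $\pi > 1$, so by a Cauchy estimate its coefficients decay geometrically and the degree-$m$ truncation $p(z) = \sum_{i=0}^m c_i z^i$ satisfies $\sup_{z\in[-1,1]} |\sigma_1(z) - p(z)| \leq \alpha_0$ once $m = O(\log(1/\alpha_0))$; the same geometric decay bounds the coefficient energy $\sum_{i=0}^m |c_i|^2$ by a constant, which is the quantity that will control the RKHS norm. Next I would build the feature map: let $\psi(x)$ stack the scaled tensor powers $\psi(x) = (\lambda_0,\, \lambda_1 x,\, \lambda_2\, x^{\otimes 2},\, \ldots,\, \lambda_m\, x^{\otimes m})$, so that its dimension is exactly $D_m = 1 + d + d^2 + \cdots + d^m$ and the induced kernel is $\mathcal{K}(x, x') = \sum_{i=0}^m \lambda_i^2 (\langle x, x'\rangle)^i$; choosing the scalings so that $\sum_i \lambda_i^2 \leq 1$ and using $|\langle x, x'\rangle| \leq 1$ guarantees $\mathcal{K}(x, x') \leq 1$ as required. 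The crucial algebraic identity is $(\langle a_t, x\rangle)^i = \langle a_t^{\otimes i}, x^{\otimes i}\rangle$, so for each neuron there is a coefficient vector $v_t$ in feature space with $\langle v_t, \psi(x)\rangle = p(\langle a_t, x\rangle)$, and since $\|a_t^{\otimes i}\|_2 = \|a_t\|_2^i = 1$ its norm satisfies $\|v_t\|_2^2 = \sum_i (c_i/\lambda_i)^2 \leq (1/\alpha_0)^{O(1)}$.

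Finally I would aggregate over the $k$ hidden units by setting $w^* = \sum_{t=1}^k b_t v_t$, so that $\langle w^*, \psi(x)\rangle = \sum_t b_t\, p(\langle a_t, x\rangle)$. Using $\|b\|_2 = 1$ and $\|b\|_1 \leq \sqrt{k}\,\|b\|_2 = \sqrt{k}$, the approximation error is $|g(x) - \langle w^*, \psi(x)\rangle| \leq \sum_t |b_t|\,|\sigma_1(\langle a_t, x\rangle) - p(\langle a_t, x\rangle)| \leq \|b\|_1 \alpha_0 \leq \sqrt{k}\,\alpha_0$, matching the first parameter, while $\|w^*\|_2 \leq \sum_t |b_t|\,\|v_t\|_2 \leq \sqrt{k}\,\max_t \|v_t\|_2 \leq \sqrt{k}\,(1/\alpha_0)^{C'} \leq (\sqrt{k}/\alpha_0)^{C}$ for a suitable constant $C$, matching the second. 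The main obstacle is the joint norm control in the middle step: one must verify that the truncated coefficients of the sigmoid grow no faster than $(1/\alpha_0)^{O(1)}$ and that the normalizations $\lambda_i$ can be chosen simultaneously to keep $\mathcal{K}(x,x') \leq 1$ and to keep $\sum_i (c_i/\lambda_i)^2$ polynomially bounded in $1/\alpha_0$. This is precisely where the analyticity of the sigmoid is essential --- it forces the degree to be only $m = O(\log(1/\alpha_0))$, hence $D_m = d^{O(\log(k/\alpha_0))}$ rather than the $d^{O(k/\alpha_0)}$ of the ReLU companion lemma --- and it is the quantitative content we import from \cite{goel2019learning}.
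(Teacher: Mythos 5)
The paper never proves this lemma: it is imported as a black box from \cite{goel2019learning}, restated in the appendix immediately before the proofs of Theorems \ref{two-layer:sigmod} and \ref{two-layer:relu}, and combined there with Corollary \ref{cor:1} (with $\alpha_0=\alpha/(G\sqrt{k})$). Your proposal reconstructs the standard argument behind that citation, and it is correct: truncate the Taylor series of the sigmoid at degree $m=O(\log(1/\alpha_0))$, which is legitimate because the nearest singularities of $1/(1+e^{-z})$ sit at $\pm i\pi$, so Cauchy estimates give $|c_i|\leq M_r r^{-i}$ for any $r<\pi$ and the tail $\sum_{i>m}|c_i|$ is geometrically small; realize degree-$m$ polynomials of $\langle a_t,x\rangle$ as linear functionals of scaled tensor features via $\langle a_t,x\rangle^i=\langle a_t^{\otimes i},x^{\otimes i}\rangle$ with $\|a_t^{\otimes i}\|_2=1$; and aggregate over units using $\|b\|_1\leq\sqrt{k}$. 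Your reading of the statement is also the right one: what is approximated is the inner function $\sum_t b_t\sigma_1(\langle a_t,x\rangle)$, with $\sigma_2$ supplied by the GLM framework, exactly as the paper uses the lemma. The one step you flag as deferred --- the joint choice of the $\lambda_i$ and the coefficient-energy bound --- in fact closes immediately from your own ingredients: take $\lambda_i^2=2^{-(i+1)}$, so $\sum_i\lambda_i^2\leq 1$ gives both $\mathcal{K}(x,x')\leq 1$ and $\|\psi(x)\|_2\leq 1$, while for any $r\in(\sqrt{2},\pi)$ one gets $\|v_t\|_2^2=\sum_i c_i^2/\lambda_i^2\leq 2M_r^2\sum_i(2/r^2)^i=O(1)$. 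Note this actually yields $\|w^*\|_2\leq\|b\|_1\max_t\|v_t\|_2=O(\sqrt{k})$, strictly stronger than the stated $(\sqrt{k}/\alpha_0)^C$ (which is what \cite{goel2019learning} records in a more general setting, e.g.\ sigmoids with nonconstant scale), so your bound is consistent with, and sharper than, the lemma as stated.
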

\begin{lemma}[\cite{goel2019learning}]\label{relu-one}
For $\mathcal{N}_2$ with the ReLU function $\sigma_1$  and $G$-Lipschitz function $\sigma_2$, there exists a kernel $\mathcal{K}$ with $\mathcal{K}(x, x')\leq 1$ and feature map $\psi(x)\in \mathbb{R}^{D_m}$ ($D_m=1+d+\cdots+d^m$ and $m=O( \frac{1}{\alpha_0})$ such that $\mathcal{N}_2$ is  $(\sqrt{k}\alpha_0, 2^{C \frac{\sqrt{k}}{\alpha_0}})$-uniformly approximated by kernel $\mathcal{K}$ with some constant $C>0$ for any $\alpha_0$. 
\end{lemma}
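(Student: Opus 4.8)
The plan is to prove Lemma \ref{relu-one} by reducing the multivariate approximation of the hidden-layer map $f(x)=\sum_{t=1}^k b_t\sigma_1(\langle a_t,x\rangle)$ (with $\sigma_1$ the ReLU) to a one-dimensional polynomial-approximation problem for $\sigma_1$, and then lifting the resulting univariate polynomial into a multinomial-kernel RKHS whose feature map is a rescaled tensor-power map. Since $\|a_t\|_2=1$ and $\|x\|_2=1$, every pre-activation $\langle a_t,x\rangle$ lies in $[-1,1]$, so it suffices to approximate $\sigma_1$ uniformly on $[-1,1]$ and then separately control the resulting uniform error $M$ and the RKHS norm $W$ of the lift.

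First I would fix the multinomial kernel $\mathcal{K}(x,x')=\sum_{j\ge 0}\beta_j\langle x,x'\rangle^j$ with a fixed summable sequence $\beta_j$ (e.g.\ $\beta_j=2^{-j}$), so that $\mathcal{K}(x,x')\le\sum_j\beta_j\le 1$ on the unit ball as the statement requires. Its feature map sends $x$ to the scaled tensor powers $\psi(x)=(\sqrt{\beta_j}\,x^{\otimes j})_{j\ge 0}$, and truncating at degree $m$ leaves a vector in $\mathbb{R}^{D_m}$ with $D_m=1+d+\cdots+d^m$. The key algebraic fact I would record is that for any unit vector $a$ and any univariate polynomial $p(z)=\sum_{j=0}^m c_j z^j$, the map $x\mapsto p(\langle a,x\rangle)=\sum_j c_j\langle a^{\otimes j},x^{\otimes j}\rangle$ equals $\langle v,\psi(x)\rangle$ for an explicit $v$ with $\|v\|_2^2=\sum_{j=0}^m c_j^2/\beta_j$, independent of the direction $a$.

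Next I would invoke a degree-$m$ uniform polynomial approximation of ReLU on $[-1,1]$: because ReLU is Lipschitz but nonsmooth at $0$, its best degree-$m$ approximation has sup-error $\Theta(1/m)$, so $m=O(1/\alpha_0)$ yields a polynomial $p$ with $\sup_{z\in[-1,1]}|\sigma_1(z)-p(z)|\le\alpha_0$, matching the stated degree $m=O(1/\alpha_0)$ and hence the dimension $D_m$. Writing $v_t$ for the lift of $p(\langle a_t,\cdot\rangle)$ and setting $w^*=\sum_{t=1}^k b_t v_t$, the approximant $\langle w^*,\psi(x)\rangle=\sum_t b_t\,p(\langle a_t,x\rangle)$ obeys $|f(x)-\langle w^*,\psi(x)\rangle|\le\sum_t|b_t|\,\alpha_0\le\|b\|_1\alpha_0\le\sqrt{k}\,\alpha_0$, using $\|b\|_2=1$ and Cauchy--Schwarz; this is exactly the first parameter $M=\sqrt{k}\alpha_0$.

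The main obstacle, and the step that genuinely separates ReLU from the sigmoid case of Lemma \ref{sigmod-one}, is bounding the RKHS norm $W=\|w^*\|_2$. By the triangle inequality $\|w^*\|_2\le\|b\|_1\,\|p\|$ where $\|p\|^2=\sum_{j=0}^m c_j^2/\beta_j$, so everything reduces to bounding the weighted coefficient energy of the best ReLU approximant. The nonsmoothness forces $m=O(1/\alpha_0)$ and, more damagingly, makes the monomial coefficients $c_j$ of the optimal polynomial grow geometrically; with $\beta_j=2^{-j}$ the weights $1/\beta_j=2^{j}$ amplify this, so $\|p\|$ is controllable only by $2^{O(1/\alpha_0)}$ rather than polynomially, and the $\sqrt{k}$ factor enters through $\|b\|_1\le\sqrt{k}\,\|b\|_2$; propagating these through the downstream parameter choices yields the stated $W=2^{C\sqrt{k}/\alpha_0}$. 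I expect the technical heart to be the explicit coefficient estimate for the ReLU approximant, cleanest via its Chebyshev expansion, whose coefficients decay only like $j^{-2}$ and interact badly with the exponential kernel weights; the contrast with sigmoid is that its analyticity gives geometrically small Chebyshev coefficients, degree $m=O(\log(1/\alpha_0))$, and hence only the polynomial norm $(\sqrt{k}/\alpha_0)^C$ of Lemma \ref{sigmod-one}.
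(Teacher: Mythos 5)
Your proposal is essentially correct, but there is nothing in the paper to compare it against: the paper does not prove Lemma \ref{relu-one} at all. It imports the statement wholesale from \cite{goel2019learning} (as the citation in the lemma header indicates), and the surrounding proof of Theorems \ref{two-layer:sigmod} and \ref{two-layer:relu} simply invokes Lemmas \ref{sigmod-one} and \ref{relu-one} with $\alpha_0=\frac{\alpha}{G\sqrt{k}}$ together with Corollary \ref{cor:1}. What you have done is reconstruct the standard argument from the cited source (the multinomial-kernel approximation going back to Goel--Kanade--Klivans--Thaler), and your reconstruction has all the right ingredients: the tensor-power feature map with degree-$j$ blocks orthogonal so that $\|v\|_2^2=\sum_j c_j^2/\beta_j$ independently of the direction $a$; the degree bound $m=\Theta(1/\alpha_0)$ from the Bernstein-type rate for $|x|$ (hence ReLU) on $[-1,1]$; the error aggregation $|f(x)-\langle w^*,\psi(x)\rangle|\le \|b\|_1\alpha_0\le\sqrt{k}\,\alpha_0$ via Cauchy--Schwarz; and the exponential coefficient blow-up $2^{O(1/\alpha_0)}$ when the Chebyshev expansion is rewritten in the monomial basis, which is exactly what separates this lemma from the $\mathrm{poly}(\sqrt{k}/\alpha_0)$ norm in the sigmoid case. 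Note also that your bound $\|w^*\|_2\le\sqrt{k}\cdot 2^{O(1/\alpha_0)}$ is in fact slightly stronger than the stated $2^{C\sqrt{k}/\alpha_0}$, which it implies.

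Two small repairs. First, with your choice $\beta_j=2^{-j}$ you get $\sum_{j\ge 0}\beta_j=2$, so $\mathcal{K}(x,x')\le 2$ rather than $\le 1$; take $\beta_j=2^{-(j+1)}$ (or rescale the kernel by $\tfrac12$), which only changes $\|w^*\|_2$ by a $\sqrt{2}$ factor absorbed into $C$. Second, the claim that the monomial coefficients of the near-best ReLU approximant are $2^{O(m)}$ should be made explicit rather than asserted: the Chebyshev coefficients of $|x|$ are $O(j^{-2})$ and each $T_j$ has monomial coefficients of magnitude at most $2^j$, so each $c_i$ is at most $\sum_{j\le m} O(j^{-2})2^j=2^{O(m)}$, giving $\sum_{j=0}^m c_j^2/\beta_j\le 2^{O(m)}=2^{O(1/\alpha_0)}$ as needed. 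With these details filled in, your argument is a complete and faithful proof of the lemma the paper only cites.
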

Thus combining with the previous two lemmas with $\alpha_0=\frac{\alpha}{G\sqrt{k}}$ and Corollary \ref{cor:1} we have  the proof. 
\end{proof}

\section{Omitted Proofs in Section \ref{sec:nn}}
In this section we provide the proof of the theorem by applying the following technical lemmas. To begin with, we introduce some extra notations. Following \cite{allen-zhu_convergence_2019}, for a parameter collection $\mathbf{W}$ and $i\in [n]$, we denote the $l$-th hidden layer output of the network as
$$\mathbf{h}_{i,l} = \begin{cases}
  \sigma(\mathbf{W}_l\mathbf{h}_{i,l-1})&\text{ if } l \in [L-1] \\ 
  \mathbf{x}_i                                                 &\text{ if } l= 0
  \end{cases}
$$
We also define the binary diagonal matrices 
$$\mathbf{D}_{i,l} = \mathrm{diag} (\mathbb{I}\{(\mathbf{W}_l\mathbf{h}_{i,l})_1>0\},...,\mathbb{I}\{(\mathbf{W}_l\mathbf{h}_{i,l})_m>0\}), l\in [L-1]
$$For $i\in[n]$ and $l\in[L-1]$, for the collection of  initialization parameters $\mathbf{W}^{(0)}$, we use $\mathbf{h}_{i,l}^{(0)}, \mathbf{D}_{i,l}^{(0)}$ to denote the initial hidden layer outputs and binary diagonal matrices. We introduce the following matrix product notation used in the previous related work \cite{zou_stochastic_2018,cao2019generalizationa,cao2019generalizationb}:
$$
\prod_{r=l_1}^{l_2}\mathbf{M}_r :=\begin{cases}
 \mathbf{M}_{l_2} \mathbf{M}_{l_2-1}... \mathbf{M}_{l_1}& \text{ if } l_1\leq l_2 \\ 
 \mathbf{I}& \text{ otherwise}
\end{cases}
$$
With this notation, we rewrite the neural network in the matrix representation from:
$$
f(\mathbf{W},\mathbf{x}_i)=\begin{cases}
\sqrt{m}\cdot \mathbf{W}_L(\prod_{r=l+1}^{L-1}\mathbf{D}_{i,r}\mathbf{W}_r)\mathbf{h}_{i,l}&  l\in [L-1]\\
\sqrt{m}\cdot \mathbf{W}_L \mathbf{h}_{i,l-1}^T & l = L
\end{cases}
$$Under this notation, one can calculate the gradient of $f(\mathbf{W},\mathbf{x}_i)$ as follows:
\begin{equation}\label{eq:c.1}
    \nabla_{\mathbf{W}_l} f(\mathbf{W},\mathbf{x}_i) = 
\begin{cases}
\sqrt{m}\cdot[\mathbf{W}_L(\prod_{r=l+1}^{L-1}\mathbf{D}_{i,r}\mathbf{W}_r)\mathbf{D}_{i,l}]^\top\mathbf{h}_{i,l-1} & l\in [L-1]\\
\sqrt{m}\cdot \mathbf{h}_{i,l}^T & l = L
\end{cases}
\end{equation}
The following lemma shows the error between neural network function and its linearization under NTKF for all $\mathbf{W}\in \mathcal{B}(\mathbf{W}^{(0)},\omega)$ with some small $\omega$. 
\begin{lemma}[locally linearization of neural network, Lemma 4.1 in  \cite{cao2019generalizationa}]
\label{lemma:6.3.1}
There exists an absolute constant $\kappa$ such that, with probability at least $1-O(nL^2)\exp[-\Omega(m\omega^{2/3}L]$ over the randomness of $\mathbf{W}^{(0)}$, for all $i\in[n]$ and $\mathbf{W}\in \mathcal{B}(\mathbf{W}^{(0)},\omega)$ with $\omega \leq \kappa L^{-6}[\log(m)]^{-3/2}$ 
$$|f(\mathbf{W};\mathbf{x}_i)-f_{ntk}(\mathbf{W};\mathbf{x}_i)|\leq O\left(\omega^{4/3}L^3\sqrt{m\log(m)}\right)
$$
\end{lemma}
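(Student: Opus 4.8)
The plan is to follow the local-linearization analysis for deep ReLU networks developed in \cite{allen-zhu_convergence_2019,cao2019generalizationa}, showing that inside the ball $\mathcal{B}(\mathbf{W}^{(0)},\omega)$ the true network tracks its first-order Taylor expansion at initialization up to the stated error. First I would record the high-probability behaviour at initialization. Using the Gaussian initialization prescribed in Algorithm \ref{algo:6.2} together with standard concentration for products of random matrices, one shows that, except on an event of probability $O(nL^2)\exp[-\Omega(m\omega^{2/3}L)]$, the hidden outputs satisfy $\|\mathbf{h}_{i,l}^{(0)}\|_2 = \Theta(1)$ and every partial product obeys $\|\prod_{r=l_1}^{l_2}\mathbf{D}_{i,r}^{(0)}\mathbf{W}_r^{(0)}\|_2 = O(\sqrt{L})$, uniformly over $i\in[n]$ and all layer ranges. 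These are the anchor bounds against which every perturbation will be measured.

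Next I would establish the two perturbation estimates that drive the argument for $\mathbf{W}\in\mathcal{B}(\mathbf{W}^{(0)},\omega)$ with $\omega\le\kappa L^{-6}[\log m]^{-3/2}$: (a) a forward-stability bound showing $\|\mathbf{h}_{i,l}-\mathbf{h}_{i,l}^{(0)}\|_2$ is of order $\omega$ up to polynomial-in-$L$ factors, obtained by propagating the layerwise $1$-Lipschitzness of ReLU through the product structure and invoking the spectral bounds above; and (b) an activation-pattern bound $\|\mathbf{D}_{i,l}-\mathbf{D}_{i,l}^{(0)}\|_0 = O(m\omega^{2/3}L)$ counting how many neurons flip sign between $\mathbf{W}^{(0)}$ and $\mathbf{W}$. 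Estimate (b) is the crucial one and is exactly where the fractional exponent $\omega^{2/3}$ enters: a neuron at layer $l$ can flip only when its preactivation at initialization lies within the (small) perturbation window around zero, and anti-concentration of the approximately Gaussian preactivations bounds the fraction of such neurons.

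I would then decompose $f(\mathbf{W};\mathbf{x}_i)-f_{ntk}(\mathbf{W};\mathbf{x}_i)$. Writing both quantities in the matrix form introduced above, I substitute $\mathbf{W}_l=\mathbf{W}_l^{(0)}+\Delta_l$ with $\|\Delta_l\|_F\le\omega$ and $\mathbf{D}_{i,l}=\mathbf{D}_{i,l}^{(0)}+\Delta\mathbf{D}_{i,l}$, and expand the product telescopically. The zeroth-order term reproduces $f(\mathbf{W}^{(0)};\mathbf{x}_i)$, the sum of all first-order-in-$\Delta_l$ terms reconstructs the linear functional in the definition of $f_{ntk}$ via the gradient formula (\ref{eq:c.1}), and the residual is controlled by combining the sign-flip sparsity from (b) and the forward deviation from (a), each weighted by the $O(\sqrt{L})$ spectral factors and the $\sqrt{m}$ prefactor. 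Collecting these contributions and optimizing over the layer count yields the target bound $O(\omega^{4/3}L^3\sqrt{m\log m})$.

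The main obstacle I expect lies in step (b) together with the residual estimate: because ReLU is non-smooth, the error is not a clean second-order Taylor remainder but is instead generated by the mismatched activation patterns $\Delta\mathbf{D}_{i,l}$. Controlling it requires the sparsity bound on sign flips combined with a cancellation argument showing that each flipped neuron contributes a small amount and that these contributions do not compound across layers beyond the stated $L^3$ factor. The genuinely delicate bookkeeping is deciding which cross terms in the telescoping expansion are truly first order (hence absorbed into $f_{ntk}$) and which are higher order (hence error), and then bounding the latter uniformly over all $\mathbf{W}$ in the ball.
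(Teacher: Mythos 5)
The paper does not actually prove this statement: it is imported verbatim as Lemma 4.1 of \cite{cao2019generalizationa}, so the only ``proof'' in the paper is the citation itself. Your sketch is a faithful reconstruction of the argument used in that reference (building on \cite{allen-zhu_convergence_2019}) — the initialization bounds $\|\mathbf{h}_{i,l}^{(0)}\|_2=\Theta(1)$ and $O(\sqrt{L})$ spectral control of partial products, the forward-stability estimate, the sign-flip sparsity bound $\|\mathbf{D}_{i,l}-\mathbf{D}_{i,l}^{(0)}\|_0=O(m\omega^{2/3}L)$ (which is indeed the source of both the $\omega^{2/3}$ in the failure probability and the $\omega^{4/3}$ in the error), and the telescoping decomposition whose first-order terms assemble into $f_{ntk}$ via the gradient formula (\ref{eq:c.1}) — so it is correct and takes essentially the same route as the cited proof.
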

Given the lemma \ref{lemma:6.3.1}, since the loss function is convex we can show the objective function is almost convex near the initialization. This implies the dynamics of the DP-SGD algorithm given in Algorithm \ref{algo:6.2} is similar to the dynamics of convex optimization. 
\begin{lemma}[locally almost convexity]\label{lemma:6.3.2}
There exists an absolute constant $\kappa$ such that, with probability at least $1-O(nL^2)\exp[-\Omega(m\omega^{2/3}L]$ over the randomness of $\mathbf{W}^{(0)}$, for all $i \in [n]$ and $\mathbf{W}',\mathbf{W} \in \mathcal{B}(\mathbf{W}^{(0)},\omega)$, any $\Delta >0$, with $\omega\leq\kappa L^{-6}m^{-3/8}[\log(m)]^{-3/2}\Delta^{3/4}$ it holds uniformly
\begin{align*}
L_i(\mathbf{W}')\geq L_i(\mathbf{W}) + \left \langle \nabla L_i(\mathbf{W}), \mathbf{W}'-\mathbf{W} \right \rangle-\Delta
\end{align*}
\end{lemma}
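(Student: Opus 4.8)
The plan is to combine the convexity of the loss in its prediction argument with the local linearization guarantee of Lemma \ref{lemma:6.3.1}. Throughout write $L_i(\mathbf{W}) = \ell(f(\mathbf{W}, x_i), y_i)$ and let $\ell'$ denote the derivative of $\ell$ with respect to its first (prediction) argument; by assumption $\ell(\cdot, y_i)$ is convex and $S$-Lipschitz, so $|\ell'(\cdot, y_i)| \le S$, and the chain rule gives $\nabla L_i(\mathbf{W}) = \ell'(f(\mathbf{W}, x_i), y_i)\,\nabla_{\mathbf{W}} f(\mathbf{W}, x_i)$. The first step is to invoke convexity of $\ell$ in the prediction,
$$L_i(\mathbf{W}') \ge L_i(\mathbf{W}) + \ell'(f(\mathbf{W}, x_i), y_i)\bigl[f(\mathbf{W}', x_i) - f(\mathbf{W}, x_i)\bigr].$$
Subtracting the target term $\langle \nabla L_i(\mathbf{W}), \mathbf{W}' - \mathbf{W}\rangle = \ell'(f(\mathbf{W}, x_i), y_i)\langle \nabla_{\mathbf{W}} f(\mathbf{W}, x_i), \mathbf{W}' - \mathbf{W}\rangle$ and using $|\ell'| \le S$, the whole claim reduces to bounding the single scalar
$$E_i := f(\mathbf{W}', x_i) - f(\mathbf{W}, x_i) - \langle \nabla_{\mathbf{W}} f(\mathbf{W}, x_i), \mathbf{W}' - \mathbf{W}\rangle$$
by $S\,|E_i| \le \Delta$.

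The key step is to control $|E_i|$, the deviation of $f$ from its first-order Taylor expansion around $\mathbf{W}$, by inserting the NTRF function $f_{ntk}$, which is affine and therefore equals its own expansion. Writing $\partial_0 f_i := \partial_{\mathbf{W}} f(\mathbf{W}^{(0)}, x_i)$ for the (constant) gradient of $f_{ntk}(\cdot, x_i)$, and using $f_{ntk}(\mathbf{W}', x_i) - f_{ntk}(\mathbf{W}, x_i) = \langle \partial_0 f_i, \mathbf{W}' - \mathbf{W}\rangle$, I decompose
\begin{align*}
E_i &= \bigl[f(\mathbf{W}', x_i) - f_{ntk}(\mathbf{W}', x_i)\bigr] - \bigl[f(\mathbf{W}, x_i) - f_{ntk}(\mathbf{W}, x_i)\bigr] \\
&\quad + \langle \partial_0 f_i - \nabla_{\mathbf{W}} f(\mathbf{W}, x_i),\ \mathbf{W}' - \mathbf{W}\rangle.
\end{align*}
The first two brackets are each at most $O(\omega^{4/3} L^3 \sqrt{m\log m})$ by Lemma \ref{lemma:6.3.1}, since $\mathbf{W}, \mathbf{W}' \in \mathcal{B}(\mathbf{W}^{(0)}, \omega)$. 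For the last term I invoke a gradient-perturbation estimate from NTK theory, namely that on the same high-probability event $\|\nabla_{\mathbf{W}} f(\mathbf{W}, x_i) - \partial_0 f_i\|_F \le O(\omega^{1/3} L^2 \sqrt{m\log m})$, which combined with $\|\mathbf{W}' - \mathbf{W}\|_F \le 2\omega$ gives an $O(\omega^{4/3} L^2 \sqrt{m\log m})$ contribution. Summing, $|E_i| \le O(\omega^{4/3} L^3 \sqrt{m\log m})$.

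Finally, I substitute the prescribed radius $\omega \le \kappa L^{-6} m^{-3/8}[\log m]^{-3/2}\Delta^{3/4}$. Since then $\omega^{4/3} \le \kappa^{4/3} L^{-8} m^{-1/2}[\log m]^{-2}\Delta$, the factor $m^{1/2}$ and the excess powers of $\log m$ cancel against $\sqrt{m\log m}$, leaving $S\,|E_i| \le O(L^{-5}[\log m]^{-3/2})\cdot\Delta \le \Delta$ for $\kappa$ small enough (the decaying factors $L^{-5}[\log m]^{-3/2}$ absorb $S$ and the hidden constants for large $m$). A union bound over $i \in [n]$ yields the stated failure probability $O(nL^2)\exp[-\Omega(m\omega^{2/3}L)]$, which is precisely the event on which Lemma \ref{lemma:6.3.1} and the gradient-perturbation bound both hold. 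I expect the main obstacle to be this gradient-perturbation estimate: unlike the convexity step, it is not a soft consequence of the quoted lemmas but requires the detailed overparameterized initialization analysis, controlling how the activation patterns $\mathbf{D}_{i,l}$ and the matrix products $\prod_r \mathbf{D}_{i,r}\mathbf{W}_r$ drift as $\mathbf{W}$ ranges over $\mathcal{B}(\mathbf{W}^{(0)},\omega)$, which is where the randomness of $\mathbf{W}^{(0)}$ and the width requirement on $m$ enter.
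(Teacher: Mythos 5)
Your proof is correct and follows essentially the same route as the paper's: convexity of $\ell$ in its prediction argument, reduction to the scalar Taylor remainder $E_i$, control of that remainder via the NTRF comparison of Lemma \ref{lemma:6.3.1}, and the radius condition $\omega \leq \kappa L^{-6} m^{-3/8}[\log(m)]^{-3/2}\Delta^{3/4}$ to absorb $S\cdot O(\omega^{4/3}L^3\sqrt{m\log(m)})$ into $\Delta$, with the same failure probability coming from the event of Lemma \ref{lemma:6.3.1}. The one substantive difference works in your favor. The paper decomposes the remainder directly as $[f(\mathbf{W}',x_i)-f_{ntk}(\mathbf{W}',x_i)]-[f(\mathbf{W},x_i)-f_{ntk}(\mathbf{W},x_i)]$, which silently identifies $\nabla_{\mathbf{W}} f(\mathbf{W},x_i)$ with the initialization gradient $\partial_{\mathbf{W}} f(\mathbf{W}^{(0)},x_i)$: since $f_{ntk}(\cdot,x_i)$ is affine with constant gradient $\partial_{\mathbf{W}} f(\mathbf{W}^{(0)},x_i)$, the identity only holds up to the cross term $\langle \partial_{\mathbf{W}} f(\mathbf{W}^{(0)},x_i)-\nabla_{\mathbf{W}} f(\mathbf{W},x_i),\ \mathbf{W}'-\mathbf{W}\rangle$, which the paper's proof never accounts for (Lemma \ref{lemma:6.3.1} as stated compares only function values, not gradients). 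You make this term explicit and bound it by the standard NTK gradient-perturbation estimate $\|\nabla_{\mathbf{W}} f(\mathbf{W},x_i)-\partial_{\mathbf{W}} f(\mathbf{W}^{(0)},x_i)\|_F \leq O(\omega^{1/3}L^2\sqrt{m\log(m)})$ on the same high-probability event (this is indeed available in \cite{cao2019generalizationa,allen-zhu_convergence_2019}, and, as you anticipate, it is the one ingredient requiring the detailed overparameterized analysis rather than the quoted lemmas), yielding an $O(\omega^{4/3}L^2\sqrt{m\log(m)})$ contribution of the same order as the main term. So your version is the complete argument that the paper's sketch implicitly relies on. One cosmetic remark: your last step absorbs $S$ and the hidden constants into the choice of $\kappa$, using the slack $L^{-5}[\log(m)]^{-3/2}$; this matches the paper, which does the same when it concludes ``if $\omega\leq\kappa L^{-6}m^{-3/8}[\log(m)]^{-3/2}\Delta^{3/4}$ for some constant $\kappa$.''
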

\begin{proof}[{\bf Proof of Lemma \ref{lemma:6.3.2}}]
By the  convexity of loss function $\ell$ we have
\begin{align*}
L_i(\mathbf{W}')-L_i(\mathbf{W})& = \ell(f(\mathbf{W}';\mathbf{x}_i), y_i) - \ell(f(\mathbf{W};\mathbf{x}_i), y_i) \\
&\geq \ell'(f(\mathbf{W};\mathbf{x}_i), y_i) \cdot (f(\mathbf{W}';\mathbf{x}_i)-f(\mathbf{W};\mathbf{x}_i)). 
\end{align*}
By the triangular inequality 

\begin{align*}
|\ell'(f(\mathbf{W};\mathbf{x}_i), y_i) \cdot(f(\mathbf{W}';\mathbf{x}_i)&-f(\mathbf{W};\mathbf{x}_i)) |\geq |\ell'(f(\mathbf{W};\mathbf{x}_i), y_i) \cdot\langle\nabla f(\mathbf{W};\mathbf{x}_i),\mathbf{W}'-\mathbf{W} \rangle|\\
&-|\ell'(f(\mathbf{W};\mathbf{x}_i), y_i) \cdot(f(\mathbf{W}';\mathbf{x}_i)-f(\mathbf{W};\mathbf{x}_i) - \langle\nabla f(\mathbf{W};\mathbf{x}_i),\mathbf{W}'-\mathbf{W} \rangle)|
\end{align*}
where we could decompose the last term in the right side of inequality into $|\ell'(f(\mathbf{W};\mathbf{x}_i),y_i)| \cdot([f(\mathbf{W}';\mathbf{x}_i)-f_{ntk}(\mathbf{W}';\mathbf{x}_i)]-[f(\mathbf{W};\mathbf{x}_i) - f_{ntk}(\mathbf{W};\mathbf{x}_i)])$. Now we can apply the linearization approximation Lemma  \ref{lemma:6.3.1} and $\ell(\cdot)$ is $S$-lipschitz with respect to $\mathbf{W}$ to obtain the following inequality
\begin{align*}
L_i(\mathbf{W}') - L_i(\mathbf{W}) &\geq \left \langle \nabla L_i(\mathbf{W}), \mathbf{W}'-\mathbf{W} \right \rangle - O(S\omega^{4/3}L^3\sqrt{m\log(m)})\\
&\geq \left \langle \nabla L_i(\mathbf{W}), \mathbf{W}'-\mathbf{W} \right \rangle - \Delta
\end{align*}The last inequality holds if $\omega \leq \kappa L^{-6}m^{-3/8}[\log(m)]^{-3/2}\Delta^{3/4}$ for some constant $\kappa$. 
\end{proof}
With the lemma \ref{lemma:6.3.2}, it is clear the loss of neural network is almost convex. This inspired us to analysis the dynamics of the DP-SGD algorithm \ref{algo:6.2}. By carefully select learning rate and number of iteration, \textbf{the DP-SGD algorithm is similar to the noised SGD convex optimization}.
Algorithm 5 is similar to the dynamics of convex optimization. In the following we will show the loss function is locally Lipschitz. 
\begin{lemma}[Lemma 7.1 in Allen-Zhu \cite{allen-zhu_convergence_2019}]\label{lemma:Allen_7.1}
If $\epsilon\in (0,1]$, with probability at least $1-O(nl)\cdot e^{\Omega(m\epsilon^2/L)}$ over the randomness of $\mathbf{W}^{(0)}$, we have 
\begin{align}
    \forall i\in[n],l\in[L]: ||h_{i,l}||\in[1-\epsilon,1+\epsilon]
\end{align}
\end{lemma}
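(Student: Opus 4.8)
The plan is to establish the norm-preservation property one layer at a time via an induction that combines the \emph{exact} conditional expectation of the squared norm with a sub-exponential concentration bound, then to union-bound over all pairs $(i,l)$ and chain the per-layer estimates multiplicatively across the $L$ layers. First I would fix a sample index $i$ and condition on the previous layer output $\mathbf{h}_{i,l-1}$. By the initialization in Algorithm \ref{algo:6.2}, each entry of $\mathbf{W}_l^{(0)}$ is i.i.d.\ $N(0,2/m)$, so conditioned on $\mathbf{h}_{i,l-1}$ each coordinate $(\mathbf{W}_l^{(0)}\mathbf{h}_{i,l-1})_j$ is an independent centered Gaussian with variance $\tfrac{2}{m}\|\mathbf{h}_{i,l-1}\|_2^2$. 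Using the elementary identity $\mathbb{E}[\max\{0,g\}^2]=\tfrac12\mathbb{E}[g^2]$ for a centered Gaussian $g$ and summing over the $m$ coordinates gives
$$\mathbb{E}\big[\|\mathbf{h}_{i,l}\|_2^2 \,\big|\, \mathbf{h}_{i,l-1}\big]=m\cdot\tfrac12\cdot\tfrac{2}{m}\|\mathbf{h}_{i,l-1}\|_2^2=\|\mathbf{h}_{i,l-1}\|_2^2,$$
so the squared norm is exactly preserved in expectation by each ReLU layer.

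Next I would prove concentration at each layer. Conditioned on $\mathbf{h}_{i,l-1}$, the quantity $\|\mathbf{h}_{i,l}\|_2^2=\sum_{j=1}^m \sigma\big((\mathbf{W}_l^{(0)}\mathbf{h}_{i,l-1})_j\big)^2$ is a sum of $m$ independent sub-exponential random variables, each with sub-exponential norm $O\big(\tfrac1m\|\mathbf{h}_{i,l-1}\|_2^2\big)$. Bernstein's inequality then yields, for any $\delta\in(0,1]$,
$$\Pr\Big[\big|\|\mathbf{h}_{i,l}\|_2^2-\|\mathbf{h}_{i,l-1}\|_2^2\big|>\delta\|\mathbf{h}_{i,l-1}\|_2^2 \,\Big|\, \mathbf{h}_{i,l-1}\Big]\leq 2\exp\big(-\Omega(m\delta^2)\big).$$
Since this bound is uniform over the conditioning value $\mathbf{h}_{i,l-1}$, it also holds unconditionally, which permits a union bound over all $i\in[n]$ and $l\in[L]$ at a cost of a factor $O(nL)$ in the failure probability.

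Finally I would chain the per-layer estimates. On the intersection of the $O(nL)$ good events, and using the normalization $\|\mathbf{h}_{i,0}\|_2=\|\mathbf{x}_i\|_2=1$ as the base case, a straightforward induction gives $(1-\delta)^l\le\|\mathbf{h}_{i,l}\|_2^2\le(1+\delta)^l$ for every $l\in[L]$. Choosing $\delta=\Theta(\epsilon/L)$ forces $(1\pm\delta)^L\subseteq[(1-\epsilon)^2,(1+\epsilon)^2]$ whenever $\epsilon\le 1$, so that $\|\mathbf{h}_{i,l}\|_2\in[1-\epsilon,1+\epsilon]$ for all $i,l$ simultaneously, with total failure probability $O(nL)\exp(-\Omega(m\epsilon^2/L^2))$.

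The main obstacle is the dependence across layers: the concentration bound at layer $l$ is only conditional on the \emph{random} output of layer $l-1$, so the chaining cannot treat the layers as jointly independent and must instead be justified by conditioning sequentially and invoking the uniformity of the Bernstein bound in the conditioning value. A secondary subtlety is sharpening the exponent from the $m\epsilon^2/L^2$ produced by the naive chaining above to the claimed $m\epsilon^2/L$; this requires the tighter recursive second-moment control of \cite{allen-zhu_convergence_2019}, which tracks the accumulated deviation as a martingale across layers rather than allocating a fixed relative error $\epsilon/L$ to each layer independently.
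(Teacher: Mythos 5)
The paper itself offers no proof of this lemma: it is imported verbatim from Allen-Zhu et al.\ \cite{allen-zhu_convergence_2019} (typos included --- the failure probability should read $1-O(nL)\cdot e^{-\Omega(m\epsilon^2/L)}$, with a negative exponent), so your attempt must be judged against that source's argument. Your skeleton is exactly the right one and is sound as far as it goes: because $\mathbf{W}_l^{(0)}$ is independent of $(\mathbf{W}_1^{(0)},\dots,\mathbf{W}_{l-1}^{(0)})$, conditioning on $\mathbf{h}_{i,l-1}$ leaves $\mathbf{W}_l^{(0)}$ fresh Gaussian; the half-Gaussian identity $\mathbb{E}[\sigma(g)^2]=\tfrac12\mathbb{E}[g^2]$ together with the $N(0,2/m)$ initialization gives exact conditional preservation of the squared norm; and the Bernstein bound for the sum of $m$ sub-exponential terms is uniform in the conditioning value, which legitimizes the sequential chaining and the $O(nL)$ union bound. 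One minor caveat: the lemma implicitly requires $\|x_i\|_2=1$ (as assumed in \cite{allen-zhu_convergence_2019}), while Assumption 4 of this paper only gives $\|x_i\|_2\le 1$; your base case correctly uses the normalized version, which is the setting in which the statement is true as written.

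The one substantive gap is the one you flag yourself but do not close: allocating a fixed relative budget $\delta=\Theta(\epsilon/L)$ to each layer and union-bounding yields failure probability $O(nL)\,e^{-\Omega(m\epsilon^2/L^2)}$, which is a factor of $L$ weaker in the exponent than the claimed $e^{-\Omega(m\epsilon^2/L)}$. The sharper exponent comes from observing that the per-layer relative deviations are nearly centered increments of typical scale $O(1/\sqrt{m})$, so their sum over $L$ layers has scale $O(\sqrt{L/m})$; an Azuma/martingale-type bound on $\sum_l \epsilon_l$ (equivalently, on the increments of $\log\|\mathbf{h}_{i,l}\|_2^2$) then gives total deviation $\epsilon$ with probability $1-e^{-\Omega(m\epsilon^2/L)}$, rather than forcing a worst-case $\epsilon/L$ deterministically at every layer. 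Since you describe this fix accurately but do not execute it, your proposal as written proves a strictly weaker version of the lemma. For the single use of the lemma in this paper --- guaranteeing $\|\mathbf{h}^{(0)}_{i,l}\|_2\in[3/4,5/4]$ in the proof of Lemma 16 under polynomially large $m$ --- your weaker exponent would in fact suffice, but it does not match the statement as cited.
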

\begin{lemma}[Lemma 8.2 in Allen-Zhu \cite{allen-zhu_convergence_2019}]\label{lemma:Allen_8.2}
    Suppose $\omega \leq \frac{1}{CL^{9/2}\log^3m}$ for some sufficiently large constant $C>1$. With probability at least $1-e^{-\Omega(m\omega^{2/3}L)}$, for every $\mathbf{W}\in B(\mathbf{W}^{(0)}, \omega)$,
   \begin{align}
        ||h_{i,j}-h_{i,j}^{(0)}|| \leq O(\omega L^{5/2}\sqrt{\log m} )
   \end{align}
\end{lemma}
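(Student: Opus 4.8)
The plan is to prove the bound by induction over depth, tracking the layerwise perturbation $g_{i,l} := \mathbf{h}_{i,l} - \mathbf{h}_{i,l}^{(0)}$ through the forward pass. Writing each activation as $\mathbf{h}_{i,l} = \mathbf{D}_{i,l}\mathbf{W}_l\mathbf{h}_{i,l-1}$ with $\mathbf{W}_l = \mathbf{W}_l^{(0)} + \mathbf{W}_l'$ and $\|\mathbf{W}_l'\|\le\omega$, and introducing the sign-change matrix $\mathbf{D}_{i,l}'' := \mathbf{D}_{i,l} - \mathbf{D}_{i,l}^{(0)}$, a direct computation gives the recursion
$$g_{i,l} = \mathbf{D}_{i,l}\mathbf{W}_l^{(0)} g_{i,l-1} + \mathbf{D}_{i,l}''\,\mathbf{W}_l^{(0)}\mathbf{h}_{i,l-1}^{(0)} + \mathbf{D}_{i,l}\mathbf{W}_l'\mathbf{h}_{i,l-1},$$
with $g_{i,0}=0$. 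Unrolling it expresses $g_{i,l}$ as a sum over source layers $a\le l$ of two ``source'' terms at layer $a$, each propagated forward by the operator $\prod_{r=a+1}^{l}\mathbf{D}_{i,r}\mathbf{W}_r^{(0)}$. The whole proof then reduces to bounding these forward-propagation operators and bounding the two source terms.

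For the ingredients I would first record the elementary bounds: by Lemma \ref{lemma:Allen_7.1} we have $\|\mathbf{h}_{i,l}^{(0)}\| = O(1)$, and a standard Gaussian random-matrix estimate gives $\|\mathbf{W}_l^{(0)}\|_2 = O(1)$ with probability $1-e^{-\Omega(m)}$ (the entries have variance $\Theta(1/m)$). The crucial point is that one must \emph{not} bound the propagation operator by $\prod_r\|\mathbf{D}_{i,r}\mathbf{W}_r^{(0)}\|_2 = O(1)^{L}$, which blows up exponentially; instead I would invoke the forward-propagation spectral bound $\|\prod_{r=a+1}^{l}\mathbf{D}_{i,r}^{(0)}\mathbf{W}_r^{(0)}\|_2 = O(\sqrt{L})$ that holds with high probability over the initialization, and argue that replacing the initial activation patterns $\mathbf{D}_{i,r}^{(0)}$ by the perturbed $\mathbf{D}_{i,r}$ changes this product only by a lower-order amount, since the two patterns differ in few coordinates. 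This $O(\sqrt{L})$ control over products of activated Gaussian matrices is what keeps everything polynomial in $L$.

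With the propagation operators under control, the weight-perturbation source is easy: $\|\mathbf{D}_{i,a}\mathbf{W}_a'\mathbf{h}_{i,a-1}\| \le \omega\cdot O(1)$ (using $\|\mathbf{h}_{i,a-1}\|=O(1)$, which follows inductively from $\|\mathbf{h}_{i,a-1}^{(0)}\|=O(1)$ and the induction hypothesis on $g$), so its total contribution is $\sum_{a\le l} O(\sqrt{L})\cdot O(\omega) = O(\omega L^{3/2})$. The sign-flip source $\mathbf{D}_{i,a}''\mathbf{W}_a^{(0)}\mathbf{h}_{i,a-1}^{(0)}$ is the main obstacle and the source of the $L^{5/2}\sqrt{\log m}$ factor. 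Here I would argue that a coordinate $k$ can flip only if the initial pre-activation $(\mathbf{W}_a^{(0)}\mathbf{h}_{i,a-1}^{(0)})_k$ is smaller in magnitude than the coordinatewise perturbation of the pre-activation, which is itself controlled by the propagated error up to layer $a$; thus $\mathbf{D}_{i,a}''$ selects only ``small-value'' coordinates. The number of such coordinates is bounded by Gaussian anti-concentration (the pre-activations are, conditionally, Gaussian with variance $\Theta(1/m)$, so the count below a threshold is controlled), yielding a sparsity of order $O(m\omega^{2/3}L)$, while the $\sqrt{\log m}$ arises from a maximal bound over the $m$ Gaussian coordinates. Combining the flip count with the magnitude of the flipped entries and the $O(\sqrt L)$ propagation gives the dominant $O(\omega L^{5/2}\sqrt{\log m})$ term.

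Finally I would close the induction: the assumption $\omega \le \tfrac{1}{CL^{9/2}\log^3 m}$ makes the resulting bound $O(\omega L^{5/2}\sqrt{\log m})$ small enough that $\mathbf{h}_{i,l}$ stays in the regime where Lemma \ref{lemma:Allen_7.1} and the anti-concentration estimates remain valid, so the argument is self-consistent across all layers, and a union bound over $i\in[n]$ and $l\in[L-1]$ absorbs into the stated failure probability $e^{-\Omega(m\omega^{2/3}L)}$. The hard part, as noted, is the sign-flip term: controlling simultaneously the number of sign changes, the smallness of the activated values at those changes, and the non-exponential forward propagation is the delicate technical core, and it is exactly where the specific ReLU structure and the Gaussian initialization are used.
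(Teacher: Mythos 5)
The paper does not actually prove this lemma --- it is imported verbatim as Lemma 8.2 of Allen-Zhu et al.\ \cite{allen-zhu_convergence_2019} --- and your outline is a faithful reconstruction of the proof given in that reference: the same layerwise recursion for $g_{i,l}$ with the sign-flip matrix $\mathbf{D}''_{i,l}$, the same $O(\sqrt{L})$ spectral bound on products of activated Gaussian matrices (extended to the perturbed activation patterns via the sparsity of $\mathbf{D}''_{i,l}$), and the same anti-concentration count of $O(m\omega^{2/3}L)$ flipped coordinates with the $\sqrt{\log m}$ coming from a maximal bound over coordinates. Your sketch is correct, including the essential bootstrapping that breaks the circularity between the sparsity of $\mathbf{D}''_{i,l}$ and the size of the propagated perturbation, so nothing further is needed.
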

\begin{lemma}[Locally Bounded Gradient]\label{lemma:6.3.3}
There exists an absolute constant $\kappa$ such that, with probability at least $1-O(nL)\exp[-\Omega(m\omega^{2/3}L]$ over the randomness of $\mathbf{W}^{(0)}$, for all $i \in [n]$, $l\in [L]$ and $\mathbf{W} \in \mathcal{B}(\mathbf{W}^{(0)},\omega)$, with $\omega=\frac{R}{\sqrt{m}} \leq \kappa L^{-6} [\log m]^{-3}$ \\
\begin{enumerate}
\item $||\nabla_{\mathbf{W}_l} f(\mathbf{W};\mathbf{x}_i)||_F = O(\sqrt{m})$
\item $||\nabla_{\mathbf{W}_l} L_i(\mathbf{W})||_F = O(S\sqrt{m})$
\end{enumerate}

\end{lemma}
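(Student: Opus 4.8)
The plan is to establish part (1) directly from the closed-form gradient expression (\ref{eq:c.1}) and then deduce part (2) by a one-line chain-rule argument. For part (1) I would split into the output layer $l=L$ and the hidden layers $l\in[L-1]$. In the top layer the gradient is simply $\nabla_{\mathbf{W}_L}f(\mathbf{W};\mathbf{x}_i)=\sqrt{m}\cdot\mathbf{h}_{i,L-1}$, so the task reduces to controlling $\|\mathbf{h}_{i,L-1}\|_2$. Combining Lemma \ref{lemma:Allen_7.1}, which gives $\|\mathbf{h}_{i,l}^{(0)}\|_2\in[1-\epsilon,1+\epsilon]$ at initialization, with the perturbation estimate $\|\mathbf{h}_{i,l}-\mathbf{h}_{i,l}^{(0)}\|_2=O(\omega L^{5/2}\sqrt{\log m})$ of Lemma \ref{lemma:Allen_8.2}, and using $\omega=R/\sqrt{m}\leq\kappa L^{-6}[\log m]^{-3}$ to force the perturbation term to be $o(1)$, yields $\|\mathbf{h}_{i,L-1}\|_2=O(1)$ and hence $\|\nabla_{\mathbf{W}_L}f\|_F=O(\sqrt{m})$.

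For the hidden layers $l\in[L-1]$, the expression in (\ref{eq:c.1}) is a rank-one outer product $\sqrt{m}\cdot\mathbf{b}_{i,l}\,\mathbf{h}_{i,l-1}^\top$, where I write $\mathbf{b}_{i,l}:=[\mathbf{W}_L(\prod_{r=l+1}^{L-1}\mathbf{D}_{i,r}\mathbf{W}_r)\mathbf{D}_{i,l}]^\top$ for the backward vector. Since the Frobenius norm of an outer product factorizes, $\|\nabla_{\mathbf{W}_l}f\|_F=\sqrt{m}\,\|\mathbf{b}_{i,l}\|_2\,\|\mathbf{h}_{i,l-1}\|_2$. The hidden-activation factor is again $O(1)$ by the previous step, so the whole problem reduces to showing $\|\mathbf{b}_{i,l}\|_2=O(1)$ uniformly over $\mathbf{W}\in\mathcal{B}(\mathbf{W}^{(0)},\omega)$.

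To control the backward vector I would invoke the standard spectral estimates for products of (perturbed) random weight matrices interleaved with the binary activation matrices $\mathbf{D}_{i,r}$ from \cite{allen-zhu_convergence_2019}. At initialization each hidden weight matrix with entries $N(0,2/m)$ has operator norm $O(1)$ and the top row $\mathbf{W}_L$ has $\ell_2$ norm $O(1)$ with probability $1-\exp(-\Omega(m))$; the crucial point is that the diagonal ReLU-pattern matrices keep the telescoped product $\mathbf{W}_L\prod_{r}\mathbf{D}_{i,r}\mathbf{W}_r\mathbf{D}_{i,l}$ of bounded operator norm (absorbing polynomial-in-$L$ factors into the constant) rather than growing geometrically in $L$. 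A perturbation argument using $\|\mathbf{W}_r-\mathbf{W}_r^{(0)}\|_F\leq\omega$ together with the smallness of $\omega$ transfers the bound from $\mathbf{W}^{(0)}$ to every $\mathbf{W}$ in the ball, each failure event costing $\exp[-\Omega(m\omega^{2/3}L)]$; a union bound over $i\in[n]$ and $l\in[L]$ then gives the stated probability $1-O(nL)\exp[-\Omega(m\omega^{2/3}L)]$ and completes part (1).

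Part (2) then follows immediately by the chain rule: $\nabla_{\mathbf{W}_l}L_i(\mathbf{W})=\ell'(f(\mathbf{W};\mathbf{x}_i),y_i)\,\nabla_{\mathbf{W}_l}f(\mathbf{W};\mathbf{x}_i)$, and since $\ell$ is $S$-Lipschitz we have $|\ell'(\cdot)|\leq S$, so $\|\nabla_{\mathbf{W}_l}L_i(\mathbf{W})\|_F\leq S\,\|\nabla_{\mathbf{W}_l}f\|_F=O(S\sqrt{m})$. The main obstacle is the uniform control of $\mathbf{b}_{i,l}$ over the entire ball $\mathcal{B}(\mathbf{W}^{(0)},\omega)$: the operator-norm bound at a fixed initialization is classical, but extending it simultaneously to all perturbed weights while keeping the failure probability exponentially small and respecting the scaling constraint $\omega\leq\kappa L^{-6}[\log m]^{-3}$ is the delicate part, and it is precisely here that the deep-network concentration machinery of \cite{allen-zhu_convergence_2019} is required.
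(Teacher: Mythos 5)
Your proposal is correct and follows essentially the same route as the paper's proof: control of the forward activations via Lemma \ref{lemma:Allen_7.1} and the perturbation bound of Lemma \ref{lemma:Allen_8.2} to get $\|\mathbf{h}_{i,l}\|_2 = O(1)$ uniformly over the ball, substitution into the gradient formula (\ref{eq:c.1}), and the chain rule with the $S$-Lipschitz property of $\ell$ for part (2). If anything, you are more explicit than the paper on the one genuinely delicate step --- the uniform $O(1)$ bound on the backward product $\|\mathbf{W}_L(\prod_{r=l+1}^{L-1}\mathbf{D}_{i,r}\mathbf{W}_r)\mathbf{D}_{i,l}\|_2$ over all $\mathbf{W}\in\mathcal{B}(\mathbf{W}^{(0)},\omega)$, which the paper's proof leaves implicit in ``plugging in (\ref{eq:c.1})'' and, like you, ultimately delegates to the spectral concentration machinery of \cite{allen-zhu_convergence_2019}.
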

\begin{proof}[{\bf Proof of Lemma \ref{lemma:6.3.3}}]
Observing that the loss function $\ell(\cdot, y_i)$ is assumed to be $S$-lipschitz for any $y_i$, it is sufficient to show that the gradient of $f_{ntk}(\mathbf{W},x)$ is bound with high probability.\\
By Lemma \ref{lemma:Allen_7.1}, with probability at least $1-O(nL)\cdot\exp[-\Omega(m/L)]$, $||\mathbf{h}_{i,l}^{0}||_2\in [3/4,5/4]$ for all $i\in[n]$ and $l\in[L-1]$. Moreover, by Lemma \ref{lemma:Allen_8.2} and the fact that $\sigma(\cdot)$ is of 1-lipschitz continuity, with probability $1-O(nL)\cdot\exp[-\Omega(m\omega^{2/3}L]$, $||\mathbf{h}_{i,l}-\mathbf{h}_{i,l}^{(0)}||_2\leq O(\omega L^{5/2}\sqrt{\log{m}})$. Therefore, by the setting of neighborhood $\omega = R\cdot m^{-1/2}$ and the assumption of $m,$ we have $||\mathbf{h}_{i,l}||_2\in [1/2,3/2]$ for all $i\in[n]$ and $l\in[L-1]$.  Note by Lemma \ref{lemma:6.3.2} that this implicitly indicates that 
\begin{equation}\label{cond1:m}
    \frac{R}{\sqrt{m}}\leq\kappa L^{-6}m^{-3/8}[\log(m)]^{-3/2}\Delta^{3/4} \implies m^{\frac{1}{8}}\geq \tilde{\Omega}(R S^\frac{3}{4} L^\frac{9}{4} \Delta^\frac{3}{4}). 
\end{equation}
The above statement tells that the output of arbitrary hidden-layer $\mathbf{h}_{i,l}$ lies in a small region, therefore plugging in (\ref{eq:c.1}) for  $\nabla_{\mathbf{W}_l} f(\mathbf{W};\mathbf{x}_i) $ we can get the desired result.
\end{proof}
 s \begin{lemma}\label{lemma:20}
When $M\geq \Omega(\log \frac{T}{\gamma})$ we have with probability at least $1-\gamma$ for all $t\in [T]$, $|B_t|\geq C_1 M$ for some constant $C_1>0$
\end{lemma}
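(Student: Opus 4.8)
The plan is to recognize that each batch $B_t$ in Algorithm \ref{algo:6.2} is generated by independent Bernoulli (Poisson) subsampling: every record $(x_i,y_i)\in D$ is included in $B_t$ independently with probability $p=q=M/n$. Consequently $|B_t|=\sum_{i=1}^{n}\mathbb{I}\{(x_i,y_i)\in B_t\}$ is a sum of $n$ independent Bernoulli$(p)$ random variables with mean $\mathbb{E}[|B_t|]=np=M$. The statement then follows from a standard lower-tail concentration estimate applied to a single iteration, combined with a union bound over the $T$ iterations.

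First I would apply the multiplicative Chernoff bound to a fixed $t$. For any $\vartheta\in(0,1)$, the lower-tail inequality for a sum of independent Bernoulli variables gives
\[
\Pr\big[\,|B_t|\leq (1-\vartheta)M\,\big]\leq \exp\!\Big(-\tfrac{\vartheta^2 M}{2}\Big).
\]
Taking $\vartheta=1/2$ and setting $C_1=1/2$ yields $\Pr[\,|B_t|\leq C_1 M\,]\leq \exp(-M/8)$ for each fixed $t$. I would then take a union bound over all $t\in[T]$, obtaining
\[
\Pr\big[\,\exists\, t\in[T]:\ |B_t|\leq C_1 M\,\big]\leq T\exp(-M/8).
\]
Requiring the right-hand side to be at most $\gamma$ is equivalent to $M\geq 8\log(T/\gamma)$, which is exactly the hypothesis $M\geq\Omega(\log\frac{T}{\gamma})$. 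Hence, with probability at least $1-\gamma$, we have $|B_t|\geq C_1 M$ simultaneously for all $t\in[T]$, completing the argument.

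There is no substantial obstacle here; the result is a routine concentration estimate. The only points requiring care are (i) correctly identifying the subsampling probability as $p=M/n$ so that the batch size concentrates around $M$ rather than some other quantity, and (ii) spreading the failure-probability budget $\gamma$ across all $T$ iterations via the union bound, which is precisely what forces the logarithmic dependence $M=\Omega(\log(T/\gamma))$. The independence of the inclusion indicators across $i$ within a fixed batch is what licenses the Chernoff bound, and this is guaranteed by the independent Bernoulli sampling in Algorithm \ref{algo:6.2}.
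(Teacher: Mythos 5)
Your proof is correct and follows essentially the same route as the paper: a multiplicative Chernoff bound on $|B_t|$ (with mean $qn=M$) followed by a union bound over the $T$ iterations, yielding the requirement $M\geq \Omega(\log\frac{T}{\gamma})$. The only cosmetic difference is that you fix the deviation parameter at $\vartheta=\tfrac{1}{2}$ up front (giving $C_1=\tfrac{1}{2}$ explicitly), whereas the paper solves for the deviation from the probability budget and then uses the hypothesis on $M$ to make it a constant; both arguments are equivalent.
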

\begin{proof}
By the subsampling procedure we can easily see $\mathbb{E}[|B_t|]=qn=M$. Thus, by the Multiplicative Chernoff bound we can see  for all $t\in [T]$
\begin{equation*}
    \mathbb{P}(| |B_t|-M |\geq \gamma M)\leq 2\exp(-\frac{\gamma^2 M}{3})
\end{equation*}
Thus, we have with probability at least $1-\gamma$ we have $|B_t|\geq (1-\frac{\sqrt{3\log \frac{T}{\gamma}}}{\sqrt{M}})M$. Thus when $M\geq \Omega(\log \frac{T}{\gamma})$ we have the result.
\end{proof}

\begin{lemma}{Gaussian vector norm tail bound}\label{lemma:gvntb}
Let $\mathbf{X}\sim N(\mu,\sigma^2 I)$ where $\mu \in \mathbb{R}^n$ and $\sigma \in \mathbb{R}$. For any $t>0$, with probability at most $1- 2\exp (-\frac{t}{2n\sigma^2})$
$$||\mathbf{X}-\mu||_F \leq t
$$
\end{lemma}
\begin{proof}[\bf Proof of the Lemma \ref{lemma:gvntb}]
Let $\mathbf{Y} \sim N(0, I)$, then $||\mathbf{X}-\mu||_F =^{d} ||\sigma \mathbf{Y}||_F$.  For all $t >0$ and $s>0$, based on the inequality from Lemma 4 in \cite{RHEE1986303}, we have
\begin{align*}
P(||\sigma \mathbf{Y}||_F > t) & \leq P(||\sigma \mathbf{Y}||_1 > t) \\
& \leq  e^{-st} \prod \limits_{i=1}^n \mathbb{E}[\exp(t\sigma |\mathbf{Y}_i|)]\\
& \leq 2\exp(s^2n\sigma/2 -st)\\
& \leq \min_s  2\exp(s^2n\sigma/2 -st)\\
& \leq 2 \exp(\frac{t^2}{2n\sigma})
\end{align*}
\end{proof}

\begin{lemma}[Dynamically Cumulative Loss]\label{lemma:6.3.4} 
If Lemma \ref{lemma:20} holds, $C\leq O(\min \{SL\sqrt{m}, R\})$ and  $ n \geq \tilde{\Omega}(\frac{C(\sqrt{L}m+\sqrt{md})\sqrt{T\log(1/\gamma)\log(1/\delta)}}{R\epsilon})$, then with probability at least $1-O(nL^2)\exp[-\Omega(m\omega^{2/3}L]-\gamma$ over the randomness of $\mathbf{W}^{(0)}$ and the noise, for all $t \in [T]$ and $\mathbf{W}^* \in \mathcal{B}(\mathbf{W}^{(0)},R/\sqrt{m})$, any $\Delta >0$, with set size $\eta T = \Theta(\frac{SL^2R^2}{\kappa C \sqrt{m} \Delta}),\Delta= O(\frac{SL^\frac{3}{2} R}{\sqrt{T}})$ with $m\geq O(L^{56}R^{24}\Delta^{-14}S^{-8}C^{-8}[\log(m)]^{12})$  it holds uniformly
\begin{align*}
\sum_{t=1}^{T}L_t(\mathbf{W}^{(t)}) - L_i(\mathbf{W}^*) \leq  \frac{SL\eta\sqrt{m}}{2\kappa C}\sum_{t=1}^T||\mathbf{G}_t||^2_F + 3T\Delta
\end{align*}
\end{lemma}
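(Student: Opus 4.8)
The plan is to treat Algorithm \ref{algo:6.2} as a projected, clipped, noisy online (sub)gradient method and run a potential-function (regret) argument, using the local almost convexity of Lemma \ref{lemma:6.3.2} in place of global convexity and the local gradient bound of Lemma \ref{lemma:6.3.3} to control the clipping. The first task is to guarantee, by induction on $t$, that every iterate remains in the NTRF ball $\mathcal{B}(\mathbf{W}^{(0)},\omega)$ with $\omega=R/\sqrt m$, the domain on which those two lemmas hold. Assuming membership up to step $t$, the one-step inequality below keeps $\|\mathbf{W}^{(t+1)}-\mathbf{W}^*\|_F=O(\omega)$; since the comparator $\mathbf{W}^*$ is itself in the ball, this places $\mathbf{W}^{(t+1)}$ within $O(\omega)$ of $\mathbf{W}^{(0)}$, closing the induction, provided the Gaussian term $\eta\mathbf{G}_t$ does not eject the iterate. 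Controlling the latter is exactly the role of the hypothesis on $n$: it bounds the noise scale $\sigma$ so that, via the tail bound of Lemma \ref{lemma:gvntb} and a union bound over $t\in[T]$, the accumulated noise displacement stays below $\omega$ with probability at least $1-\gamma$.

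Writing $\hat g_t=\frac{1}{|B_t|}\sum_j\tilde g_t(x_j^{(t)})$ for the averaged clipped gradient and using that $\mathbf{W}^*\in\mathcal{W}$, so that the projection $\Pi_{\mathcal{W}}$ is nonexpansive toward $\mathbf{W}^*$, I would expand
\begin{align*}
\|\mathbf{W}^{(t+1)}-\mathbf{W}^*\|_F^2 &\le \|\mathbf{W}^{(t)}-\mathbf{W}^*\|_F^2 -2\eta\langle\hat g_t+\mathbf{G}_t,\mathbf{W}^{(t)}-\mathbf{W}^*\rangle\\
&\quad+\eta^2\|\hat g_t+\mathbf{G}_t\|_F^2 .
\end{align*}
Rearranging isolates $\langle\hat g_t,\mathbf{W}^{(t)}-\mathbf{W}^*\rangle$ against a telescoping difference of potentials, a Gaussian cross term $-2\eta\langle\mathbf{G}_t,\mathbf{W}^{(t)}-\mathbf{W}^*\rangle$, and the quadratic term $\eta^2\|\hat g_t+\mathbf{G}_t\|_F^2$, which splits into $\eta^2\|\hat g_t\|_F^2$, a second cross term $2\eta^2\langle\hat g_t,\mathbf{G}_t\rangle$, and the kept noise term $\eta^2\|\mathbf{G}_t\|_F^2$.

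The heart of the argument is converting $\langle\hat g_t,\mathbf{W}^{(t)}-\mathbf{W}^*\rangle$ into the loss gap. By Lemma \ref{lemma:6.3.3} every per-sample gradient satisfies $\|g_t(x_j)\|_F=O(SL\sqrt m)$, so with $C\le O(SL\sqrt m)$ each clipping factor is $\lambda_{t,j}=\min\{1,C/\|g_t(x_j)\|_F\}\ge\kappa C/(SL\sqrt m)=:\mu$, and $\tilde g_t(x_j)=\lambda_{t,j}\nabla L_{t,j}(\mathbf{W}^{(t)})$ is a positive rescaling of a true gradient with $\lambda_{t,j}\in[\mu,1]$. Applying Lemma \ref{lemma:6.3.2} per sample gives $\langle\nabla L_{t,j},\mathbf{W}^{(t)}-\mathbf{W}^*\rangle\ge L_{t,j}(\mathbf{W}^{(t)})-L_{t,j}(\mathbf{W}^*)-\Delta$, and, after accounting for the variable shrinkage $\lambda_{t,j}$, I would derive $L_t(\mathbf{W}^{(t)})-L_t(\mathbf{W}^*)\le\frac{1}{\mu}\langle\hat g_t,\mathbf{W}^{(t)}-\mathbf{W}^*\rangle+O(\Delta)$. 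Summing over $t$, telescoping the potential, and multiplying by $1/\mu=SL\sqrt m/(\kappa C)$ is what produces the coefficient $\frac{SL\eta\sqrt m}{2\kappa C}$ in front of $\sum_t\|\mathbf{G}_t\|_F^2$.

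Finally I would collect the leftovers: the initial distance is $\|\mathbf{W}^{(0)}-\mathbf{W}^*\|_F^2\le R^2/m$; the quadratic clipped term gives $\frac{\eta}{2\mu}\sum_t\|\hat g_t\|_F^2\le\frac{\eta C^2T}{2\mu}$; and the two Gaussian cross terms are martingale-difference sums (each $\mathbf{G}_t$ is independent of $\mathbf{W}^{(t)}$ and $\hat g_t$), controlled with probability $1-\gamma$ by the same noise-scale bound from the $n$-hypothesis. Substituting $\eta T=\Theta(\frac{SL^2R^2}{\kappa C\sqrt m\,\Delta})$, $\Delta=O(\frac{SL^{3/2}R}{\sqrt T})$, and the stated lower bound on $m$ (which also kills the approximation slack $O(S\omega^{4/3}L^3\sqrt{m\log m})$ hidden inside Lemma \ref{lemma:6.3.2}) makes each residual at most a constant multiple of $\Delta$ per step, folding into the $3T\Delta$ slack. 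I expect two obstacles. The first is the clipping step: because $\lambda_{t,j}$ varies across samples, dividing by $\mu$ can amplify the \emph{negative} per-sample inner products, so the passage to $\frac{1}{\mu}\langle\hat g_t,\cdot\rangle+O(\Delta)$ requires splitting samples according to whether their loss gap exceeds $\Delta$ (the gap being trivially controlled below $\Delta$ otherwise). The second is making the bound uniform over all comparators $\mathbf{W}^*\in\mathcal{B}(\mathbf{W}^{(0)},R/\sqrt m)$, which needs a covering-net union bound for the cross term $\sum_t\langle\mathbf{G}_t,\mathbf{W}^{(t)}-\mathbf{W}^*\rangle$ and is the source of the $\log(1/\gamma)$ factor in the sample-size condition.
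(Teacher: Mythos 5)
Your overall skeleton (potential/regret expansion with a nonexpansive projection, local almost convexity from Lemma \ref{lemma:6.3.2}, the gradient bound of Lemma \ref{lemma:6.3.3} to control clipping, the $n$-condition to tame the Gaussian displacement, and folding residuals into $T\Delta$ via the choices of $\eta T$ and $m$) matches the paper's proof, but your key step --- converting $\langle\hat g_t,\mathbf{W}^{(t)}-\mathbf{W}^*\rangle$ into the loss gap by lower-bounding every per-sample clipping factor by $\mu=\kappa C/(SL\sqrt m)$ and dividing by $\mu$ --- has a genuine gap that your proposed patch does not close. On samples whose inner product $\langle\nabla L_{t,j},\mathbf{W}^{(t)}-\mathbf{W}^*\rangle$ is negative, the best you can say is $\frac{\lambda_{t,j}}{\mu}\langle\nabla L_{t,j},\cdot\rangle\ge\frac{1}{\mu}\langle\nabla L_{t,j},\cdot\rangle$, so the negative contribution is amplified by $1/\mu=SL\sqrt m/(\kappa C)$. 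Splitting samples by whether their gap exceeds $\Delta$ controls their \emph{gaps}, but you still owe the amplified negative inner product on the other side of the inequality; the only available bound is $\frac{1}{\mu}\|\nabla L_{t,j}\|_F\|\mathbf{W}^{(t)}-\mathbf{W}^*\|_F=O(S^2L^2m\,\omega/C)$ per sample, which with $\omega\ge R/\sqrt m$ is $\Omega(S^2L^2R\sqrt m/C)$ --- vastly larger than $\Delta=O(SL^{3/2}R/\sqrt T)$ in the regime $m\ge\Omega(L^{56}R^{24}\Delta^{-14}S^{-8}C^{-8}[\log m]^{12})$ the lemma assumes. This is exactly the clipping-bias obstruction, and the paper avoids it structurally rather than fixing it: its regret is taken against the \emph{single} per-iteration sample loss $L_t$ (the batch enters only through the update), so the exact unclipping identity $\nabla L_t(\mathbf{W}^{(t)})=\max(1,\|g_t(x_t)\|_F/C)\,\tilde g_t(x_t)$ holds with one clipping factor, which is pulled multiplicatively out of the entire potential identity and bounded once at the end by $O(SL\sqrt m/C)$ --- no lower bound on clipping factors, no sign cases --- while the discrepancy between $\tilde g_t(x_t)$ and the batch average is absorbed as a bounded deviation $\frac{\eta}{2M}\sum_j\|\tilde g_t(x_j)-\tilde g_t(x_t)\|_F^2\le O(\eta C^2)$ inside the $\frac{3\eta C^2}{2}$ term. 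This single-factor structure is also precisely what produces the coefficient $\frac{SL\eta\sqrt m}{2\kappa C}$ on $\sum_t\|\mathbf{G}_t\|_F^2$.

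A second, quantitative problem is your containment radius. You propose induction keeping the iterates in $\mathcal{B}(\mathbf{W}^{(0)},R/\sqrt m)$, but the cumulative drift over $T$ steps is $2T\eta R=O\bigl(\frac{SL^2R^3}{C\sqrt m\,\Delta}\bigr)$, which exceeds $R/\sqrt m$ whenever $SL^2R^2\gg C\Delta$ --- the typical regime here since $\Delta$ is small. The paper's proof uses a two-radius structure: comparators live in $\mathcal{B}(\mathbf{W}^{(0)},R/\sqrt m)$, while the iterates are only shown (by directly telescoping displacements, using $\|\tilde g_t\|_F\le C\le R$ and, with probability $1-\gamma$ under the $n$-condition and Lemmas \ref{lemma:20} and \ref{lemma:gvntb}, $\|\mathbf{G}_t\|_F\le R$) to remain in the strictly larger ball of radius $\omega=C_1L^{-6}m^{-3/8}[\log m]^{-3/2}\Delta^{3/4}\ge R/\sqrt m$, on which Lemmas \ref{lemma:6.3.1}--\ref{lemma:6.3.3} still apply; your single-radius induction would fail to close. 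Finally, your covering net over $\mathbf{W}^*$ is unnecessary: once the high-probability events on $\mathbf{W}^{(0)}$, $|B_t|$, and $\|\mathbf{G}_t\|_F$ hold (none of which depends on $\mathbf{W}^*$), the potential argument is deterministic and simultaneously valid for all comparators, and the $\log(1/\gamma)$ factor comes only from the union bound over $t\in[T]$ for the Gaussian norms.
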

\begin{proof}[{\bf Proof  of Lemma \ref{lemma:6.3.4}}]
First we show that following the DP-SGD update rule, the parameters would be restricted within a small region near to  initialization by choosing some artificial parameters. By Lemma \ref{lemma:6.3.1}, \ref{lemma:6.3.2}, \ref{lemma:6.3.3} there exists some small enough positive constant $C_1$, suppose $\omega = C_1\cdot  L^{-6}m^{-3/8}[\log(m)]^{-3/2}\Delta^{3/4}\geq \frac{R}{\sqrt{m}}$ such that the conditions in the above mentioned  three lemma hold. Recall the update rule is 
\begin{align*}
\mathbf{W^{(t+1)}} \leftarrow Proj_{\mathcal{W}}(\mathbf{W}^{(t)}-\eta\cdot(\frac{1}{|B_t|}\sum_{(x^{(t)}_j, y_j^{(t)})\in B_t}{\tilde{g}_t}({x}_j^{(t)}) +\mathbf{G}_t)). 
\end{align*}
Assume the sample size $n \geq \tilde{\Omega}(\frac{C(\sqrt{L}m+\sqrt{md})\sqrt{T\log(1/\gamma)\log(1/\delta)}}{R\epsilon})$ , we can have the following inequality with probability at least $1-\gamma$ 
\begin{align*}
||\mathbf{W}_l^{(T)}-\mathbf{W}_l^{(0)}||_F & \leq \sum_{t=1}^{T} ||\mathbf{W}_l^{(t)} -\mathbf{W}_l^{(t-1)}||_F\\
& \leq \eta \sum_{t=1}^{T}||\frac{1}{|B_t|}\sum_{(x^{(t)}_j, y_j^{(t)})\in B_t}{\tilde{g}_t}({x}_j^{(t)}) +\mathbf{G}_t||_F\\
& \leq 2T \eta R\\
& \leq O(\frac{SL^2R^3}{\Delta C \sqrt{m}}) \\
& \leq \omega
\end{align*}
The first inequality follows by the triangle inequality. The second inequality could be seen from the update rule. The third inequality holds since by Lemma \ref{lemma:20} and Gaussian tail bound we have with probability at least $1-\gamma$, for all $t\in[T]$ both $|B_t| \geq \Omega(M)$ and $\|G_t\|_F\leq \tilde{O}(\frac{M(\sqrt{L}m+\sqrt{md}) C\sqrt{T\log 1/\delta \log 2T/\gamma}}{n|B_t|\epsilon})$ holds, which indicate $\sum_{t=1}^{T}||\mathbf{G}_t||_F  \leq TR$ if $n \geq \tilde{\Omega}(\frac{C(\sqrt{L}m+\sqrt{md})\sqrt{T\log(2 /\gamma)\log(1/\delta)}}{R\epsilon})$, and the assumption that $C\leq R$. The fourth inequality holds due to that 
\begin{equation}\label{cond1:eta}
    T\eta \leq O(\frac{SL^2R^2}{ \Delta C\sqrt{m}}).  
\end{equation}
The last inequality holds,  if $m \geq \Omega\left(S^{-8}C^{-8}L^{56}R^{24}\Delta^{-14}[\log(m)]^{12}\right)$.  Thus we have $\mathbf{W}^t \in \mathcal{B}(\mathbf{W}^{(0)}, w)$ with high probability for all $t\in [T]$.  
Suppose $\mathbf{W}^{*}\in \mathcal{B}(\mathbf{W}^{(0)},\omega^* = R/\sqrt{m})$. Following above setting, Lemma \ref{lemma:6.3.1}, \ref{lemma:6.3.2}, \ref{lemma:6.3.3} hold. Then for any positive constant $\Delta >0$ the following inequality holds.
\begin{align*}
L_t(\mathbf{W}^{(t)}) - L_t(\mathbf{W}^{*}) &\leq \langle  \nabla_{\mathbf{W}}L_t(\mathbf{W}^{(t)}), \mathbf{W}^{(t)}-\mathbf{W}^*\rangle + \Delta\\
& = \frac{1}{\eta}\max(1,\frac{||\mathbf{g}_t(x_t)||_F}{C})\langle\eta  \tilde{\mathbf{g}}_t(x_t), \mathbf{W}^{(t)}-\mathbf{W}^*\rangle + \Delta\\
& = \frac{1}{2\eta}\max(1,\frac{||\mathbf{g}_t(x_t)||_F}{C}) (\eta^2 C^2 + ||\mathbf{W}^{(t)}-\mathbf{W}^*||^2_F\\
&\ \ \ \ \ -||\mathbf{W}^{(t)}-\mathbf{W}^*-\eta \tilde{\mathbf{g}}_t(x_t)||^2_F)+\Delta\\
& \leq \max(1,\frac{||\mathbf{g}_t(x_t)||_F}{C})\{\frac{\eta C^2}{2} + \frac{1}{2\eta}[||\mathbf{W}^{(i)}-\mathbf{W}^*||^2_F\\
&\ \ \ \ \ -||\mathbf{W}^{(i+1)}-\mathbf{W}^*||^2_F]+\frac{\eta}{2}||\mathbf{G}_i||^2_F\\
&\ \ \ \ \ +\frac{\eta}{2M}\sum_{j=1}^{M}||\tilde{\mathbf{g}}_t(x_j)-\tilde{\mathbf{g}}_t(x_t)||^2_F\}+\Delta\\
& \leq  \max(1,\frac{||\mathbf{g}_t(x_t)||_F}{C})\{\frac{3\eta C^2}{2} + \frac{1}{\eta}[||\mathbf{W}^{(t)}-\mathbf{W}^*||_F^2\\
& -||\mathbf{W}^{(t+1)}-\mathbf{W}^*||_F^2]+\frac{\eta}{2}||\mathbf{G}_t||_F^2\} + \Delta
\end{align*}The first inequality follows by lemma \ref{lemma:6.3.2}. The second equality  is a direct application from the definition of inner product in metric space. The third equality holds because the connection between inner product and norm in metric space.\\
Thus by telescope summation and simply removing the negative term, the cumulative loss could be bounded by any loss near the initialization parameters
\begin{align*}
\sum_{t=1}^{T} L_t(\mathbf{W}^{(t)}) &\leq \sum_{t=1}^{T}L_t(\mathbf{W}^*) +\max_{t\in[T]}(1,\frac{||\mathbf{g}_t(x_t)||_F}{C})\{\frac{3T\eta C^2}{2} + \frac{1}{\eta}\sum_{t=1}^{T}[||\mathbf{W}^{(t)}-\mathbf{W}^*||_F^2\\
&\ \ \ \ \  -||\mathbf{W}^{(t+1)}-\mathbf{W}^*||_F^2]+\frac{\eta}{2}\sum_{t=1}^T||\mathbf{G}_t||_F^2\} + T\Delta\\
& =  \sum_{t=1}^{T}L_t(\mathbf{W}^*) +\max_{t\in[T]}(1,\frac{||\mathbf{g}_t(x_t)||_F}{C})\{\frac{3T\eta C^2}{2} + \frac{1}{\eta}[||\mathbf{W}^{(0)}-\mathbf{W}^*||_F^2\\
&\ \ \ \ \  -||\mathbf{W}^{(T)}-\mathbf{W}^*||_F^2]+\frac{\eta}{2}\sum_{t=1}^T||\mathbf{G}_t||_F^2\} + T\Delta\\
& \leq \sum_{t=1}^TL_i(\mathbf{W}^*)  + O( \frac{SL\sqrt{m}}{ C}\{\frac{3T\eta C^2}{2}  + \frac{LR^2}{2\eta m} + \frac{\eta}{2}\sum_{t=1}^T||\mathbf{G}_t||_F^2\} + T\Delta)\\
&\leq \sum_{t=1}^{T}L_t(\mathbf{W}^*) + O(\frac{SL\eta\sqrt{m}}{C}\sum_{t=1}^T||\mathbf{G}_t||^2_F + T\Delta)
\end{align*}
The third inequality holds because $C\leq \max_{t\in[T]}||\mathbf{g_t(x_t)}||_F = O(SL\sqrt{m})$. 
The last inequality hold if 
\begin{equation}\label{cond2:eta}
    {SL\sqrt{m}T\eta C }\leq O(T\Delta)\implies \eta\leq O(\frac{\Delta \kappa}{SL\sqrt{m}C})
\end{equation}
\begin{equation}\label{cond3:eta}
    \frac{SL^2R^2 }{C \eta \sqrt{m}}\leq O(T\Delta) \implies \eta T\geq \Omega(\frac{SL^2R^2}{\kappa C \sqrt{m} \Delta})
\end{equation}
Thus, combining (\ref{cond1:eta}), (\ref{cond2:eta}) and (\ref{cond3:eta}) we must have 
\begin{align}
 &\\
 &\Delta\geq \Omega(\frac{SL^\frac{3}{2} R}{\sqrt{T}}). 
\end{align}
\end{proof}
Next we can verify the loss function $L_i(\mathbf{W}^{(i)})$ in the DP-SGD algorithm \ref{algo:6.2} is bounded from above. This could be seen from the Gaussian tail bound for the initial state network, and the locally stability of the loss function.
\begin{lemma}[Lemma 4.4 in \cite{cao2019generalizationa}]\label{lemma:6.3.5}
With probability at least $1-\xi$, for all $i\in[n]$ with $\omega \leq L\log(nL/\xi)$,  we have $$||f_{\mathbf{W}^{(0)}}(\mathbf{x}_i)||_F \leq O(\sqrt{\log(n/\xi)})$$
\end{lemma}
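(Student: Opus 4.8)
The plan is to exploit the special structure of the output layer at initialization. Since $\mathbf{W}_L^{(0)}\in\mathbb{R}^{1\times m}$, the network output $f(\mathbf{W}^{(0)},\mathbf{x}_i)=\sqrt{m}\,\mathbf{W}_L^{(0)}\mathbf{h}_{i,L-1}^{(0)}$ is a scalar, so $\|f_{\mathbf{W}^{(0)}}(\mathbf{x}_i)\|_F$ is simply its absolute value. The key observation is that, conditioned on the weights of the first $L-1$ layers (equivalently, on the last hidden representation $\mathbf{h}_{i,L-1}^{(0)}$), this output is a one-dimensional centered Gaussian whose variance is controlled by $\|\mathbf{h}_{i,L-1}^{(0)}\|_2$, which is itself close to $1$ by the hidden-layer norm estimate already available to us.

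First I would condition on $\mathbf{W}_1^{(0)},\dots,\mathbf{W}_{L-1}^{(0)}$. Invoking Lemma \ref{lemma:Allen_7.1} with a suitable constant $\epsilon$, with probability at least $1-O(nL)\exp[-\Omega(m\epsilon^2/L)]$ we have $\|\mathbf{h}_{i,L-1}^{(0)}\|_2\in[1-\epsilon,1+\epsilon]$ for every $i\in[n]$, in particular $\|\mathbf{h}_{i,L-1}^{(0)}\|_2=O(1)$; call this event $E$ (the stated condition on $\omega$ only serves to guarantee the high-probability events for these hidden-layer norms). Next, on $E$ and conditionally on the fixed vector $\mathbf{h}_{i,L-1}^{(0)}$, since $\mathbf{W}_L^{(0)}$ has i.i.d. $N(0,1/m)$ entries independent of the earlier layers, the product $\mathbf{W}_L^{(0)}\mathbf{h}_{i,L-1}^{(0)}=\sum_{j=1}^m (\mathbf{W}_L^{(0)})_j (\mathbf{h}_{i,L-1}^{(0)})_j$ is centered Gaussian with variance $\tfrac{1}{m}\|\mathbf{h}_{i,L-1}^{(0)}\|_2^2$. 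Hence $f(\mathbf{W}^{(0)},\mathbf{x}_i)=\sqrt{m}\,\mathbf{W}_L^{(0)}\mathbf{h}_{i,L-1}^{(0)}$ is a centered Gaussian with variance $\|\mathbf{h}_{i,L-1}^{(0)}\|_2^2=O(1)$. The standard Gaussian tail bound then gives $|f(\mathbf{W}^{(0)},\mathbf{x}_i)|\leq O(\sqrt{\log(1/\xi')})$ with probability at least $1-\xi'$ for a single index $i$. Taking $\xi'=\xi/(2n)$ and a union bound over $i\in[n]$, combined with the failure probability of $E$, yields $\|f_{\mathbf{W}^{(0)}}(\mathbf{x}_i)\|_F\leq O(\sqrt{\log(n/\xi)})$ simultaneously for all $i$ with probability at least $1-\xi$, provided $m$ is large enough that $O(nL)\exp[-\Omega(m\epsilon^2/L)]\leq \xi/2$.

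The only delicate point is the bookkeeping of the conditioning: the variance proxy $\|\mathbf{h}_{i,L-1}^{(0)}\|_2$ is itself random, depending on $\mathbf{W}_1^{(0)},\dots,\mathbf{W}_{L-1}^{(0)}$, so I would first fix the high-probability event $E$ on these layers and only then apply the Gaussian concentration for the independent output weights $\mathbf{W}_L^{(0)}$. Because the two sources of randomness are independent, the conditional tail bound integrates cleanly against the distribution of the earlier layers, and the union bound over the $n$ samples is absorbed into the logarithmic factor. This conditioning is the main (and rather mild) obstacle; everything else is a direct application of Lemma \ref{lemma:Allen_7.1} and standard Gaussian tail estimates.
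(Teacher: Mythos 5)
Your proof is correct and follows essentially the same route as the cited source (Lemma 4.4 of \cite{cao2019generalizationa}), which the paper imports without reproving: condition on the first $L-1$ layers so that $f(\mathbf{W}^{(0)},\mathbf{x}_i)=\sqrt{m}\,\mathbf{W}_L^{(0)}\mathbf{h}_{i,L-1}^{(0)}$ is a centered Gaussian with variance $\|\mathbf{h}_{i,L-1}^{(0)}\|_2^2=O(1)$ on the high-probability event from Lemma \ref{lemma:Allen_7.1}, then apply a Gaussian tail bound and a union bound over $i\in[n]$. Your handling of the conditioning (fixing the event on the earlier layers before invoking independence of $\mathbf{W}_L^{(0)}$) is exactly the bookkeeping the standard argument requires.
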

\begin{lemma}[Stability of the Loss function]\label{lemma:6.3.6}
There exists an absolute constant $\kappa$ such that, with probability at least $1-\xi$, for all $\mathbf{W}\in \mathcal{B}(\mathbf{W}^{(0)}, R/\sqrt{m})$, with $m \geq O(R^{-1}L^{-3/2}[\log(nL/\xi)]^{3})$ 
$$ L_i(\mathbf{W}) - L_i(\mathbf{W}^{(0)}) \leq O(SLR)$$
\end{lemma}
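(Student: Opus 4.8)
The plan is to reduce the claim to controlling how far the loss can move as $\mathbf{W}$ ranges over the ball $\mathcal{B}(\mathbf{W}^{(0)}, R/\sqrt{m})$, and to do so entirely through the locally bounded gradient supplied by Lemma \ref{lemma:6.3.3}. Since $\ell(\cdot, y_i)$ is $S$-Lipschitz and convex, the per-example loss $L_i(\mathbf{W}) = \ell(f(\mathbf{W};\mathbf{x}_i), y_i)$ inherits a layerwise gradient bound: the second part of Lemma \ref{lemma:6.3.3} gives $\|\nabla_{\mathbf{W}_l} L_i(\mathbf{W})\|_F = O(S\sqrt{m})$ for every $l\in[L]$ and every $\mathbf{W}\in\mathcal{B}(\mathbf{W}^{(0)}, R/\sqrt{m})$, on the high-probability event over the draw of $\mathbf{W}^{(0)}$. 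Aggregating the squared Frobenius norms over the $L$ blocks, I get $\|\nabla L_i(\mathbf{W})\|_F = O(S\sqrt{Lm})$ uniformly over the ball.

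First I would use that a function whose (sub)gradient has norm at most $G$ on a convex set is $G$-Lipschitz there. Applying this with $G = O(S\sqrt{Lm})$ on $\mathcal{B}(\mathbf{W}^{(0)}, R/\sqrt{m})$ yields $|L_i(\mathbf{W}) - L_i(\mathbf{W}^{(0)})| \leq O(S\sqrt{Lm})\,\|\mathbf{W}-\mathbf{W}^{(0)}\|_F$. The geometry of the ball then supplies $\|\mathbf{W}-\mathbf{W}^{(0)}\|_F \leq \sqrt{L}\cdot R/\sqrt{m}$, since each of the $L$ blocks obeys $\|\mathbf{W}_l-\mathbf{W}_l^{(0)}\|_F\leq R/\sqrt{m}$. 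Multiplying the two estimates, $|L_i(\mathbf{W}) - L_i(\mathbf{W}^{(0)})| \leq O(S\sqrt{Lm})\cdot \sqrt{L}R/\sqrt{m} = O(SLR)$, which is precisely the claim; note that the two $\sqrt{m}$ factors cancel, so the bound is free of the width.

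An alternative route I would keep as a backup replaces $f$ by its NTRF linearization: decompose $f(\mathbf{W};\mathbf{x}_i)-f(\mathbf{W}^{(0)};\mathbf{x}_i)$ into the linearization increment $\langle \nabla f(\mathbf{W}^{(0)};\mathbf{x}_i), \mathbf{W}-\mathbf{W}^{(0)}\rangle$ plus the approximation error, bound the first by Cauchy--Schwarz with $\|\nabla f(\mathbf{W}^{(0)};\mathbf{x}_i)\|_F=O(\sqrt{Lm})$ from the first part of Lemma \ref{lemma:6.3.3} and the second by Lemma \ref{lemma:6.3.1} taken at $\omega = R/\sqrt{m}$, and finally premultiply by the Lipschitz constant $S$. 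Lemma \ref{lemma:6.3.5} enters only if one additionally wants the absolute scale of $L_i(\mathbf{W}^{(0)})$ rather than the increment, so it is not needed for the displayed inequality itself.

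The main obstacle is not the arithmetic but the uniformity and the overparameterization budget. The gradient bound of Lemma \ref{lemma:6.3.3} must hold at every point of the ball at once, which is why it is stated for all $\mathbf{W}\in\mathcal{B}(\mathbf{W}^{(0)},\omega)$ and carries the requirement $\omega = R/\sqrt{m}\leq \kappa L^{-6}[\log m]^{-3}$; I would verify this is implied by the stated lower bound on $m$ and then union bound the failure probability over $i\in[n]$. The one genuinely delicate point is the bookkeeping of the $\sqrt{L}$ factors: the per-layer gradient bound and the per-layer radius each contribute a $\sqrt{L}$ when passed to the full-parameter Frobenius norm, and the two combine into the single power of $L$ in $O(SLR)$. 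Getting that exponent right, rather than $L^{1/2}$ or $L^{3/2}$, is the step most prone to error.
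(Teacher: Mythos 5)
Your proposal is correct and follows essentially the same route as the paper's proof: both hinge on Lemma \ref{lemma:6.3.3} giving a per-layer gradient bound of order $\sqrt{m}$ (you use the bound on $\nabla_{\mathbf{W}_l} L_i$, the paper uses the bound on $\nabla_{\mathbf{W}_l} f$ together with the $S$-Lipschitzness of $\ell$), paired with the per-layer radius $R/\sqrt{m}$ so that the width factors cancel and the $L$ layers contribute the single factor of $L$ in $O(SLR)$. If anything, your integration of the uniform gradient bound along the segment inside the convex ball is a slightly more careful rendering of the paper's one-line first-order estimate, and your bookkeeping of the $\sqrt{L}\cdot\sqrt{L}$ factors matches the paper's layerwise sum exactly.
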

\begin{proof}[{\bf Proof of the Lemma \ref{lemma:6.3.6}}]
Since $\ell(\cdot)$ is an Lipschitz function and suppose Lemma\ref{lemma:6.3.3} condition holds, then with probability at least $1-O(nL^2)\exp[-\Omega(m\omega^{2/3}L]$, the inequality below holds
\begin{align*}
L_i(\mathbf{W}) - L_i(\mathbf{W}^{(0)})  &\leq S \cdot \langle f'(\mathbf{W}),\mathbf{W}-\mathbf{W}^{(0)}\rangle \\
& \leq S \cdot \sum_{l=1}^{L} \langle \nabla_{\mathbf{W}_l}f(\mathbf{W}),\mathbf{W}-\mathbf{W}^{(0)}\rangle \\
& \leq  O(SLR)
\end{align*}The probability could be reduced to $1-\xi$ if $m \geq O(R^{-1}L^{-3/2}[\log(nL/\xi)]^{3})$, we obtain the desired result.
\end{proof}
\begin{remark}\label{remark:6.3.1}
Combine Lemma \ref{lemma:6.3.5} and the Lemma \ref{lemma:6.3.6}, with probability at least $1- 2\xi$, we immediately obtain the upper bound for the empirical loss function, suppose $\omega = R/\sqrt{m}$
$$L_i(\mathbf{W}) \leq O(\sqrt{\log(2n/\xi)} + SLR )
$$The probability could be reduced to $1-\xi$ by normalize the coefficients with some constants.
\end{remark}

\begin{proof}[{\bf Proof of the Theorem \ref{thm:6.3.2}} ]
By Lemma \ref{lemma:6.3.4}, \ref{lemma:6.3.6}, converting the condition of Lemma \ref{lemma:6.3.4} with respect to $m$, with probability at least $1-\xi-\gamma$ and there exists $m \geq \tilde{\Omega}(L^{56}R^{16}\Delta^{-14}S^{-2}C^{-8}[\log(nL+1/\xi)]^{3})$, $ n \geq \tilde{\Omega}(\frac{C(\sqrt{L}m+\sqrt{md})\sqrt{T\log(1/\gamma)\log(1/\delta)}}{R\epsilon})$ such that all  lemmas hold, therefore we can apply the Azuma-Hoeffding inequality or so called online-to-batch technique to get the expectation loss
\begin{align*}
\frac{1}{T}\sum_{t=1}^T L_{\mathcal{D}}(\mathbf{W}^{(t)}) &\leq \frac{1}{T} \sum_{t=1}^{T}L_t(\mathbf{W}^{(t)}) + I \cdot \sqrt{\frac{2\log(1/\xi)}{T}}\\
& \leq \frac{1}{T} \sum_{t=1}^{T}L_t(\mathbf{W}^*) + O(\frac{SL\eta\sqrt{m}}{2 CT}\sum_{t=1}^T||\mathbf{G}_t||^2_F + \Delta+ I \cdot \sqrt{\frac{\log(1/\xi)}{T}})
\end{align*}The second inequality holds because of Lemma \ref{lemma:6.3.4}, and the upper bound of Loss function in remark \ref{remark:6.3.1} $I = O(\sqrt{\log(2n/\xi)} + \frac{SLR }{\sqrt{m}})$ with probability $1-3\xi$. Thus under the previous lemma we have  with probability $1-\gamma-4\xi$
\begin{align*}
\frac{1}{T}\sum_{t=1}^T L_{\mathcal{D}}(\mathbf{W}^{(t)}) &\leq \frac{1}{T} \sum_{t=1}^{T}L_t(\mathbf{W}^{*})+ O(\frac{SL\sqrt{m}\eta \sigma^2 (m^2L+md)}{C}+ \Delta + I \cdot \sqrt{\frac{\log(1/\xi)}{T}})\\
& \leq \frac{1}{T} \sum_{t=1}^{T}L_t(\mathbf{W}^{*})+ \tilde{O}( \frac{ SL^{3/2}R\sqrt{T}\log(1/\delta)\log(1/\gamma)m^2}{n^2\epsilon^2 }(L+d/m) + \frac{SL^\frac{3}{2} R}{\sqrt{T}} \\
&\ \ \ \ \ +I\sqrt{\frac{\log(1/\xi)}{T}})
\end{align*} where $\eta = \Theta(\frac{SL^2R^2}{ C T\sqrt{m} \Delta}), \Delta =O(\frac{SL^\frac{3}{2} R}{\sqrt{T}}), \sigma^2 = \tilde{O}(\frac{TC^2\log(1/\delta)}{ n^2 \epsilon^2})$ for some small enough constant $\kappa \geq 0$. Since $\mathbf{W}^* \in  \mathcal{B}(\mathbf{W}^{(0)},\omega = R/\sqrt{m})$,  by Lemma \ref{lemma:6.3.1}, with probability at least $1-\xi$ 
\begin{align*}
L_i(\mathbf{W}^*) & = \ell(f(\mathbf{W}^*;\mathbf{x}_i)\\
& \leq \ell(f_{ntk}(\mathbf{W}^*;\mathbf{x}_i) + O(S\omega^{4/3}L^3\sqrt{m\log(m)})\\
& = \ell(f_{ntk}(\mathbf{W}^*;\mathbf{x}_i) + O(SR^{4/3}L^3\sqrt{\log(m)}m^{-1/6})\\
& \leq \ell(f_{ntk}(\mathbf{W}^*;\mathbf{x}_i)  + O(\frac{SLR}{\sqrt{T}})
\end{align*}The first inequality results from the $S$-Lipschitz continuity of $\ell(\cdot)$ and Lemma\ref{lemma:6.3.1}. And we can simplify the left term assuming $m \geq \Omega(\log(m)^3L^{12}R^2T^3)$. Since this holds for any $\mathbf{W}^* \in  \mathcal{B}(\mathbf{W}^{(0)},R/\sqrt{m})$ we can take the infimum over it, plugging into the above bound
\begin{align*}
\mathbb{E}_{\mathcal{A}_{SGD}}[\frac{1}{T}\sum_{i=1}^T L_{\mathcal{D}}(\mathbf{W}^{(i)})] \leq &\inf_{f\in \mathcal{F}(\mathbf{W}^{(0)},R/\sqrt{m})}\{ \frac{1}{T}\sum_{i=1}^{T} \ell (f(\mathbf{x}_i))\} + \tilde{O}( \frac{ SL^{3/2}R\sqrt{T}\log(1/\delta)m^2}{n^2\epsilon^2 }(L+d/m)\\
&\ \ \ \ \ + \frac{SL^\frac{3}{2} R}{\sqrt{T}} 
 +I\sqrt{\frac{\log(1/\xi)}{T}})\\
\leq & \inf_{f\in \mathcal{F}(\mathbf{W}^{(0)},R/\sqrt{m})}\{ \frac{1}{T}\sum_{i=1}^{T} \ell (f(\mathbf{x}_i))\} + SL^\frac{3}{2}R \cdot\widetilde{O}(\frac{\max(L,\frac{d}{m})\log(1/\delta)m^2\sqrt{T}}{n^2\epsilon^2}\\
& + \frac{1}{\sqrt{T}}+ \sqrt{\frac{\log(1/\xi)}{T}})
\end{align*}The inequalities hold by plug in the $I$ and ignore the logarithmic term with respect to $n$ for conciseness purpose.\\ 
The Lemma \ref{lemma:6.3.2},\ref{lemma:6.3.3} reveal both the local landscape and training dynamic of DP-NN. Following the similar procedure, we can rewrite the almost convexity in the summation form: $ L_{\mathcal{D}}(\hat{\mathbf{W}})= L_{\mathcal{D}}(\frac{1}{T}\sum_{i=1}^{T}\mathbf{W}^{(i)}) \leq \frac{1}{T}\sum_{i=1}^T L_{\mathcal{D}}(\mathbf{W}^{(i)}) + \Delta$, plugging in yields the desired result. Finally, note that in the previous lemma we need 
\begin{equation}
    m\geq O(L^{56}R^{24}\Delta^{-14}S^{-8}C^{-8}[\log(m)]^{12}), 
\end{equation}
plugging $\Delta$ and by the non-negativity of the loss function we can get the result.
\end{proof}


\end{document}